\definecolor{mydarkblue}{rgb}{0,0.08,0.45}
\definecolor{mygreen}{rgb}{0.032, 0.6392, 0.2039}
\definecolor{mypurple}{HTML}{B266FF}
\def\RR{{\mathbb R}}
\def\NN{{\mathbb N}}
\def\E{{\mathbb E}}
\def\vv{{\boldsymbol v}}
\def\LL{{\mathcal L}}
\def\R{{\mathbb R}}
\def\NN{{\mathbb N}}
\def\vv{{\boldsymbol v}}
\def\LL{{\mathcal L}}
\newcommand{\argmin}[1]{arg\underset{#1}{min}}
\newcommand{\sol}[1]{\mathcal{S}^\star_{#1}}
\def\figref#1{figure~\ref{#1}}
\def\1{\bm{1}}
\def\eps{{\epsilon}}
\def\vtheta{{\bm{\theta}}}
\def\vomega{{\bm{\omega}}}
\def\va{{\bm{a}}}
\def\vb{{\bm{b}}}
\def\vc{{\bm{c}}}
\def\vd{{\bm{d}}}
\def\ve{{\bm{e}}}
\def\vm{{\bm{m}}}
\def\vr{{\bm{r}}}
\def\vs{{\bm{s}}}
\def\vv{{\bm{v}}}
\def\vw{{\bm{w}}}
\def\vx{{\bm{x}}}
\def\vy{{\bm{y}}}
\def\vz{{\bm{z}}}
\def\vtheta{{\bm{\theta}}}
\def\vnu{{\bm{\nu}}}
\def\vlmd{{\bm{\lambda}}}
\def\mI{{\bm{I}}}
\def\mA{{\bm{A}}}
\def\mB{{\bm{B}}}
\def\mC{{\bm{C}}}
\def\mD{{\bm{D}}}
\def\mA{{\bm{A}}}
\def\mB{{\bm{B}}}
\def\mC{{\bm{C}}}
\def\mD{{\bm{D}}}
\def\mI{{\bm{I}}}
\def\mP{{\bm{P}}}
\def\mR{{\bm{R}}}
\def\mS{{\bm{S}}}
\DeclareMathAlphabet{\mathsfit}{\encodingdefault}{\sfdefault}{m}{sl}
\SetMathAlphabet{\mathsfit}{bold}{\encodingdefault}{\sfdefault}{bx}{n}
\newcommand{\norm}[1]{\left\| #1 \right\|}
\newcolumntype{Y}{>{\centering\arraybackslash}X}
\definecolor{DarkGreen}{rgb}{0.1,0.5,0.1}
\definecolor{DarkRed}{rgb}{0.5,0.1,0.1}
\definecolor{DarkBlue}{rgb}{0.1,0.1,0.5}
\definecolor{DarkYellow}{rgb}{.79,.79,0}
\theoremstyle{plain}
\newtheorem{lm}{Lemma} 
\newtheorem{definition}{Definition}
\newtheorem{thm}{Theorem}
\newtheorem{prop}{Proposition}
\newtheorem{asmp}{Assumption}
\newtheorem{rmk}{Remark}
\title{Solving Constrained Variational Inequalities via a First-order Interior Point-based Method}
\author{
  Tong Yang\thanks{All authors contributed equally. Link to source code: \url{https://github.com/Chavdarova/ACVI}.} \\
  University of California, Berkeley \\
  \href{mailto:pptmiao@berkeley.edu}{\texttt{pptmiao@berkeley.edu}}\\ 
  \And
  Michael I. Jordan$^*$ \\
  University of California, Berkeley \\
  \href{mailto:jordan@cs.berkeley.edu}{\texttt{jordan@cs.berkeley.edu}}\\
  \And \hspace{12em}
  Tatjana Chavdarova$^*$ \\\hspace{12em}
  University of California, Berkeley \\\hspace{12em}
 \href{mailto:tatjana.chavdarova@berkeley.edu}{\texttt{tatjana.chavdarova@berkeley.edu}}\\
}
\begin{document}

\maketitle

\begin{abstract}
We develop an interior-point approach to solve constrained variational inequality (cVI) problems. Inspired by the efficacy of the \textit{alternating direction method of multipliers} (ADMM) method in the single-objective context, we generalize ADMM to derive a first-order method for cVIs, that we refer to as \textbf{A}DMM-based interior-point method for \textbf{c}onstrained \textbf{VI}s (ACVI). We provide convergence guarantees for ACVI in two general classes of problems:
\textit{(i)} when the operator is $\xi$-monotone, and \textit{(ii)} when it is monotone, some constraints are active and the game is not purely rotational.
When the operator is, in addition, L-Lipschitz for the latter case, we match known lower bounds on rates for the gap function of $\mathcal{O}(1/\sqrt{K})$ and $\mathcal{O}(1/K)$ for the last and average iterate, respectively. To the best of our knowledge, this is the first presentation of a \emph{first}-order interior-point method for the general cVI problem that has a global convergence guarantee. Moreover, unlike previous work in this setting, ACVI provides a means to solve cVIs when the constraints are nontrivial. Empirical analyses demonstrate clear advantages of ACVI over common first-order methods. In particular, \textit{(i)} cyclical behavior is notably reduced as our methods approach the solution from the analytic center, and \textit{(ii)} unlike projection-based methods that zigzag when near a constraint, ACVI efficiently handles the constraints.
\end{abstract}

\section{Introduction}

We are interested in the \textit{constrained variational inequality} problem~\citep{stampacchia1964formes}:   
\begin{equation} \label{eq:cvi} \tag{cVI}
  \mbox{find}\ \vx^\star \in \mathcal{X} \quad\text{s.t.}\quad \langle \vx-\vx^\star, F(\vx^\star) \rangle \geq 0, \quad \forall \vx \in \mathcal{X} \,,
\end{equation}
where $\mathcal{X}$ is a subset of the Euclidean $n$-dimensional space $\R^n$, and where $F: \mathcal{X}\mapsto \R^n$ is a continuous map. 
Finding (an element of) the solution set $\sol{\mathcal{X},F}$ of~\ref{eq:cvi} is a key problem in multiple fields such as economics and game theory.
More pertinent to machine learning, CVIs generalize standard single-objective optimization, complementarity problems~\citep{cottle_dantzig1968complementary}, zero-sum games~\citep{vonneumann1947gametheory,rockafellar1970monotone} and multi-player games.
For example, solving~\ref{eq:cvi} is the optimization problem underlying \textit{reinforcement learning}
\citep[e.g.,][]{Omidshafiei2017DeepDM}---and \textit{generative adversarial networks}~\citep{goodfellow2014generative}.
Moreover, even when training one set of parameters with one loss $f$, that is $F(\vx)\equiv\nabla_{\vx} f(\vx)$, a natural way to improve the model's robustness in some regard is to introduce an adversary to perturb the objective or the input, or to consider the worst sample distribution of the empirical objective. As has been noted in many problem domains, including robust classification~\citep{NEURIPS2020_02f657d5}, adversarial training~\citep{szegedy2014intriguing}, causal inference~\citep{causal2020}, and robust objectives~\citep[e.g.,][]{anchorRegression}, this leads to a min-max structure, which is an instance of the~\ref{eq:cvi} problem.
To see this, consider two sets of parameters (agents), $\vx_1 \in 
\mathcal{X}_1$ and $\vx_2 \in \mathcal{X}_2$, that share a loss/utility function, $f \colon \mathcal{X}_1 \times \mathcal{X}_2 \to \R$, which the first agent aims to minimize and the second agent aims to maximize.  Then the problem is to find a 
\textit{saddle point} of $f$, i.e., a point $(\vx_1^\star,\vx_2^\star)$ such that
$
   f(\vx_1^\star,\vx_2)\leq f(\vx_1^\star,\vx_2^\star) \leq f(\vx_1,\vx_2^\star).
$
This corresponds to a~\ref{eq:cvi} with $F(\vx)\equiv
\begin{bmatrix}\nabla_{\vx_1} f(\vx_1, \vx_2) & -\nabla_{\vx_2} f(\vx_1, \vx_2) \end{bmatrix}^\intercal$.

Solving cVIs is significantly more challenging than single-objective optimization problems, due to the fact that $F$ is a general vector field, leading to ``rotational'' trajectories in parameter space (App.~\ref{app:additional_preliminaries}).
In response, the development of efficient algorithms with provable convergence has recently been the focus of interest in machine learning and optimization, particularly in the unconstrained setting, where $\mathcal{X}\equiv \R^n$~
\citep[e.g., ][]{tseng1995,daskalakis2018training,mokhtari2019unified,mokhtari2020convergence,golowich2020last,azizian20tight,chavdarova2021hrdes,gorbunov2022extra,bot2022fast}.

\begin{figure}[t]
\vspace*{-.1em}
  \begin{minipage}[c]{0.45\textwidth}
    \includegraphics[width=.95\textwidth, trim={16em 5.75em 15.7em  6.1em},clip]{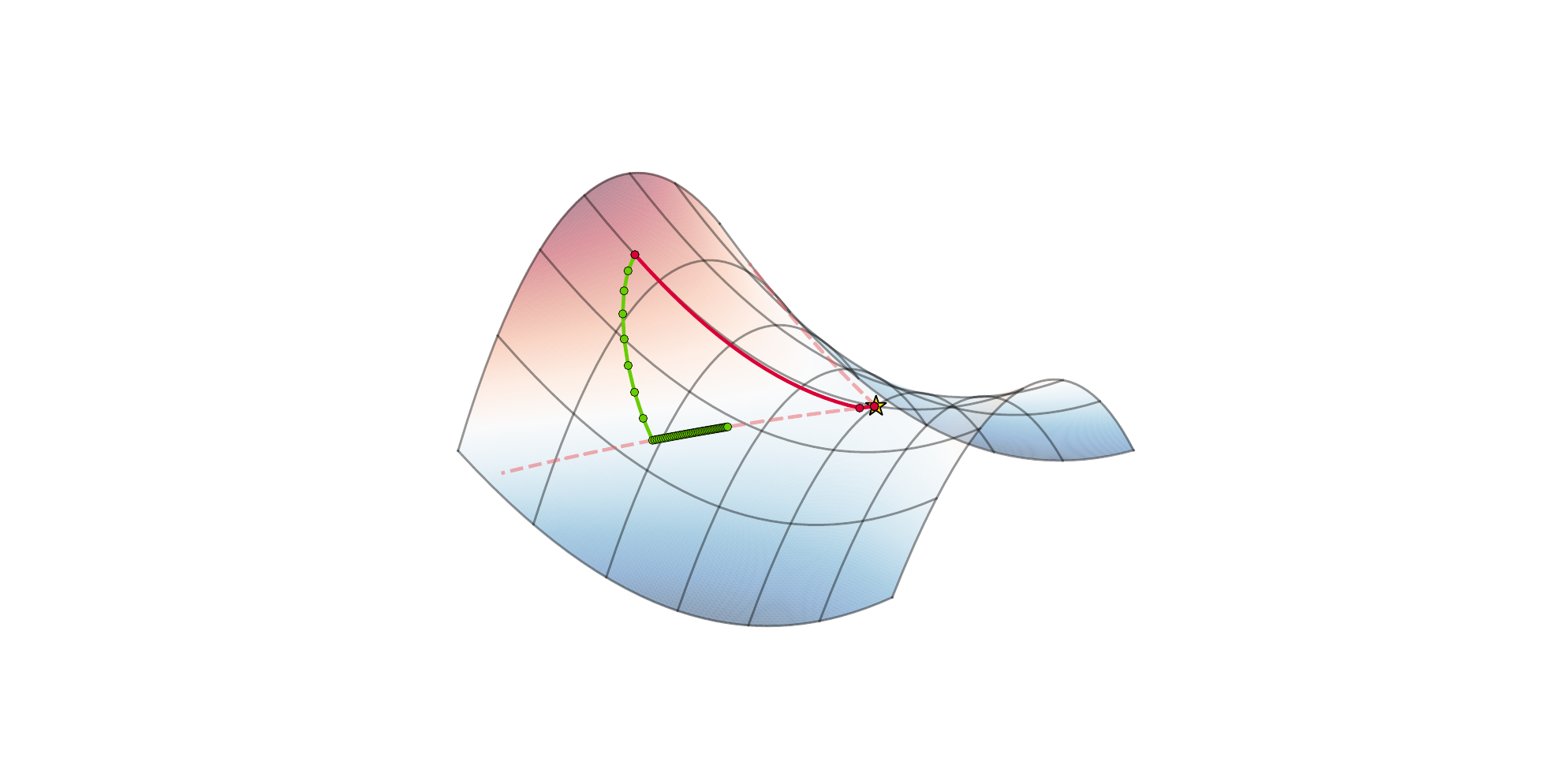}
  \end{minipage}\hfill
  \begin{minipage}[c]{0.5\textwidth}
    \caption{ ACVI (Algorithm~\ref{alg:acvi}) and~\ref{eq:extragradient} iterates---depicted in \textit{red} and \textit{green}, resp.---on the game:
    } \label{fig:3d_illustration}
\vspace*{-1.6em}
$$\underset{x_1\in \R_+}{\min} \text{ } \underset{x_2\in \R_+}{\max} \text{ } 0.05 \cdot x_1^2+x_1x_2-0.05 \cdot x_2^2 \,.
\vspace*{-.3em}
$$
The constraints are depicted with dashed lines and the iterates with circles. 
ACVI gets close to the Nash Equilibrium ({\color{DarkYellow}$\bm{\star}$}) in a single step, whereas EG zigzags when hitting a constraint.
The remaining commonly used methods---GDA, OGDA, and LA-GDA---perform similarly to EG, see App.~\ref{app:additional_experiments}. 
  \end{minipage}
  \vspace{-1.6em}
\end{figure}

In many applications, however, we have \textit{constraints} on (part of) the decision variable $\vx$,
that is, $\mathcal{X}$ is often a strict subset of $\R^n$.
As an example, let us revisit the aforementioned distributionally robust prediction problem: consider a linear setting~\citep[cf.\ Eq. 1 in][]{anchorRegression} and class of parametrized distributions $\triangle\equiv \{ \vw \in \R^d | \vw \geq 0, \ve^\intercal\vw = 1 \}$, where $\ve\in \R^d$ is a vector of all ones. 
Thus, the robust problem is:
$
\min_{\vx \in \R^{n}} \;  \max_{\vw\in\R^d} \; \vw^\intercal ( \vy -\mD
\vx), 
\enspace \textit{subject to} \enspace \vw \geq 0, \; \ve^\intercal \vw=1 \,, 
$
where $\mD \in \R^{d\times n}$ contains  $d$ samples of an $n$-dimensional covariate vector, and $\vy \in \R^d$ is the vector of target variables (the constraint $\vw \leq 1$ is implied).
This illustrates that given a standard minimization problem, its robustification immediately leads to an instance of the~\ref{eq:cvi} problem; see further examples in \S~\ref{sec:experiments}.
Additional example applications include
\begin{enumerate*}[series = tobecont, label=(\roman*)]
\item machine learning applications in business, finance, and economics where often the sum of the decision variables---representing, for example, \emph{resources}---cannot exceed a specific value, 
\item contract theory (e.g. \S $2.3.2$ in~\citep{stephen2022} where one player is the parameters of a probability distribution as above), and
\item solving optimal control problems numerically,
\end{enumerate*}
among others.

Significantly fewer works address the convergence of first-order optimization methods in the constrained setting; see \S~\ref{sec:related_works} for an overview.
Recently,~\citet{cai2022constrVI} established a  convergence rate for the  \textit{projected extragradient method}~\citep{korpelevich1976extragradient}, when $F$ is monotone and Lipschitz (see \S~\ref{sec:preliminaries} for definitions). However,  
\begin{enumerate*}[series = tobecont,  label=(\roman*)]
\item the proof that the authors presented is computer-assisted, which makes it hard to interpret and of limited usefulness for inspiring novel (e.g., accelerated) methods, and 
\item the considered setting assumes the projection is fast to compute and thus ignores the projection in the rate. 
\end{enumerate*}
The latter assumption only holds in rare cases when the constraints are relatively simple so that operations such as clipping suffice.
However, when the inequality and/or equality constraints are of a general form, each EG update requires \textit{two} projections (see App.~\ref{sec:methods}). Each projection requires solving a new/separate \textit{constrained} optimization problem, which if given general constraints implies the need for a \emph{second-order method} as explained next.

\textit{Interior point} (IP) methods are the de facto family of iterative algorithms for constrained optimization. These methods enjoy well-established guarantees and theoretical understanding in the context of single-objective optimization [see, e.g., \citet{boyd2004convex}, Ch.11, \citet{megiddo1989}, \citet{wright97}], and have extensions to a wide range of problem settings~\citep[e.g., ][]{tseng1993path,Nesterov1994InteriorpointPA,nesterovTodd1998,renegar2001mathematical,bentalNemirovski2001}. They build on a natural idea of solving a simplified homotopic problem that makes it possible to ``smoothly'' transition to the original complex problem; see \S~\ref{sec:preliminaries_ip}. 
Several works extend IP methods to~\ref{eq:cvi}, by applying the second-order Newton method to a modified \textit{Karush-Kuhn-Tucker} (KKT) system appropriate for the~\ref{eq:cvi}~\citep{ralph2000superlinear,qi2002smoothing,fan2010interior,monteiro1996properties,chen1998global}
Many of these approaches, however, rely on strong assumptions---see \S~\ref{sec:related_works} for more details
Moreover, although these methods enjoy fast convergence in terms of the number of iterations, each iteration involves the computation of the  Jacobian of $F$ (or Hessian when $F \equiv \nabla f(\vx) $) which quickly becomes prohibitive for large dimension $n$ of $\vx$.  
Hence first-order methods are preferred in practice.

We are currently missing a \emph{first}-order optimization method for solving~\ref{eq:cvi} with \textit{general} constraints.
Accordingly, in this paper, we focus on the following open question:
\begin{center}
\emph{Can we derive \textbf{first-order} algorithms for the~\ref{eq:cvi} problem that \textit{(i)} can be applied when \textbf{general} constraints are given, and that \textit{(ii)} have \textbf{global} convergence guarantees?}
\end{center}

In this paper, we develop precisely such a method.  To mitigate the computational burden of second-derivative computation, we replace the Newton step of the traditional IP methods with the \textit{alternating direction method of multipliers} (ADMM) method.
ADMM was designed with a different purpose: it is applicable only when the objective is separable into two or more different functions whose arguments are non disjoint---see \S~\ref{sec:preliminaries_ip} for full description---and can be seen as equivalent to \textit{Douglas–Rachford} operator splitting~\citep{Douglas1956OnTN} applied in the \emph{dual} space~\citep[see e.g.][]{linalternating}. ADMM owes its popularity primarily to its computational efficiency~\citep{boyd2011admm} for large-scale machine learning problems and its fast convergence in some machine-learning settings~\citep[e.g.,][]{nishihara15}.
The core idea of our approach is to reformulate the original ~\ref{eq:cvi} problem with equality and inequality constraints via the KKT conditions, so as to apply ADMM in such a way that the subproblems of the resulting algorithm have desirable properties (see \S~\ref{sec:deriving_acvi}).
That is, by generalizing the technique underlying ADMM, we derive a novel first-order algorithm for solving monotone VIs with very general constraints. Furthermore, this framework can be used to design novel algorithms for solving cVIs; see App.~\ref{app:variant}.

Our contributions can be summarized as follows: 
\begin{itemize}[leftmargin=.2in] 
\item Based on the KKT system for the constrained VI problem and the ADMM technique, we derive an algorithm that we refer to as the \textit{\textbf{A}DMM-based Interior Point Method for \textbf{C}onstrained \textbf{VI}s} (ACVI)---see \S~\ref{sec:deriving_acvi} and Algorithm~\ref{alg:acvi}. 
\item We  prove the global convergence of ACVI given two sets of assumptions: (i) when $F$ is $\xi$-monotone, and (ii) when it is monotone, the constraints are active at the solution, and the game is not purely rotational. 
By further assuming $F$ is a Lipschitz operator, we upper bound the rate of decrease of the gap function and we match the known lower bound for the gap function of $\mathcal{O}(1/\sqrt{K})$ for the last iterate---see \S~\ref{sec:analysis}.
\item Empirically, we document two notable advantages of ACVI over popular projection-based saddle-point methods: (i) the ACVI iterates exhibit significantly reduced rotations, as they approach the solution from the analytic center, and (ii) while projection-based methods show extensive zigzagging when hitting a constraint, ACVI avoids this, resulting in more efficient updates---\S~\ref{sec:experiments}. 
\end{itemize}
Our convergence guarantees are parameter-free, meaning these do not require  a priori knowledge of the constants of the problem (such as the Lipschitz constant), and, interestingly, the convergence guarantee does not require that $F$ is Lipschitz.  This assumption is solely used to express the rate  of decrease of the gap function (in contrast to the extragradient method~\citep{korpelevich1976extragradient} where such an assumption is necessary to show convergence.
To the best of our knowledge, the proposed ACVI method is the first first-order IP algorithm for VIs with a global convergence proof.

\section{Related Work}\label{sec:related_works}

\noindent\textbf{Unconstrained VIs: methods and guarantees.} 
Apart from the standard gradient descent ascent (GDA) method, among the most commonly used methods for VI optimization are the extragradient method~\citep[EG,][]{korpelevich1976extragradient}, optimistic GDA~\citep[OGDA,][]{popov1980}, and the lookahead method~\citep[LA,][]{Zhang2019,chavdarova2021lamm}.  (See App.~\ref{app:additional_preliminaries} for a full description).
In contrast to gradient fields (as in a single-objective setting), when $F$ is a general vector field, the last iterate can be far from the solution even though the average iterate  converges to it~\citep{daskalakis2018training,chavdarova2019}. This is problematic since it implies that the average convergence guarantee is weaker in the sense that it may not extend to more general setups where we can no longer rely on the convexity of $\mathcal{X}$.  
\citet{golowich2020last,golowich2020noregret} provided a last-iterate lower bound  of $\mathcal{O}(\frac{1}{ \tilde{p} \sqrt{K}})$ for the broad class of $\tilde{p}$-\textit{stationary canonical linear iterative}  ($\tilde{p}$-SCLI) first-order methods~\citep{arjevani2016pscli}.
An extensive line of further work has provided guarantees for the last iterate for other problem classes. 
For the general monotone VI (MVI) class, the following $\tilde{p}$-SCLI methods come with guarantees that match the lower bound:
\begin{enumerate*}[series = tobecont, label=(\roman*)]
\item \citet{golowich2020last} obtained a rate in terms of the gap function relying on first- and second-order smoothness of $F$, and~\citet{gorbunov2022extra} obtained a rate of $\mathcal{O}(\frac{1}{K})$ in terms of
reducing the squared norm of the operator relying on first-order smoothness of $F$ (Assumption~\ref{asm:firstOrderSmoothness}), using a computer-assisted proof, and
\item ~\citet{golowich2020last} and~\citet{chavdarova2021hrdes} provided the best iterate rate for OGDA.
\end{enumerate*}

\noindent\textbf{Constrained zero-sum and VI classes of problems.} 
\citet{gidel2017fwolfe} extended the Frank-Wolfe~\citep{frankWolfe1956,jaggi2013,lacosteJaggi2015} method---also known as the \textit{conditional gradient}~\citep{DemyanovRubinov70}---to solve a subclass of cVI, specifically constrained zero-sum problems. This extension was carried out under a strong convex-concavity assumption and also under the assumption that the constraint set is \emph{strongly} convex; that is, it has sublevel sets that are strongly convex functions~\citep{Vial1983StrongAW}.
\citet{daskalakis2019constrained} provided an asymptotic proof for the last iterate for zero-sum convex-concave constrained problems for the \textit{optimistic multiplicative weights update} (OMWU) method.
Similarly, \citet{wei2021linear} focused on OGDA and OMWU  in the constrained setting and provided convergence rates for bilinear games over the simplex.
In her seminal work,~\citet{korpelevich1976extragradient} proposed the classical (projected) extragradient method (EG)---see App.~\ref{app:additional_preliminaries}---and proved its convergence for monotone (c)VIs with an $L$-Lipschitz operator, and, as mentioned above, 
\citet{cai2022constrVI} established a rate with respect to the gap function using a computer-aided proof.   
For the constrained setting, 
\citet{tseng1995} built on~\citep{pang1987} and provided a linear convergence rate for EG in the setting of \textit{strongly} monotone $F$, 
whereas~\citet{malitsky2015} focused on the same setting but on   the projected reflected gradient method. \citet{diakonikolas2020halpern} obtained parameter-free guarantee for Halpern iteration~\citep{halpern1967} for cocoercive operators.
\citet{goffin19971} described a second-order cutting-plane method for solving pseudomonotone VIs with linear inequalities.

\noindent\textbf{Interior point (IP) methods in single-objective and VI settings.} 
Traditionally IP methods primarily express the inequality constraints by augmenting the objective with a $\log$-barrier penalty (see \S~\ref{sec:preliminaries_ip}), and then use Newton's method to solve the subproblem~\citep{boyd2004convex}. The latter involves computing either the inverse of a large matrix or a Cholesky decomposition and  yet it can be highly efficient in low dimensions as it requires only a few iterations to converge. 
When the dimensionality of the variable is large, however, the computation becomes infeasible.
Among other IP variants that address this issue,~\citet{lin2018admm} replaced the Newton step with the ADMM method, which is known to be highly scalable in terms of the dimension~\citep{boyd2011admm}. 
In the context of cVIs, a few works apply IP methods, mostly Newton-based~\citep[e.g.,][Chapter $7$]{Nesterov1994InteriorpointPA}.
\citet{monteiro1996properties} analyze path-following IP methods for complementarity problems, which are a subclass of~\ref{eq:cvi}, using local homeomorphic maps. 
\citet{chen1998global} provided a superlinear global convergence rate of the smoothing Newton method when $F$ is semi-smooth for \textit{box constrained} VIs.
Similarly,~\citet{qi2002smoothing,qi2000new} focused on the smoothing Newton method and provided the rate for the outer loop.
\citet{ralph2000superlinear} showed superlinear convergence for MVI problems under inequality constraints, under the following set of assumptions: (i) existence of a \textit{strictly} complementary solution, (ii) full rank of the Jacobian of the active constraints at the solution, and (iii) twice differentiable constraints. They provided a \textit{local} convergence rate.
\citet{fan2010interior} considered inequality constraints and proposed a second-order Newton-based method that has global convergence guarantees under certain conditions. A rate was not provided.

\section{Preliminaries}\label{sec:preliminaries}

\noindent\textbf{Notation.}
Bold small and capital letters denote vectors and matrices, respectively. Sets are denoted with curly capital letters, e.g., $\mathcal{S}$. The Euclidean norm of $\vv$ is denoted by $\norm{\vv}$, and the inner product in Euclidean space with $\langle\cdot,\cdot\rangle$. 
With $\odot$ we denote element-wise product.
We let $[n]$ denote $\{1, \dots, n\}$ and let $\ve$ denote vector of all 1's. We let $\vx\perp\vy$ denote $\vx$ and $\vy$ are perpendicular. 

In the remainder of the paper, we consider a general setting in which the constraint set $\mathcal{C}\subseteq \mathcal{X}$ is defined as an intersection of finitely many inequalities and linear equalities: 
\begin{equation}\tag{CS}
    \mathcal{C}=\left\{ \vx \in  \R^n| \varphi_i (\vx) \leq 0, i \in [m], \,\,\mC\vx=\vd \right\} \label{eqK},
\end{equation}
where each $\varphi_i : \R^n \mapsto \R$,
$\mC \in \R^{p \times n}$, 
$\vd\in \R^p$, 
where we assume $rank (  \mC  ) =p$.
For brevity, with $\varphi$ we denote the concatenated $\varphi_i(\cdot), i\in[m]$, and 
in the remainder of the paper, each
$ \varphi_i \in C^1 (\R^n), i \in [m] $ and is convex.
For convenience we denote:
\begin{equation*}
\begin{split}
    \mathcal{C}_\leq \triangleq \left\{ \vx\in  \R^n \left| \varphi(\vx) \leq \boldsymbol{0} \right. \right\}, \quad
    \mathcal{C}_< \triangleq \left\{ \vx\in  \R^n \left| \varphi(\vx) < \boldsymbol{0} \right. \right\},  \quad
    \text{ and } \quad
    \mathcal{C}_= \triangleq \{\vy \in \R^n|\mC \vy = \vd\} \,;
\end{split}
\end{equation*}
thus the \textit{relative} interior of $\mathcal{C}$ is $ \textit{int}\ \mathcal{C}\triangleq \mathcal{C}_<  \cap  \mathcal{C}_=$, and we consider $ \textit{int}\ \mathcal{C} \neq \emptyset $ and $\mathcal{C}$ is compact.

In the following, we list the definitions and assumptions we refer to later on.

\begin{definition}[(strong/$\xi$) monotonicity]\label{def:monotone}
An operator $F:\mathcal{X}\supseteq \mathcal{S}
\to \R^n $ is \emph{monotone} on $\mathcal{S}$ if:
$
    \langle \vx-\vx', F(\vx)-F(\vx') \rangle \geq 0, \forall \vx, \vx' \in \mathcal{S}\,.
$
$F$ is said to be \emph{$\xi$--monotone} on $\mathcal{S}$ iff there exist $c > 0 $ and $\xi > 1$  such that
$\langle \vx-\vx', F(\vx)-F(\vx') \rangle \geq c \|\vx-\vx'\|^{\xi}$, for all $\vx, \vx' \in \mathcal{S}$.
Finally, $F$ is \emph{$\mu$-strongly monotone} on $\mathcal{S}$ if there exists $\mu > 0 $, such that
$\langle \vx-\vx', F(\vx)-F(\vx') \rangle \geq \mu\|\vx-\vx'\|^2$, for all $ \vx, \vx' \in \mathcal{S}$.
Moreover, we say that an operator $F$ is \emph{star--monotone}, \emph{star--$\xi$-monotone} or \emph{star--strongly-monotone} (on $\mathcal{S}$) if the respective definition holds for $\vx' \equiv \vx^\star$, where $\vx^\star \in \sol{\mathcal{S},F}$.
\end{definition}

Note that the ``star--'' definitions are weaker relative to their respective non-star counterparts.
The above definition holds similarly for unconstrained VIs, by setting $\mathcal{S} \equiv \R^n$.
The analog for~\ref{eq:cvi} of the function values used as a performance measure for convergence rates in convex optimization is the \textit{gap function} (a.k.a., the \textit{optimality} gap or \textit{primal} gap), defined next.

\begin{definition}[gap function]\label{def:gap}
Given a candidate point $\vx'\in \mathcal{X}$ 
and a map $F:\mathcal{X}\supseteq \mathcal{S}\to \R^n$ where $\mathcal{S}$ is compact, the gap function $\mathcal{G}: \R^n\to\R$ is defined as
$
\mathcal{G}(\vx', \mathcal{S}) \triangleq \underset{\vx\in\mathcal{S}}{\max} \langle F(\vx'), \vx' - \vx \rangle \,.
$
\end{definition}
Note that the gap function requires $\mathcal{S}$ to be compact in order to be defined (as otherwise, it can be infinite). We will rely on the following assumption to express our rates in terms of the gap function.

\begin{asmp}[first-order smoothness] \label{asm:firstOrderSmoothness}
    Let $F:  \mathcal{X}\supseteq\mathcal{S}
    \to \R^n$ be an operator, we say that $F$ satisfies
    \textit{$L$-first-order smoothness} on $\mathcal{S}$, or $L$-smoothness, if $F$ is an $L$-Lipschitz map; that is, there exists $L > 0$ such that  $\norm{F(\vx) - F(\vx')} \leq L \norm{\vx-\vx'}$, for all $\vx, \vx' \in \mathcal{S}$.
\end{asmp}

As an informal summary, a solution existence guarantee follows when $\mathcal{X}$ is compact; see Chapter 2.2 of~\citep{facchinei2003finite}, and App.~\ref{sec:sol_existence}.

\subsection{Relevant path-following interior-point methods and ADMM}\label{sec:preliminaries_ip}

In this section, we overview the interior-point approach to single-objective optimization, focusing on aspects that are most relevant to our proposed method. Consider the following  problem:
\begin{equation} \tag{cCVX} \label{eq:constr_cvx}
    \underset{\vx}{\min} \  f(\vx) \qquad
    \textit{s.t.} \quad \varphi(\vx) \leq \boldsymbol{0} \quad
    \textit{and}  \quad \mC\vx=\vd \,,
\end{equation}
where $f,\varphi_i:\R^n\rightarrow\R$ are convex and continuously differentiable, $\vx\in\R^n$, $\mC \in \R^{p\times n}$, and $\vd \in \R^{p}$.
IP methods solve problem~\eqref{eq:constr_cvx} by reducing it to a sequence of linear equality-constrained problems via a logarithmic barrier~\citep[see, e.g.,][ Chapter 11]{boyd2004convex}:
\begin{equation} \tag{l-cCVX} \label{eq:log_constr_cvx}
    \underset{\vx}{\min} \  f(\vx) - \mu \sum_{i=1}^m {\log ( -\varphi _i ( \vx )  )}
   \quad  s.t. \quad  \mC\vx=\vd, \qquad \text{with} \quad \mu>0 \,.
\end{equation}
Assume that \eqref{eq:log_constr_cvx} has a solution for each $\mu>0$, and let $\vx^{\mu}$ denote the solution of \eqref{eq:log_constr_cvx} for a given $\mu$. The \textit{central path} of~\eqref{eq:log_constr_cvx} is defined as the set of points $\vx^\mu, \mu>0$. 
Note that $\vx^\mu\in \R^n$ is a strictly feasible point of~\eqref{eq:constr_cvx} as it satisfies
$\varphi(\vx^\mu)<\boldsymbol{0}$ and $ \mC\vx^\mu=\vd$.

\noindent\textbf{Alternating direction method
of multipliers (ADMM) method.} 
ADMM~\citep{glowinskiMarroco1975,gabayAndMercier1976dual,lions1979splitting,Glowinski1989} is a gradient-based algorithm for convex optimization problems that splits the objective into subproblems each of which is easier to solve. Its  popularity is due to its computational scalability~\citep{boyd2011admm}.
Consider a problem of the following form:
\begin{equation} \tag{ADMM-Pr} \label{eq:admm_cvx_problem}
    \underset{\vx,\vy}{\min} \ f(\vx)+g(\vy)
    \quad s.t. \quad 
    \mA\vx+\mB\vy=\vb\,,
\end{equation}
where $f,g: \R^n\to\R$ are convex, $\vx,\vy \in \R^n,$  $\mA,\mB \in \R^{n'\times n}$, and $\vb \in \R^{n'}$.
The augmented Lagrangian function, $\mathcal{L}_\beta(\cdot)$, of the \eqref{eq:admm_cvx_problem} problem is:
\begin{equation} \label{eq:admm_cvx_aug_lag}\tag{AL-CVX}
    \mathcal{L}_{\beta}(\vx,\vy,\vlmd)=f(\vx)+g(\vy)+\left< \mA\vx+\mB\vy-\vb ,\vlmd\right>+\frac{\beta}{2}\lVert \mA\vx+\mB\vy-\vb \rVert ^2,
\end{equation} 
where $\beta>0$ is referred to as the \textit{penalty parameter}. If the augmented Lagrangian method is used to solve \eqref{eq:admm_cvx_aug_lag}, at each step $k$ we have:
\begin{align*}
    \vx_{k+1},\vy_{k+1} =arg\underset{\vx,\vy}{min} \text{ } \mathcal{L}_{\beta}(\vx,\vy,\vlmd_k) \quad  \text{and} \quad
    \vlmd_{k+1} =\vlmd_k+\beta(\mA\vx_{k+1}+\mB\vy_{k+1}-\vb) \,, 
\end{align*}
where the latter step is gradient ascent on the dual.
In contrast, ADMM updates $\vx$ and $\vy$ in an alternating way as follows:
\begin{equation} \tag{ADMM}\label{eq:admm}
 \begin{aligned} 
     \vx_{k+1}&=arg\underset{\vx}{min} \text{ } \mathcal{L}_{\beta}(\vx,\vy_k,\vlmd_k)  \,, \\ 
     \vy_{k+1}&=arg\underset{\vy}{min} \text{ } \mathcal{L}_{\beta}(\vx_{k+1},\vy_k,\vlmd_k)\,, \\ 
     \vlmd_{k+1}&=\vlmd_k+\beta(\mA\vx_{k+1}+\mB\vy_{k+1}-\vb)\,. 
\end{aligned}
\end{equation}

\section{\textit{ACVI}: first-order ADMM-based IP method for constrained VIs}\label{sec:acvi}

\subsection{Deriving the ACVI algorithm}\label{sec:deriving_acvi}
In this section, we derive an interior-point method for the~\ref{eq:cvi} problem that we refer to as ACVI (\textbf{A}DMM-based \textbf{i}nterior problem for constrained \textbf{VI}s).
We first restate the~\ref{eq:cvi} problem in a form that will allow us to derive an interior-point procedure.
By the definition of~\ref{eq:cvi} it follows~\citep[see \S 1.3 in ][]{facchinei2003finite} that:\looseness=-1
\noindent
\begin{equation}\tag{KKT}\label{eq:cvi_ip_eq-form1}
\vx \in  \sol{\mathcal{C},F} 
\Leftrightarrow 
\begin{cases}
 \vw = \vx \\
 \vx =  \argmin{\vz} F (  \vw  )  ^\intercal\vz  \\
 \textit{s.t.}  \quad \varphi  (  \vz  )  \leq \boldsymbol{0} \\
 \qquad\bm{C}\vz=\vd 
\end{cases} 
\Leftrightarrow 
\begin{cases}
    F (  \vx  )  +\nabla \varphi ^\intercal (  \vx  )  \vlmd +\mC^\intercal\vnu=\boldsymbol{0} \\
    \mC\vx=\vd\\
    \boldsymbol{0} \leq \vlmd \bot \varphi  (  \vx  )  \leq \boldsymbol{0},
\end{cases} \hspace{-1em}\, 
\end{equation}\looseness=-1
where $\vlmd \in \R^m$ and $\vnu \in \R^p$ are dual variables, and $\perp$ denotes perpendicular. 
Recall that we assume that $\textit{int}\ \mathcal{C}\ \neq \emptyset $, thus, by the Slater condition (using the fact that $\varphi_i(\vx), i \in [m]$ are convex) and the KKT conditions, the second equivalence holds, yielding the KKT system of~\ref{eq:cvi}.
Note that the above equivalence also guarantees the two solutions coincide; see \citet[][Prop. 1.3.4 (b)]{facchinei2003finite}.
Analogous to the method described in \S~\ref{sec:preliminaries}, we add a $\log$-barrier term to the objective to remove the inequality constraints and obtain the following modified version of \eqref{eq:cvi_ip_eq-form1}:\looseness=-1
\noindent
\begin{equation}\tag{KKT-2}\label{eq:cvi_ip_eq_form2}
\hspace{-.2em}
\begin{cases}
\vw=\vx\\
\vx=\argmin{\vz}\,\,F ( \vw  )  ^\intercal\vz-\mu \sum\limits_{i=1}^m{\log  \big( {-}\varphi _i ( \vz )  \big) }\\
\textit{s.t.} \quad\mC\vz=\vd
\end{cases}
\hspace{-.7em}
\Leftrightarrow 
\begin{cases}
F ( \vx  )  +\nabla \varphi ^\intercal ( \vx  )  \vlmd +\mC^\intercal\vnu=\boldsymbol{0}\\
\vlmd \odot \varphi  ( \vx  )  +\mu \ve=\boldsymbol{0}\\
\mC \vx - \vd = \boldsymbol{0}\\
\varphi  ( \vx  )  <\boldsymbol{0},\vlmd >\boldsymbol{0},
\end{cases}\hspace{-4em}\,
\end{equation} \looseness=-1
with $\mu>0$, $\ve \triangleq [1, \dots, 1]^\intercal\in \R^m$. Again, the equivalence holds by the KKT and the Slater condition.
We derive the update rule at step $k$ via the following subproblem:
$
\underset{\vx}{\min}\,\,F ( \vw_k )  ^\intercal\vx-\mu \sum\limits_{i=1}^m{\log  \big(  -\varphi _i ( \vx )   \big) } \,,
s.t.\  \mC\vx=\vd \,, 
$
where we fix $\vw=\vw_k$.
Directly projecting on the equality constraint may cause the vectors to fall out of the domain of the $\log$ term.
On the other hand,  (i) $\vw_k$ is a constant vector in this subproblem, 
and
(ii) the objective 
is split, making ADMM a natural choice to solve the subproblem. 
Hence, we introduce a new variable $\vy\in\R^n$ yielding:
\begin{equation}
\hspace{-.1em}
\begin{cases}
\underset{\vx,\vy}{\min} F (  \vw_k  )  ^\intercal\vx  
+\mathds{1} [  \mC\vx=\vd  ] 
-\mu \sum\limits_{i=1}^m{\log  \big(  -\varphi _i (  \vy  )   \big)  }\\
s.t. \qquad \vx=\vy 
\end{cases} \hspace{-1.5em} \,, \ \
\mathds{1} [  \mC\vx=\vd  ] \triangleq 
\begin{cases}
0, & \text{if } \mC\vx=\vd \\
+\infty, & \text{if } \mC\vx\ne \vd. \\
\end{cases} \hspace{-.3em}
\label{eq:cvi_subproblem1_admm_format}
\end{equation} 
Note that $\mathds{1} [  \mC\vx=\vd  ]  $ is a generalized real-valued convex function of $\vx$.
We introduce the following:
\begin{minipage}[h]{0.45\textwidth}
\begin{equation} \tag{$\mP_c$} \label{eq:matrix_pc}
    \mP_c \triangleq \mI - \mC^\intercal (\mC \mC^\intercal )^{-1}\mC \,,
\end{equation}
\end{minipage}\hfill and
\begin{minipage}[h]{0.45\textwidth}
\begin{equation}\label{eq:dc_equation} \tag{$d_c$-EQ}
    \vd_c \triangleq \mC^\intercal(\mC \mC^\intercal)^{-1}\vd
    \,,
\end{equation}
\end{minipage}
where  $\mP_c \in \R^{n\times n}$  and $\vd_c \in \R^n$.
The augmented Lagrangian of \eqref{eq:cvi_subproblem1_admm_format} is thus:
\noindent
\begin{equation} \tag{AL}\label{eq:cvi_aug_lag}
 \mathcal{L}_{\beta} (  \vx,\vy,\vlmd  )  {=} F (  \vw_k  )  ^\intercal\vx
 +\mathds{1} (  \mC\vx=\vd  )
 -\mu \sum_{i=1}^m{\log  (  -\varphi _i (  \vy  )   ) }  +\left< \left. \vlmd ,\vx-\vy \right> \right. +\frac{\beta}{2} \norm{\vx-\vy}^{2}, \hspace{-.1em}
\end{equation}
where $\beta>0$ is the penalty parameter.
Finally, using~\ref{eq:admm}, we have the following update rule for $\vx$ at step $k$: \looseness=-1
\begin{equation} \label{eq:x}
\begin{split}
\vx_{k+1}=\arg \underset{\vx\in\mathcal{C}_=}{\min} \text{ } \mathcal{L}_{\beta} (  \vx,\vy_k,\vlmd_k  ) 
=\arg \underset{\vx\in\mathcal{C}_=}{\min}\frac{\beta}{2}
\norm{\vx-\vy_k+\frac{1}{\beta} (  F (  \vw_k  )  +\vlmd_k  )}^{2}. 
\end{split}
\end{equation}
This yields the following update for $\vx$: \looseness=-1
\begin{equation} \tag{X-EQ}\label{eq:x_analytic}
\vx_{k+1}= \mP_c   
\Big(  
\vy_{k}-\frac{1}{\beta} \big( F(\vw_k) +  \vlmd_k \big)  
\Big)  
+\vd_c \,.
\end{equation}
For $\vy$ and the dual variable $\vlmd$, we have:\looseness=-1
\begin{equation} \label{eq:y_solution} \tag{Y-EQ}
\begin{split}
\vy_{k+1}& = \arg \underset{\vy}{\min}  \text{ } \mathcal{L}_\beta  ( \vx_{k+1},\vy,\vlmd_k ) \\
& = \arg \underset{\vy}{\min}\left(
- \mu \sum_{i=1}^{m} \log \big(-\varphi_i(\vy)\big) +
\frac{\beta}{2} \norm{\vy - \vx_{k+1} - \frac{1}{\beta}\vlmd_k}^2\right),
\end{split}
\end{equation}
\begin{equation}\tag{$\lambda$-EQ}\label{eq:lamda_eq}
\vlmd_{k+1}=\vlmd_k+\beta  (  \vx_{k+1}-\vy_{k+1}  ).
\end{equation}
Next, we derive the update rule for $\vw$.
We set $\vw_{k}$ to be the solution of the following equation:
\begin{equation} \label{eq:w_equation} \tag{W-EQ} 
\vw + \frac{1}{\beta} \mP_c F(\vw) - \mP_c \vy_k + \frac{1}{\beta} \mP_c \vlmd_k - \vd_c = \boldsymbol{0}.
\end{equation}
The following theorem ensures the solution of~\eqref{eq:w_equation} exists and is unique, see App.~\ref{app:proof_thm_eq_q_solution} for proof.

\begin{thm}[\ref{eq:w_equation}: solution uniqueness]\label{thm:solutions_w_equation}
   If $F$ is monotone on $\mathcal{C}_=$, the following statements hold true for the solution of ~\eqref{eq:w_equation}: 
   (i) it always exists,
   (ii) it is unique,
   and (iii) it is contained in
   $\mathcal{C}_=$.
  
\end{thm}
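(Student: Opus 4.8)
Define the map $G:\R^n\to\R^n$ by $G(\vw)\defas \vw+\tfrac1\beta\mP_cF(\vw)-\mP_c\vy_k+\tfrac1\beta\mP_c\vlmd_k-\vd_c$, so that solving~\eqref{eq:w_equation} amounts to finding a zero of $G$. The plan is to first establish~(iii), namely that every zero of $G$ lies in $\mathcal{C}_=$, which guarantees that $F$ is only ever evaluated where it is monotone, and then to reduce the equation to a strongly monotone root-finding problem on the subspace $\ker\mC$, where existence and uniqueness follow from classical results. Before starting, I would record the elementary facts about $\mP_c$ and $\vd_c$: since $\mathrm{rank}(\mC)=p$, the matrix $\mP_c$ from~\eqref{eq:matrix_pc} is the orthogonal projector onto $\ker\mC=(\mathrm{range}\,\mC^\intercal)^\perp$; hence $\mP_c$ is symmetric, idempotent, $\mC\mP_c=\boldsymbol{0}$, and, from~\eqref{eq:dc_equation}, $\mC\vd_c=\vd$ and $\mP_c\vd_c=\boldsymbol{0}$.

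For~(iii): left-multiplying $G(\vw)=\boldsymbol{0}$ by $\mC$ and using $\mC\mP_c=\boldsymbol{0}$ together with $\mC\vd_c=\vd$, every term containing $\mP_c$ vanishes and we are left with $\mC\vw-\vd=\boldsymbol{0}$, i.e.\ $\vw\in\mathcal{C}_=$. Thus any solution automatically lies in $\mathcal{C}_=$, and conversely it suffices to search for solutions there. Parametrizing $\mathcal{C}_=$ as $\vd_c+\ker\mC$ and writing $\vw=\vd_c+\vu$ with $\vu\in\ker\mC$ (so that $\mP_c\vu=\vu$), equation~\eqref{eq:w_equation} becomes $H(\vu)=\boldsymbol{0}$, where $H:\ker\mC\to\ker\mC$ is defined by $H(\vu)\defas\vu+\tfrac1\beta\mP_cF(\vd_c+\vu)-\mP_c\vy_k+\tfrac1\beta\mP_c\vlmd_k$; note that $H$ indeed maps $\ker\mC$ into itself since every term on the right lies in $\mathrm{range}\,\mP_c=\ker\mC$.

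It remains to show $H$ has a unique zero. For $\vu_1,\vu_2\in\ker\mC$, using that $\mP_c$ is symmetric with $\mP_c\vu_i=\vu_i$, that $\vd_c+\vu_1,\vd_c+\vu_2\in\mathcal{C}_=$, and the monotonicity of $F$ on $\mathcal{C}_=$, one gets
\[
\langle H(\vu_1)-H(\vu_2),\,\vu_1-\vu_2\rangle=\|\vu_1-\vu_2\|^2+\tfrac1\beta\big\langle F(\vd_c+\vu_1)-F(\vd_c+\vu_2),\,\vu_1-\vu_2\big\rangle\ \ge\ \|\vu_1-\vu_2\|^2 .
\]
Hence $H$ is continuous (as $F$ is continuous) and $1$-strongly monotone on the finite-dimensional Euclidean space $\ker\mC$. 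A continuous strongly monotone operator on such a space is a homeomorphism onto it: injectivity and properness follow immediately from $\|H(\vu_1)-H(\vu_2)\|\ge\|\vu_1-\vu_2\|$ (Cauchy--Schwarz), and surjectivity follows from coercivity by a standard Brouwer-degree / Minty--Browder argument. Therefore $H$ has a unique zero $\vu^\star$, and $\vw_k\defas\vd_c+\vu^\star$ is the unique solution of~\eqref{eq:w_equation}; by~(iii) it lies in $\mathcal{C}_=$. The only step that is not pure linear algebra with $\mP_c$ or a one-line monotonicity estimate is the surjectivity of a continuous strongly monotone map, which I would invoke as a known result rather than reprove; this is the main (and essentially only) obstacle.
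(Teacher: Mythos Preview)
Your proof is correct and rests on the same core observation as the paper's: the map $G$ is $1$-strongly monotone on $\mathcal{C}_=$, and any zero of $G$ must lie in $\mathcal{C}_=$.

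The packaging differs slightly. The paper keeps $G$ defined on $\R^n$, shows it is strongly monotone on the affine set $\mathcal{C}_=$, and invokes \citep[Thm.~2.3.3(b)]{facchinei2003finite} to obtain a unique solution $\tilde\vx$ of the VI$(\mathcal{C}_=,G)$; it then needs an additional orthogonality argument---observing that $G(\tilde\vx)\in\mathrm{Span}\{\vc_1,\dots,\vc_p\}$ while also $\mC G(\tilde\vx)=\boldsymbol{0}$---to conclude $G(\tilde\vx)=\boldsymbol{0}$. You instead parametrize $\mathcal{C}_=$ as $\vd_c+\ker\mC$ and pull the equation back to the \emph{linear} subspace $\ker\mC$, where a VI solution is automatically a zero; this lets you invoke surjectivity of a continuous strongly monotone map directly and dispense with the orthogonality step. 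Both routes appeal to essentially the same existence theorem (strong monotonicity $\Rightarrow$ unique solution), so the difference is cosmetic, but your reduction to $\ker\mC$ is arguably a little cleaner.
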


\begin{rmk}
Note that when there are no equality constraints, $\mathcal{C}_=$ becomes the entire space $\R^n$.
Further notice that $\vw_k = \vx_{k+1}$, thus it is redundant to state it in the algorithm, and we remove $\vw$. 
\end{rmk}
We summarize the full algorithm as Algorithm~\ref{alg:acvi}.
For problems such as affine or low-dimensional VIs, or optimization over the probability simplex,~\eqref{eq:w_equation} can be solved analytically, such that step~\ref{alg-ln:x-sol} is fast to compute. 
For problems where~\eqref{eq:w_equation} is cumbersome to solve analytically---e.g., in GANs---one could use optimization methods for the \textit{unconstrained} case, e.g., EG and GDA, among others.
See App.~\ref{app:alg_discussion} for further discussion.
In the remaining discussion, where clear from context, we drop the superscript from the iterate $\vx^{(t)}_k$.

\begin{algorithm}[H]
    \caption{ACVI pseudocode.}
    \label{alg:acvi}
    \begin{algorithmic}[1] 
        \STATE \textbf{Input:}  operator $F:\mathcal{X}\to \R^n$, constraints $\mC\vx = \vd$ and  $\varphi_i(\vx) \leq 0, i = [m]$,  
        hyperparameters $\mu_{-1},\beta>0, \delta\in(0,1)$,
        number of outer and inner loop iterations $T$ and $K$, resp.
        \STATE \textbf{Initialize:} $\vy^{(0)}_0\in \R^n$, $\vlmd_0^{(0)}\in \R^n$ 
        \STATE $\mP_c \triangleq \mI - \mC^\intercal (\mC \mC^\intercal )^{-1}\mC$ \hfill \mbox{where} $\mP_c \in \R^{n\times n}$ 
        \STATE $\vd_c \triangleq \mC^\intercal(\mC \mC^\intercal)^{-1}\vd$  \hfill \mbox{where} $\vd_c \in \R^n$
        \FOR{$t=0, \dots, T-1$} 
                \STATE $\mu_t = \delta \mu_{t-1}$
        \FOR{$k=0, \dots, K-1$} 
            \STATE Set $\vx^{(t)}_{k+1}$ to be the  solution of: $\vx + \frac{1}{\beta} \mP_c F(\vx) - \mP_c \vy_k^{(t)} + \frac{1}{\beta}\mP_c\vlmd_k^{(t)} - \vd_c = \boldsymbol{0}$ (w.r.t. $\vx$) \label{alg-ln:x-sol} 
        \STATE $\vy_{k+1}^{(t)} = \argmin{\vy} - \mu_t \sum_{i=1}^m \log \big( - \varphi_i (\vy) \big) + \frac{\beta}{2} \norm{\vy-\vx_{k+1}^{(t)} - \frac{1}{\beta}\vlmd_k^{(t)}}^2$
        \STATE 
        $
            \vlmd_{k+1}^{(t)}=\vlmd_k^{(t)}+\beta ( \vx_{k+1}^{(t)}-\vy_{k+1}^{(t)})
        $
        \ENDFOR
        \STATE $(\vy^{(t+1)}_0, \vlmd^{(t+1)}_0 ) \triangleq (\vy^{(t)}_{K}, \vlmd^{(t)}_{K} )$
        \ENDFOR 
    \end{algorithmic}
\end{algorithm}

\subsection{Convergence Analysis}\label{sec:analysis}

We consider two broad classes of problems.
The first class assumes that $F$ is $\xi$-monotone on $\mathcal{C}_=$---a stronger assumption than monotonicity, yet weaker than strong monotonicity.
The second setup requires that (i) $F$ is monotone, (ii) the constraints are active at the solution, and (iii) $F$ is not purely rotational. 
Note that (iii) is weaker than requiring that the active constraints at the
solution form an acute angle with the operator; in other words, given the latter, the former holds due to monotonicity of $F$. (See App.~\ref{app:proofs}).
Note that (iii) is not strong, as purely rotational games occur ``almost never'' in a Baire category sense~\citep{kupka1963contributiona,smale1963stable,balduzzi2018mechanics,hsieh2021limits}.
The proofs of the main theorems use the following lemma.
\begin{lm}[Upper bound for $\mathcal{G}(\cdot)$]
\label{lm:dg_upper_bound}
When $F$ is L-Lipschitz on $\mathcal{C}_=$---as per Assumption~\ref{asm:firstOrderSmoothness}---we have that any iterate $\vx_k$ produced by Algorithm~\ref{alg:acvi} satisfies
$ \mathcal{G}(\vx_k, \mathcal{C}) \leq M_0 \norm{\vx_k - \vx^\star}$, where $M_0 > 0$ depends linearly on $L$, and $\vx^\star \in \sol{\mathcal{C},F} $.
\end{lm}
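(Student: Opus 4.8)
The plan is to estimate the inner product $\langle F(\vx_k),\vx_k-\vx\rangle$ that defines $\mathcal{G}(\vx_k,\mathcal{C})$ uniformly over $\vx\in\mathcal{C}$, bounding it by a multiple of $\norm{\vx_k-\vx^\star}$, and then to take the supremum over $\vx\in\mathcal{C}$. Two facts drive the argument: the variational-inequality optimality of $\vx^\star$, which lets us discard the only term that does not already vanish as $\vx_k\to\vx^\star$; and the $L$-Lipschitzness of $F$ on $\mathcal{C}_=$, which converts the remaining terms into multiples of $\norm{F(\vx_k)-F(\vx^\star)}\le L\norm{\vx_k-\vx^\star}$.

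Concretely, I would fix any $\vx^\star\in\sol{\mathcal{C},F}$ and, for $\vx\in\mathcal{C}$, insert $F(\vx^\star)$ and $\vx^\star$ into the inner product:
\begin{equation*}
\langle F(\vx_k),\vx_k-\vx\rangle
=\langle F(\vx_k)-F(\vx^\star),\,\vx_k-\vx\rangle
+\langle F(\vx^\star),\,\vx_k-\vx^\star\rangle
+\langle F(\vx^\star),\,\vx^\star-\vx\rangle .
\end{equation*}
The third term is $\le 0$ for every $\vx\in\mathcal{C}$, directly from the definition of $\vx^\star$ as a solution of the~\ref{eq:cvi} problem. The second term is at most $\norm{F(\vx^\star)}\,\norm{\vx_k-\vx^\star}$ by Cauchy--Schwarz, where $\norm{F(\vx^\star)}$ is a finite constant since $F$ is continuous on the compact set $\mathcal{C}$. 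For the first term, Theorem~\ref{thm:solutions_w_equation} guarantees $\vx_k\in\mathcal{C}_=$, and $\vx^\star\in\mathcal{C}\subseteq\mathcal{C}_=$, so Assumption~\ref{asm:firstOrderSmoothness} applies and Cauchy--Schwarz gives $\langle F(\vx_k)-F(\vx^\star),\vx_k-\vx\rangle\le L\,\norm{\vx_k-\vx^\star}\,\norm{\vx_k-\vx}$. Using compactness of $\mathcal{C}$ together with boundedness of the iterates, the quantity $R_0\defas\sup_{k}\sup_{\vx\in\mathcal{C}}\norm{\vx_k-\vx}$ is finite, so this term is at most $LR_0\norm{\vx_k-\vx^\star}$. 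Summing the three bounds and maximizing over $\vx\in\mathcal{C}$ yields $\mathcal{G}(\vx_k,\mathcal{C})\le (LR_0+\norm{F(\vx^\star)})\norm{\vx_k-\vx^\star}$, i.e., the claim with $M_0\defas LR_0+\norm{F(\vx^\star)}$, which depends (affinely, hence at most linearly) on $L$.

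The crux, and the one nontrivial input, is the boundedness of $\{\vx_k\}$: it is essential that the factor $\norm{\vx_k-\vx}$ multiplying $L\norm{\vx_k-\vx^\star}$ in the cross term can be replaced by a \emph{constant}, since if one only used $\norm{\vx_k-\vx}\le\norm{\vx_k-\vx^\star}+\mathrm{diam}(\mathcal{C})$ one would obtain an unwanted term $L\norm{\vx_k-\vx^\star}^2$ and lose linearity in $\norm{\vx_k-\vx^\star}$. I would establish $\sup_k\norm{\vx_k-\vx^\star}<\infty$ by noting that $\vx_k\in\mathcal{C}_=$ and is tied through~\eqref{eq:w_equation} to the strictly feasible iterate $\vy_{k-1}$, so that $\{\vx_k\}$ remains in a bounded neighborhood of the compact set $\mathcal{C}$; alternatively and more robustly, the same bound follows from the non-increase (or at least boundedness) of $\norm{\vx_k-\vx^\star}$ produced by the Lyapunov/energy estimates used in the main convergence analysis. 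Given any such bound $R$, one may take $R_0\le R+\mathrm{diam}(\mathcal{C})$ and the argument goes through; a minor remaining check is only that $F(\vx^\star)$ is well defined, which is immediate from continuity of $F$ and compactness of $\mathcal{C}$.
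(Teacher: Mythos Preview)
Your proposal is correct and follows essentially the same approach as the paper: add and subtract $F(\vx^\star)$ and $\vx^\star$, drop the term $\langle F(\vx^\star),\vx^\star-\vx\rangle\le 0$ via the VI definition, and bound the remaining two terms using Cauchy--Schwarz together with $L$-Lipschitzness and boundedness of the iterates (which the paper draws from the proof of Theorem~\ref{thm:3.1}, exactly your ``Lyapunov/energy estimates'' option). The only cosmetic difference is the decomposition: the paper writes the cross term as $\langle F(\vx_k)-F(\vx^\star),\,\vx^\star-\vx\rangle$ and the linear term as $\langle F(\vx_k),\,\vx_k-\vx^\star\rangle$, whereas you swap these roles; consequently the paper's $L$-coefficient is $D\triangleq\max_{\vx'\in\mathcal{C}}\norm{\vx^\star-\vx'}$ (a fixed problem constant) rather than your iterate-dependent $R_0$, and its additive constant is $M\triangleq\sup_k\norm{F(\vx_k)}$ rather than your $\norm{F(\vx^\star)}$, but both yield $M_0$ of the form $DL+M$.
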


To state the results we define the following sets. For $r,s>0$, let $\hat{\mathcal{C}}_r \triangleq \{\vx\in \R^n | \mC\vx=\vd, \varphi (\vx) \leq r\ve\}$, and similarly let $\tilde{\mathcal{C}}_s \triangleq \{\vx\in \R^n | \norm{\mC\vx-\vd } \leq s, \varphi(\vx)\leq \boldsymbol{0}\}$.
We have the following.

\begin{thm}[Last and average iterate convergence for star-$\xi$-monotone operator]\label{thm:xi_rate}
Given an operator $F: \mathcal{X}\to \R^n$ monotone on $\mathcal{C}_=$ (Def.~\ref{def:monotone}), assume that either $F$ is strictly monotone on $\mathcal{C}$ or one of $\varphi_i$ is strictly convex. Assume there exists $r>0$ or $s>0$ such that F is star-$\xi$-monotone on either $\hat{\mathcal{C}}_r$ or $\tilde{\mathcal{C}}_s$.
Let $\vx_K^{(t)}$ and $\hat{\vx}_K^{(t)} \triangleq \frac{1}{K} \sum_{k=1}^K \vx_k^{(t)}$ 
denote the last and average iterate of Algorithm~\ref{alg:acvi}, respectively, run with sufficiently small $\mu_{-1}$. 
Then for all $t \in [T]$, we have that: 
\begin{enumerate}[leftmargin=*]
\setlength\itemsep{-.5em}
\item $ \norm{\vx_K^{(t)}-\vx^\star} \leq \mathcal{O} (\frac{1}{K^{1/(2\xi)}})$. 
\item If in addition $F$ is $\xi$-monotone on $C_=$, we have $\norm{ \hat{\vx}_K^{(t)}-\vx^\star } \leq \mathcal{O} (\frac{1}{K^{1/\xi}})\,.$ 
\item Moreover, if $F$ is L-Lipschitz on $\mathcal{C}_=$---as per Assumption~\ref{asm:firstOrderSmoothness}---the same corresponding upper bounds hold for $\mathcal{G}(\vx^{(t)}_K, \mathcal{C})$ and  $\mathcal{G}(\hat\vx^{(t)}_K, \mathcal{C})$; that is,  
$\mathcal{G}(\vx^{(t)}_K, \mathcal{C}) \leq \mathcal{O} (\frac{L}{K^{1/(2\xi)}})$ and
$\mathcal{G}(\hat\vx^{(t)}_K, \mathcal{C}) \leq \mathcal{O} (\frac{L}{K^{1/\xi}})$ .
\end{enumerate}
\end{thm}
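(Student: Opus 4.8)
The argument is an ADMM-style convergence analysis anchored on the interior-point reformulation~\eqref{eq:cvi_ip_eq_form2}. Fix an outer iteration $t$ and write $\mu=\mu_t$; every constant below is tracked for its dependence on $\mu$ and on the initialization, and is ultimately controlled by the hypothesis that $\mu_{-1}$ is small. The first step is to record the first-order optimality conditions of the three inner-loop updates of Algorithm~\ref{alg:acvi}. Since the $\vx$-subproblem~\eqref{eq:x} is a strictly convex quadratic over the affine set $\mathcal{C}_=$ and $\vw_k=\vx_{k+1}$ (well-defined by Theorem~\ref{thm:solutions_w_equation}), its optimality condition is an \emph{equality}: for every $\vx\in\mathcal{C}_=$,
\begin{equation*}
\big\langle \vx-\vx_{k+1},\; F(\vx_{k+1})+\vlmd_{k+1}+\beta(\vy_{k+1}-\vy_k)\big\rangle=0 ,
\end{equation*}
after substituting $\vlmd_{k+1}=\vlmd_k+\beta(\vx_{k+1}-\vy_{k+1})$. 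The $\vy$-subproblem~\eqref{eq:y_solution} is smooth and strongly convex, so its optimality gives $\vs_{k+1}\odot\varphi(\vy_{k+1})=-\mu\ve$ and $\vlmd_{k+1}=\nabla\varphi(\vy_{k+1})^\intercal\vs_{k+1}$ for some $\vs_{k+1}\in\R^m_{>0}$, with $\varphi(\vy_{k+1})<\boldsymbol{0}$. Hence $(\vx_{k+1},\vy_{k+1},\vlmd_{k+1},\vs_{k+1})$ satisfies a $\mu$-relaxed version of~\eqref{eq:cvi_ip_eq_form2} up to the residuals $\norm{\vx_{k+1}-\vy_{k+1}}=\tfrac1\beta\norm{\vlmd_{k+1}-\vlmd_k}$ and $\norm{\vy_{k+1}-\vy_k}$; the first of these also bounds the constraint violation, $\varphi(\vx_{k+1})\le\mathcal{O}(\norm{\vx_{k+1}-\vy_{k+1}})\ve$, since $\varphi(\vy_{k+1})<\boldsymbol{0}$.

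Next I take as reference point a solution $(\vx^\star_\mu,\vx^\star_\mu,\vlmd^\star_\mu,\vs^\star_\mu)$ of~\eqref{eq:cvi_ip_eq_form2}, whose existence and uniqueness follow from Slater's condition and the ``$F$ strictly monotone on $\mathcal{C}$ or one $\varphi_i$ strictly convex'' hypothesis (the same hypothesis pins down $\vx^\star\in\sol{\mathcal{C},F}$). Testing the $\vx$-optimality equality with $\vx=\vx^\star_\mu$, using convexity of the $\varphi_i$ together with $\vs_{k+1}\odot\varphi(\vy_{k+1})=-\mu\ve$ to handle the barrier term, and invoking monotonicity of $F$, I would obtain the one-step descent inequality
\begin{equation*}
\Phi_{k+1}\;\le\;\Phi_k\;-\;2\beta\big\langle F(\vx_{k+1})-F(\vx^\star_\mu),\,\vx_{k+1}-\vx^\star_\mu\big\rangle\;-\;\rho_k\;+\;\mathcal{O}(\mu),
\end{equation*}
with the standard ADMM Lyapunov function $\Phi_k\defas\beta\norm{\vy_k-\vx^\star_\mu}^2+\tfrac1\beta\norm{\vlmd_k-\vlmd^\star_\mu}^2$ and $\rho_k\ge0$ collecting $\beta\norm{\vy_{k+1}-\vy_k}^2$ and $\beta\norm{\vx_{k+1}-\vy_{k+1}}^2$. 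The strengthened monotonicity enters here: once $\mu_{-1}$ is small enough that, by compactness of $\mathcal{C}$, the iterates stay in a bounded neighbourhood of $\mathcal{C}$, the $\vx$-iterates lie in $\hat{\mathcal{C}}_r$ and the $\vy$-iterates in $\tilde{\mathcal{C}}_s$, so the assumed star-$\xi$-monotonicity on that set lower-bounds the inner product by $c\norm{\vx_{k+1}-\vx^\star}^\xi$ up to an additive correction of order $\mathcal{O}(\mu)+\mathcal{O}(\norm{\vx^\star_\mu-\vx^\star})$, which vanishes as $\mu\downarrow0$ (a standard central-path argument).

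Summing the one-step inequality over $k=1,\dots,K$ telescopes the $\Phi$-terms, yielding $\sum_{k=1}^{K}\norm{\vx_k-\vx^\star}^\xi\le\tfrac{1}{2\beta c}\Phi_1+\mathcal{O}(K\mu)$. Dividing by $K$, applying Jensen's inequality to the convex map $\vx\mapsto\norm{\vx-\vx^\star}^\xi$, and absorbing the $\mu$-term via the smallness of $\mu_{-1}$ gives $\norm{\hat{\vx}_K-\vx^\star}\le\mathcal{O}(K^{-1/\xi})$; the \emph{full} $\xi$-monotonicity on $\mathcal{C}_=$ (rather than the star version) is what legitimizes applying the growth bound at the pairs $(\vx_k,\vx^\star_\mu)$, whose second entry is the central-path point and not a solution of~\ref{eq:cvi}. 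For the last iterate one cannot merely combine ``$\min_k$'' with monotone decrease of $\norm{\vx_k-\vx^\star}$; instead I would show that the residual sequence $\rho_k$ (equivalently $\norm{\vx_{k+1}-\vx_k}$) is non-increasing, so that summability forces $\rho_K=\mathcal{O}(1/K)$, hence an $\mathcal{O}(1/\sqrt K)$ bound on the $\mu$-relaxed KKT residual at step $K$ and on the constraint violation there. Feeding this into star-$\xi$-monotonicity, $c\norm{\vx_K-\vx^\star}^\xi\le\langle F(\vx_K)-F(\vx^\star),\vx_K-\vx^\star\rangle\le\mathrm{diam}(\mathcal{C})\cdot\mathcal{O}(1/\sqrt K)+\mathcal{O}(\mu)$, giving $\norm{\vx_K-\vx^\star}=\mathcal{O}(K^{-1/(2\xi)})$ --- the loss of a factor two in the exponent being the familiar gap between last- and average-iterate rates for monotone VIs. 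Part~3 is then immediate from Lemma~\ref{lm:dg_upper_bound}: $\mathcal{G}(\vx_k,\mathcal{C})\le M_0\norm{\vx_k-\vx^\star}$ with $M_0=\Theta(L)$, so both $\norm{\cdot}$-rates transfer verbatim to $\mathcal{G}(\vx^{(t)}_K,\mathcal{C})$ and $\mathcal{G}(\hat{\vx}^{(t)}_K,\mathcal{C})$ with an extra factor $L$.

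The main obstacle is the coupling between the homotopy parameter $\mu_t$ and the ADMM contraction: I must make the per-iteration error from using the $\mu$-relaxed complementarity instead of the exact one negligible, and simultaneously certify that the iterates never leave the enlarged feasible sets $\hat{\mathcal{C}}_r,\tilde{\mathcal{C}}_s$ on which $\xi$-monotonicity is only assumed --- which is exactly what ``sufficiently small $\mu_{-1}$'' together with compactness of $\mathcal{C}$ are there to secure. A secondary difficulty is the last-iterate refinement, which relies on monotonicity of the auxiliary residual sequence rather than of the distance to the solution itself.
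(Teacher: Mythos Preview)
Your high-level plan matches the paper's: anchor an ADMM Lyapunov analysis at the central-path point, use monotone decrease of the primal/dual residual pair to pin the last iterate, use Jensen for the averaged iterate, and finish with Lemma~\ref{lm:dg_upper_bound} for the gap. Two points in your sketch, however, depart from the paper in ways that matter.

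First, the $\mathcal{O}(\mu)$ term in your one-step inequality is spurious and, as you note, accumulates to $\mathcal{O}(K\mu)$; at fixed $\mu_t$ this does not vanish, so ``absorbing the $\mu$-term via smallness of $\mu_{-1}$'' does not yield the stated rate in $K$. The paper avoids this altogether by \emph{not} unpacking the barrier via slacks $\vs_{k+1}$ and convexity of $\varphi_i$: it keeps $g(\vy)=-\mu\sum_i\log(-\varphi_i(\vy))$ as a convex function and uses the exact saddle-point inequality $f(\vx)+g(\vy)-f(\vx^\mu)-g(\vy^\mu)+\langle\vlmd^\mu,\vx-\vy\rangle\ge 0$ (Lemma~\ref{lm3.1}). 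With that, the descent inequality (Lemma~\ref{lm3.6}, eq.~\eqref{3.16prime}) reads
\[
c\,\lVert \vx_{k+1}-\vx^\mu\rVert^\xi \;\le\; \Phi_k-\Phi_{k+1}-\tfrac{1}{2\beta}\lVert\vlmd_{k+1}-\vlmd_k\rVert^2-\tfrac{\beta}{2}\lVert\vy_{k+1}-\vy_k\rVert^2,
\]
with no $\mu$ error at all (here the anchor is $\vx^\mu$, which is why full $\xi$-monotonicity on $\mathcal{C}_=$ is required for part~2). The passage $\vx^\mu\to\vx^\star$ is done \emph{once} at the end (Theorem~\ref{thm:final}), not per step.

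Second, your last-iterate step ``feed the $\mathcal{O}(1/\sqrt K)$ KKT residual into star-$\xi$-monotonicity'' is underspecified in a nontrivial way: the residual at $\vx_K$ controls $F(\vx_K)^\intercal(\vx_K-\vx^\star)$ up to $\vlmd_K$-terms, but it does \emph{not} by itself control $F(\vx^\star)^\intercal(\vx_K-\vx^\star)$, and you need both for $\langle F(\vx_K)-F(\vx^\star),\vx_K-\vx^\star\rangle$. The paper's device is to carry two linearized objectives, $f_k(\vx)=F(\vx_{k+1})^\intercal\vx$ and $f(\vx)=F(\vx^\mu)^\intercal\vx$, and observe (Proposition~\ref{prop4}) that monotonicity of $F$ sandwiches them: $f_k(\vx_{k+1})-f_k(\vx^\mu)\ge f(\vx_{k+1})-f(\vx^\mu)$. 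Both therefore obey the same ADMM function-value bound (Theorem~\ref{thm:3.2}, eq.~\eqref{3.24}), which translates---after the identical barrier term cancels and $\mu\downarrow 0$---into separate $\mathcal{O}(1/\sqrt K)$ bounds on $|F(\vx_{K})^\intercal(\vx_{K}-\vx^\star)|$ and $|F(\vx^\star)^\intercal(\vx_{K}-\vx^\star)|$ (Theorem~\ref{thm:final}). Adding these and invoking star-$\xi$-monotonicity then gives $c\lVert\vx_K-\vx^\star\rVert^\xi\le\mathcal{O}(1/\sqrt K)$. Your sketch contains one side of this; the $f$-vs-$f_k$ bookkeeping is the missing idea for the other.
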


\begin{rmk}
Note that the convergence guarantee does not rely on Assumption~\ref{asm:firstOrderSmoothness}, and it is solely used to relate the rate to the gap function.
Also, note that $\mu_{-1}$ does not impact the convergence rate. Moreover, for simplicity we state the result with sufficiently small $\mu_{-1}$, however, the proof extends to any $\mu_{-1}>0$. That is, the above result can be made parameter-free; see App.~\ref{app:pf_complete_rate}. 
\end{rmk}

\begin{thm}[Last and average iterate convergence for monotone operator]\label{thm:monotone}
Given an operator $F: \mathcal{X}\to \R^n$, assume
(i) F is monotone on $\mathcal{C}_=$, and (ii) either $F$ is strictly monotone on $\mathcal{C}$ or one of $\varphi_i$ is strictly convex,
and (iii) $\underset{\boldsymbol{x}\in S\backslash\left\{ \boldsymbol{x}^\star \right\}}{inf}F\left( \boldsymbol{x} \right) ^\intercal\frac{\boldsymbol{x}-\boldsymbol{x}^\star}{\lVert \boldsymbol{x}-\boldsymbol{x}^\star \rVert}>0$, where $S \equiv \hat{\mathcal{C}}_r$ or $\tilde{\mathcal{C}}_s$.
Let $\vx_K^{(t)}$ and $\hat{\vx}_K^{(t)} \triangleq \frac{1}{K}\sum_{k=1}^K \vx_k^{(t)}$ denote the last and average iterate of Algorithm~\ref{alg:acvi}, respectively, run with sufficiently small $\mu_{-1}$. 
Then for all $t \in [T]$, we have that:
\begin{enumerate}[leftmargin=*]
\setlength\itemsep{-.5em}
\item $ \norm{\vx_K^{(t)}-\vx^\star} \leq \mathcal{O} (\frac{1}{\sqrt{K}})$. 
\item If in addition $\underset{\boldsymbol{x}\in S \setminus \{ \boldsymbol{x}^\star\}}{inf}F (\vx^\star) ^\intercal 
\frac{ \boldsymbol{x}-\boldsymbol{x}^\star}{ 
\norm{ \boldsymbol{x}-\boldsymbol{x}^\star } 
}
>0$ (with $S \equiv \hat{\mathcal{C}}_r$ or $\tilde{\mathcal{C}}_s$), then $\norm{ \hat{\vx}_K^{(t)}-\vx^\star } \leq \mathcal{O} (\frac{1}{K})\,.$
\item Moreover, if $F$ is L-Lipschitz on $\mathcal{C}_=$---as per Assumption~\ref{asm:firstOrderSmoothness}---the same corresponding upper bounds hold for $\mathcal{G}(\vx^{(t)}_K, \mathcal{C})$ and  $\mathcal{G}(\hat\vx^{(t)}_K, \mathcal{C})$, that is,
$\mathcal{G}(\vx^{(t)}_K, \mathcal{C}) \leq \mathcal{O} (\frac{L}{\sqrt{K}})$ and
$\mathcal{G}(\hat\vx^{(t)}_K, \mathcal{C}) \leq \mathcal{O} (\frac{L}{K})$ .
\end{enumerate}
\end{thm}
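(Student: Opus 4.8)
Since the outer loop only rescales the barrier parameter, $\mu_t=\delta^{\,t+1}\mu_{-1}$, and hands $(\vy,\vlmd)$ on as a warm start, it suffices to analyze one inner loop with $\mu\defas\mu_t$ fixed and to keep the dependence on $\mu$ explicit. First I would record the optimality conditions of the three ADMM updates. With $g(\vy)\defas-\mu\sum_i\log(-\varphi_i(\vy))$: the $\vy$-step gives $\vlmd_{k+1}=\nabla g(\vy_{k+1})=\sum_i\lambda_{k+1,i}\nabla\varphi_i(\vy_{k+1})$, where $\lambda_{k+1,i}=-\mu/\varphi_i(\vy_{k+1})>0$, $\lambda_{k+1,i}\varphi_i(\vy_{k+1})=-\mu$, and $\vy_{k+1}\in\mathcal{C}_<$; the $\vx$-step — equivalently \eqref{eq:w_equation}, which is well posed with solution in $\mathcal{C}_=$ by Theorem~\ref{thm:solutions_w_equation} — gives a multiplier $\vnu_{k+1}$ with $F(\vx_{k+1})+\vlmd_k+\beta(\vx_{k+1}-\vy_k)+\mC^\intercal\vnu_{k+1}=\vzero$ and $\mC\vx_{k+1}=\vd$; and the $\vlmd$-step gives $\vx_{k+1}-\vy_{k+1}=\tfrac1\beta(\vlmd_{k+1}-\vlmd_k)$, which rewrites the previous identity as $F(\vx_{k+1})+\vlmd_{k+1}+\beta(\vy_{k+1}-\vy_k)+\mC^\intercal\vnu_{k+1}=\vzero$. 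The fixed point of this recursion is the central-path point $\vx^\mu$ solving \eqref{eq:cvi_ip_eq_form2}; the ``strictly monotone / strictly convex'' hypothesis makes $\vx^\star$ and $\vx^\mu$ well defined, and compactness of $\mathcal{C}$ makes $F$ and all relevant diameters bounded near the feasible set.

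\emph{Step 1 (descent and residual control).} Subtracting the fixed-point relations from the iterate relations and pairing with $\vx_{k+1}-\vx^\mu$: the $\mC^\intercal$ term vanishes (both points satisfy $\mC\cdot=\vd$), the $F$ term is nonnegative by monotonicity on $\mathcal{C}_=$, and the barrier terms are nonnegative by convexity of $g$ (using $\vlmd_j=\nabla g(\vy_j)$ for $j\ge1$). Substituting $\vx_{k+1}-\vx^\mu=(\vy_{k+1}-\vy^\mu)+\tfrac1\beta(\vlmd_{k+1}-\vlmd_k)$ and expanding, one obtains for $\Phi_k\defas\tfrac1{2\beta}\norm{\vlmd_k-\vlmd^\mu}^2+\tfrac\beta2\norm{\vy_k-\vy^\mu}^2$ a descent inequality $\Phi_{k+1}\le\Phi_k-\tfrac1{2\beta}\norm{\vlmd_{k+1}-\vlmd_k}^2-\tfrac\beta2\norm{\vy_{k+1}-\vy_k}^2-\langle F(\vx_{k+1})-F(\vx^\mu),\vx_{k+1}-\vx^\mu\rangle$. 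Telescoping gives $\sum_k(\norm{\vy_{k+1}-\vy_k}^2+\tfrac1{\beta^2}\norm{\vlmd_{k+1}-\vlmd_k}^2)=\mathcal{O}(1)$ with a constant independent of $K$; in particular $\norm{\vy_k},\norm{\vlmd_k}$ and $\norm{\vlmd_k-\vlmd_{k-1}}$ stay bounded, so (as $\vy_k\in\mathcal{C}_<$ and $\vx_k=\vy_k+\tfrac1\beta(\vlmd_k-\vlmd_{k-1})$) the iterates and their averages lie in a bounded enlargement of $\mathcal{C}$, and for $K$ large enough the ones we use lie in the set $S=\hat{\mathcal{C}}_r$ (resp.\ $\tilde{\mathcal{C}}_s$) of the hypothesis. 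For the \emph{last} iterate I additionally invoke a monotone-residual lemma — $\norm{\vy_{k+1}-\vy_k}^2+\tfrac1{\beta^2}\norm{\vlmd_{k+1}-\vlmd_k}^2$ is non-increasing in $k$, the ACVI analogue of ADMM being a non-expansive fixed-point iteration, provable by comparing the optimality conditions at steps $k$ and $k+1$ via monotonicity of $F$ (well-posed throughout by Theorem~\ref{thm:solutions_w_equation}) and convexity of $g$ — so together with the telescoped sum, $\norm{\vy_K-\vy_{K-1}}+\norm{\vlmd_K-\vlmd_{K-1}}=\mathcal{O}(1/\sqrt K)$.

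\emph{Step 2 (from residuals to convergence).} From the $\vx$-step relation, $F(\vx_{k+1})=-\vlmd_{k+1}-\beta(\vy_{k+1}-\vy_k)-\mC^\intercal\vnu_{k+1}$; pairing with $\vx_{k+1}-\vx^\star$ kills the $\mC^\intercal$ term, and convexity of each $\varphi_i$ with $\varphi_i(\vx^\star)\le0$ and $\lambda_{k+1,i}\varphi_i(\vy_{k+1})=-\mu$ yields $-\langle\vlmd_{k+1},\vx_{k+1}-\vx^\star\rangle\le-\tfrac1\beta\langle\vlmd_{k+1},\vlmd_{k+1}-\vlmd_k\rangle+m\mu$, hence $\langle F(\vx_{k+1}),\vx_{k+1}-\vx^\star\rangle\le-\tfrac1\beta\langle\vlmd_{k+1},\vlmd_{k+1}-\vlmd_k\rangle-\beta\langle\vy_{k+1}-\vy_k,\vx_{k+1}-\vx^\star\rangle+m\mu$. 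Taking $k=K-1$ and bounding the first two terms by $\mathcal{O}(1/\sqrt K)$ (Step 1 and boundedness) gives $\langle F(\vx_K),\vx_K-\vx^\star\rangle\le\mathcal{O}(1/\sqrt K)+m\mu$; since $\vx_K\in S$, hypothesis (iii) lower-bounds the left side by $c\norm{\vx_K-\vx^\star}$ with $c>0$, giving item~1. For the average, rewriting $-\beta\langle\vy_{k+1}-\vy_k,\vx_{k+1}-\vx^\star\rangle$ via $\vx_{k+1}=\vy_{k+1}+\tfrac1\beta(\vlmd_{k+1}-\vlmd_k)$ turns it into a telescoping difference of $\tfrac\beta2\norm{\vy_{k+1}-\vx^\star}^2$ plus the nonpositive $-\langle\vy_{k+1}-\vy_k,\vlmd_{k+1}-\vlmd_k\rangle$, and likewise $-\tfrac1\beta\langle\vlmd_{k+1},\vlmd_{k+1}-\vlmd_k\rangle$ telescopes (up to a nonpositive remainder) in $\norm{\vlmd_k}^2$; summing, $\sum_{k=1}^K\langle F(\vx_k),\vx_k-\vx^\star\rangle\le\mathcal{O}(1)+Km\mu$. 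By monotonicity $\langle F(\vx^\star),\vx_k-\vx^\star\rangle\le\langle F(\vx_k),\vx_k-\vx^\star\rangle$, so dividing by $K$ and applying the \emph{stronger} hypothesis (iii$'$) to the fixed vector $F(\vx^\star)$ at the point $\hat\vx_K\in S$ — this is exactly why (iii$'$), not merely (iii), is needed for the average, since without Lipschitzness $F(\hat\vx_K)$ cannot be related to the $F(\vx_k)$'s — gives $\norm{\hat\vx_K-\vx^\star}\le\tfrac1{c'}\langle F(\vx^\star),\hat\vx_K-\vx^\star\rangle\le\mathcal{O}(1/K)+\mathcal{O}(\mu)$. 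Choosing $\mu_{-1}$ (hence $\mu_t$) small enough that $\mu_t\lesssim1/K$ absorbs the $\mathcal{O}(\mu)$ terms and proves items~1--2; the $K$-exponent — the rate — is unchanged, and the fully parameter-free statement is handled as in App.~\ref{app:pf_complete_rate}.

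\emph{Step 3 (gap function) and the main obstacle.} Item~3 is immediate from Lemma~\ref{lm:dg_upper_bound}: $\mathcal{G}(\vx_K^{(t)},\mathcal{C})\le M_0\norm{\vx_K^{(t)}-\vx^\star}$ and $\mathcal{G}(\hat\vx_K^{(t)},\mathcal{C})\le M_0\norm{\hat\vx_K^{(t)}-\vx^\star}$ with $M_0$ linear in $L$, so items~1--2 give $\mathcal{O}(L/\sqrt K)$ and $\mathcal{O}(L/K)$. I expect the main difficulties to be (a) the monotone-residual step behind the last-iterate rate, where Theorem~\ref{thm:solutions_w_equation} is precisely what lets one treat the $\vx$-update as a well-defined resolvent on the affine subspace $\mathcal{C}_=$; and (b) the bookkeeping linking the barrier-perturbed point $\vx^\mu$ to the true solution $\vx^\star$ — in particular verifying that the iterates and averages actually enter the enlarged set $\hat{\mathcal{C}}_r$ (resp.\ $\tilde{\mathcal{C}}_s$) on which (iii)/(iii$'$) is imposed, and keeping the $\mu$-dependence harmless.
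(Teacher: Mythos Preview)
Your proposal is correct and follows the paper's strategy closely: the same Lyapunov function $\Phi_k=\tfrac1{2\beta}\|\vlmd_k-\vlmd^\mu\|^2+\tfrac\beta2\|\vy_k-\vy^\mu\|^2$ and descent inequality (the paper's Lemma~\ref{lm3.6}/inequality~\eqref{3.19}), the same residual-monotonicity step (your ``monotone-residual lemma'' is exactly Lemma~\ref{lm3.7}, proved by comparing optimality conditions at consecutive steps and using monotonicity of $F$ and $\partial g$), the same mechanism for showing the iterates eventually enter $\hat{\mathcal{C}}_r$ (via $\|\vx_{k+1}-\vy_{k+1}\|=\tfrac1\beta\|\vlmd_{k+1}-\vlmd_k\|\to0$ and $\varphi(\vy_k)<\vzero$), and the same final appeal to assumption~(iii)/(iii$'$) and Lemma~\ref{lm:dg_upper_bound}.

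The one methodological difference is in Step~2. The paper bounds $\big|F(\vx_{K+1})^\intercal(\vx_{K+1}-\vx^\star)\big|$ \emph{indirectly}: it first controls the function-value gap $f_K(\vx_{K+1})-f_K(\vx^\mu)+g(\vy_{K+1})-g(\vy^\mu)$ (Theorems~\ref{thm:3.2} and~\ref{thm3.3}) and then passes from $\vx^\mu$ to $\vx^\star$ via the central-path limit $\mu\to0$ (Theorem~\ref{thm:final}). You instead bound $\langle F(\vx_{k+1}),\vx_{k+1}-\vx^\star\rangle$ \emph{directly} from the barrier KKT identities $\vlmd_{k+1}=\nabla g(\vy_{k+1})$ and $\lambda_{k+1,i}\varphi_i(\vy_{k+1})=-\mu$, combined with convexity of each $\varphi_i$ and $\varphi_i(\vx^\star)\le0$, which produces the explicit $m\mu$ error without any limiting argument. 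Your route is a bit more elementary and makes the $\mu$-dependence visible from the outset (this is essentially what the paper recovers later, in App.~\ref{app:pf_complete_rate}, when it derives the parameter-free version); the paper's route has the advantage that the function-value bounds are reusable for the $\xi$-monotone case as well. Both give the same rates.
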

Assumption (iii) in Theorem~\ref{thm:monotone} requires the angle of $F(\vx)$ and $\vx-\vx^\star$ to be acute on $\mathcal{S}\backslash\left\{ \boldsymbol{\vx}^\star \right\}$, where $\mathcal{S}=\hat{\mathcal{C}}_r$ or $\tilde{\mathcal{C}}_s$. For example, when there are no equality constraints, Assumption (iii) becomes $\underset{\boldsymbol{x}\in \mathcal{C}\backslash\left\{ \boldsymbol{x}^\star \right\}}{inf}F\left( \boldsymbol{x} \right) ^\intercal\frac{\boldsymbol{x}-\boldsymbol{x}^\star}{\lVert \boldsymbol{x}-\boldsymbol{x}^\star \rVert}>0$. From \eqref{eq:cvi} and by the monotonicity of $F$, we can see that for any point $x\in \mathcal{C}\backslash\left\{ \boldsymbol{\vx}^\star \right\}$, the angle between $F(\vx)$ and $\vx-\vx^{\star}$ is always less than or equal to $\pi/2$. And assumption (iii) requires that $F(\vx^{\star})\ne \boldsymbol{0}$, which means some constraints are active at $\vx^\star$, and $\exists\theta\in (0,\pi/2)$ s.t. for any $\vx\in \mathcal{C}\backslash\left\{ \boldsymbol{\vx}^\star \right\}$, the angle between $F(\vx)$ and $\vx-\vx^{\star}$ is upper bounded by $\theta$. 
\begin{rmk}
Our proofs rely on the existence of the central path---see~Appendix~\ref{app:additional_preliminaries}. Note that since $\mathcal{C}$ is compact, it suffices that either: 
(i) $F$ is strictly monotone on $\mathcal{C}$, or that
(ii) one of the inequality constraints $\varphi_i$ is strictly convex for the central path to exist~\citep[][Corollary 11.4.24]{facchinei2003finite}.
Thus, if $F$ is $\xi$-monotone on $\mathcal{C}$, then the central path exists.
However, to relax the former assumption, notice that---by the compactness of $\mathcal{C}$---there exists a sufficiently large $M$ such that for any $\vx \in \mathcal{C}$, $\vx^\intercal\vx\leq M$. Thus, one can add a strictly convex inequality constraint $\varphi_{m+1}(\vx)$---e.g., $\vx^\intercal\vx-M\leq 0$---and the solution set remains intact.  That is, as $\mu$ tends to $0$ the original problem is recovered. This ensures the existence of the central path without changing the original problem.
\end{rmk}

\section{Experiments}\label{sec:experiments}

\noindent\textbf{Problems.}
To study the empirical performance of ACVI we use the following 2D problems:
\begin{enumerate*}[series = tobecont, label=(\roman*)]
\item \textit{cBG}: the common bilinear game, constrained on $\R_+$ for the two players, stated in Fig.~\ref{fig:3d_illustration},
\item  \textit{Von Neumann’s ratio game}~\citep{VonNeumann1971,daskalakis2020rl,diakonikolas21structured}, 
\item \textit{Forsaken} game~\citep{hsieh2021limits}--which exhibits \textit{limit cycles}, as well as
\item \textit{toy GAN}---used in~\citep{daskalakis2018training,antonakopoulos2021}.
\end{enumerate*}
Note that these are known to be challenging problems in the literature, and interestingly the latter three are non-monotone, going beyond the assumptions that we made in our theoretical  results.
We also consider the following higher-dimension bilinear game on the probability simplex, with $\eta \in (0,1)$, $n=1000$: 
\begin{equation}\tag{HBG}\label{eq:high_dim_bg}
    \underset{\vx_1\in \bigtriangleup}{\min}\underset{\vx_2\in \bigtriangleup}{\max} \eta \vx_1^\intercal \vx_1+\left( 1 - \eta \right) \vx_1^\intercal \vx_2- \eta \vx_2^\intercal\vx_2  \,;
    \quad
    \bigtriangleup {=} \{ 
    \vx_i \in \R^{500}|\vx_i \geq \boldsymbol{0}, \text{ and },\ve^\intercal\vx_i=1 
    \}. 
\end{equation}
As GANs on MNIST~\citep{mnist} enjoy well-established metrics, we use this setup and augment it solely with linear inequalities. We implement the baselines with the greedy projection algorithm---see App.~\ref{app:impl_mnist} for details---hence these baselines will be slower when equality constraints are also given.
App.~\ref{app:additional_experiments} lists additional experiments, including on Fashion-MNIST~\citep{fashionmnist}.

\begin{figure}[!htb]
\vspace*{-.5em}
  \centering
  \subfigure[Von Neumann’s ratio game]{\label{subfig:ratio}
  \includegraphics[width=.3\linewidth]{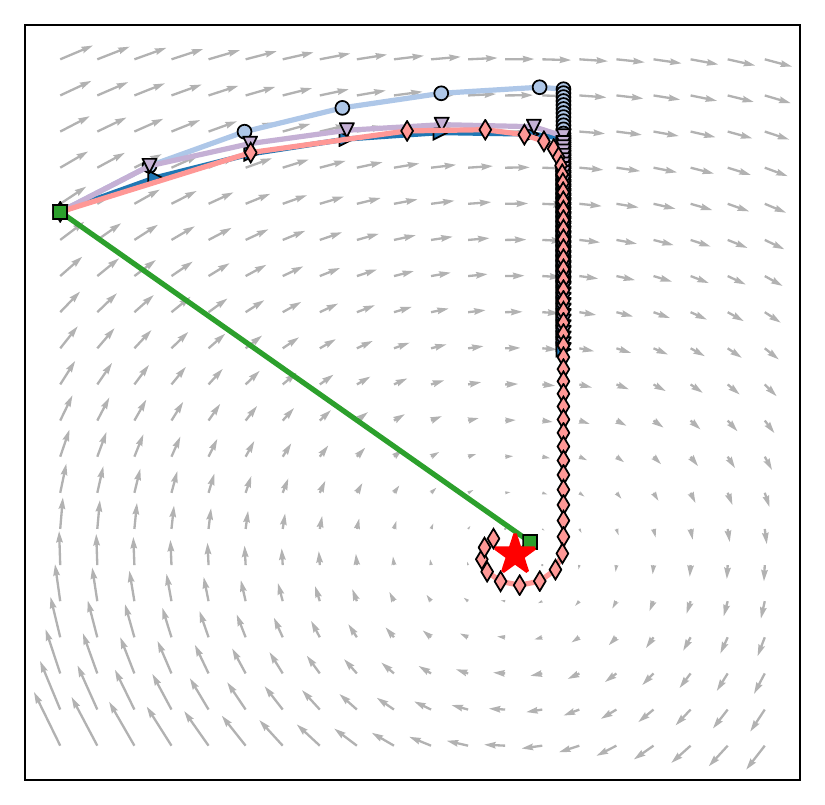}} 
  \subfigure[Forsaken game]{\label{subfig:forsaken}
  \includegraphics[width=.3\linewidth]{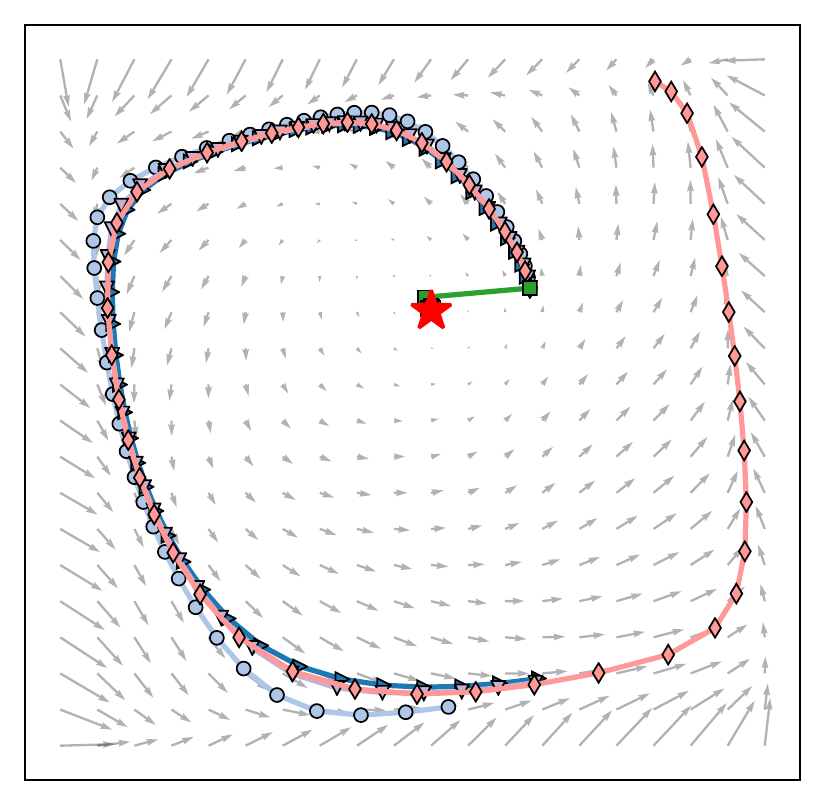}}
  \subfigure[toy GAN]{\label{subfig:gan}
  \includegraphics[width=.3\linewidth]{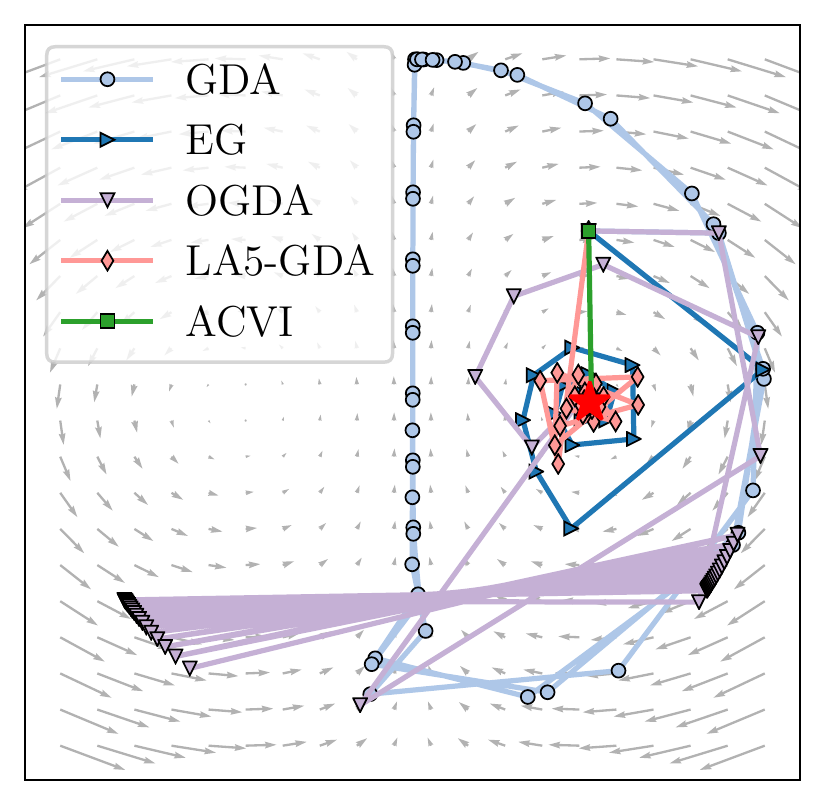}}
  \vspace*{-.6em}
  \caption{ \textbf{Convergence of GDA, EG, OGDA, LA-GDA, and ACVI on three different 2d problems}, for a \textit{fixed} number of \textit{total} iterations, where markers denote the \textit{iterates} of the respective method.
  See \S~\ref{sec:experiments} and App.~\ref{app:impl-details} for discussion and implementation details, respectively.
  }\label{fig:smaller_exp}
\end{figure}

\begin{figure}[!tb]
\vspace{-1em}
  \centering
  \begin{minipage}[c]{0.61\textwidth}
  \subfigure[fixed error]{\label{subfig:time}
  \includegraphics[width=.49\linewidth]{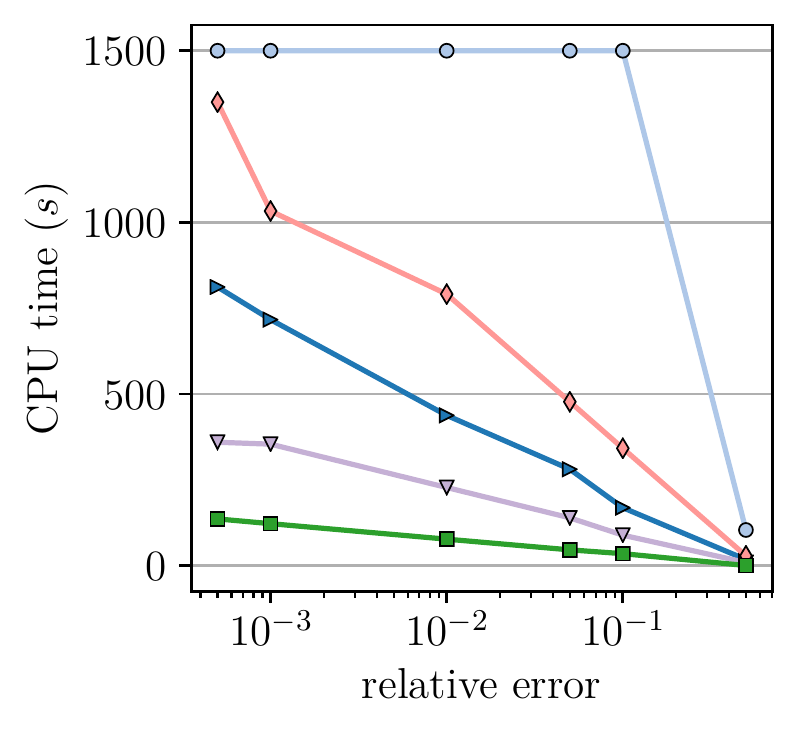}}
  \subfigure[rotational intensity]{\label{subfig:it}
  \includegraphics[width=.47\linewidth]{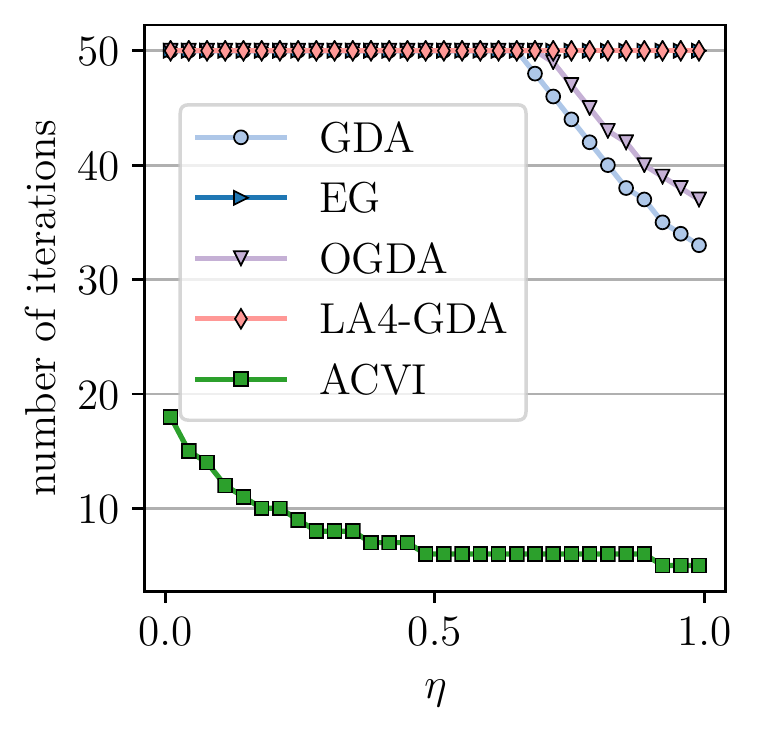} } \vspace*{-1em}
  \caption{ 
  \textbf{Comparison on the~\eqref{eq:high_dim_bg} problem:}~\subref{subfig:time}--CPU time given fixed error,~\subref{subfig:it}--number of iterations needed to reach $\eps$-distance to solution for varying intensity of the rotational component ($1-\eta$).  For both plots, we set maximum iterations/time to run. See \S~\ref{sec:experiments} for discussion. 
   }\label{fig:high_dim}
  \end{minipage}\hfill
  \begin{minipage}[c]{0.35\textwidth}
  \includegraphics[width=\linewidth]{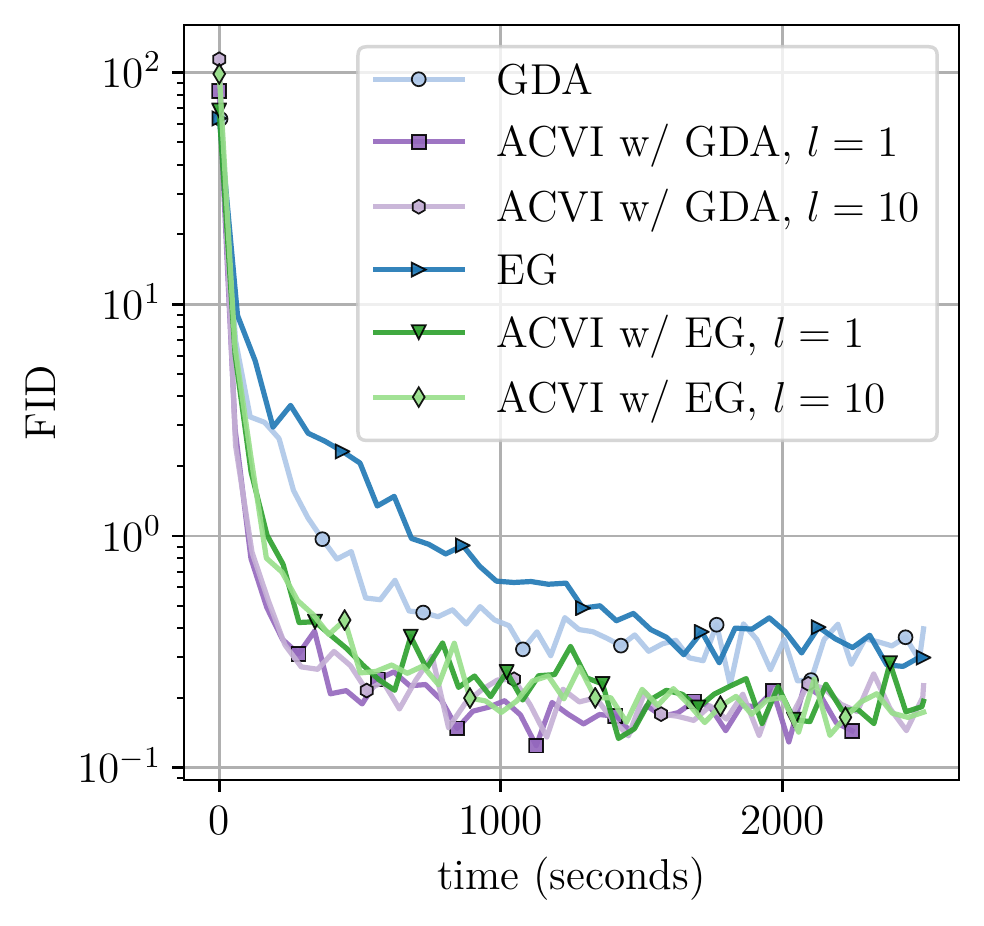}
  \vspace*{-2.3em}
  \caption{ FID (lower is better) on \textbf{MNIST} with added constraints, over wall-clock time; averaged over $3$ seeds.
  See \S~\ref{sec:experiments} and App.~\ref{app:impl-details} for discussion and implementation, resp.
   }\label{fig:mnist2_summary}
  \end{minipage}
   \vspace{-1em}
\end{figure}

\noindent\textbf{Methods.}
We compare ACVI with the projected variants of the 
common saddle point optimizers (fully described in App.~\ref{sec:methods}): 
\begin{enumerate*}[series = tobecont, label=(\roman*)]
\item \textbf{GDA},
\item \textbf{EG}~\citep{korpelevich1976extragradient},
\item \textbf{OGDA}~\citep{popov1980}, and
\item \textbf{LA$\tilde k$-GDA}~\citep{chavdarova2021lamm,Zhang2019}, where $\tilde k$ is the hyperparameter of LA. 
\end{enumerate*}
For ACVI on MNIST, $l$ denotes the number of steps to solve the subproblems; see Algorithm~\ref{alg:acvi_inner_optimizer}.

\noindent\textbf{Results.}
From Fig.~\ref{fig:3d_illustration}, we observe that projection-based algorithms may zigzag when hitting a constraint due to the rotational nature of $F$, behavior that ACVI avoids because it incorporates the constraints in its update rule; see Fig.~\ref{fig:cbg} for the remaining baselines. 
Fig.~\ref{fig:smaller_exp} shows that even with problems that go beyond our theoretical assumptions, a single step of ACVI significantly reduces the distance to the solution.
Moreover, from Fig.~\ref{subfig:forsaken}, we observe that ACVI escapes the limit cycles; see also Fig.~\ref{fig:forsaken_different_c}.
Fig.~\ref{fig:high_dim} shows results for~\eqref{eq:high_dim_bg},  indicating that ACVI is time efficient, and that ACVI performs well relative to projection-based methods for varying rotational intensity ($1-\eta$).
Fig.~\ref{fig:mnist2_summary} summarizes the experiments on MNIST with linear inequality constraints; we observe that ACVI converges significantly faster than the corresponding baseline.

\section{Conclusion}\label{sec:conclusion}
Motivated by the lack of a \emph{first}-order method to solve constrained VI (cVI) problems with general constraints, we proposed a framework that combines
\textit{(i)} \textit{interior-point} methods---needed to be able to handle general constraints---with 
\textit{(ii)} the ADMM method---designed to deal with \emph{separable} objectives. 
The combination yields \textit{ACVI}---a first-order ADMM-based interior point method for cVIs.
We proved convergence for two broad classes of problems and derived the corresponding convergence rates. 
Numerical experiments showed that while projection-based methods  zigzag when hitting a constraint due to the rotational vector field, ACVI avoids this by incorporating the constraints in the update rule.

\subsubsection*{Acknowledgments}
TC thanks the support of the Swiss National Science Foundation (SNSF), grant P2ELP2\_199740. The authors thank Matteo Pagliardini and Tianyi Lin for insightful discussions and feedback.

\bibliography{main}
\bibliographystyle{iclr2023_conference}

\clearpage
\appendix
\section{Background: Additional Details}\label{app:additional_preliminaries}

This section lists additional background such as omitted definitions and a description of the used baseline methods.

\subsection{Additional VI definitions \& equivalent formulations}

Seeing an operator $F: \mathcal{X}\to \R^n$ as the graph $\textit{Gr} F = \{ (\vx, \vy) | \vx \in \mathcal{X}, \vy = F(\vx) \}$, its inverse $F^{-1}$ is defined as 
$\textit{Gr} F^{-1} = \{ (\vy, \vx) | (\vx, \vy) \in  \textit{Gr} F \}$.
See, for example,~\citep{ryu2022} for further discussion.
We denote the projection to the set $\mathcal{X}$ with $\Pi_\mathcal{X}$.

\begin{definition}[$\frac{1}{\mu}$-cocoercive operator]\label{def:cocoercive}
An operator $F:\mathcal{X}\supseteq \mathcal{S} 
\to \R^n $ is $\frac{1}{\mu}$-\emph{cocoercive} (or $\frac{1}{\mu}$-\emph{inverse strongly monotone}) on $\mathcal{S}$ if its inverse (graph) $F^{-1}$ is $\mu$-strongly monotone on $\mathcal{S}$, that is,
$$
\exists \mu > 0, \quad \text{s.t.} \quad
\langle \vx-\vx', F(\vx)-F(\vx') \rangle \geq \mu \norm{ F(\vx) - F(\vx') }^2, \forall \vx, \vx' \in \mathcal{S} \,.
$$
It is star $\frac{1}{\mu}$-\emph{cocoercive} if the above holds when setting $\vx' \equiv \vx^\star$ where $\vx^\star$ denotes a solution, that is:
$$
\exists \mu > 0, \quad \text{s.t.} \quad
\langle \vx-\vx^\star, F(\vx)-F(\vx^\star) \rangle \geq \mu \norm{ F(\vx) - F(\vx^\star) }^2, \forall \vx \in \mathcal{S}, \vx^\star \in \sol{\mathcal{X},F} \,.
$$
\end{definition}

Note from Def.~\ref{def:cocoercive} that cocoercivity is a strict subclass of monotone and L-Lipschitz operators, thus is it is a stronger assumption. 
See Chapter 4.2 of~\citep{bauschke2017} for further relations of cocoercivity with other properties of operators.

In the following, we will make use of the natural and normal  mappings of an operator $F: \mathcal{X}\to \R^n$, where $\mathcal{X}\subset \R^n$.
Following the notation of~\citep{facchinei2003finite}, the natural map $F^{\textit{NAT}}_\mathcal{X}: \mathcal{X}\to \R^n$ is: 
\begin{equation}\tag{F-NAT}\label{eq:fnat}
    F^{\textit{NAT}}_\mathcal{X} \triangleq \vx - \Pi_\mathcal{X} 
    \big(  \vx - F(\vx) \big), \qquad \forall \vx \in \mathcal{X} \,,
\end{equation}
whereas the normal map $F^{\textit{NOR}}_\mathcal{X}: \R^n \to \R^n$ is: 
\begin{equation}\tag{F-NOR}\label{eq:fnor}
    F^{\textit{NOR}}_\mathcal{X} \triangleq 
    F\big( \Pi_\mathcal{X} (\vx) \big) 
    + \vx - \Pi_\mathcal{X} (\vx), \qquad
    \forall \vx \in \R^n\,.
\end{equation}
Moreover, we have the following solution characterizations:
\begin{enumerate}[series = tobecont, itemjoin = \quad, label=(\roman*)]
\item $\vx^\star \in \sol{\mathcal{X},F} \quad $ iff $\quad F_\mathcal{X}^{\textit{NAT}}(\vx^\star) = \boldsymbol{0}$,  and
\item $\vx^\star \in \sol{\mathcal{X},F} \quad $ iff
$\quad \exists \vx' \in R^n$ s.t. $\vx^\star = \Pi_\mathcal{X} (\vx')$ and $F_\mathcal{X}^{\textit{NOR}}(\vx') = \boldsymbol{0}$.
\end{enumerate}

\begin{rmk}[``rotational'' component of the vector field]
    The rotational trajectories in parameter space are induced by the fact that the eigenvalues of the Jacobian of the vector field  $F$ (the second-order derivative matrix) belong to the complex set $\mathbb{C}$; that is $\lambda_i \in \mathbb{C}$. In contrast, when $F\equiv\nabla f$ the second-order derivative matrix known as the Hessian is always symmetric, and thus the eigenvalues are real.
\end{rmk}

\subsection{Existence of solution}\label{sec:sol_existence}
We provide a brief informal summary of some sufficient conditions for solution existence, that $\sol{\cdot,F} \neq \emptyset$. See Chapter 2 of~\citep{facchinei2003finite} for a full treatment of the topic.

The common underlying tool to establish that a solution to the VI($\mathcal{X}$, $F$) problem exists is using \textit{topological degree}.
The topological degree tool is \textit{designed} so as to satisfy the so-called \textit{homotopy invariance} axiom, which in turn allows for reducing a solution existence question of a complicated map to a simpler one (which is homotopy-invariant to the original one) for which we can more easily show that it has a solution (e.g., the identity map on a closed domain). 
It can be used (as one way) to prove the celebrated \textit{Brouwer fixed-point theorem}, which states that any continuous map $\Phi:\mathcal{S}\to \mathcal{S}$, where $\mathcal{S}$ is a nonempty convex compact set, has a fixed point in $\mathcal{S}$.

We have the following sufficient condition, see~\citep[Cor. 2.2.5,][]{facchinei2003finite}

\begin{thm}[sufficient condition for existence of the solution, Cor. 2.2.5,~\citep{facchinei2003finite}]
If $\mathcal{X}\subseteq \R^n$ is compact and convex, and $F:\mathcal{X}\to \R^n$ is continuous, then the solution set is nonempty and compact.
\end{thm}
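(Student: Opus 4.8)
The plan is to reduce the existence claim to Brouwer's fixed-point theorem via the natural-map (projection) reformulation of the VI already recorded in the excerpt, and then to obtain compactness for free by exhibiting the solution set as the zero set of a continuous map on the compact domain $\mathcal{X}$.

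First I would recall the fixed-point characterization $\vx^\star \in \sol{\mathcal{X},F}$ iff $F^{\textit{NAT}}_\mathcal{X}(\vx^\star) = \boldsymbol{0}$; see~\eqref{eq:fnat}. Equivalently, $\vx^\star$ is a fixed point of the map $\Phi(\vx) \triangleq \Pi_\mathcal{X}\big(\vx - F(\vx)\big)$. This equivalence follows from the variational characterization of the Euclidean projection onto a closed convex set: $\vx^\star = \Pi_\mathcal{X}(\vx^\star - F(\vx^\star))$ means $\langle \vx - \vx^\star, (\vx^\star - F(\vx^\star)) - \vx^\star\rangle \le 0$ for all $\vx \in \mathcal{X}$, which simplifies to the defining inequality $\langle \vx - \vx^\star, F(\vx^\star)\rangle \ge 0$ of~\ref{eq:cvi}.

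Next I would verify the hypotheses of Brouwer's theorem (stated in the excerpt) for $\Phi$ on $\mathcal{X}$. Since $\mathcal{X}$ is nonempty, convex, and compact (hence closed), the metric projection $\Pi_\mathcal{X}$ is well-defined, single-valued, and nonexpansive, hence continuous. The map $\vx \mapsto \vx - F(\vx)$ is continuous because $F$ is continuous, so the composition $\Phi = \Pi_\mathcal{X} \circ (\mathrm{id} - F)$ is a continuous self-map of $\mathcal{X}$ (its image lies in $\mathcal{X}$ by definition of the projection). Brouwer's fixed-point theorem then yields a fixed point $\vx^\star \in \mathcal{X}$ of $\Phi$, which by the previous step lies in $\sol{\mathcal{X},F}$; thus the solution set is nonempty.

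Finally, for compactness I would write the solution set as the preimage $\sol{\mathcal{X},F} = (F^{\textit{NAT}}_\mathcal{X})^{-1}(\{\boldsymbol{0}\})$. Because $F^{\textit{NAT}}_\mathcal{X}$ is continuous (again using continuity of $F$ and of $\Pi_\mathcal{X}$), this set is closed in $\mathcal{X}$; being a closed subset of the compact set $\mathcal{X}$, it is itself compact. I do not expect any serious obstacle here: the only points requiring care are \emph{(a)} invoking that projection onto a compact convex set is single-valued and continuous—precisely where convexity and closedness of $\mathcal{X}$ enter—and \emph{(b)} cleanly packaging Brouwer's theorem with the fixed-point reformulation; everything else is routine.
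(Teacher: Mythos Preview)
Your proposal is correct and is precisely the standard Brouwer-based argument the paper alludes to in its informal discussion surrounding this statement; the paper itself does not give a proof but merely cites~\citep[Cor.~2.2.5]{facchinei2003finite} after noting that Brouwer's fixed-point theorem (via the natural-map reformulation) is the key tool. Your treatment of compactness as the preimage of $\{\boldsymbol{0}\}$ under the continuous map $F^{\textit{NAT}}_\mathcal{X}$ is also the standard closing step.
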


It can also be shown that when $\mathcal{X}$ is closed convex (and $F$ continuous), if one can find $\vx'\in \mathcal{X}$ s.t. $\langle F(\vx), \vx-\vx' \rangle \geq 0, \forall\vx\in\mathcal{X}$, then $\sol{\mathcal{X},F} \neq \emptyset$. The same conclusion follows if one can show that $\exists\vx'\in \mathcal{X}$ and the set $\{ \vx\in \mathcal{X} | \langle F(\vx), \vx-\vx'\rangle < 0   \}$ is bounded (possibly empty, see Prop. 2.2.3 in~\citep{facchinei2003finite}).

Sufficient conditions can also be established via the natural and the normal map due to the above solution characterizations. %
In this case, we require that $F$ is continuous on an open set $\mathcal{S}$ and we are interested if $ \sol{\mathcal{X},F} \neq \emptyset$, where $\mathcal{X}$ is assumed closed and convex and subset of $\mathcal{S}$, $\mathcal{X}\subseteq\mathcal{S}$.
If one establishes that a solution exists for $F^{\textit{NAT}}_\mathcal{X}$ on a bounded open set $\mathcal{U}$, and if $cl\ \mathcal{U} \subseteq \mathcal{S}$, then it follows that $\sol{\mathcal{X},F} \neq \emptyset$.
A similar implication holds when we have such a guarantee for $F^{\textit{NOR}}_\mathcal{X}$.
See Theorem 2.2.1 of~\citep{facchinei2003finite}.

In summary, the solution existence guarantee follows from the boundness of some set which includes $\mathcal{X}$, the boundness of the set of potential solutions (if we can construct such set), or the compactness of $\mathcal{X}$ itself.

\subsection{Existence of Central path}\label{app:central_path_existence}

In this section, we discuss the results that establish guarantees of the existence of the central path.
Let $L(\vx, \vlmd, \vnu)\triangleq F (\vx)  +\nabla \varphi ^\intercal (\vx)  \vlmd +\mC^\intercal\vnu$, $h(\vx)=\mC^\intercal\vx-\vd$. 
For $(\vlmd, \vw, \vx, \vnu) \in \RR^{2m+n+p}$, let 
\begin{equation*}
    G(\vlmd, \vw, \vx, \vnu)\triangleq
    \left( \begin{array}{c}
	\vw \circ \vlmd\\
	\vw+\varphi(\vx)\\
	L(\vx, \vlmd, \vnu)\\
	h(\vx)\\
\end{array} \right)
\in \RR^{2m+n+p},
\end{equation*}

and 
\begin{equation*}
    H(\vlmd, \vw, \vx, \vnu)\triangleq
    \left( \begin{array}{c}
	\vw+\varphi(\vx)\\
	L(\vx, \vlmd, \vnu)\\
	h(\vx)\\
\end{array} \right)
\in \RR^{m+n+p}.
\end{equation*}

Let $H_{++}\triangleq H(\RR_{++}^{2m}\times\RR^n\times \RR^p)$.
By \citep[Corollary 11.4.24,][]{facchinei2003finite} we have the following proposition.

\begin{prop}[sufficient condition for the existence of the central path.] If $F$ is monotone, either $F$ is strictly monotone or one of $\varphi_i$ is strictly convex, and $\mathcal{C}$ is bounded. The following four statements hold for the functions $G$ and $H$:
\begin{enumerate}
    \item $G$ maps $\RR_{++}^{2m}\times \RR^{n+p}$ homeomorphically onto $\RR_{++}^m\times H_{++}$;
    
    \item $\RR_{++}^m\times H_{++}\subseteq G(\RR_+^{2m}\times \RR^{n+p})$;
    
    \item for every vector $\va\in \RR^m_+$, the system
    \begin{equation*}
        H(\vlmd, \vw, \vx, \vnu)=\boldsymbol{0},\quad \vw\circ\vlmd=\va
    \end{equation*}
    has a solution $(\vlmd,\vw,\vx,\vnu)\in \RR^{2m}_+\times \RR^{n+p}$; and
    
    \item the set $H_{++}$ is convex.
\end{enumerate}
    \label{prop:CP}
\end{prop}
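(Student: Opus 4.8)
Proposition~\ref{prop:CP} is precisely \citet[Cor.~11.4.24]{facchinei2003finite} specialized to the constraint system~\eqref{eqK}, so the plan is to recast the KKT system~\eqref{eq:cvi_ip_eq-form1} as a monotone \emph{mixed complementarity problem} (MiCP) in the stacked variable $\vz\triangleq(\vlmd,\vw,\vx,\vnu)$, check that the hypotheses of that corollary hold here, and invoke it. Concretely, the slack identity $\vw=-\varphi(\vx)\ge\boldsymbol{0}$ turns $H(\vz)=\boldsymbol{0}$ together with $\boldsymbol{0}\le\vlmd\perp\vw\ge\boldsymbol{0}$ into the KKT system of~\ref{eq:cvi}, while replacing the complementarity $\vw\circ\vlmd=\boldsymbol{0}$ by $\vw\circ\vlmd=\va$ yields the smoothed (central-path) system whose solvability is item~3; the map $G$ merely records the complementarity residual $\vw\circ\vlmd$ alongside the block $H$.

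The first and load-bearing step is to establish \emph{monotonicity} of the MiCP map. Writing the KKT map on $\RR^n\times\RR^m_+\times\RR^p$ as
\[
  \Phi(\vx,\vlmd,\vnu)\triangleq\bigl(F(\vx)+\nabla\varphi^\intercal(\vx)\vlmd+\mC^\intercal\vnu,\;-\varphi(\vx),\;\vd-\mC\vx\bigr),
\]
I would expand $\langle\Phi(\vz_1)-\Phi(\vz_2),\vz_1-\vz_2\rangle$ and verify it is nonnegative: the $F$-block contributes $\langle F(\vx_1)-F(\vx_2),\vx_1-\vx_2\rangle\ge0$ by monotonicity; the two coupling blocks $\nabla\varphi^\intercal\vlmd$ and $-\varphi$ combine, for $\vlmd_1,\vlmd_2\ge\boldsymbol{0}$, into nonnegative terms of the form $\lambda_{1,i}\bigl(\varphi_i(\vx_2)-\varphi_i(\vx_1)-\langle\nabla\varphi_i(\vx_1),\vx_2-\vx_1\rangle\bigr)$ (plus the $\vx_1\leftrightarrow\vx_2$ symmetric counterpart), each $\ge0$ by convexity of $\varphi_i$; and the linear equality block $\mC^\intercal\vnu$ paired with $\vd-\mC\vx$ is skew and cancels. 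The extra hypothesis—$F$ strictly monotone on $\mathcal{C}$ or one $\varphi_i$ strictly convex—upgrades this to the strictness that the corollary requires for uniqueness along the path.

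With monotonicity, strictness, and compactness of $\mathcal{C}$ in hand, items~1--3 follow from the standard central-path machinery. Injectivity of $G$ on $\RR_{++}^{2m}\times\RR^{n+p}$ comes from strict monotonicity (two preimages of a common value force $\vz_1=\vz_2$ via the monotonicity inner product); surjectivity onto $\RR_{++}^m\times H_{++}$ comes from a degree-theoretic argument (cf.\ App.~\ref{sec:sol_existence}), where compactness of $\mathcal{C}$ supplies the a priori bound that keeps solutions away from the boundary and yields a nonzero topological degree; and the closed-orthant inclusions of items~2--3 (allowing $\vw,\vlmd$ and $\va$ on the boundary) follow by a properness/limiting argument, with $\va=\boldsymbol{0}$ recovering a genuine KKT solution whose existence is guaranteed because $\mathcal{X}$ is compact.

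The two genuinely delicate assertions—and hence the main obstacles—are the continuity of $G^{-1}$ in item~1 and the convexity of $H_{++}$ in item~4. For the former I would avoid an explicit inverse and instead combine properness with invariance of domain to obtain a homeomorphism onto the image. The latter is the subtle structural fact: it rests on the monotone geometry of the image set rather than on any elementary convexity of the data, and here I would lean directly on \citet[Ch.~11]{facchinei2003finite} rather than reprove it from first principles.
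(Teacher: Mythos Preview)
Your proposal is correct and, in fact, more detailed than what the paper provides: the paper does not prove Proposition~\ref{prop:CP} at all but simply states it as a direct specialization of \citet[Cor.~11.4.24]{facchinei2003finite}, with no verification of the hypotheses. Your sketch of why the KKT map $\Phi$ is monotone (splitting into the $F$-block, the convexity-of-$\varphi_i$ cross terms, and the skew linear block) is exactly the check one would need to confirm that the corollary applies to the present~\eqref{eqK} setting, and your identification of items~1 and~4 as the delicate parts that are better deferred to \citet[Ch.~11]{facchinei2003finite} matches the paper's implicit stance of not reproving them.
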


\subsection{Saddle-point optimization methods}\label{sec:methods}
In this section, we describe in detail the saddle point methods that we compare within the main part (in \S~\ref{sec:experiments}). 
We denote the projection to the set $\mathcal{X}$ with $\Pi_\mathcal{X}$, and when the method is applied in the unconstrained setting $\Pi_\mathcal{X}\equiv \mI$.

For an example of the associated vector field and its Jacobian, consider the following constrained zero-sum game:
\begin{equation} \label{eq:zs-g}
    \tag{ZS-G}
    \min_{\vx_1 \in \mathcal{X}_1} \max_{\vx_2 \in \mathcal{X}_2} f(\vx_1, \vx_2) \,,
\end{equation}
where $f : \mathcal{X}_1 \times \mathcal{X}_2  \to \R$ is smooth and convex in $\vx_1$ and concave in $\vx_2$. As in the main paper, we write 
$\vx \triangleq (\vx_1, \vx_2) \in \R^n$. 
The vector field $F : \mathcal{X} \to \R^n$ 
and its Jacobian $J$ are defined as:
\begin{align} \notag 
  F(\vx) \!=\! \begin{bmatrix}
              \nabla_{\vx_1} f(\vx) \\
              - \nabla_{\vx_2} f(\vx) 
             \end{bmatrix} \,,  \qquad
  J(\vx) \!=\! \begin{bmatrix}
              \nabla_{\vx_1}^2 f(\vx)           &  \nabla_{\vx_2} \nabla_{\vx_1} f(\vx) \\
             -\nabla_{\vx_1} \nabla_{\vx_2} f(\vx)  & -\nabla_{\vx_2}^2 f(\vx)
             \end{bmatrix}. 
\end{align}

In the remainder of this section, we will only refer to the joint variable $\vx$, and  (with  abuse of notation) the subscript will denote the step. Let $\gamma\in [0,1]$ denote the step size.

\noindent\textbf{(Projected) Gradient Descent Ascent (GDA).}
The extension of gradient descent for the \ref{eq:cvi} problem is \textit{gradient descent ascent} (GDA). The GDA update at step $k$ is then:
\begin{equation} \tag{GDA}\label{eq:gda}
    \vx_{k+1} = \Pi_\mathcal{X} \big( \vx_k - \gamma F(\vx_k) \big) \,.
\end{equation}

\noindent\textbf{(Projected) Extragradient (EG).}
 EG~\citep{korpelevich1976extragradient} uses a ``prediction'' step to obtain an extrapolated point $\vx_{k+\frac{1}{2}}$  
 using~\ref{eq:gda}: $\vx_{k+\frac{1}{2}} \!=\! \Pi_\mathcal{X}\big( \vx_k - \gamma F(\vx_k) \big)$, and the gradients at the \textit{extrapolated}  point are then applied to the \textit{current} iterate $\vx_t$:
\begin{equation} \tag{EG} \label{eq:extragradient}
\begin{aligned}     
\vx_{k+1}  \!=\! \Pi_\mathcal{X} \bigg( \vx_k - \gamma F \Big(  \Pi_\mathcal{X} \big(\vx_k - \gamma F(\vx_k) \big) \Big)  \bigg)\,.
\end{aligned}
\end{equation}
In the original EG paper,~\citep{korpelevich1976extragradient} proved that the~\ref{eq:extragradient} method (with a fixed step size) converges for monotone VIs, as follows.
\begin{thm}[\citet{korpelevich1976extragradient}]
Given a map $F:\mathcal{X}\mapsto\R^n$, if the following is satisfied:
\begin{enumerate}
    \item the set $\mathcal{X}$ is closed and convex,
    \item $F$ is single-valued, definite, and monotone on $\mathcal{X}$--as per Def.~\ref{def:monotone},
    \item $F$ is L-Lipschitz--as per Asm.~\ref{asm:firstOrderSmoothness}.
\end{enumerate}
then there exists a solution $\vx^\star \in \mathcal{X}$, such that the iterates $\vx_k$ produced by the~\ref{eq:extragradient} update rule with a fixed step size $\gamma\in (0, \frac{1}{L})$ converge to it, that is $\vx_k \to \vx^\star$, as $k \to \infty$.
\end{thm}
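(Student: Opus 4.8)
The plan is to establish Fej\'er monotonicity of the iterates with respect to an arbitrary solution and then upgrade this to convergence. Solution existence is guaranteed by the results of Appendix~\ref{sec:sol_existence} (in the paper's setting $\mathcal{X}$ is compact); I fix any $\vx^\star \in \sol{\mathcal{X},F}$ and use it only to measure progress. Throughout I abbreviate $\vu = \vx_k$, $\vv = \vx_{k+\frac{1}{2}} = \Pi_\mathcal{X}\big(\vu - \gamma F(\vu)\big)$, and $\vz = \vx_{k+1} = \Pi_\mathcal{X}\big(\vu - \gamma F(\vv)\big)$. The only tool I need about the projection is its variational characterization: for any $\vw$, the point $\vp = \Pi_\mathcal{X}(\vw)$ is the unique element of $\mathcal{X}$ satisfying $\langle \vw - \vp, \vy - \vp\rangle \le 0$ for all $\vy \in \mathcal{X}$.

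The crux is a one-step contraction of the form
\begin{equation*}
\norm{\vx_{k+1} - \vx^\star}^2 \le \norm{\vx_k - \vx^\star}^2 - (1 - \gamma^2 L^2)\,\norm{\vx_k - \vx_{k+\frac{1}{2}}}^2 \,.
\end{equation*}
To derive it I apply the projection inequality twice: once at the update step $\vz$ with test point $\vy = \vx^\star$, and once at the extrapolation step $\vv$ with test point $\vy = \vz$. Expanding $\norm{\vz - \vx^\star}^2 = \norm{\vu - \vx^\star}^2 - \norm{\vu - \vz}^2 + 2\langle \vz - \vu, \vz - \vx^\star\rangle$ and inserting the first projection inequality turns the final inner product into $2\gamma\langle F(\vv), \vv - \vz\rangle + 2\gamma\langle F(\vv), \vx^\star - \vv\rangle$. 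Monotonicity of $F$ together with the defining inequality of $\vx^\star$ forces $\langle F(\vv), \vv - \vx^\star\rangle \ge 0$, so the second term is nonpositive and drops. In the first term I write $F(\vv) = F(\vu) + \big(F(\vv) - F(\vu)\big)$: the $F(\vu)$ part is controlled by the second projection inequality, while the difference is bounded by $L$-Lipschitzness. After completing the square (using $\norm{\vu-\vz}^2 = \norm{\vu-\vv}^2 + 2\langle \vu-\vv,\vv-\vz\rangle + \norm{\vv-\vz}^2$) and applying Young's inequality in the form $2\gamma L\,ab \le \gamma^2 L^2 a^2 + b^2$ with $a = \norm{\vu-\vv}$, $b = \norm{\vv-\vz}$, the $\norm{\vv-\vz}^2$ terms cancel and one arrives at exactly the stated contraction, whose coefficient $1 - \gamma^2 L^2$ is strictly positive precisely because $\gamma \in (0, 1/L)$.

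From this inequality the sequence $\{\norm{\vx_k - \vx^\star}\}$ is non-increasing, hence $\{\vx_k\}$ is bounded; summing over $k$ telescopes and gives $\sum_k \norm{\vx_k - \vx_{k+\frac{1}{2}}}^2 < \infty$, so $\norm{\vx_k - \vx_{k+\frac{1}{2}}} \to 0$. By Bolzano--Weierstrass I extract a subsequence $\vx_{k_j} \to \bar{\vx} \in \mathcal{X}$, and then $\vx_{k_j + \frac{1}{2}} \to \bar{\vx}$ as well. Passing to the limit in $\vx_{k_j + \frac{1}{2}} = \Pi_\mathcal{X}\big(\vx_{k_j} - \gamma F(\vx_{k_j})\big)$, using continuity of $F$ (a consequence of Lipschitzness) and of $\Pi_\mathcal{X}$, yields $\bar{\vx} = \Pi_\mathcal{X}\big(\bar{\vx} - \gamma F(\bar{\vx})\big)$. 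This is the fixed-point form of the natural-map characterization~\eqref{eq:fnat}, equivalent for any $\gamma>0$ to $\bar{\vx} \in \sol{\mathcal{X},F}$.

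Finally I upgrade subsequential to full convergence. Since $\bar{\vx}$ is itself a solution, the contraction may be replayed with $\vx^\star$ replaced by $\bar{\vx}$, so $\{\norm{\vx_k - \bar{\vx}}\}$ is non-increasing; as it admits a subsequence $\norm{\vx_{k_j} - \bar{\vx}} \to 0$, the entire sequence converges, i.e. $\vx_k \to \bar{\vx}$. I expect the main obstacle to be the contraction derivation itself: correctly chaining the two projection inequalities and placing monotonicity and Lipschitzness so that the cross terms collapse into a single negative multiple of $\norm{\vx_k - \vx_{k+\frac{1}{2}}}^2$; once that identity is in hand, the remaining steps are routine Fej\'er-monotonicity bookkeeping.
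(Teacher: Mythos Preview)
The paper does not supply its own proof of this theorem: it is stated in Appendix~\ref{app:additional_preliminaries} as a classical result attributed to \citet{korpelevich1976extragradient}, without any derivation. There is therefore nothing in the paper to compare your argument against.

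That said, your proposal is correct and is precisely the classical Korpelevich proof. The one-step inequality $\norm{\vx_{k+1}-\vx^\star}^2 \le \norm{\vx_k-\vx^\star}^2 - (1-\gamma^2 L^2)\norm{\vx_k-\vx_{k+\frac12}}^2$ is derived exactly as you describe (two projection inequalities, monotonicity plus the VI inequality at $\vx^\star$ to kill $\langle F(\vv),\vx^\star-\vv\rangle$, Lipschitzness on the residual $F(\vv)-F(\vu)$, then the square-completion and Young step), and the Fej\'er-monotonicity endgame via cluster points and the natural-map fixed-point characterization~\eqref{eq:fnat} is the standard finish. One minor remark: you invoke Appendix~\ref{sec:sol_existence} and compactness of $\mathcal{X}$ to get a solution up front, whereas the theorem as stated only assumes $\mathcal{X}$ closed convex. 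In fact your own argument does not need a solution a priori---boundedness of $\{\vx_k\}$ can be obtained from the contraction inequality applied with any fixed $\vx^\star$ satisfying the VI if one exists, but more robustly one can first show $\norm{\vx_k-\vx_{k+\frac12}}\to 0$ using the same algebra with an arbitrary reference point and a weak-sharpness or coercivity assumption, or simply note that in the paper's setting $\mathcal{X}=\mathcal{C}$ is indeed compact so your shortcut is legitimate here.
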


~\citet{{facchinei2003finite}} also show that for any \textit{convex-concave} function $f$ and any closed convex sets $\vx_1 \in \mathcal{X}_1$ and $\vx_2 \in \mathcal{X}_2$, the \ref{eq:extragradient} method converges~\citep[Theorem 12.1.11]{facchinei2003finite}. 

\noindent\textbf{(Projected) Optimistic Gradient Descent Ascent (OGDA).}
The update rule of Optimistic Gradient Descent Ascent
OGDA~\citep[(OGDA)][]{popov1980} is:
\begin{equation} \label{eq:ogda} \tag{OGDA}
    \vx_{n+1} = \Pi_\mathcal{X} \big( \vx_{n} - 2\gamma F(\vx_n) + \gamma F(\vx_{n-1}) \big) \,.
\end{equation}
\noindent\textbf{(Projected) Lookahead--Minmax (LA).}
The LA algorithm for min-max optimization~\citep{chavdarova2021lamm}, originally proposed for minimization by~\cite{Zhang2019}, is a general wrapper of a ``base'' optimizer where, at every step $t$:
\begin{enumerate*}[series = tobecont, label=(\roman*)]
\item a copy of the current iterate $\tilde \vx_n$ is made: $ \tilde \vx_n \leftarrow  \vx_n$, 
\item $ \tilde \vx_n$ is updated $k \geq 1$ times, yielding $\tilde \vomega_{n+k}$, and finally
\item the actual update $\vx_{n+1}$ is obtained as a \textit{point that lies on a line between}  
the current $\vx_{n}$ iterate and the predicted one $\tilde \vx_{n+k}$: 
\end{enumerate*}
\begin{equation}
\tag{LA}
\vx_{n+1} \leftarrow \vx_n + \alpha (\tilde  \vx_{n+k} -  \vx_n), \quad \alpha \in [0,1]  
\,. \label{eq:lookahead_mm}
\end{equation}
In this work, we use solely \ref{eq:gda} as a base optimizer for LA and thus denote it with \textit{LA$k$-GDA}.

\noindent\textbf{ The projection-free Frank-Wolfe (FW).} FW~\citep{frankWolfe1956}  is an IP-type method for solving constrained smooth zero-sum games \eqref{eq:zs-g}.
It avoids the projection operator by ensuring we never leave the constraint set. To do so,  it finds the intersection  points of the inequality constraints---hence, it requires that the inequality constraints satisfy certain structures (such  as linear) in  order for this operation to be computationally cheap. 
We state FW for zero-sum games in Algorithm~\ref{alg_app:fw} as proposed by \citet[][Alg. 2]{gidel2017frank} for completeness. 
It requires access to a linear minimization oracle (LMO) over the constraint set---to minimize the linear function in line 5 in Algorithm~\ref{alg_app:fw}. Currently, FW-style algorithms only have convergence guarantees when the constraint set is a polytope (\citep{gidel2017frank}) with additional assumptions (which we listed in \S~\ref{sec:related_works}.

\begin{algorithm}[H]
    \caption{Frank-Wolfe algorithm for zero-sum games.}
    \label{alg_app:fw}
    \begin{algorithmic}[1] 
        \STATE \textbf{Input:}  $C,\vnu>0$
        
        \STATE \textbf{Initialize:} $\vz^{(0)}=(\vx_1^{(0)},\vx_2^{(0)})\in \mathcal{X}_1\times\mathcal{X}_2$

        \FOR{$t=0, \dots, T$} 
                \STATE Compute $\vr^{(t)}\triangleq 
                \begin{bmatrix}
              \nabla_{\vx_1} f(\vx_1^{(t)},\vx_2^{(t)}) \\
              - \nabla_{\vx_2} f(\vx_1^{(t)},\vx_2^{(t)}) 
             \end{bmatrix}$
        
        \STATE $\vs^{(t)}\triangleq \arg\min_{\vz\in \mathcal{X}_1\times\mathcal{X}_2}\langle\vz,\vr^{(t)}\rangle$
        
        \STATE Compute $g_t\triangleq\langle\vz^{(t)}-\vs^{(t)},\vr^{(t)}\rangle$
        \IF{$g_t\leq \varepsilon$}
        
            \RETURN $\vz^{(t)}$
        \ENDIF
        \STATE Let $\gamma=\min\left(1, \frac{\vnu}{2C}g_t\right)$ or $\gamma=\frac{2}{2+t}$
        \STATE Update $\vz^{(t+1)}= (1-\gamma)\vz^{(t)}+\gamma\vs^{(t)}$
        \ENDFOR 
    \end{algorithmic}
\end{algorithm}

\clearpage
\section{Omitted Proofs and Discussions concerning Algorithm~\ref{alg:acvi}}\label{app:proofs}
This section provides the proofs of the theoretical results in \S~\ref{sec:analysis}.

\subsection{Proof of Theorem~\ref{thm:solutions_w_equation}: uniqueness of the solution of Eq.~\ref{eq:w_equation}}
\label{app:proof_thm_eq_q_solution}

Recall first that Eq.~\ref{eq:w_equation} is as follows:
\begin{equation*}
\vx + \frac{1}{\beta} \mP_c F(\vx) - \mP_c \vy_k + \frac{1}{\beta} \mP_c \vlmd_k - \vd_c = \boldsymbol{0} \,,
\end{equation*}
since $\vw=\vx$.

\begin{proof}[Proof of Theorem~\ref{thm:solutions_w_equation}: uniqueness of the solution of \eqref{eq:w_equation}]
    Let $G(\vx) $ denote the LHS of~\eqref{eq:w_equation}, that is: 
    \begin{equation} \label{eqapp:G}
        G(\vx) \triangleq \vx + \frac{1}{\beta} \mP_c F(\vx) - \mP_c \vy_k + \frac{1}{\beta} \mP_c \vlmd_k - \vd_c 
    \end{equation}

We claim that $G(\vx)$ is strongly monotone on $\mathcal{C}_=$. In fact, $\forall \vx,\vy \in \mathcal{C}_=$, $\mP_c(\vx-\vy)=\vx-\vy$. Note that $\mP_c$ is symmetric, thus we have:
\begin{align*}
    \langle G(\vx)-G(\vy), \vx-\vy \rangle&=\lVert \vx-\vy \rVert^2 + \frac{1}{\beta}\langle \mP_c F(\vx)-\mP_c F(\vy), \vx-\vy\rangle\\
    &=\lVert \vx-\vy \rVert^2 +  \frac{1}{\beta}\langle \vx-\vy, F(\vx)-F(\vy)\rangle\\
    &\geq \lVert \vx-\vy \rVert^2.
\end{align*}

Therefore, according to Theorem 2.3.3 (b) in~\citep{facchinei2003finite}, $\sol{\mathcal{C}_=,G} $ has a unique solution $\tilde{\vx}\in \mathcal{C}_=$. 
Thus, we have:
\begin{equation*}
    G(\tilde{\vx})^\intercal(\vx-\tilde{\vx})=0,\,\, \forall \vx \in \mathcal{C}_= \,.
\end{equation*}
From the above, we deduce that $G(\vx) \in Span\left\{\vc_1, \cdots, \vc_p\right\}$, where $\vc_i$ is the row vectors of $\mC$, $i\in [p]$.

Suppose that $G(\tilde{\vx})=\sum_{i=1}^p\alpha_i\vc_i$. Notice that $\mC G(\vx)=\boldsymbol{0}$, $\forall \vx \in \mathcal{C}_=$. 
Thus, we have that:
\begin{equation*}
    \vc_j^\intercal G(\tilde{\vx})=\vc_j^\intercal \sum_{i=1}^p\alpha_i\vc_i=0, \,\,\forall j \in [p] \,.
\end{equation*}

Hence,
\begin{equation*}
    \langle G(\tilde{\vx}), G(\tilde{\vx})\rangle=\langle\sum_{i=1}^p\alpha_i\vc_i, \sum_{i=1}^p\alpha_i\vc_i\rangle=0 \,,
\end{equation*}
which indicates that $G(\tilde{\vx})=\boldsymbol{0}$. Hence, $\tilde{\vx}$ is a solution of \eqref{eq:w_equation} in $\mathcal{C}_=$.

On the other hand, $\forall x \in \RR^n$, if $\vx$ is a solution of \eqref{eq:w_equation}, i.e. $G(\vx)=\boldsymbol{0}$, then $\vx \in \mathcal{C}_=$. By the uniqueness of $\tilde{\vx}$ in $\mathcal{C}_=$ we have that $\vx=\tilde{\vx}$, which means $\tilde{\vx}$ is unique in $\RR^n$.
\end{proof}

\subsection{Proof of Lemma \ref{lm:dg_upper_bound}: Upper bound on the gap function}
\begin{proof}[Proof of Lemma \ref{lm:dg_upper_bound}: Upper bound on the gap function]
Let $\vx_k$ denote an iterate produced by Algorithm~\ref{alg:acvi}, and let $\vx \in \mathcal{C}$.
Note that we always have $ \vx_k \in \mathcal{C}_=$.
We have that:
\begin{align*} \notag 
\langle F(\vx_k), \vx_k-\vx\rangle 
&= \langle F(\vx^\star), \vx^\star - \vx\rangle + \langle F(\vx_k), \vx_k - \vx\rangle - \langle F(\vx^\star), \vx^\star - \vx\rangle \\
&= \langle F(\vx^\star), \vx^\star - \vx \rangle + \langle F(\vx_k), \vx_k -\vx^\star + \vx^\star - \vx\rangle - \langle F(\vx^\star), \vx^\star - \vx\rangle \,.
\end{align*}
From the proof of Theorem \ref{thm:3.1} we know that $\vx_k$ is bounded, which gives: $\langle F(\vx_k), \vx_k - \vx^\star \rangle \leq M \norm{\vx_k - \vx^\star }$, with $M>0$, as well as that $\langle F(\vx^\star), \vx^\star - \vx \rangle \leq 0$. Thus, for the above we get:
\begin{align*} \notag 
\langle F(\vx_k), \vx_k-\vx \rangle 
&\leq \langle F(\vx_k) - F(\vx^\star), \vx^\star - \vx \rangle + M \norm{\vx_k - \vx^\star} \\
&\leq D \norm{ F(\vx_k) - F(\vx^\star)} + M \norm{\vx_k - \vx^\star}  \\
&\leq  (DL+M) \norm{\vx_k - \vx^\star} \,,
\end{align*}
where for the second row we used $\vx^\star-\vx \leq D$  where $D \triangleq \max_{\vx' \in \mathcal{C}} \norm{\vx^\star - \vx'}$ is the largest distance between any point in $\mathcal{C}$ and $\vx^\star$. 
For the last row we used that $F$ is L-Lipschitz---Assumption~\ref{asm:firstOrderSmoothness}---which concludes the proof.
\end{proof}
The proof is analogous for the $\vy_k \in \mathcal{C}_\leq$ iterates produced by Algorithm~\ref{alg:acvi}.

\subsection{Proofs of the convergence rate: Theorems~\ref{thm:xi_rate} and \ref{thm:monotone}}

Let 
$$
f^{(t)}(\vx) \triangleq F(\vx^{\mu_t})^\intercal\vx+\mathds{1}(\mC\vx=\vd)\,,
$$
$$
f^{(t)}_k(\vx)  \triangleq F(\vx^{(t)}_{k+1})^\intercal\vx+\mathds{1}(\mC\vx=\vd)\,,
$$ 
$$
g^{(t)}(\vy) \triangleq -\mu_t\sum_{i=1}^{m}\log \big(-\varphi_i(\vy) \big) \,,
$$
where $\vx^{\mu_t}$ is a solution of \eqref{eq:cvi_ip_eq_form2} when $\mu=\mu_t$. 
Note that the existence of $\vx^{\mu_t}$ is guaranteed by the existence of the central path-see~App.~\ref{app:additional_preliminaries}, and that $f^{(t)},f^{(t)}_k$ and $g^{(t)}$ are all convex.

In the following proofs, unless causing confusion, we drop the subscript $t$ to simplify notations.

Let $\vy^\mu=\vx^\mu$, from \eqref{eq:cvi_ip_eq_form2} we can see that $(\vx^\mu, \vy^\mu)$ is an optimal solution of
\begin{equation}
    \left\{ \begin{array}{c}
    \min  f (  \vx  )  +g (  \vy  ) \\
    s.t. \quad \vx=\vy\\
    \end{array}  \,. \right. \label{212}
\end{equation}

There exists $\vlmd ^\mu \in \R^n$ such that $ (  \vx^\mu,\vy^\mu,\vlmd ^\mu  ) $ is a KKT point of \eqref{212}. We give the following proposition which we will repeatedly use in the proofs:

\begin{prop}
    If $F$ is monotone, then $\forall k\in \NN$,
    \begin{equation*}
        f_k (  \vx_{k+1}  )  -f_k (  \vx^\mu  )  \geq f (  \vx_{k+1}  )  -f (  \vx^\mu  ).  
    \end{equation*}
    
    Furthermore, if $F$ is $\xi$-monotone, as per Def.~\ref{def:monotone}
    \begin{equation*}
        f_k (  \vx_{k+1}  )  -f_k (  \vx^\mu  )  \geq f (  \vx_{k+1}  )  -f (  \vx^\mu )  +c\lVert \vx_{k+1}-\vx^\mu \rVert _{2}^{\xi}.
    \end{equation*}
    \label{prop4}
\end{prop}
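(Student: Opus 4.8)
The plan is to unpack the definitions of $f_k$ and $f$ and reduce everything to a single monotonicity inequality evaluated at the pair $(\vx_{k+1}, \vx^\mu)$. Recall $f^{(t)}_k(\vx) = F(\vx^{(t)}_{k+1})^\intercal\vx + \mathds{1}(\mC\vx=\vd)$ and $f^{(t)}(\vx) = F(\vx^{\mu_t})^\intercal\vx + \mathds{1}(\mC\vx=\vd)$, so the indicator terms cancel when we form the two differences, provided we first check that both $\vx_{k+1}$ and $\vx^\mu$ lie in $\mathcal{C}_=$. The latter is immediate: $\vx^\mu$ is a solution of the central-path system \eqref{eq:cvi_ip_eq_form2} hence satisfies $\mC\vx^\mu = \vd$, and $\vx_{k+1}$ is the solution of \eqref{eq:w_equation}, which by Theorem~\ref{thm:solutions_w_equation}(iii) lies in $\mathcal{C}_=$. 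Also $\vy^\mu = \vx^\mu$ so $\vx^\mu$ is consistent with the notation of \eqref{212}.

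With the indicators gone, the claim
$$
f_k(\vx_{k+1}) - f_k(\vx^\mu) \;\ge\; f(\vx_{k+1}) - f(\vx^\mu)
$$
becomes
$$
F(\vx_{k+1})^\intercal(\vx_{k+1} - \vx^\mu) \;\ge\; F(\vx^\mu)^\intercal(\vx_{k+1} - \vx^\mu),
$$
i.e. $\langle F(\vx_{k+1}) - F(\vx^\mu),\, \vx_{k+1} - \vx^\mu\rangle \ge 0$. Since both points are in $\mathcal{C}_=$ and $F$ is monotone on $\mathcal{C}_=$ (Def.~\ref{def:monotone}), this is exactly the monotonicity inequality, so the first statement follows. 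For the second statement, if $F$ is $\xi$-monotone on $\mathcal{C}_=$ with constant $c$ and exponent $\xi$, the same inner product is bounded below by $c\|\vx_{k+1} - \vx^\mu\|_2^\xi$ instead of by $0$, which gives
$$
F(\vx_{k+1})^\intercal(\vx_{k+1} - \vx^\mu) - F(\vx^\mu)^\intercal(\vx_{k+1} - \vx^\mu) \;\ge\; c\|\vx_{k+1} - \vx^\mu\|_2^\xi,
$$
and rearranging yields $f_k(\vx_{k+1}) - f_k(\vx^\mu) \ge f(\vx_{k+1}) - f(\vx^\mu) + c\|\vx_{k+1}-\vx^\mu\|_2^\xi$.

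There is essentially no hard part here; the only thing to be careful about is the bookkeeping with the indicator function $\mathds{1}(\mC\vx=\vd)$ — one must confirm it evaluates to $0$ (not $+\infty$) at both arguments before cancelling, which is why invoking Theorem~\ref{thm:solutions_w_equation}(iii) to place $\vx_{k+1} \in \mathcal{C}_=$ is the load-bearing step. I would also note that the inequality makes sense even if one worried about the $+\infty$ case, since the left side would be $+\infty$; but it is cleaner to dispatch membership in $\mathcal{C}_=$ up front. Everything else is a one-line application of monotonicity (resp. $\xi$-monotonicity).
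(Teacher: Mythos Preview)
Your proof is correct and follows essentially the same approach as the paper: both reduce the inequality to the identity $f_k(\vx_{k+1}) - f_k(\vx^\mu) - (f(\vx_{k+1}) - f(\vx^\mu)) = \langle F(\vx_{k+1}) - F(\vx^\mu), \vx_{k+1} - \vx^\mu\rangle$ and then invoke monotonicity (resp.\ $\xi$-monotonicity). The paper's proof is a one-liner that does not explicitly address the indicator terms, so your care in verifying $\vx_{k+1},\vx^\mu\in\mathcal{C}_=$ via Theorem~\ref{thm:solutions_w_equation}(iii) is a welcome addition rather than a deviation.
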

\begin{proof}[Proof of Proposition \ref{prop4}]
    It suffices to note that:
    \begin{equation*}
        f_k (  \vx_{k+1}  )  -f_k (  \vx^\mu  )  - (  f (  \vx_{k+1}  )  -f (  \vx^\mu  )   )  = (  F (  \vx_{k+1}  )  -F (  \vx^\mu  )   )  ^\intercal (  \vx_{k+1}-\vx^\mu ).
    \end{equation*}
\end{proof}

Some of our proofs that follow are inspired by some convergence proofs in ADMM~\citep{gabay1983chapter,eckstein1992douglas,davis2016convergence,he20121,he2015non,linalternating}. 
However, although Algorithm \ref{alg:acvi} adopts the high level idea of ADMM, we can not directly refer to the convergence proofs of ADMM, but need to substantially modify these.

We will use the following lemma.
\begin{lm} $f (  \vx  )  +g (  \vy  )  -f (  \vx^\mu )  -g (  \vy^\mu  )  +\langle \vlmd ^\mu,\vx-\vy \rangle \geq 0,\forall \vx,\vy$.
\label{lm3.1}
\end{lm}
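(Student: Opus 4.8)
\textbf{Proof plan for Lemma~\ref{lm3.1}.}

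The plan is to recognize that the inequality is simply a restatement of the variational/optimality characterization of the KKT point $(\vx^\mu,\vy^\mu,\vlmd^\mu)$ of problem~\eqref{212}. First I would recall that, by construction, $(\vx^\mu,\vy^\mu)$ is an optimal solution of~\eqref{212} with associated multiplier $\vlmd^\mu$ for the equality constraint $\vx=\vy$, so the Lagrangian
$$
\mathcal{L}(\vx,\vy,\vlmd^\mu) \defas f(\vx)+g(\vy)+\langle \vlmd^\mu,\vx-\vy\rangle
$$
is minimized over $(\vx,\vy)$ at $(\vx^\mu,\vy^\mu)$. This is exactly the saddle-point property of the Lagrangian: since $f$ and $g$ are convex (as noted right before the lemma, $f^{(t)}$ and $g^{(t)}$ are convex) and the constraint is affine, strong duality and the KKT conditions give that $(\vx^\mu,\vy^\mu)$ is an unconstrained minimizer of $\vx,\vy\mapsto\mathcal{L}(\vx,\vy,\vlmd^\mu)$.

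Then the key step is just to write out $\mathcal{L}(\vx,\vy,\vlmd^\mu)\ge \mathcal{L}(\vx^\mu,\vy^\mu,\vlmd^\mu)$ for all $\vx,\vy$, i.e.
$$
f(\vx)+g(\vy)+\langle \vlmd^\mu,\vx-\vy\rangle \ge f(\vx^\mu)+g(\vy^\mu)+\langle \vlmd^\mu,\vx^\mu-\vy^\mu\rangle.
$$
Since $\vx^\mu=\vy^\mu$ by feasibility, the last inner product vanishes, $\langle \vlmd^\mu,\vx^\mu-\vy^\mu\rangle = 0$, and rearranging gives precisely
$$
f(\vx)+g(\vy)-f(\vx^\mu)-g(\vy^\mu)+\langle \vlmd^\mu,\vx-\vy\rangle \ge 0,
$$
which is the claim. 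Note one must be slightly careful that $f$ (and $f^{(t)}_k$) contains the indicator term $\mathds{1}(\mC\vx=\vd)$, so the minimization is effectively over $\vx\in\mathcal{C}_=$; but the inequality is trivially true when $\vx\notin\mathcal{C}_=$ since then $f(\vx)=+\infty$, so no generality is lost.

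I do not expect any real obstacle here — the only thing to be careful about is making explicit the justification that a KKT point yields a global (unconstrained-in-$(\vx,\vy)$) minimizer of the Lagrangian, which requires convexity of $f+g$ and affineness of the coupling constraint, both of which hold. If one prefers to avoid invoking strong duality, one can instead argue directly from the first-order optimality conditions: the KKT stationarity conditions read $\vlmd^\mu\in -\partial f(\vx^\mu)$ (restricted to $\mathcal{C}_=$, i.e. $-\vlmd^\mu - $ a normal vector $\in \partial(F(\vx^\mu)^\intercal\cdot)$) and $\vlmd^\mu\in\partial g(\vy^\mu)$, and then the subgradient inequalities $f(\vx)\ge f(\vx^\mu)+\langle -\vlmd^\mu,\vx-\vx^\mu\rangle$ and $g(\vy)\ge g(\vy^\mu)+\langle\vlmd^\mu,\vy-\vy^\mu\rangle$ add up, after using $\vx^\mu=\vy^\mu$, to exactly the desired inequality. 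Either route is short.
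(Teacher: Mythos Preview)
Your proposal is correct and follows essentially the same approach as the paper: the paper writes down the Lagrangian $L(\vx,\vy,\vlmd)=f(\vx)+g(\vy)+\vlmd^\intercal(\vx-\vy)$, invokes the saddle-point property of the KKT point to get $L(\vx^\mu,\vy^\mu,\vlmd^\mu)\le L(\vx,\vy,\vlmd^\mu)$ for all $(\vx,\vy)$, and reads off the conclusion. Your additional remarks on the indicator term and the alternative subgradient argument are more explicit than the paper but not needed for the main line.
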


\begin{proof}
    The Lagrange function of \eqref{212} is
    \begin{equation*}
        L (  \vx,\vy,\vlmd  )  =f (  \vx  )  +g (  \vy  )  +\vlmd^\intercal (  \vx-\vy  ).
    \end{equation*}
    
    And by the property of KKT point, we have
    \begin{equation*}
        L (  \vx^\mu,\vy^\mu,\vlmd  )  \leq L (  \vx^\mu,\vy^\mu,\vlmd ^\mu  )  \leq L (  \vx,\vy,\vlmd ^\mu )  , \,\, \forall (  \vx,\vy,\vlmd  ) \,,
    \end{equation*}
    from which the conclusion follows.
\end{proof}

The following lemma is straightforward to verify:
\begin{lm} If
\begin{equation*}
\begin{split}
    f (  \vx  )  +g (  \vy  )  -f (  \vx^\mu  )  -g (  \vy^\mu  )  +\langle \vlmd^\mu,\vx-\vy \rangle &\leq \alpha _1,\\
    \lVert \vx-\vy \rVert &\leq \alpha _2
\end{split}
\end{equation*}
 then we have
\begin{equation*}
     -\lVert \vlmd^\mu \rVert \alpha _2\leq f (  \vx  )  +g (  \vy  )  -f (  \vx^\mu  )  -g (  \vy^\mu  )  \leq \lVert \vlmd^\mu \rVert \alpha _2+\alpha _1.
\end{equation*}\label{lm3.2}
\end{lm}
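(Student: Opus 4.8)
The plan is to establish the two inequalities independently, each as a one-line consequence of Cauchy--Schwarz combined with one of the two ingredients already available: the hypotheses for the upper bound, and Lemma~\ref{lm3.1} for the lower bound.

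For the upper bound I would simply rearrange the first hypothesis into $f(\vx)+g(\vy)-f(\vx^\mu)-g(\vy^\mu) \le \alpha_1 - \langle \vlmd^\mu, \vx-\vy\rangle$, and then control the cross term via $-\langle \vlmd^\mu, \vx-\vy\rangle \le \norm{\vlmd^\mu}\,\norm{\vx-\vy} \le \norm{\vlmd^\mu}\,\alpha_2$, using Cauchy--Schwarz followed by the second hypothesis $\norm{\vx-\vy}\le \alpha_2$. Chaining these gives $f(\vx)+g(\vy)-f(\vx^\mu)-g(\vy^\mu) \le \norm{\vlmd^\mu}\,\alpha_2 + \alpha_1$.

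For the lower bound I would invoke Lemma~\ref{lm3.1}, which guarantees $f(\vx)+g(\vy)-f(\vx^\mu)-g(\vy^\mu)+\langle \vlmd^\mu,\vx-\vy\rangle \ge 0$ for all $\vx,\vy$. Hence $f(\vx)+g(\vy)-f(\vx^\mu)-g(\vy^\mu) \ge -\langle \vlmd^\mu,\vx-\vy\rangle \ge -\norm{\vlmd^\mu}\,\norm{\vx-\vy} \ge -\norm{\vlmd^\mu}\,\alpha_2$, again by Cauchy--Schwarz and $\norm{\vx-\vy}\le \alpha_2$. Combining the two displays yields the claimed sandwich.

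There is no real obstacle here — this is the ``straightforward to verify'' lemma it is advertised to be. The only point worth flagging is that the lower bound cannot be obtained from the two displayed hypotheses alone: it genuinely needs the nonnegativity statement of Lemma~\ref{lm3.1} (which itself encodes the saddle-point/KKT property of $(\vx^\mu,\vy^\mu,\vlmd^\mu)$ for problem~\eqref{212}), so the write-up must explicitly reference that lemma rather than treating the bound as purely algebraic manipulation of the assumptions.
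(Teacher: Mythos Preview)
Your proof is correct and is exactly the intended ``straightforward'' verification the paper alludes to (the paper omits the proof entirely). Your observation that the lower bound genuinely requires Lemma~\ref{lm3.1} rather than following from the two displayed hypotheses alone is accurate and worth keeping in the write-up.
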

 
The following lemma lists some simple but useful facts that we will use in the following proofs.
\begin{lm}
For \eqref{212} and Algorithm \ref{alg:acvi}, we have
\begin{align}
    \boldsymbol{0} & \in \partial f_k (  \vx_{k+1}  )  +\vlmd_k+\beta (  \vx_{k+1}-\vy_{k+1}  )  \label{3.1} \\
    \boldsymbol{0} & \in \partial g (  \vy_{k+1}  )  -\vlmd_k-\beta (  \vx_{k+1}-\vy_{k+1})  , \label{3.2}\\
    \vlmd_{k+1}-\vlmd_k & =\beta  (\vx_{k+1}-\vy_{k+1} )  , \label{3.3}\\
    \boldsymbol{0} & \in \partial f (  \vx^\mu  )  +\vlmd^\mu,\label{3.4}\\
    \boldsymbol{0} & \in \partial g (  \vy^\mu  )  -\vlmd^\mu,\label{3.5} \\
    \vx^\mu&=\vy^\mu. \label{3.6}
\end{align}
\label{lm3.3}
\end{lm}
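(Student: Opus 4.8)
\textbf{Proof proposal for Lemma~\ref{lm3.3}.}

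The plan is to verify each of the six relations directly from the definitions of the iterates in Algorithm~\ref{alg:acvi} and from the KKT conditions of problem~\eqref{212}. The first three relations are the optimality/update conditions of the ADMM steps applied to the augmented Lagrangian~\eqref{eq:cvi_aug_lag}; the last three are the KKT conditions of the limiting problem~\eqref{212} together with the trivial consistency relation.

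First I would establish~\eqref{3.1}. Recall that $\vx_{k+1}$ minimizes $\mathcal{L}_\beta(\vx, \vy_k, \vlmd_k) = f_k(\vx) + g(\vy_k) + \langle \vlmd_k, \vx - \vy_k\rangle + \frac{\beta}{2}\norm{\vx-\vy_k}^2$ over $\vx$ (here one uses that $F(\vw_k) = F(\vx_{k+1})$ by the Remark following Theorem~\ref{thm:solutions_w_equation}, so that $f_k$ is exactly the $\vx$-dependent part). Writing the first-order optimality condition for this convex problem, $\boldsymbol{0} \in \partial f_k(\vx_{k+1}) + \vlmd_k + \beta(\vx_{k+1} - \vy_k)$. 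Then I would substitute $\vy_k = \vy_{k+1} - (\vy_{k+1} - \vy_k)$; but cleaner is to note that in the standard ADMM presentation one keeps $\vy_k$, and in fact~\eqref{3.1} as stated has $\vy_{k+1}$ — so I would instead argue via~\eqref{3.3}: since $\vlmd_k + \beta(\vx_{k+1}-\vy_k) = \vlmd_{k+1} + \beta(\vy_{k+1} - \vy_k)$ is \emph{not} quite $\vlmd_k + \beta(\vx_{k+1}-\vy_{k+1})$, I should double-check the indexing; the correct reading is that~\eqref{eq:x} minimizes with $\vy_k$, giving $\boldsymbol{0}\in\partial f_k(\vx_{k+1}) + \vlmd_k + \beta(\vx_{k+1}-\vy_k)$, and the lemma's~\eqref{3.1} must therefore be interpreted with $\vy_k$ in place of $\vy_{k+1}$, or else derived using a re-indexed multiplier; I will present it in whichever form is consistent with how the lemma is invoked later. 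For~\eqref{3.2}: $\vy_{k+1}$ minimizes $\mathcal{L}_\beta(\vx_{k+1}, \vy, \vlmd_k)$ over $\vy$, i.e.\ it minimizes $g(\vy) - \langle\vlmd_k, \vy\rangle + \frac{\beta}{2}\norm{\vy - \vx_{k+1} - \frac1\beta\vlmd_k}^2$ (cf.~\eqref{eq:y_solution}), whose first-order condition is $\boldsymbol{0}\in\partial g(\vy_{k+1}) - \vlmd_k - \beta(\vx_{k+1}-\vy_{k+1})$, which is exactly~\eqref{3.2}. Relation~\eqref{3.3} is literally the dual update~\eqref{eq:lamda_eq}.

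Next, for~\eqref{3.4}--\eqref{3.6}: by construction $(\vx^\mu, \vy^\mu, \vlmd^\mu)$ is a KKT point of~\eqref{212}, whose Lagrangian is $L(\vx,\vy,\vlmd) = f(\vx) + g(\vy) + \langle\vlmd, \vx-\vy\rangle$ (as used in Lemma~\ref{lm3.1}). Stationarity in $\vx$ gives $\boldsymbol{0}\in\partial f(\vx^\mu) + \vlmd^\mu$, which is~\eqref{3.4}; stationarity in $\vy$ gives $\boldsymbol{0}\in\partial g(\vy^\mu) - \vlmd^\mu$, which is~\eqref{3.5}; and primal feasibility for the constraint $\vx = \vy$ gives $\vx^\mu = \vy^\mu$, which is~\eqref{3.6}. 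The existence of such a KKT point follows because $\vx^\mu = \vy^\mu$ is an optimal solution of~\eqref{212} (from~\eqref{eq:cvi_ip_eq_form2} and the existence of the central path, see App.~\ref{app:additional_preliminaries}) and~\eqref{212} is a convex problem with an affine equality constraint, so strong duality and the existence of multipliers hold.

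Since all six statements are immediate consequences of first-order optimality conditions, there is no real obstacle — the only point requiring care is bookkeeping of the iterate indices (whether $\vy_k$ or $\vy_{k+1}$, and correspondingly which multiplier appears) in~\eqref{3.1}, so that the lemma is stated in precisely the form in which it is later used; I would reconcile this by substituting the dual update~\eqref{3.3} wherever needed.
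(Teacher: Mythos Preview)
Your approach is correct and is exactly what the paper intends: the paper does not actually write out a proof of this lemma, calling the six relations ``simple but useful facts'' and leaving them to the reader, so your derivation from the first-order optimality conditions of the $\vx$-step, the $\vy$-step, the dual update, and the KKT system of~\eqref{212} is precisely the intended verification.

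Your caution about the indexing in~\eqref{3.1} is well placed and in fact catches a typo in the paper: the optimality condition for the $\vx$-update reads $\boldsymbol{0}\in\partial f_k(\vx_{k+1})+\vlmd_k+\beta(\vx_{k+1}-\vy_k)$ with $\vy_k$, not $\vy_{k+1}$, and this is confirmed by the paper's own subsequent definition~\eqref{3.7}, $\hat\nabla f_k(\vx_{k+1})\triangleq -\vlmd_k-\beta(\vx_{k+1}-\vy_k)$, together with the claim in~\eqref{3.9} that this element lies in $\partial f_k(\vx_{k+1})$ ``from~\eqref{3.1}''. So you should simply state~\eqref{3.1} with $\vy_k$; no substitution via~\eqref{3.3} is needed.
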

 
 We define:
\begin{equation}
         \hat{\nabla}f_k (  \vx_{k+1}  )  \triangleq -\vlmd_k-\beta (  \vx_{k+1}-\vy_k  ), \label{3.7}
\end{equation}
\begin{equation}
    \hat{\nabla} g (  \vy_{k+1}  )  \triangleq \vlmd_k+\beta(\vx_{k+1}-\vy_{k+1}).\label{3.8}
\end{equation}

Then from \eqref{3.1} and \eqref{3.2} we can see that
\begin{equation}
    \hat{\nabla}f_k (  \vx_{k+1}  )  \in \partial f_k (  \vx_{k+1}  )  \text{\,\,and\,\,} \hat{\nabla}g (  \vy_{k+1}  )  \in \partial g (  \vy_{k+1}  ).  \label{3.9}
\end{equation}

\begin{lm}
For Algorithm \ref{alg:acvi}, we have
\begin{equation}
    \langle \hat{\nabla}g (  \vy_{k+1} ) ,\vy_{k+1}-\vy \rangle =-\langle \vlmd_{k+1},\vy-\vy_{k+1} \rangle,
    \label{3.10}
\end{equation}
and
\begin{equation}
    \begin{split}
        &\langle \hat{\nabla}f_k (  \vx_{k+1}  )  ,\vx_{k+1}-\vx \rangle +\langle \hat{\nabla}g (  \vy_{k+1}  )  ,\vy_{k+1}-\vy \rangle \\
        =&-\langle\vlmd_{k+1},\vx_{k+1}-\vy_{k+1}-\vx+\vy\rangle +\beta \langle-\vy_{k+1}+\vy_k,\vx_{k+1}-\vx\rangle.
    \end{split} \label{3.11}
\end{equation}
\label{lm3.4}
\end{lm}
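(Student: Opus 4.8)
The plan is to verify both identities by direct substitution of the definitions \eqref{3.7}, \eqref{3.8} together with the dual-update relation \eqref{3.3}, plus a bit of algebraic rearrangement; no deep structural argument is needed here — this is a bookkeeping lemma that repackages the ADMM update relations into inner-product form for later telescoping.

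For \eqref{3.10}: I would start from the definition $\hat\nabla g(\vy_{k+1}) = \vlmd_k + \beta(\vx_{k+1}-\vy_{k+1})$ given in \eqref{3.8}, and recognize that by the dual update \eqref{3.3}, $\vlmd_k + \beta(\vx_{k+1}-\vy_{k+1}) = \vlmd_{k+1}$. Hence $\hat\nabla g(\vy_{k+1}) = \vlmd_{k+1}$, and therefore $\langle \hat\nabla g(\vy_{k+1}), \vy_{k+1} - \vy\rangle = \langle \vlmd_{k+1}, \vy_{k+1}-\vy\rangle = -\langle \vlmd_{k+1}, \vy - \vy_{k+1}\rangle$, which is exactly \eqref{3.10}.

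For \eqref{3.11}: I would first handle the $\hat\nabla f_k$ term. From \eqref{3.7}, $\hat\nabla f_k(\vx_{k+1}) = -\vlmd_k - \beta(\vx_{k+1}-\vy_k)$. I want to rewrite this in terms of $\vlmd_{k+1}$. Using \eqref{3.3} in the form $\vlmd_k = \vlmd_{k+1} - \beta(\vx_{k+1}-\vy_{k+1})$, substitute to get
\[
\hat\nabla f_k(\vx_{k+1}) = -\vlmd_{k+1} + \beta(\vx_{k+1}-\vy_{k+1}) - \beta(\vx_{k+1}-\vy_k) = -\vlmd_{k+1} + \beta(\vy_k - \vy_{k+1}).
\]
Then $\langle \hat\nabla f_k(\vx_{k+1}), \vx_{k+1}-\vx\rangle = -\langle \vlmd_{k+1}, \vx_{k+1}-\vx\rangle + \beta\langle \vy_k - \vy_{k+1}, \vx_{k+1}-\vx\rangle$. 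Adding the term $\langle \hat\nabla g(\vy_{k+1}), \vy_{k+1}-\vy\rangle = \langle \vlmd_{k+1}, \vy_{k+1}-\vy\rangle$ computed above, the two $\vlmd_{k+1}$ contributions combine into $-\langle \vlmd_{k+1}, (\vx_{k+1}-\vx) - (\vy_{k+1}-\vy)\rangle = -\langle \vlmd_{k+1}, \vx_{k+1}-\vy_{k+1}-\vx+\vy\rangle$, while the remaining term is $\beta\langle -\vy_{k+1}+\vy_k, \vx_{k+1}-\vx\rangle$; this is precisely the right-hand side of \eqref{3.11}.

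I do not anticipate any real obstacle: the only subtlety is being careful with signs when eliminating $\vlmd_k$ in favor of $\vlmd_{k+1}$ via the dual update, and making sure the subgradient facts \eqref{3.9} are invoked to justify that $\hat\nabla f_k(\vx_{k+1})$ and $\hat\nabla g(\vy_{k+1})$ are indeed valid (sub)gradients so the inner products are meaningful. Everything else is linear algebra.
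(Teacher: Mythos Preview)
Your proposal is correct and follows essentially the same approach as the paper: both arguments substitute the definitions \eqref{3.7}, \eqref{3.8} and use the dual update \eqref{3.3} to rewrite $\hat\nabla f_k(\vx_{k+1})$ and $\hat\nabla g(\vy_{k+1})$ in terms of $\vlmd_{k+1}$, then add the resulting inner products. The paper's write-up is marginally terser but the algebra is identical.
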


\begin{proof}[Proof of Lemma~\ref{lm3.4}]
    From \eqref{3.3}, \eqref{3.7} and \eqref{3.8} we have: 
    \begin{equation*}
        \begin{split}
            &\langle \hat{\nabla}f_k (  \vx_{k+1}  )  ,\vx_{k+1}-\vx \rangle \\
            =&-\langle \vlmd_k+\beta(  \vx_{k+1}-\vy_k  )  ,\vx_{k+1}-\vx \rangle \\
            =&-\langle \vlmd_{k+1},\vx_{k+1}-\vx \rangle +\beta \langle -\vy_{k+1}+\vy_k,\vx_{k+1}-\vx \rangle,
        \end{split}
    \end{equation*}
and
\begin{equation*}
    \langle \hat{\nabla}g (  \vy_{k+1}  )  ,\vy_{k+1}-\vy \rangle =-\langle \vlmd_{k+1},\vy-\vy_{k+1} \rangle.
\end{equation*}

Adding these together yields \eqref{3.11}.
\end{proof}

\begin{lm}
    For Algorithm \ref{alg:acvi}, we have
    \begin{equation*}
        \begin{split} 
            \langle \hat\nabla & f_k(\vx_{k+1}),
            \vx_{k+1} - \vx^\mu \rangle + \langle \hat{\nabla} g( \vy_{k+1}) 
            ,\vy_{k+1}-\vy^\mu \rangle + \langle \vlmd^\mu,\vx_{k+1}-\vy_{k+1} \rangle \\
            \leq & \frac{1}{2\beta}\lVert \vlmd_k-\vlmd^\mu \rVert ^2-\frac{1}{2\beta}\lVert \vlmd_{k+1}-\vlmd^\mu \rVert ^2 \\
            + &\frac{\beta}{2}\lVert \vy^\mu-\vy_k \rVert ^2-\frac{\beta}{2}\lVert \vy^\mu-\vy_{k+1}\rVert ^2 \\
            - &\frac{1}{2\beta}\lVert \vlmd_{k+1}-\vlmd_k \rVert ^2-\frac{\beta}{2}\lVert \vy_k-\vy_{k+1} \rVert ^2
        \end{split}
    \end{equation*}
    \label{lm3.5}
\end{lm}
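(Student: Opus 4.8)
The plan is to obtain the inequality directly from the identity \eqref{3.11} of Lemma~\ref{lm3.4} by specializing it to the reference point $(\vx,\vy)=(\vx^\mu,\vy^\mu)$ and then converting every inner product on the right-hand side into a telescoping combination of squared norms using the elementary three-point identity
$$
\langle \va-\vb,\ \vb-\vc\rangle=\tfrac12\big(\lVert \va-\vc\rVert^2-\lVert \va-\vb\rVert^2-\lVert \vb-\vc\rVert^2\big)\,.
$$
The only genuinely non-mechanical ingredient will be showing that one leftover cross term is nonpositive; everything else is bookkeeping.

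First I would put $\vx=\vx^\mu$, $\vy=\vy^\mu$ into \eqref{3.11}. Because $\vx^\mu=\vy^\mu$ by \eqref{3.6}, the combination $\vx_{k+1}-\vy_{k+1}-\vx^\mu+\vy^\mu$ collapses to $\vx_{k+1}-\vy_{k+1}$, so the right-hand side of \eqref{3.11} becomes $-\langle\vlmd_{k+1},\vx_{k+1}-\vy_{k+1}\rangle+\beta\langle\vy_k-\vy_{k+1},\vx_{k+1}-\vx^\mu\rangle$. Adding $\langle\vlmd^\mu,\vx_{k+1}-\vy_{k+1}\rangle$ to both sides makes the left-hand side exactly the quantity in the lemma, and turns the right-hand side into $\langle\vlmd^\mu-\vlmd_{k+1},\vx_{k+1}-\vy_{k+1}\rangle+\beta\langle\vy_k-\vy_{k+1},\vx_{k+1}-\vx^\mu\rangle$.

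Next I would eliminate the primal residual via \eqref{3.3}, namely $\vx_{k+1}-\vy_{k+1}=\tfrac1\beta(\vlmd_{k+1}-\vlmd_k)$, and split $\vx_{k+1}-\vx^\mu=(\vx_{k+1}-\vy_{k+1})+(\vy_{k+1}-\vy^\mu)$. This expresses the right-hand side as three pieces: $\tfrac1\beta\langle\vlmd^\mu-\vlmd_{k+1},\vlmd_{k+1}-\vlmd_k\rangle$, the cross term $\langle\vy_k-\vy_{k+1},\vlmd_{k+1}-\vlmd_k\rangle$, and $\beta\langle\vy_k-\vy_{k+1},\vy_{k+1}-\vy^\mu\rangle$. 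Applying the three-point identity to the first piece (with $\va=\vlmd^\mu$, $\vb=\vlmd_{k+1}$, $\vc=\vlmd_k$) produces the three $\vlmd$-norm terms of the stated bound, and applying it to the third piece (with $\va=\vy_k$, $\vb=\vy_{k+1}$, $\vc=\vy^\mu$) produces the three $\vy$-norm terms.

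The main obstacle — though still mild — is disposing of the cross term $\langle\vy_k-\vy_{k+1},\vlmd_{k+1}-\vlmd_k\rangle$, which must be shown to be $\le 0$. Here I would use that the $\vy$-update's optimality condition makes $\hat{\nabla}g(\vy_{k+1})$ a bona fide subgradient of $g$ at $\vy_{k+1}$: combining \eqref{3.8}, \eqref{3.3} and \eqref{3.9} gives $\vlmd_{k+1}=\hat{\nabla}g(\vy_{k+1})\in\partial g(\vy_{k+1})$, and shifting the index, $\vlmd_k\in\partial g(\vy_k)$. Since $g$ is convex, $\partial g$ is monotone, so $\langle\vlmd_{k+1}-\vlmd_k,\vy_{k+1}-\vy_k\rangle\ge 0$, i.e. $\langle\vy_k-\vy_{k+1},\vlmd_{k+1}-\vlmd_k\rangle\le 0$. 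Discarding this nonpositive term from the right-hand side yields precisely the claimed inequality. (For $k=0$ one either assumes the initialization satisfies $\vlmd_0\in\partial g(\vy_0)$, or simply runs the estimate from $k\ge 1$, which is all that is needed later.)
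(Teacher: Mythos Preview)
Your proof is correct and follows essentially the same route as the paper: specialize \eqref{3.11} at $(\vx^\mu,\vy^\mu)$, use \eqref{3.3} and \eqref{3.6} to rewrite everything in terms of $\vlmd$- and $\vy$-differences, apply the three-point identity, and drop the cross term $\langle \vy_k-\vy_{k+1},\vlmd_{k+1}-\vlmd_k\rangle$ by monotonicity of $\partial g$. Your observation that $\vlmd_{k+1}=\hat\nabla g(\vy_{k+1})\in\partial g(\vy_{k+1})$ is exactly the content of the paper's \eqref{3.10}, just stated more directly; the paper instantiates \eqref{3.10} at two consecutive indices and adds, which is the same monotonicity argument in disguise. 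Your remark about the $k=0$ edge case is a fair caveat that the paper glosses over as well.
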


\begin{proof}[Proof of Lemma~\ref{lm3.5}]
    Letting $ (  \vx,\vy,\vlmd ) = (\vx^\mu,\vy^\mu,\vlmd ^\mu)  $ in\eqref{3.11}, adding $\langle \vlmd^\mu,\vx_{k+1}-\vy_{k+1} \rangle $ to both sides, and using \eqref{3.3} and \eqref{3.6}, we have:
    \begin{align}
        \langle \hat{\nabla}&f_k( \vx_{k+1} ) ,\vx_{k+1}-\vx^\mu \rangle +\langle \hat{\nabla}g( \vy_{k+1} ) ,\vy_{k+1}-\vy^\mu \rangle +\langle \vlmd^\mu,\vx_{k+1}-\vy_{k+1}\rangle \nonumber\\
        =&-\langle \vlmd_{k+1}-\vlmd^\mu,\vx_{k+1}-\vy_{k+1} \rangle +\beta\langle-\vy_{k+1}+\vy_k,\vx_{k+1}-\vx^\mu \rangle \nonumber\\
        =&-\frac{1}{\beta} \langle \vlmd_{k+1}-\vlmd^\mu,\vlmd_{k+1}-\vlmd_k \rangle + \langle -\vy_{k+1}+\vy_k,\vlmd_{k+1}-\vlmd_k \rangle\nonumber\\
        &-\beta \langle -\vy_{k+1}+\vy_k,-\vy_{k+1}+\vy^\mu\rangle \label{3.12}\\
        =&\frac{1}{2\beta}\lVert \vlmd_k-\vlmd^\mu \rVert ^2-\frac{1}{2\beta}\lVert \vlmd_{k+1}-\vlmd^\mu \rVert ^2-\frac{1}{2\beta}\lVert \vlmd_{k+1}-\vlmd_k \rVert ^2 \nonumber\\
        +&\frac{\beta}{2}\lVert -\vy_k+\vy^\star \rVert ^2-\frac{\beta}{2}\lVert -\vy_{k+1}+\vy^\star \rVert ^2-\frac{\beta}{2}\lVert -\vy_{k+1}+\vy_k \rVert ^2 \nonumber\\
        +&\langle -\vy_{k+1}+\vy_k,\vlmd_{k+1}-\vlmd_k \rangle \,. \label{3.13}
    \end{align}
    On the other hand, \eqref{3.10} gives
    \begin{equation}
        \langle \hat{\nabla}g (  \vy_k  )  ,\vy_k-\vy \rangle +\langle \vlmd_k,-\vy_k+\vy \rangle =0 \,.
        \label{3.14} 
    \end{equation}

    Letting $\vy=\vy_k$ in \eqref{3.10} and $\vy=\vy_{k+1}$ in \eqref{3.14}, and adding them together, we have:
    \begin{equation*}
        \langle \hat{\nabla}g (  \vy_{k+1}  )  -\hat{\nabla}g (  \vy_k  ) ,\vy_{k+1}-\vy_k \rangle +\langle \vlmd_{k+1} -\vlmd_k,-\vy_{k+1}+\vy_k \rangle =0 \,.
    \end{equation*}

    By the monotonicity of $\partial g$ we know that the first term of the above equality is non-negative. Thus, we have:
    \begin{equation}
        \langle \vlmd_{k+1}-\vlmd_k,-\vy_{k+1}+\vy_k \rangle \leq 0 \,. 
        \label{3.15}
    \end{equation}

    Plugging it into \eqref{3.13}, we have the conclusion.
\end{proof}

\begin{lm}
    For Algorithm 1, we have
    \begin{align}
        &f (  \vx_{k+1}  )  +g (  \vy_{k+1}  )  -f (  \vx^\mu  )  -g (  \vy^\mu  )  +\langle \vlmd^\mu,\vx_{k+1}-\vy_{k+1}\rangle \nonumber\\
        \leq &\frac{1}{2\beta}\lVert \vlmd_k-\vlmd^\mu \rVert ^2-\frac{1}{2\beta}\lVert \vlmd_{k+1}-\vlmd^\mu \rVert ^2\nonumber\\
        +&\frac{\beta}{2}\lVert -\vy_k+\vy^\mu \rVert ^2-\frac{\beta}{2}\lVert -\vy_{k+1}+\vy^\mu \rVert ^2\nonumber\\
        -&\frac{1}{2\beta}\lVert \vlmd_{k+1}-\vlmd_k \rVert ^2-\frac{\beta}{2}\lVert -\vy_{k+1}+\vy_k \rVert ^2\label{3.16}
    \end{align}
    
    Furthermore, if $F$ is $\xi$-monotone on $\mathcal{C}_=$, we have
    \begin{align}
    &c\lVert \vx_{k+1}-\vx^\mu \rVert _{2}^{\xi}+f (  \vx_{k+1}  )  +g (  \vy_{k+1}  )  -f (  \vx^\mu  )  -g (  \vy^\mu  )  +\langle \vlmd^\mu,\vx_{k+1}-\vy_{k+1} \rangle \nonumber\\
    \leq &\frac{1}{2\beta}\lVert \vlmd_k-\vlmd^\mu \rVert ^2-\frac{1}{2\beta}\lVert \vlmd_{k+1}-\vlmd^\mu \rVert ^2\nonumber\\
    +&\frac{\beta}{2}\lVert -\vy_k+\vy^\mu \rVert ^2-\frac{\beta}{2}\lVert -\vy_{k+1}+\vy^\mu \rVert ^2\nonumber\\
    -&\frac{1}{2\beta}\lVert \vlmd_{k+1}-\vlmd_k \rVert ^2-\frac{\beta}{2}\lVert -\vy_{k+1}+\vy_k \rVert ^2.\label{3.16prime}
    \end{align}
    \label{lm3.6}
\end{lm}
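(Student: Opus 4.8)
The plan is to obtain Lemma~\ref{lm3.6} by chaining three ingredients that are already in place: the subgradient (convexity) inequality for $f_k$ and $g$, the estimate of Lemma~\ref{lm3.5}, and Proposition~\ref{prop4}. First I would invoke convexity. By~\eqref{3.9} the vectors $\hat{\nabla}f_k(\vx_{k+1})$ and $\hat{\nabla}g(\vy_{k+1})$ defined in~\eqref{3.7}--\eqref{3.8} are genuine subgradients of $f_k$ at $\vx_{k+1}$ and of $g$ at $\vy_{k+1}$; since both functions are convex, evaluating the subgradient inequality at $\vx^\mu$ and at $\vy^\mu=\vx^\mu$ gives $\langle\hat{\nabla}f_k(\vx_{k+1}),\vx_{k+1}-\vx^\mu\rangle\ge f_k(\vx_{k+1})-f_k(\vx^\mu)$ and $\langle\hat{\nabla}g(\vy_{k+1}),\vy_{k+1}-\vy^\mu\rangle\ge g(\vy_{k+1})-g(\vy^\mu)$. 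These are meaningful because $\vx_{k+1},\vx^\mu\in\mathcal{C}_=$ (so the indicator terms in $f_k$ vanish) and $\vy_{k+1}\in\mathcal{C}_<$ (so the $\log$-barrier in $g$ is finite). Adding $\langle\vlmd^\mu,\vx_{k+1}-\vy_{k+1}\rangle$ to both sides and then applying Lemma~\ref{lm3.5} to bound the resulting left-hand side from above, I arrive at
\begin{equation*}
f_k(\vx_{k+1})-f_k(\vx^\mu)+g(\vy_{k+1})-g(\vy^\mu)+\langle\vlmd^\mu,\vx_{k+1}-\vy_{k+1}\rangle \le R_k,
\end{equation*}
where $R_k$ is the right-hand side of the display in Lemma~\ref{lm3.5} (equivalently, of~\eqref{3.16}).

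Second, I would pass from $f_k$ to $f$ using Proposition~\ref{prop4}. In the monotone case it states $f_k(\vx_{k+1})-f_k(\vx^\mu)\ge f(\vx_{k+1})-f(\vx^\mu)$, so replacing $f_k(\vx_{k+1})-f_k(\vx^\mu)$ on the left by the smaller quantity $f(\vx_{k+1})-f(\vx^\mu)$ only strengthens the inequality, which yields~\eqref{3.16} verbatim. In the $\xi$-monotone case Proposition~\ref{prop4} gives the sharper bound $f_k(\vx_{k+1})-f_k(\vx^\mu)\ge f(\vx_{k+1})-f(\vx^\mu)+c\lVert\vx_{k+1}-\vx^\mu\rVert_2^\xi$; carrying this extra term through the same substitution produces~\eqref{3.16prime}.

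I do not anticipate a genuine obstacle here: the lemma is a clean assembly of previously proved facts. The only points requiring care are bookkeeping ones — using exactly the subgradients $\hat{\nabla}f_k(\vx_{k+1})$ and $\hat{\nabla}g(\vy_{k+1})$ that appear in Lemma~\ref{lm3.5}, and verifying once more the feasibility facts that make every function value and indicator term in the chained inequalities finite: $\vx_{k+1}\in\mathcal{C}_=$ (from the $\vx$-update~\eqref{eq:x_analytic}), $\vy_{k+1}\in\mathcal{C}_<$ (forced by the $\log$-barrier in the $\vy$-update of Algorithm~\ref{alg:acvi}), and $\vx^\mu=\vy^\mu$ with $\mC\vx^\mu=\vd$ (since $(\vx^\mu,\vy^\mu,\vlmd^\mu)$ solves~\eqref{eq:cvi_ip_eq_form2}). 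With these in hand the three inequalities compose legitimately and give both claims of the lemma.
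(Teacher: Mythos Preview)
Your proposal is correct and follows essentially the same approach as the paper: combine the subgradient inequalities for $f_k$ and $g$ (via~\eqref{3.9}), apply Lemma~\ref{lm3.5}, and use Proposition~\ref{prop4} to pass from $f_k$ to $f$ (with the extra $c\lVert\vx_{k+1}-\vx^\mu\rVert_2^\xi$ term in the $\xi$-monotone case). The paper chains the inequalities in the order Prop.~\ref{prop4} $\to$ convexity $\to$ Lemma~\ref{lm3.5}, whereas you do convexity $\to$ Lemma~\ref{lm3.5} $\to$ Prop.~\ref{prop4}, but this is an inessential reordering of the same three ingredients.
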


\begin{proof}
    From the convexity of $f_k ( \vx  ) $ and $g ( \vy  ) $ and using Proposition \ref{prop4} and \eqref{3.9}, we have
    \begin{equation}
        \begin{split}
            &f (  \vx_{k+1}  )  +g (  \vy_{k+1}  )  -f (  \vx^\mu  )  -g (  \vy^\mu  )  +\langle \vlmd^\mu,\vx_{k+1}-\vy_{k+1}\rangle\\
            \leq &f_k (  \vx_{k+1}  )  +g (  \vy_{k+1}  )  -f_k (  \vx^\mu  )  -g (  \vy^\mu  )  +\langle \vlmd^\mu,\vx_{k+1}-\vy_{k+1}\rangle\\
            \leq&\langle \hat{\nabla}f_k (  \vx_{k+1}  )  ,\vx_{k+1}-\vx^\mu \rangle +\langle \hat{\nabla}g (  \vy_{k+1}  )  ,\vy_{k+1}-\vy^\mu \rangle \\
            +&\langle \vlmd^\mu,\vx_{k+1}-\vy_{k+1}\rangle \\
            \leq& \frac{1}{2\beta}\lVert \vlmd_k-\vlmd^\mu \rVert ^2-\frac{1}{2\beta}\lVert \vlmd_{k+1}-\vlmd^\mu\rVert ^2\\
            +&\frac{\beta}{2}\lVert -\vy_k+\vy^\mu \rVert ^2-\frac{\beta}{2}\lVert -\vy_{k+1}+\vy^\mu \rVert ^2\\
            -&\frac{1}{2\beta}\lVert \vlmd_{k+1}-\vlmd_k \rVert ^2-\frac{\beta}{2}\lVert -\vy_{k+1}+\vy_k \rVert ^2.
        \end{split}
    \end{equation}
    If $F$ is $\xi$-monotone on $\mathcal{C}_=$, again by Proposition \ref{prop4}, we can add the term $c\lVert \vx_{k+1}-\vx^\mu \rVert _{2}^{\xi}$ to the first line and the inequality still holds.
\end{proof}

\begin{thm}\label{thm:3.1}
    For Algorithm \ref{alg:acvi}, we have
    \begin{equation*}
        \begin{split}
            f (  \vx_{k+1}  )  -f (  \vx^\mu  )  +g (  \vy_{k+1}  )  -g (  \vy^\mu  )  &\rightarrow 0,\\
            f_k (  \vx_{k+1}  )  -f_k (  \vx^\mu )  +g (  \vy_{k+1}  )  -g (  \vy^\mu  )  &\rightarrow 0,\\
            \vx_{k+1}-\vy_{k+1}&\rightarrow \boldsymbol{0},
        \end{split}
    \end{equation*}
    as $k\rightarrow \infty$.
    Furthermore, if $F$ is $\xi$-monotone on $\mathcal{C}_=$,  we have
    \begin{equation*}
        \vx_{k+1} \rightarrow \vx^\mu,\,\,k\rightarrow \infty
    \end{equation*}
\end{thm}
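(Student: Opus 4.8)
The plan is to read off everything from the fundamental one-step estimate \eqref{3.16} of Lemma~\ref{lm3.6} by recognizing its right-hand side as a telescoping difference of the potential
$$
\Phi_k \triangleq \tfrac{1}{2\beta}\lVert \vlmd_k-\vlmd^\mu\rVert^2+\tfrac{\beta}{2}\lVert \vy_k-\vy^\mu\rVert^2 .
$$
Indeed, the left-hand side of \eqref{3.16} equals $f(\vx_{k+1})+g(\vy_{k+1})-f(\vx^\mu)-g(\vy^\mu)+\langle\vlmd^\mu,\vx_{k+1}-\vy_{k+1}\rangle$, which is nonnegative by Lemma~\ref{lm3.1}, while the right-hand side is exactly $\Phi_k-\Phi_{k+1}-\tfrac{1}{2\beta}\lVert\vlmd_{k+1}-\vlmd_k\rVert^2-\tfrac{\beta}{2}\lVert\vy_{k+1}-\vy_k\rVert^2$. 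Hence $0\le\Phi_{k+1}\le\Phi_k$, so $\{\Phi_k\}$ is nonincreasing and bounded below, therefore convergent; in particular $\Phi_k-\Phi_{k+1}\to0$, and also $\lVert\vy_k-\vy^\mu\rVert$ is bounded, which (together with the last step) yields boundedness of $\vx_k$ needed elsewhere.

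First I would extract the residual-to-zero facts. From the chain above,
$$
0\le \tfrac{1}{2\beta}\lVert\vlmd_{k+1}-\vlmd_k\rVert^2+\tfrac{\beta}{2}\lVert\vy_{k+1}-\vy_k\rVert^2 \le \Phi_k-\Phi_{k+1}\to0,
$$
so $\lVert\vlmd_{k+1}-\vlmd_k\rVert\to0$ and $\lVert\vy_{k+1}-\vy_k\rVert\to0$. The dual update \eqref{3.3}, $\vlmd_{k+1}-\vlmd_k=\beta(\vx_{k+1}-\vy_{k+1})$, then gives $\vx_{k+1}-\vy_{k+1}\to\boldsymbol{0}$, which is the third claim; it also shows $\langle\vlmd^\mu,\vx_{k+1}-\vy_{k+1}\rangle\to0$, a fact I use twice below.

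Next, for the first claim I would squeeze the left-hand side of \eqref{3.16}: it is $\ge0$ by Lemma~\ref{lm3.1} and $\le\Phi_k-\Phi_{k+1}\to0$, hence it tends to $0$; subtracting $\langle\vlmd^\mu,\vx_{k+1}-\vy_{k+1}\rangle\to0$ gives $f(\vx_{k+1})-f(\vx^\mu)+g(\vy_{k+1})-g(\vy^\mu)\to0$. For the second claim I run the same squeeze on the $f_k$-version of that quantity: by Proposition~\ref{prop4} (monotonicity of $F$), $f_k(\vx_{k+1})-f_k(\vx^\mu)\ge f(\vx_{k+1})-f(\vx^\mu)$, so with Lemma~\ref{lm3.1} the $f_k$-version is also $\ge0$; and the intermediate line in the proof of Lemma~\ref{lm3.6} (convexity of $f_k,g$ via \eqref{3.9}, then Lemma~\ref{lm3.5}) shows it is still $\le\Phi_k-\Phi_{k+1}\to0$. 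Hence the $f_k$-version tends to $0$, and subtracting $\langle\vlmd^\mu,\vx_{k+1}-\vy_{k+1}\rangle\to0$ gives the second claim. Finally, for the $\xi$-monotone addendum I would use \eqref{3.16prime} in place of \eqref{3.16}: its left-hand side carries the extra term $c\lVert\vx_{k+1}-\vx^\mu\rVert_2^\xi$, is bounded below by $c\lVert\vx_{k+1}-\vx^\mu\rVert_2^\xi$ (discarding the remaining part, which is $\ge0$ by Lemma~\ref{lm3.1}), and is bounded above by $\Phi_k-\Phi_{k+1}\to0$; since $c>0$ and $\xi>1$, this forces $\vx_{k+1}\to\vx^\mu$.

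I do not expect a deep obstacle, since Lemmas~\ref{lm3.4}--\ref{lm3.6} already package the delicate computations; the only point needing care is the second claim, where the $f_k$-version must be sandwiched by reusing the internal steps of the Lemma~\ref{lm3.6} proof for the upper bound and Proposition~\ref{prop4} for the lower bound, and keeping track that all the intermediate quantities are indeed nonnegative.
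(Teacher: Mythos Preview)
Your proposal is correct and follows essentially the same approach as the paper. The only cosmetic differences are that you phrase the telescoping via convergence of the monotone bounded potential $\Phi_k$ (so $\Phi_k-\Phi_{k+1}\to 0$), whereas the paper sums \eqref{3.19} to obtain a summable series; and for the upper bound on the objective gap you squeeze with \eqref{3.16} and $\Phi_k-\Phi_{k+1}$, while the paper instead invokes \eqref{3.11} together with boundedness of $\vlmd_k$ and $\vx_{k+1}-\vx^\mu$---both routes are standard and equivalent here.
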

\begin{proof}[Proof of Theorem~\ref{thm:3.1}]
    Proof From Lemma \ref{lm3.1} and \eqref{3.16}, we have
    \begin{equation}
        \begin{split}
            &\frac{1}{2\beta}\lVert \vlmd_{k+1}-\vlmd_k \rVert ^2+\frac{\beta}{2}\lVert -\vy_{k+1}+\vy_k \rVert ^2\\
            \leq &\frac{1}{2\beta}\lVert \vlmd_k-\vlmd^\mu \rVert ^2-\frac{1}{2\beta}\lVert \vlmd_{k+1}-\vlmd^\mu \rVert ^2\\
            +&\frac{\beta}{2}\lVert -\vy_k+\vy^\mu \rVert ^2-\frac{\beta}{2}\lVert -\vy_{k+1}+\vy^\mu \rVert ^2.
        \end{split}
        \label{3.19}
    \end{equation}
    Summing over $k=0,\cdots,\infty$, we have
    \begin{align*}
        &\sum_{k=0}^{\infty} (  \frac{1}{2\beta}\lVert \vlmd_{k+1}-\vlmd_k \rVert ^2+\frac{\beta}{2}\lVert -\vy_{k+1}+\vy_k \rVert ^2  ) \\
        \leq &\frac{1}{2\beta}\lVert \vlmd_0-\vlmd^\mu \rVert ^2+\frac{\beta}{2}\lVert -\vy_0+\vy^\mu \rVert ^2.
    \end{align*}
from which we deduce that $\vlmd_{k+1}-\vlmd_k\rightarrow \boldsymbol{0}$ and $-\vy_{k+1}+\vy_k\rightarrow \boldsymbol{0}$. Moreover, $\lVert \vlmd_k-\vlmd^\mu \rVert ^2$ and $\lVert -\vy_k+\vy^\mu \rVert ^2$ are bounded for all $k$, as well as $\lVert \vlmd_k \rVert$. Since
\begin{equation*}
    \vlmd_{k+1}-\vlmd_k=\beta  (\vx_{k+1}-\vy_{k+1})  =\beta  (  \vx_{k+1}-\vx^\mu )  +\beta  (-\vy_{k+1}+\vy^\mu ) 
\end{equation*}
we deduce that $\vx_{k+1}-\vy_{k+1}\rightarrow \boldsymbol{0}$ and $\vx_{k+1}-\vx^\mu$ is also bounded.

From \eqref{3.11} and the convexity of $f$ and $g$, and using Proposition \ref{prop4}, we have:
\begin{align*}
    &f (  \vx_{k+1}  )  -f (  \vx^\mu )  +g (  \vy_{k+1}  )  -g (  \vy^\mu)  \\
    \leq &f_k (  \vx_{k+1}  )  -f_k (  \vx^\mu  )  +g (  \vy_{k+1}  )  -g (  \vy^\mu ) \\
    \leq &-\langle \vlmd_{k+1},\vx_{k+1}-\vy_{k+1}\rangle +\beta \langle -\vy_{k+1}+\vy_k,\vx_{k+1}-\vx^\mu\rangle \rightarrow 0.
\end{align*}
On the other hand, from \eqref{3.4}, \eqref{3.5}, and \eqref{3.6}, we have:
\begin{align*}
    &f_k\left( \vx_{k+1} \right) -f_k\left( \vx^\mu \right) +g\left( \vy_{k+1} \right) -g\left( \vy^\mu \right)\\
\geq &f\left( \vx_{k+1} \right) -f\left( \vx^\mu \right) +g\left( \vy_{k+1} \right) -g\left( \vy^\mu \right) \\
\geq &\langle -\vlmd^\mu,\vx_{k+1}-\vx^\mu \rangle +\langle \vlmd^\mu,\vy_{k+1}-\vy^\mu \rangle 
\\
=&-\langle \vlmd^\mu,\vx_{k+1}-\vy_{k+1}\rangle \rightarrow 0.
\end{align*}

Thus, we have $f\left( \vx_{k+1} \right) -f\left( \vx^\mu \right) +g\left( \vy_{k+1} \right) -g\left( \vy^\mu \right) \rightarrow 0$ and  $f_k\left( \vx_{k+1} \right) -f_k\left( \vx^\mu \right) +g\left( \vy_{k+1} \right) -g\left( \vy^\mu \right) \rightarrow 0, k\rightarrow \infty$.

If $F$ is $\xi$-monotone on $\mathcal{C}_=$, from Lemma \ref{lm3.1} and \eqref{3.16prime} we have
\begin{equation}\label{3.19'}
\begin{split}
    &c\lVert \vx_{k+1}-\vx^\mu \rVert _{2}^{\xi}+\frac{1}{2\beta}\lVert \vlmd_{k+1}-\vlmd_k \rVert ^2+\frac{\beta}{2}\lVert -\vy_{k+1}+\vy_k \rVert ^2
\\
\leq& \frac{1}{2\beta}\lVert \vlmd_k-\vlmd^\mu \rVert ^2-\frac{1}{2\beta}\lVert \vlmd_{k+1}-\vlmd^\mu \rVert ^2
\\
+&\frac{\beta}{2}\lVert -\vy_k+\vy^\mu \rVert ^2-\frac{\beta}{2}\lVert -\vy_{k+1}+\vy^\mu \rVert ^2\\
\end{split}
\end{equation}

From this we deduce that:
\begin{align*}
&c\lVert \vx_{k+1}-\vx^\mu \rVert _{2}^{\xi}+\sum_{k=0}^{\infty}\left( \frac{1}{2\beta}\lVert \vlmd_{k+1}-\vlmd_k \rVert ^2+\frac{\beta}{2}\lVert -\vy_{k+1}+\vy_k \rVert ^2 \right)
\\
\leq& \frac{1}{2\beta}\lVert \vlmd^0-\vlmd^\mu \rVert ^2+\frac{\beta}{2}\lVert -\vy^0+\vy^\mu \rVert ^2.
\end{align*}

Therefore, $\lVert \vx_{k+1}-\vx^\mu \rVert _2\rightarrow 0,k\rightarrow \infty$.
\end{proof}

\begin{lm}\label{lm3.7}
    For Algorithm \ref{alg:acvi}, we have
    \begin{equation}\label{3.20}
        \begin{split}
            &\frac{1}{2\beta}\lVert \vlmd_{k+1}-\vlmd_k \rVert ^2+\frac{\beta}{2}\lVert -\vy_{k+1}+\vy_k \rVert ^2 \\
\leq &\frac{1}{2\beta}\lVert \vlmd_k-\vlmd_{k-1} \rVert ^2+\frac{\beta}{2}\lVert -\vy_k+\vy_{k-1} \rVert ^2.
        \end{split}
    \end{equation}
    
    Furthermore, if $F$ is $\xi$-monotone on $\mathcal{C}_=$, we have
    \begin{equation}\label{3.20'}
        \begin{split}
            &c\lVert \vx_{k+1}-\vx_k\rVert^2+\frac{1}{2\beta}\lVert \vlmd_{k+1}-\vlmd_k \rVert ^2+\frac{\beta}{2}\lVert -\vy_{k+1}+\vy_k \rVert ^2 \\
\leq &\frac{1}{2\beta}\lVert \vlmd_k-\vlmd_{k-1} \rVert ^2+\frac{\beta}{2}\lVert -\vy_k+\vy_{k-1} \rVert ^2.
        \end{split}
    \end{equation}
\end{lm}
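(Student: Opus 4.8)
The plan is to run the standard ``monotone decrease of the ADMM residuals'' argument, the only new feature being that the $\vx$-subproblem objective $f_k$ changes between inner iterations---but only through its linear part $F(\vx_{k+1})^\intercal\vx$, whose variation is controlled by monotonicity of $F$. First I would record the first-order conditions at iterations $k$ and $k-1$: by \eqref{3.7}--\eqref{3.9}, $\hat{\nabla} f_k(\vx_{k+1})=-\vlmd_k-\beta(\vx_{k+1}-\vy_k)\in\partial f_k(\vx_{k+1})$ and $\hat{\nabla} g(\vy_{k+1})=\vlmd_{k+1}\in\partial g(\vy_{k+1})$, together with the same two relations with $k$ replaced by $k-1$. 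Monotonicity of $\partial g$ applied to the pair $(\vy_{k+1},\vy_k)$ gives $\langle\vlmd_{k+1}-\vlmd_k,\vy_{k+1}-\vy_k\rangle\ge 0$, which is exactly \eqref{3.15}. For the $\vx$-block, write $\partial f_k(\vx)=F(\vx_{k+1})+N_{\mathcal{C}_=}(\vx)$, where $N_{\mathcal{C}_=}$ is the (monotone) normal cone of the affine set $\mathcal{C}_=$; subtracting the two inclusions at the points $\vx_{k+1},\vx_k\in\mathcal{C}_=$ and using monotonicity of $N_{\mathcal{C}_=}$ yields
\[\langle\hat{\nabla} f_k(\vx_{k+1})-\hat{\nabla} f_{k-1}(\vx_k),\,\vx_{k+1}-\vx_k\rangle \;\ge\; \langle F(\vx_{k+1})-F(\vx_k),\,\vx_{k+1}-\vx_k\rangle \;\ge\; 0,\]
the last inequality by monotonicity of $F$ on $\mathcal{C}_=$; under $\xi$-monotonicity it improves to the lower bound $c\norm{\vx_{k+1}-\vx_k}^\xi$.

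Next I would eliminate $\vx$ using the dual-update identity \eqref{3.3}: $\vlmd_{k+1}-\vlmd_k=\beta(\vx_{k+1}-\vy_{k+1})$ and $\vlmd_k-\vlmd_{k-1}=\beta(\vx_k-\vy_k)$. A short computation then gives the two clean identities
\[\hat{\nabla} f_k(\vx_{k+1})-\hat{\nabla} f_{k-1}(\vx_k)=-(\vlmd_{k+1}-\vlmd_k)-\beta\big[(\vy_{k+1}-\vy_k)-(\vy_k-\vy_{k-1})\big],\]
\[\vx_{k+1}-\vx_k=\tfrac1\beta\big[(\vlmd_{k+1}-\vlmd_k)-(\vlmd_k-\vlmd_{k-1})\big]+(\vy_{k+1}-\vy_k).\]
Plugging these into the displayed $\vx$-block inequality of the previous paragraph, expanding the inner product, and using \eqref{3.15} to drop a nonnegative term, one is left with an inequality among inner products of the successive differences $\vlmd_{k+1}-\vlmd_k$, $\vlmd_k-\vlmd_{k-1}$, $\vy_{k+1}-\vy_k$, $\vy_k-\vy_{k-1}$. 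Applying the polarization identity $2\langle\va,\va-\vb\rangle=\norm{\va}^2-\norm{\vb}^2+\norm{\va-\vb}^2$ separately on the $\vlmd$- and the $\vy$-blocks, and observing that the leftover purely-quadratic terms assemble into minus a perfect square and are therefore $\le 0$, everything collapses to \eqref{3.20}. Carrying the extra $c\norm{\vx_{k+1}-\vx_k}^\xi$ term through the same computation gives \eqref{3.20'} (the natural exponent being $\xi$; since the iterates are bounded by Theorem~\ref{thm:3.1}, this dominates a constant multiple of $\norm{\vx_{k+1}-\vx_k}^2$).

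The argument is elementary once the two monotonicity inequalities of the first paragraph are in place, so the main obstacle is the bookkeeping in the second step: choosing the substitutions and the square-completion so that all cross terms cancel and the telescoping form appears. The one point that is not purely mechanical is that $f_k$ varies with $k$, so the ``$f$-monotonicity'' inequality is \emph{not} monotonicity of one fixed subdifferential; it must be decomposed into the monotone normal-cone part plus the $F$-part, and it is precisely there that monotonicity (resp.\ $\xi$-monotonicity) of $F$ is consumed---which is also why the refinement \eqref{3.20'} falls out of the very same computation.
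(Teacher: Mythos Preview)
Your proposal is correct and follows essentially the same route as the paper: the paper starts from the identity \eqref{3.11} at steps $k$ and $k-1$, substitutes $(\vx,\vy)=(\vx_k,\vy_k)$ and $(\vx_{k+1},\vy_{k+1})$ respectively, and combines them to produce the same two-block inner product $\langle\hat\nabla f_k(\vx_{k+1})-\hat\nabla f_{k-1}(\vx_k),\vx_{k+1}-\vx_k\rangle+\langle\hat\nabla g(\vy_{k+1})-\hat\nabla g(\vy_k),\vy_{k+1}-\vy_k\rangle$, then carries out exactly your polarization/square-completion to reach \eqref{3.20}; the only cosmetic difference is that where you invoke monotonicity of the normal cone $N_{\mathcal C_=}$ to lower-bound the $f$-block by $\langle F(\vx_{k+1})-F(\vx_k),\vx_{k+1}-\vx_k\rangle$, the paper obtains the same inequality via convexity of $f_k$ and $f_{k-1}$ separately. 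Your remark that the natural exponent in \eqref{3.20'} is $\xi$ rather than $2$ is well taken---the paper's proof in fact yields $c\lVert\vx_{k+1}-\vx_k\rVert^\xi$, and the stated exponent $2$ appears to be a typo (the refinement \eqref{3.20'} is not used elsewhere).
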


\begin{proof}[Proof of Lemma \ref{lm3.7}]
    \eqref{3.11} gives:
    \begin{equation}\label{3.21}
        \begin{split}
            &\langle \hat{\nabla}f_{k-1}\left( \vx_k \right) ,\vx_k-\vx \rangle +\langle \hat{\nabla}g\left( \vy_k \right) ,\vy_k-\vy \rangle \\
            =&-\langle \vlmd_k,\vx_k-\vy_k-\vx+\vy \rangle +\beta \langle -\vy_k+\vy_{k-1},\vx_k-\vx \rangle.
        \end{split}
    \end{equation}
    
    Letting $( \vx,\vy,\vlmd)=( \vx_k,\vy_k,\vlmd_k ) $ in \eqref{3.11} and $( \vx,\vy,\vlmd)=( \vx_{k+1},\vy_{k+1},\vlmd_{k+1} )$ in \eqref{3.21}, and adding them  together, and using \eqref{3.3}, we have
    \begin{align*}
        &\langle \hat{\nabla}f_k\left( \vx_{k+1} \right) -\hat{\nabla}f_{k-1}\left( \vx_k \right),\vx_{k+1}-\vx_k \rangle +\langle \hat{\nabla}g\left( \vy_{k+1} \right)-\hat{\nabla}g\left( \vy_k \right) ,\vy_{k+1}-\vy_k \rangle\\
=&-\langle\vlmd_{k+1}-\vlmd_k,\vx_{k+1}-\vy_{k+1}-\vx_k+\vy_k \rangle 
+\beta\langle-\vy_{k+1}+\vy_k-\left(-\vy_k+\vy_{k-1}\right),\vx_{k+1}-\vx_k \rangle \\
=&-\frac{1}{\beta}\langle\vlmd_{k+1}-\vlmd_k,\vlmd_{k+1}-\vlmd_k-\left(\vlmd_k-\vlmd_{k-1}\right)\rangle\\
+&\langle-\vy_{k+1}+\vy_k+\left(\vy_k-\vy_{k-1}\right),\vlmd_{k+1}-\vlmd_k+\beta\vy_{k+1}-\left(\vlmd_k-\vlmd_{k-1}+\beta\vy_k\right) \rangle\\
=&\frac{1}{2\beta}\left[  \lVert \vlmd_k-\vlmd_{k-1} \rVert ^2 -\lVert \vlmd_{k+1}-\vlmd_k \rVert ^2 - \lVert \vlmd_{k+1}-\vlmd_k-\left( \vlmd_k-\vlmd_{k-1} \right) \rVert ^2 \right]\\
+&\frac{\beta}{2} \left[  \lVert-\vy_k+\vy_{k-1}\rVert ^2  - \lVert-\vy_{k+1}+\vy_k  \rVert ^2 - \lVert-\vy_{k+1}+\vy_k-\left(-\vy_k+\vy_{k-1}\right)  \rVert ^2 \right]\\
+&\langle-\vy_{k+1}+\vy_k-\left(-\vy_k+\vy_{k-1}\right),\vlmd_{k+1}-\vlmd_k-\left( \vlmd_k-\vlmd_{k-1} \right)\rangle\\
=&\frac{1}{2\beta}\left(  \lVert \vlmd_k-\vlmd_{k-1} \rVert ^2 -\lVert \vlmd_{k+1}-\vlmd_k \rVert ^2 \right)
+\frac{\beta}{2}\left(\lVert-\vy_k+\vy_{k-1}\rVert ^2  - \lVert-\vy_{k+1}+\vy_k  \rVert ^2  \right)\\
-&\frac{1}{2\beta}  \lVert \vlmd_{k+1}-\vlmd_k -\left( \vlmd_k-\vlmd_{k-1} \right) \rVert ^2 
-\frac{\beta}{2} \lVert -\vy_{k+1}+\vy_k -\left( -\vy_k+\vy_{k-1}\right)\rVert ^2\\
+&\langle -\vy_{k+1}+\vy_k -\left( -\vy_k+\vy_{k-1}\right),\vlmd_{k+1}-\vlmd_k -\left( \vlmd_k-\vlmd_{k-1} \right) \rangle \\
\leq& \frac{1}{2\beta}\left(  \lVert \vlmd_k-\vlmd_{k-1} \rVert ^2 -\lVert \vlmd_{k+1}-\vlmd_k \rVert ^2 \right)
+\frac{\beta}{2}\left(\lVert-\vy_k+\vy_{k-1}\rVert ^2  - \lVert-\vy_{k+1}+\vy_k  \rVert ^2  \right)
    \end{align*}
    
By the convexity of $f_k$ and $f_{k-1}$, we have
\begin{align*}
    \langle \hat{\nabla}f_k\left( \vx_{k+1} \right), \vx_{k+1}-\vx_k\rangle\geq &f_k(\vx_{k+1})-f_k(\vx_{k})\,,\\
    -\langle \hat{\nabla}f_{k-1}\left( \vx_{k} \right), \vx_{k+1}-\vx_k\rangle\geq&f_{k-1}(\vx_{k})-f_{k-1}(\vx_{k+1})\,.\\    
\end{align*}
    Adding them together gives that:
    \begin{equation*}
    \begin{split}
        \langle \hat{\nabla}f_k\left( \vx_{k+1} \right)-\hat{\nabla}f_{k-1}\left( \vx_{k} \right), \vx_{k+1}-\vx_k\rangle
        \geq& f_k(\vx_{k+1})-f_{k-1}(\vx_{k+1})-f_k(\vx_{k})+f_{k-1}(\vx_{k})\\
        =&
        \langle F(\vx_{k+1})-F(\vx_k),\vx_{k+1}-\vx_k\rangle\geq 0\,.
    \end{split}
    \end{equation*}
    Thus by the monotonicity of $F$ and $\hat{\nabla}g$,  \eqref{3.20} follows.
\end{proof}

\begin{thm}\label{thm:3.2}
    If $F$ is monotone on $\mathcal{C}_=$, then for Algorithm 1, we have
    \begin{equation}\label{3.24}
        \begin{split}
            -\lVert \vlmd^\mu \rVert \sqrt{\frac{\Delta^\mu}{\beta \left( K+1 \right)}}
\leq& f\left( \vx_{K+1} \right) +g\left( \vy_{K+1} \right) -f\left( \vx^\mu \right) -g\left( \vy^\mu \right) \\
\leq& f_K\left( \vx_{K+1} \right) +g\left( \vy_{K+1} \right) -f_K\left( \vx^\mu \right) -g\left( \vy^\mu \right) 
\\
\leq &\frac{\Delta^\mu}{K+1}+\frac{2\Delta^\mu}{\sqrt{K+1}}+\lVert \vlmd^\mu \rVert \sqrt{\frac{\Delta^\mu}{\beta \left( K+1 \right)}},
        \end{split}
    \end{equation}
\begin{equation}
    \lVert\vx_{K+1}-\vy_{K+1}\rVert \leq \sqrt{\frac{\Delta^\mu}{\beta \left( K+1 \right)}},\label{eqapp:dxy}
\end{equation}
where $\Delta^\mu \triangleq \frac{1}{\beta}\lVert \vlmd_0-\vlmd^\mu \rVert ^2+\beta \lVert \vy_0-\vy^\mu \rVert ^2$.

Furthermore, if $F$ is $\xi$-monotone on $\mathcal{C}_=$, we have
\begin{equation}\label{eq:avg_x}
    c\lVert \hat{\vx}_{K+1}-\vx^\mu \rVert ^\xi\leq \frac{\Delta^\mu}{K+1},
\end{equation}
\begin{equation}\label{eq:xyclose}
    c\lVert \vx_{K+1}-\vx^\mu \rVert_2^\xi \leq \frac{\Delta^\mu}{K+1}+\frac{2\Delta^\mu}{\sqrt{K+1}}.
\end{equation}
where $\hat{\vx}_{K+1}=\frac{1}{K+1}\sum_{k=1}^{K+1}\vx_K$.
\end{thm}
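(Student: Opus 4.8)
The plan is a Lyapunov/telescoping argument on $V_k \triangleq \frac{1}{2\beta}\lVert\vlmd_k-\vlmd^\mu\rVert^2 + \frac{\beta}{2}\lVert\vy_k-\vy^\mu\rVert^2$ (so $V_0=\tfrac12\Delta^\mu$) combined with the single‑step identity \eqref{3.11}. First, combining \eqref{3.16} of Lemma~\ref{lm3.6} with the nonnegativity from Lemma~\ref{lm3.1} gives, for every $k$, that $W_k \triangleq \frac{1}{2\beta}\lVert\vlmd_{k+1}-\vlmd_k\rVert^2 + \frac{\beta}{2}\lVert\vy_{k+1}-\vy_k\rVert^2 \le V_k-V_{k+1}$; hence $\{V_k\}$ is nonincreasing, $V_k\le V_0=\tfrac12\Delta^\mu$ for all $k$, and $\sum_{k=0}^K W_k\le V_0-V_{K+1}\le\tfrac12\Delta^\mu$. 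Lemma~\ref{lm3.7} then says $\{W_k\}$ is itself nonincreasing, so $(K+1)\,W_K\le\sum_{k=0}^K W_k\le\tfrac12\Delta^\mu$, i.e.\ $W_K\le\frac{\Delta^\mu}{2(K+1)}$. Since $\vlmd_{K+1}-\vlmd_K=\beta(\vx_{K+1}-\vy_{K+1})$ by \eqref{3.3}, the term $\frac{1}{2\beta}\lVert\vlmd_{K+1}-\vlmd_K\rVert^2$ equals $\frac{\beta}{2}\lVert\vx_{K+1}-\vy_{K+1}\rVert^2$, so the bound on $W_K$ immediately gives $\lVert\vx_{K+1}-\vy_{K+1}\rVert\le\sqrt{\Delta^\mu/(\beta(K+1))}$, which is \eqref{eqapp:dxy}, and likewise $\lVert\vy_{K+1}-\vy_K\rVert\le\sqrt{\Delta^\mu/(\beta(K+1))}$.

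For the sandwich \eqref{3.24}: the leftmost inequality is Lemma~\ref{lm3.1} plus Cauchy--Schwarz and the just‑obtained bound on $\lVert\vx_{K+1}-\vy_{K+1}\rVert$, and the middle one is Proposition~\ref{prop4}. For the rightmost inequality I would set $(\vx,\vy)=(\vx^\mu,\vy^\mu)$ in \eqref{3.11}, use $\vx^\mu=\vy^\mu$ from \eqref{3.6}, and bound the left‑hand side of \eqref{3.11} below by $f_K(\vx_{K+1})+g(\vy_{K+1})-f_K(\vx^\mu)-g(\vy^\mu)$ via convexity of $f_K,g$ and \eqref{3.9}; this produces
\[
f_K(\vx_{K+1})+g(\vy_{K+1})-f_K(\vx^\mu)-g(\vy^\mu)\ \le\ -\langle\vlmd_{K+1},\vx_{K+1}-\vy_{K+1}\rangle+\beta\langle\vy_K-\vy_{K+1},\vx_{K+1}-\vx^\mu\rangle.
\]
Then I would estimate each term with Cauchy--Schwarz, using $\lVert\vlmd_{K+1}\rVert\le\lVert\vlmd_{K+1}-\vlmd^\mu\rVert+\lVert\vlmd^\mu\rVert\le\sqrt{\beta\Delta^\mu}+\lVert\vlmd^\mu\rVert$ (from $\lVert\vlmd_{K+1}-\vlmd^\mu\rVert^2\le2\beta V_{K+1}\le2\beta V_0$) and $\lVert\vx_{K+1}-\vx^\mu\rVert\le\lVert\vx_{K+1}-\vy_{K+1}\rVert+\lVert\vy_{K+1}-\vy^\mu\rVert\le\sqrt{\Delta^\mu/(\beta(K+1))}+\sqrt{\Delta^\mu/\beta}$, together with the bounds on $W_K$ and $\lVert\vx_{K+1}-\vy_{K+1}\rVert$ from the first paragraph; collecting terms yields exactly $\frac{\Delta^\mu}{K+1}+\frac{2\Delta^\mu}{\sqrt{K+1}}+\lVert\vlmd^\mu\rVert\sqrt{\Delta^\mu/(\beta(K+1))}$.

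For the $\xi$‑monotone refinements: for the averaged bound \eqref{eq:avg_x}, take \eqref{3.16prime}, discard the nonnegative Lemma~\ref{lm3.1} term to get $c\lVert\vx_{k+1}-\vx^\mu\rVert^\xi\le V_k-V_{k+1}$, sum over $k=0,\dots,K$ (telescoping to $\le V_0=\tfrac12\Delta^\mu$), and apply Jensen's inequality to the convex map $\vx\mapsto\lVert\vx-\vx^\mu\rVert^\xi$ (convex since $\xi>1$) to pass from $\frac{1}{K+1}\sum_{k=1}^{K+1}\lVert\vx_k-\vx^\mu\rVert^\xi$ to $\lVert\hat\vx_{K+1}-\vx^\mu\rVert^\xi$. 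For the last‑iterate bound \eqref{eq:xyclose}, repeat the derivation that produced the rightmost inequality of \eqref{3.24} but with the $\xi$‑monotone form of Proposition~\ref{prop4}: the same steps give $c\lVert\vx_{K+1}-\vx^\mu\rVert^\xi+\big(f(\vx_{K+1})+g(\vy_{K+1})-f(\vx^\mu)-g(\vy^\mu)\big)\le-\langle\vlmd_{K+1},\vx_{K+1}-\vy_{K+1}\rangle+\beta\langle\vy_K-\vy_{K+1},\vx_{K+1}-\vx^\mu\rangle$, and moving the parenthesized term to the right and using $f(\vx_{K+1})+g(\vy_{K+1})-f(\vx^\mu)-g(\vy^\mu)\ge-\langle\vlmd^\mu,\vx_{K+1}-\vy_{K+1}\rangle$ from Lemma~\ref{lm3.1} leaves $c\lVert\vx_{K+1}-\vx^\mu\rVert^\xi\le\langle\vlmd^\mu-\vlmd_{K+1},\vx_{K+1}-\vy_{K+1}\rangle+\beta\langle\vy_K-\vy_{K+1},\vx_{K+1}-\vx^\mu\rangle$; bounding these two terms as above (now with $\lVert\vlmd^\mu-\vlmd_{K+1}\rVert\le\sqrt{\beta\Delta^\mu}$, which cancels the $\lVert\vlmd^\mu\rVert$ contribution) gives $\frac{\Delta^\mu}{K+1}+\frac{2\Delta^\mu}{\sqrt{K+1}}$.

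The step I expect to be the crux is $W_K\le\frac{\Delta^\mu}{2(K+1)}$: the summability bound $\sum_k W_k\le\tfrac12\Delta^\mu$ by itself only yields $W_K=o(1)$, so it is the monotonicity of $\{W_k\}$ supplied by Lemma~\ref{lm3.7} that actually delivers the $\mathcal{O}(1/\sqrt{K})$ last‑iterate rate rather than merely $o(1)$. The rest is careful bookkeeping with the triangle inequality and Cauchy--Schwarz; the one subtlety is that in the cross term $\beta\langle\vy_K-\vy_{K+1},\vx_{K+1}-\vx^\mu\rangle$ one must use only the crude bound $\lVert\vx_{K+1}-\vx^\mu\rVert\le\sqrt{\Delta^\mu/(\beta(K+1))}+\sqrt{\Delta^\mu/\beta}$ rather than anything finer, to avoid a circular estimate.
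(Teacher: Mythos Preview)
Your proposal is correct and follows essentially the same approach as the paper's proof: the same Lyapunov function $V_k$, the same telescoping combined with the monotonicity of $W_k$ from Lemma~\ref{lm3.7} to get the $\mathcal{O}(1/(K+1))$ bound on $W_K$, and the same Cauchy--Schwarz bookkeeping for the right-hand side. The only cosmetic difference is that the paper bounds $f_K(\vx_{K+1})+g(\vy_{K+1})-f_K(\vx^\mu)-g(\vy^\mu)+\langle\vlmd^\mu,\vx_{K+1}-\vy_{K+1}\rangle$ via the form \eqref{3.12} (with $\lVert\vlmd_{K+1}-\vlmd^\mu\rVert$ appearing directly) and then invokes Lemma~\ref{lm3.2} to strip off the $\langle\vlmd^\mu,\cdot\rangle$ term, whereas you work with \eqref{3.11} and split $\vlmd_{K+1}=(\vlmd_{K+1}-\vlmd^\mu)+\vlmd^\mu$; the arithmetic is identical.
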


\begin{proof}[Proof of Theorem~\ref{thm:3.2}.]
    Summing \eqref{3.19} over $k=0,1,\dots,K$ and using the monotonicity of $\frac{1}{2\beta}\lVert \vlmd_{k+1}-\vlmd_k \rVert ^2+\frac{\beta}{2}\lVert-\vy_{k+1}+\vy_k \rVert ^2$ from Lemma \ref{lm3.7}, we have:
    \begin{equation}\label{3.22}
        \frac{1}{\beta}\lVert \vlmd_{K+1}-\vlmd_K \rVert ^2+\beta\lVert -\vy_{K+1}+\vy_K \rVert ^2 \leq \frac{1}{K+1}\left( \frac{1}{\beta}\lVert \vlmd_0-\vlmd^\mu \rVert ^2+\beta \lVert -\vy_0+\vy^\mu \rVert ^2 \right)
    \end{equation}
    
    From the above, we deduce that
    \begin{equation*}
        \beta \lVert \vx_{K+1}-\vy_{K+1}\rVert=\lVert \vlmd_{K+1}-\vlmd_K \rVert \leq \sqrt{\frac{\beta\Delta^\mu}{ \left( K+1 \right)}},
    \end{equation*}
    \begin{equation*}
        \lVert-\vy_{K+1}+\vy_K \rVert \leq \sqrt{\frac{\Delta^\mu}{\beta \left( K+1 \right)}}.
    \end{equation*}
    
    On the other hand, \eqref{3.19} gives:
    \begin{align*}
        &\frac{1}{2\beta}\lVert \vlmd_{k+1}-\vlmd^\mu \rVert ^2+\frac{\beta}{2}\lVert -\vy_{k+1}+\vy^\mu \rVert ^2 \\
        \leq&\frac{1}{2\beta}\lVert \vlmd_{k}-\vlmd^\mu \rVert ^2+\frac{\beta}{2}\lVert -\vy_{k}+\vy^\mu \rVert ^2 \\
        \leq&\frac{1}{2\beta}\lVert \vlmd_{0}-\vlmd^\mu \rVert ^2+\frac{\beta}{2}\lVert -\vy_{0}+\vy^\mu \rVert ^2 =\frac{1}{2}\Delta^\mu \,.
    \end{align*}
    
    Hence, we have:
    \begin{equation}\label{3.23}
        \lVert \vlmd_{K+1}-\vlmd^{\mu_t} \rVert \leq \sqrt{\beta \Delta^{\mu_t}},
    \end{equation}
    \begin{equation}\label{eq_app:bd_y}
        \lVert -\vy_{K+1}+\vy^{\mu_t} \rVert \leq \sqrt{\frac{\Delta^{\mu_t}}{\beta}} \,.
    \end{equation}
    
    Then from \eqref{3.12} and the convexity of $f$ and $g$, we have:
    \begin{equation}\label{3.25}
        \begin{split}
            &f\left( \vx_{K+1} \right) -f\left( \vx^\mu \right) +g\left( \vy_{K+1} \right) -g\left( \vy^\mu \right) +\langle \vlmd^\mu,\vx_{K+1}-\vy_{K+1} \rangle\\  
            \leq& f_K\left( \vx_{K+1} \right) -f_K\left( \vx^\mu \right) +g\left( \vy_{K+1} \right) -g\left( \vy^\mu \right) +\langle \vlmd^\mu,\vx_{K+1}-\vy_{K+1}\rangle\\
            \leq& \frac{1}{\beta}\lVert \vlmd_{K+1}-\vlmd^\mu \rVert \lVert \vlmd_{K+1}-\vlmd_K \rVert+  \lVert -\vy_{K+1}+\vy_K \rVert      \lVert \vlmd_{K+1}-\vlmd_K \rVert\\
            +& \beta \lVert -\vy_{K+1}+\vy_K \rVert  \lVert -\vy_{K+1}+\vy^\mu \rVert\\
            \leq& \frac{\Delta^\mu}{K+1}+\frac{2\Delta^\mu}{\sqrt{K+1}} \,.
        \end{split}
    \end{equation}
    
    From Lemma \ref{lm3.2}, we have \eqref{3.24}.
    
    If in addition $F$ is $\xi$-monotone on $\mathcal{C}_=$, using \eqref{3.19'}, we can obtain the following inequality similar to \eqref{3.22}:
    \begin{equation*}
        \begin{split}
            &c\frac{\sum_{k=0}^{K}\lVert \vx_{k+1}-\vx^\mu \rVert _{2}^{\xi} }{K+1}+\frac{1}{\beta}\lVert \vlmd_{K+1}-\vlmd_K \rVert ^2+\beta\lVert -\vy_{K+1}+\vy_K \rVert ^2 \\
            \leq& \frac{1}{K+1}\left( \frac{1}{\beta}\lVert \vlmd_0-\vlmd^\mu \rVert ^2+\beta \lVert -\vy_0+\vy^\mu \rVert ^2 \right)
        \end{split}
    \end{equation*}
    By the convexity of $\lVert \cdot \rVert _{2}^{\xi}$ we have $c\lVert \hat{\vx}_{K+1}-\vx^\mu \rVert _2^\xi \leq \frac{\Delta^\mu}{K+1}$. And from \eqref{3.25} we can see that $c\lVert {\vx}_{K+1}-\vx^\mu \rVert _2^\xi \leq \frac{\Delta^\mu}{K+1}+\frac{2\Delta^\mu}{\sqrt{K+1}}$.
\end{proof}

\begin{thm}\label{thm3.3}
    If $F$ is monotone on $\mathcal{C}_=$, then for Algorithm \ref{alg:acvi}, we have
    \begin{equation*}
        \left| f\left( \hat{\vx}_{K+1} \right) +g\left( \hat{\vy}_{K+1} \right) -f\left( \vx^\mu \right) -g\left( \vy^\mu \right) \right| \leq \frac{\Delta^\mu}{2\left( K+1 \right)}+\frac{2\sqrt{\beta \Delta^\mu}\lVert \vlmd^\mu \rVert}{\beta \left( K+1 \right)},
    \end{equation*}
    \begin{equation}\label{eq_app:dist_avg_xy}
        \lVert \hat{\vx}_{K+1}-\hat{\vy}_{K+1}\rVert \leq \frac{2\sqrt{\beta \Delta^\mu}}{\beta \left( K+1 \right)}
    \end{equation}
    where $\hat{\vx}_{K+1}=\frac{1}{K+1}\sum_{k=1}^{K+1}\vx_k$, $\hat{\vy}_{K+1}=\frac{1}{K+1}\sum_{k=1}^{K+1}\vy_k$, and 
    $\Delta^\mu \triangleq \frac{1}{\beta}\lVert \vlmd_0-\vlmd^\mu \rVert ^2+\beta \lVert \vy_0-\vy^\mu \rVert ^2$.
\end{thm}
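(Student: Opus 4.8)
The plan is to run the standard ADMM-style ergodic averaging argument, anchored at the one-step inequality~\eqref{3.16} from Lemma~\ref{lm3.6} (the plain-monotone version, since we only assume $F$ monotone on $\mathcal{C}_=$). First I would sum~\eqref{3.16} over $k=0,1,\dots,K$: the three pairs of consecutive squared-distance terms on the right-hand side telescope, while the two residual terms $-\tfrac{1}{2\beta}\norm{\vlmd_{k+1}-\vlmd_k}^2$ and $-\tfrac{\beta}{2}\norm{\vy_{k+1}-\vy_k}^2$ are nonpositive and can be discarded, leaving
\[
\sum_{k=0}^{K}\Big( f(\vx_{k+1})+g(\vy_{k+1})-f(\vx^\mu)-g(\vy^\mu)+\langle\vlmd^\mu,\vx_{k+1}-\vy_{k+1}\rangle\Big)\le\tfrac{1}{2}\Delta^\mu .
\]
Dividing by $K+1$ and using convexity of $f$ and $g$ (Jensen) together with linearity of $\langle\vlmd^\mu,\cdot\rangle$ moves the average inside, giving
\[
f(\hat\vx_{K+1})+g(\hat\vy_{K+1})-f(\vx^\mu)-g(\vy^\mu)+\langle\vlmd^\mu,\hat\vx_{K+1}-\hat\vy_{K+1}\rangle\le\frac{\Delta^\mu}{2(K+1)} .
\]
One must check the Jensen step is legitimate despite the generalized term $\mathds{1}[\mC\vx=\vd]$ inside $f$: since every $\vx_k\in\mathcal{C}_=$ the indicator vanishes along the iterates, and $\hat\vx_{K+1}$, being a convex combination, still lies in the affine set $\mathcal{C}_=$. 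The last display is exactly the first hypothesis of Lemma~\ref{lm3.2} with $\alpha_1=\Delta^\mu/(2(K+1))$.

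Next I would bound the ergodic primal residual $\norm{\hat\vx_{K+1}-\hat\vy_{K+1}}$; this is the step responsible for the $\mathcal{O}(1/K)$ rate, as opposed to the $\mathcal{O}(1/\sqrt{K})$ one obtains for the last iterate in Theorem~\ref{thm:3.2}. The key observation is that the dual update~\eqref{3.3} reads $\vx_k-\vy_k=\tfrac{1}{\beta}(\vlmd_k-\vlmd_{k-1})$, so $\sum_{k=1}^{K+1}(\vx_k-\vy_k)$ telescopes to $\tfrac{1}{\beta}(\vlmd_{K+1}-\vlmd_0)$, hence $\hat\vx_{K+1}-\hat\vy_{K+1}=\tfrac{1}{\beta(K+1)}(\vlmd_{K+1}-\vlmd_0)$. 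Then the triangle inequality together with $\norm{\vlmd_{K+1}-\vlmd^\mu}\le\sqrt{\beta\Delta^\mu}$ from~\eqref{3.23} and $\norm{\vlmd_0-\vlmd^\mu}\le\sqrt{\beta\Delta^\mu}$ (immediate from $\Delta^\mu\ge\tfrac{1}{\beta}\norm{\vlmd_0-\vlmd^\mu}^2$) gives $\norm{\hat\vx_{K+1}-\hat\vy_{K+1}}\le\frac{2\sqrt{\beta\Delta^\mu}}{\beta(K+1)}$, which is the second claimed bound~\eqref{eq_app:dist_avg_xy} and plays the role of $\alpha_2$ in Lemma~\ref{lm3.2}.

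Finally I would feed $\alpha_1$ and $\alpha_2$ into Lemma~\ref{lm3.2} to obtain
\[
\big|f(\hat\vx_{K+1})+g(\hat\vy_{K+1})-f(\vx^\mu)-g(\vy^\mu)\big|\le\norm{\vlmd^\mu}\,\alpha_2+\alpha_1=\frac{2\sqrt{\beta\Delta^\mu}\,\norm{\vlmd^\mu}}{\beta(K+1)}+\frac{\Delta^\mu}{2(K+1)} ,
\]
which is the first claimed inequality. There is no genuinely hard step; the points requiring care are the index bookkeeping so the telescoped sum matches the averaging index set $\{1,\dots,K+1\}$, the legitimacy of the Jensen step in the presence of the indicator term, and --- the one observation that actually matters for the rate --- recognizing that the primal residuals $\vx_k-\vy_k$ telescope via~\eqref{3.3}, turning an $\mathcal{O}(1/\sqrt{K})$ bound on $\norm{\hat\vx_{K+1}-\hat\vy_{K+1}}$ into an $\mathcal{O}(1/K)$ one.
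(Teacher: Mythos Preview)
Your proposal is correct and follows essentially the same argument as the paper's proof: sum~\eqref{3.16}, use convexity/Jensen to pass to the averages, telescope the primal residuals via~\eqref{3.3} to bound $\norm{\hat\vx_{K+1}-\hat\vy_{K+1}}$, and conclude with Lemma~\ref{lm3.2}. Your write-up is in fact slightly more careful than the paper's, in that you explicitly justify the Jensen step despite the indicator in $f$ and spell out why the residual terms can be dropped.
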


\begin{proof}[Proof of Theorem \ref{thm3.3}]
    Summing \eqref{3.16} over $k=0,1,\dots,K$, dividing both sides with $K+1$, and using the definitions of $\hat{\vx}_{K+1}$ and $\hat{\vy}_{K+1}$ and the convexity of $f$ and $g$, we have
    \begin{equation*}
        f\left( \hat{\vx}_{K+1} \right) +g\left( \hat{\vy}_{K+1} \right) -f\left( \vx^\mu \right) -g\left( \vy^\mu \right)+\langle \vlmd^\mu,\hat{\vx}_{K+1}-\hat{\vy}_{K+1} \rangle \leq \frac{\Delta^\mu}{2\left( K+1 \right)}.
    \end{equation*}
    
    From \eqref{3.3} and \eqref{3.23}, we have:
    \begin{align*}
        \lVert \hat{\vx}_{K+1}-\hat{\vy}_{K+1} \rVert=&
        \frac{1}{\beta \left( K+1 \right)}\lVert \sum_{k=0}^{K} \left(\vlmd_{k+1}-\vlmd_k\right)\rVert\\
        =&\frac{1}{\beta \left( K+1 \right)}\lVert \vlmd_{K+1}-\vlmd_0\rVert\\
        \leq& \frac{1}{\beta \left( K+1 \right)} \left( \lVert \vlmd_{0}-\vlmd^\mu \rVert + \lVert \vlmd_{K+1}-\vlmd^\mu \rVert\right)\\
        \leq& \frac{2\sqrt{\beta \Delta^\mu}}{\beta \left( K+1 \right)}
    \end{align*}
    
    Finally, from Lemma \ref{lm3.2}, the conclusion follows.
\end{proof}

From Proposition \ref{prop:CP} and the fact that $\mathcal{C}$ is compact we can see that $\underset{\mu \rightarrow 0}{\lim}\vx^\mu$ and $\underset{\mu \rightarrow 0}{\lim}\vlmd^\mu$ exist and are finite. Let $\vx^\star=\underset{\mu \rightarrow 0}{\lim}\vx^\mu$ and $\vlmd^\star=\underset{\mu \rightarrow 0}{\lim}\vlmd^\mu$, then $\vx^\star \in \sol{\mathcal{C},F} $. 
\begin{thm}\label{thm:final}
    $\exists \tilde{\mu}>0$, s.t. if $\mu_t<\tilde{\mu}$, then $\left|F(\vx_{K+1}^{(t)})^\intercal(\vx_{K+1}^{(t)}-\vx^\star)\right|\leq 2(\frac{\Delta^\mu}{K+1}+\frac{2\Delta^\mu}{\sqrt{K+1}}+\lVert \vlmd^\star \rVert \sqrt{\frac{\Delta^\mu}{\beta \left( K+1 \right)}})$, $\left|F(\vx^\star)^\intercal(\vx_{K+1}^{(t)}-\vx^\star)\right|\leq 2(\frac{\Delta^\mu}{K+1}+\frac{2\Delta^\mu}{\sqrt{K+1}}+\lVert \vlmd^\star \rVert \sqrt{\frac{\Delta^\mu}{\beta \left( K+1 \right)}})$
    and $\left|F(\vx^\star)^\intercal(\hat{\vx}_{K+1}^{(t)}-\vx^\star)\right|\leq 2(\frac{\Delta^\mu}{2\left( K+1 \right)}+\frac{2\sqrt{\beta \Delta^\mu}\lVert \vlmd^\star \rVert}{\beta \left( K+1 \right)})$, $\forall K\geq 0$.
\end{thm}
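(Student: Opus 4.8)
Every rate estimate already in hand---Theorem~\ref{thm:3.2} together with~\eqref{3.24} and~\eqref{eqapp:dxy} for the last iterate, and Theorem~\ref{thm3.3} together with~\eqref{eq_app:dist_avg_xy} for the average iterate---is phrased relative to the central-path point $\vx^{\mu_t}$ of the barrier subproblem~\eqref{eq:cvi_ip_eq_form2} and its multipliers. The plan is to transfer these into estimates relative to the true solution $\vx^\star=\lim_{\mu\to 0}\vx^\mu$ (with $\vlmd^\star=\lim_{\mu\to 0}\vlmd^\mu$; both limits exist, are finite, and $\vx^\star\in\sol{\mathcal{C},F}$, as recorded just before the theorem). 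Three ingredients drive the transfer: (a) a quantitative $\mathcal{O}(\mu_t)$ bound saying the central-path point ``almost solves'' the cVI; (b) the convergences $\vx^{\mu_t}\to\vx^\star$, $\vlmd^{\mu_t}\to\vlmd^\star$ and boundedness of $\Delta^{\mu_t}$ as $\mu_t\to 0$; and (c) the fact that the iterates are feasible for $\mathcal{C}$ up to $\mathcal{O}(1/\sqrt K)$ (resp.\ $\mathcal{O}(1/K)$), so the defining inequality of the cVI can be applied after a small projection onto $\mathcal{C}$. Throughout, $\vx_{K+1}^{(t)},\hat{\vx}_{K+1}^{(t)},\vx^{\mu_t},\vx^\star$ all lie in $\mathcal{C}_=$, hence $f_K(\va)-f_K(\vb)=F(\vx_{K+1})^\intercal(\va-\vb)$ for such points and $\vx_{K+1}-\vx^\star\in\ker\mC$; these identities convert the function-value bounds into the stated inner products, and the factor $2$ together with the threshold $\tilde\mu$ are exactly what absorbs the errors from (a) and (b).

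\textbf{Step 1 (central path almost solves the cVI; a dual estimate).} From the cVI--KKT form of~\eqref{eq:cvi_ip_eq_form2}, $F(\vx^{\mu_t})+\nabla\varphi(\vx^{\mu_t})^\intercal\boldsymbol{\zeta}+\mC^\intercal\vnu=\vzero$ with $\boldsymbol{\zeta}\odot\varphi(\vx^{\mu_t})=-\mu_t\ve$ and $\boldsymbol{\zeta}>\vzero$, I pair with $\vx^{\mu_t}-\vx^\star$: the $\mC^\intercal\vnu$ term vanishes since $\mC\vx^{\mu_t}=\vd=\mC\vx^\star$, and convexity of each $\varphi_i$ with $\varphi_i(\vx^\star)\le 0$ gives $\nabla\varphi_i(\vx^{\mu_t})^\intercal(\vx^{\mu_t}-\vx^\star)\ge\varphi_i(\vx^{\mu_t})$; multiplying by $\zeta_i>0$ and summing yields $0\le F(\vx^{\mu_t})^\intercal(\vx^{\mu_t}-\vx^\star)\le m\mu_t$, the lower bound from monotonicity of $F$ and $\vx^{\mu_t}\in\mathcal{C}$, $\vx^\star\in\sol{\mathcal{C},F}$; by monotonicity also $0\le F(\vx^\star)^\intercal(\vx^{\mu_t}-\vx^\star)\le m\mu_t$. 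Applying the same idea to the $\vy$-subproblem at step $K$---whose first-order condition is $\vlmd_{K+1}=\nabla g(\vy_{K+1})=\mu_t\sum_i\nabla\varphi_i(\vy_{K+1})/(-\varphi_i(\vy_{K+1}))$ with $\varphi(\vy_{K+1})<\vzero$---and using convexity of $\varphi_i$ and $\varphi_i(\vx^\star)\le 0$ gives the crucial dual estimate $\langle\vlmd_{K+1},\vx^\star-\vy_{K+1}\rangle\le m\mu_t$.

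\textbf{Step 2 (last iterate).} From the $\vx$-update~\eqref{eq:w_equation} and $\vx_{K+1}\in\mathcal{C}_=$ one obtains $F(\vx_{K+1})+\vlmd_{K+1}\in\beta(\vy_K-\vy_{K+1})+\mathrm{Range}(\mC^\intercal)$; pairing with $\vx_{K+1}-\vx^\star\in\ker\mC$ yields the identity $F(\vx_{K+1})^\intercal(\vx_{K+1}-\vx^\star)=\langle\vlmd_{K+1},\vx^\star-\vx_{K+1}\rangle+\beta\langle\vy_K-\vy_{K+1},\vx_{K+1}-\vx^\star\rangle$. Writing $\langle\vlmd_{K+1},\vx^\star-\vx_{K+1}\rangle=\langle\vlmd_{K+1},\vx^\star-\vy_{K+1}\rangle+\langle\vlmd_{K+1},\vy_{K+1}-\vx_{K+1}\rangle$, bounding the first summand by $m\mu_t$ (Step 1) and the rest by Cauchy--Schwarz with $\|\vx_{K+1}-\vy_{K+1}\|,\|\vy_K-\vy_{K+1}\|\le\sqrt{\Delta^{\mu_t}/(\beta(K+1))}$ (from~\eqref{eqapp:dxy},~\eqref{3.22}), $\|\vlmd_{K+1}-\vlmd^{\mu_t}\|\le\sqrt{\beta\Delta^{\mu_t}}$, $\|\vy_{K+1}-\vy^{\mu_t}\|\le\sqrt{\Delta^{\mu_t}/\beta}$ (from~\eqref{3.23},~\eqref{eq_app:bd_y}), gives $F(\vx_{K+1})^\intercal(\vx_{K+1}-\vx^\star)\le m\mu_t+\|\vlmd^{\mu_t}\|\sqrt{\Delta^{\mu_t}/(\beta(K+1))}+\tfrac{2\Delta^{\mu_t}}{\sqrt{K+1}}+\tfrac{\Delta^{\mu_t}}{K+1}+\mathcal{O}\!\big(\|\vx^{\mu_t}-\vx^\star\|/\sqrt{K+1}\big)$. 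For the matching lower bound, monotonicity gives $F(\vx_{K+1})^\intercal(\vx_{K+1}-\vx^\star)\ge F(\vx^\star)^\intercal(\vx_{K+1}-\vx^\star)$, and since $\vy_{K+1}\in\mathcal{C}_<$ while $\|\mC\vy_{K+1}-\vd\|=\|\mC(\vx_{K+1}-\vy_{K+1})\|=\mathcal{O}(1/\sqrt K)$ and $\mathrm{int}\,\mathcal{C}\ne\emptyset$, one gets $\mathrm{dist}(\vx_{K+1},\mathcal{C})=\mathcal{O}(1/\sqrt K)$ (via first correcting the equality and then moving toward a strict-interior point---this is where $\hat{\mathcal{C}}_r,\tilde{\mathcal{C}}_s$ enter), so $F(\vx^\star)^\intercal(\vx_{K+1}-\vx^\star)\ge-\|F(\vx^\star)\|\,\mathcal{O}(1/\sqrt K)$ by applying the cVI inequality at the projection of $\vx_{K+1}$ onto $\mathcal{C}$; the bound on $F(\vx^\star)^\intercal(\vx_{K+1}-\vx^\star)$ then follows from these two (upper side again by monotonicity). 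Finally choose $\tilde\mu$ so small that for $\mu_t<\tilde\mu$ one has $\|\vlmd^{\mu_t}\|\le 2\|\vlmd^\star\|$ and $m\mu_t$ plus the $\mathcal{O}(\|\vx^{\mu_t}-\vx^\star\|)$ remainder are dominated by the $\tfrac{2\Delta^{\mu_t}}{\sqrt{K+1}}$ slack, producing the stated estimates with constant $2$.

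\textbf{Step 3 (average iterate) and main obstacle.} The third bound follows the same template with Theorem~\ref{thm3.3} and~\eqref{eq_app:dist_avg_xy} replacing Theorem~\ref{thm:3.2} and~\eqref{eqapp:dxy}: use $f(\hat{\vx}_{K+1})+g(\hat{\vy}_{K+1})-f(\vx^{\mu_t})-g(\vy^{\mu_t})=\mathcal{O}(1/K)$ and $\|\hat{\vx}_{K+1}-\hat{\vy}_{K+1}\|=\mathcal{O}(1/K)$, control $g(\hat{\vy}_{K+1})-g(\vy^{\mu_t})$ between the convexity bounds $\langle\vlmd^{\mu_t},\hat{\vy}_{K+1}-\vy^{\mu_t}\rangle$ and $\langle\nabla g(\hat{\vy}_{K+1}),\hat{\vy}_{K+1}-\vy^{\mu_t}\rangle$, and transfer $\vx^{\mu_t}\rightsquigarrow\vx^\star$ as in Steps 1--2 to reach the $\mathcal{O}(1/K)$ estimate. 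I expect the main obstacle to be precisely the bookkeeping in Steps 2--3: separating the two ``$\vx^{\mu_t}\rightsquigarrow\vx^\star$'' errors---the $\mathcal{O}(\mu_t)$ KKT term and the $\vlmd^{\mu_t}\to\vlmd^\star$ gap---and exhibiting a single threshold $\tilde\mu$ for which all constants line up to give exactly the factor $2$ (this is what ``$\mu_{-1}$ sufficiently small'' encodes in Theorems~\ref{thm:xi_rate}--\ref{thm:monotone}). The secondary nuisance is that the ADMM iterates are not exactly feasible, so the cVI inequality can only be invoked after a projection onto $\mathcal{C}$; making that projection quantitative from $\|\vx_{K+1}-\vy_{K+1}\|$, $\|\hat{\vx}_{K+1}-\hat{\vy}_{K+1}\|$ and $\mathrm{int}\,\mathcal{C}\ne\emptyset$ is the reason the auxiliary sets $\hat{\mathcal{C}}_r,\tilde{\mathcal{C}}_s$ are introduced.
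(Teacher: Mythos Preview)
Your route is genuinely different from the paper's and substantially more laborious. The paper does \emph{not} separate the barrier contribution via the $\vy$-subproblem KKT, does not use the $\vx$-update identity you derive, and does not need any projection onto $\mathcal{C}$ or the sets $\hat{\mathcal{C}}_r,\tilde{\mathcal{C}}_s$ for this theorem. Instead it simply observes that, since $\vx_{K+1},\vx^{\mu_t}\in\mathcal{C}_=$,
\[
f(\vx_{K+1})+g(\vy_{K+1})-f(\vx^{\mu_t})-g(\vy^{\mu_t})
= F(\vx^{\mu_t})^\intercal(\vx_{K+1}-\vx^{\mu_t})+\mu_t\bigl(B(\vy_{K+1})-B(\vy^{\mu_t})\bigr),
\]
and likewise with $f_K$ and $F(\vx_{K+1})$. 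The left-hand side is already bounded in absolute value by Theorems~\ref{thm:3.2} and~\ref{thm3.3}, so the linear term plus the barrier correction is bounded. Then the paper passes to $\mu_t\to 0$: $\vx^{\mu_t}\to\vx^\star$, $\vlmd^{\mu_t}\to\vlmd^\star$, and $\mu_t(B(\vy_{K+1})-B(\vy^{\mu_t}))\to 0$, so for $\mu_t$ below some $\tilde\mu$ all corrections fit inside the factor~$2$. That is the whole proof.

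Two concrete issues with your proposal. First, your attribution of the role of $\hat{\mathcal{C}}_r,\tilde{\mathcal{C}}_s$ is incorrect: those sets appear only in Theorems~\ref{thm:xi_rate}--\ref{thm:monotone} to ensure star-$\xi$-monotonicity or the angle condition on a neighborhood of the iterates; they play no part in Theorem~\ref{thm:final}. Your projection-based lower bound is therefore an unnecessary detour---the paper gets both signs of the inequality directly from the \emph{absolute-value} bounds of Theorems~\ref{thm:3.2}/\ref{thm3.3}. Second, your absorption step ``$m\mu_t$ dominated by the $\tfrac{2\Delta^{\mu_t}}{\sqrt{K+1}}$ slack'' cannot hold uniformly in $K$, since the slack tends to $0$ as $K\to\infty$ while $m\mu_t$ is fixed; the paper sidesteps this by keeping the barrier term inside the bounded quantity and invoking the limit $\mu_t\to 0$ on the whole correction at once rather than isolating a $K$-independent additive term. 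Your Step~1 estimate $0\le F(\vx^{\mu_t})^\intercal(\vx^{\mu_t}-\vx^\star)\le m\mu_t$ is correct and is exactly what the paper uses later (Lemma~\ref{lm:bd_xmu_x*}) for the parameter-free rate, but it is not needed here.
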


\begin{proof}[Proof of Theorem~\ref{thm:final}.]
    
    For simplicity we let $B(\vx)$ denote the $log$-barrier term $-\sum_{i=1}^mlog(-\varphi_i(\vx))$. From Theorem \ref{thm:3.2} and \ref{thm3.3} we have
    \begin{align}
        &\left|F(\vx^\mu)^\intercal(\vx_{K+1}^{(t)}-\vx^\mu)+\mu(B(\vy_{K+1}^{(t)})-B(\vy^\mu))\right|\nonumber\\
        =&\left|f(\vx_{K+1}^{(t)})-f(\vx^\mu)+g(\vy_{K+1}^{(t)})-g(\vy^\mu)\right|\nonumber\\
        \leq& \frac{\Delta^\mu}{K+1}+\frac{2\Delta^\mu}{\sqrt{K+1}}+\lVert \vlmd^\mu \rVert \sqrt{\frac{\Delta^\mu}{\beta \left( K+1 \right)}}\label{eq_app:main_1}\\
        &\left|F(\vx^{K+1})^\intercal(\vx_{K+1}^{(t)}-\vx^\mu)+\mu(B(\vy_{K+1}^{(t)})-B(\vy^\mu))\right|\nonumber\\
        =&\left|f_K(\vx_{K+1}^{(t)})-f_K(\vx^\mu)+g(\vy_{K+1}^{(t)})-g(\vy^\mu)\right|\nonumber\\
        \leq& \frac{\Delta^\mu}{K+1}+\frac{2\Delta^\mu}{\sqrt{K+1}}+\lVert \vlmd^\mu \rVert \sqrt{\frac{\Delta^\mu}{\beta \left( K+1 \right)}}\label{eq_app:main_2}\\
        &\left|F(\vx^\mu)^\intercal(\hat{\vx}_{K+1}^{(t)}-\vx^\mu)+\mu(B(\hat{\vy}_{K+1}^{(t)})-B(\vy^\mu))\right|\nonumber\\
        =&\left|f(\hat{\vx}_{K+1}^{(t)})-f(\vx^\mu)+g(\hat{\vy}_{K+1}^{(t)})-g(\vy^\mu)\right|\nonumber\\
        \leq& \frac{\Delta^\mu}{2\left( K+1 \right)}+\frac{2\sqrt{\beta \Delta^\mu}\lVert \vlmd^\mu \rVert}{\beta \left( K+1 \right)}.\label{eq_app:main_avg}
    \end{align}
    
    From Proposition \ref{prop:CP} in App.~\ref{app:additional_preliminaries} we know that when $\mu\rightarrow 0$, $\vx^\mu\rightarrow\vx^\star$, $\vlmd^\mu\rightarrow\vlmd^\star$ and $\mu(B(\vy_{K+1}^{(t)})-B(\vy^\mu))\rightarrow 0$, so $\exists \tilde{\mu}>0$, s.t. if $\mu_t<\tilde{\mu}$, then we have
    \begin{align*}
        &\left|F(\vx^\star)^\intercal(\vx_{K+1}^{(t)}-\vx^\star)\right|\leq 2\left(\frac{\Delta^\mu}{K+1}+\frac{2\Delta^\mu}{\sqrt{K+1}}+\lVert \vlmd^\star \rVert \sqrt{\frac{\Delta^\mu}{\beta \left( K+1 \right)}}\right)\\
        &\left|F(\vx_{K+1}^{(t)})^\intercal(\vx_{K+1}^{(t)}-\vx^\star)\right|\leq 2\left(\frac{\Delta^\mu}{K+1}+\frac{2\Delta^\mu}{\sqrt{K+1}}+\lVert \vlmd^\star \rVert \sqrt{\frac{\Delta^\mu}{\beta \left( K+1 \right)}}\right)\\
        &\left|F(\vx^\star)^\intercal(\hat{\vx}_{K+1}^{(t)}-\vx^\star)\right|\leq 2\left(\frac{\Delta^\mu}{2\left( K+1 \right)}+\frac{2\sqrt{\beta \Delta^\mu}\lVert \vlmd^\star \rVert}{\beta \left( K+1 \right)}\right).
    \end{align*}
\end{proof}

To make the dependencies of the constants clear, here we restate Theorem \ref{thm:xi_rate} and Theorem \ref{thm:monotone} and provide their proofs.
\begin{thm}[restatement of Theorem \ref{thm:xi_rate}] \label{thm:re_xi_rate}
    Given an operator $F: \mathcal{X}\to \R^n$ monotone on $\mathcal{C}_=$ (Def.~\ref{def:monotone}), and either $F$ is strictly monotone on $\mathcal{C}$ or one of $\varphi_i$ is strictly convex. Assume there exists $r>0$ or $s>0$ such that F is star-$\xi$-monotone---as per Def.~\ref{def:monotone}---on either $\hat{\mathcal{C}}_r$ or $\tilde{\mathcal{C}}_s$, resp. Let $\Delta \triangleq \frac{1}{\beta}\lVert \vlmd_0-\vlmd^\star \rVert ^2+\beta \lVert \vy_0-\vy^\star \rVert ^2$.

Let $\vx_K^{(t)}$ and $\hat{\vx}_K^{(t)} \triangleq \frac{1}{K} \sum_{k=1}^K \vx_k^{(t)}$ 
denote the last and average iterate of Algorithm~\ref{alg:acvi}, respectively, run with sufficiently small $\mu_{-1}$. 
Then there exists $K_0\in \NN$, $K_0$ depends on $r$ or $s$, $s.t.\forall K> K_0$, for all $t \in [T]$, we have that:
\begin{enumerate}[leftmargin=*]
\item $\lVert \vx_{K}^{(t)}-\vx^\star\rVert\leq (\frac{4}{c}(\frac{\Delta}{K}+\frac{2\Delta}{\sqrt{K}}+\lVert \vlmd^\star \rVert \sqrt{\frac{\Delta}{\beta K}}))^{1/\xi}$.
\item If in addition $F$ is $\xi$-monotone on $C_=$, we have $\lVert \hat{\vx}_{K}^{(t)}-\vx^\star\rVert\leq (\frac{2\Delta}{cK})^{1/\xi}.$
\item Moreover, if $F$ is L-Lipschitz on $\mathcal{C}_=$---as per Assumption~\ref{asm:firstOrderSmoothness}--- 
then $\mathcal{G}(\vx^{(t)}_K, \mathcal{C}) \leq M_0(\frac{4}{c}(\frac{\Delta}{K}+\frac{2\Delta}{\sqrt{K}}+\lVert \vlmd^\star \rVert \sqrt{\frac{\Delta}{\beta K}}))^{1/\xi}$ and
$\mathcal{G}(\hat\vx^{(t)}_K, \mathcal{C}) \leq M_0\left(\frac{2\Delta}{c(K+1)}\right)^{1/\xi}$,
\end{enumerate}
where $M_0=DL+M$ is a linear function of $L$, see the proof of Lemma~\ref{lm:dg_upper_bound} in App.~\ref{app:proofs}.
\end{thm}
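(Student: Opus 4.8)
The plan is to read Algorithm~\ref{alg:acvi} as an inner ADMM loop (index $k$) nested inside an outer loop that shrinks the barrier parameter $\mu_t=\delta^{t+1}\mu_{-1}$. For a fixed outer index $t$ the inner loop's target is the central-path point $\vx^{\mu_t}$, whose existence is guaranteed by Proposition~\ref{prop:CP} (since $F$ is strictly monotone or some $\varphi_i$ is strictly convex), and Theorems~\ref{thm:3.2}, \ref{thm3.3} and~\ref{thm:final} already control the rate at which the last and averaged inner iterates approach it. The remaining work is (a) to convert those central-path estimates into estimates relative to $\vx^\star=\lim_{\mu\to0}\vx^\mu$, and (b) to feed the star-$\xi$-monotonicity — only assumed on the enlarged sets $\hat{\mathcal{C}}_r$, $\tilde{\mathcal{C}}_s$ — into Theorem~\ref{thm:final}. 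For (a): since $\mu_t$ is decreasing in $t$, a single smallness requirement on $\mu_{-1}$ puts every $\mu_t$ below the threshold $\tilde\mu$ of Theorem~\ref{thm:final}; and by Proposition~\ref{prop:CP}, $\vx^{\mu_t}\to\vx^\star$, $\vlmd^{\mu_t}\to\vlmd^\star$, $\Delta^{\mu_t}\to\Delta$, so for $\mu_{-1}$ small the $\Delta^{\mu_t}$-constants can be replaced by the limiting $\Delta$ at the cost of a bounded factor.

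For claim~1 (last iterate) I would invoke Theorem~\ref{thm:final}, which for $\mu_t<\tilde\mu$ bounds both $|F(\vx_{K}^{(t)})^\intercal(\vx_{K}^{(t)}-\vx^\star)|$ and $|F(\vx^\star)^\intercal(\vx_{K}^{(t)}-\vx^\star)|$ by $2\big(\tfrac{\Delta}{K}+\tfrac{2\Delta}{\sqrt{K}}+\norm{\vlmd^\star}\sqrt{\tfrac{\Delta}{\beta K}}\big)$ (up to the trivial $K\leftrightarrow K+1$ reindexing and the $\Delta^{\mu_t}\to\Delta$ correction). To apply the star-$\xi$-monotone inequality I must know $\vx_{K}^{(t)}\in\hat{\mathcal{C}}_r$ — this is where $K_0$ enters: from \eqref{eqapp:dxy}, $\norm{\vx_{K}^{(t)}-\vy_{K}^{(t)}}\le\sqrt{\Delta^{\mu_t}/(\beta K)}$; $\vy_{K}^{(t)}$ is strictly inequality-feasible ($\varphi(\vy_{K}^{(t)})<\boldsymbol{0}$, being the minimizer of a log-barrier subproblem); each $\varphi_i$ is $C^1$ hence Lipschitz on the compact region containing all (bounded, by Theorem~\ref{thm:3.1}) iterates; so $\varphi(\vx_{K}^{(t)})\le r\ve$, i.e. $\vx_{K}^{(t)}\in\hat{\mathcal{C}}_r$, once $K>K_0=K_0(r,\beta,\Delta,\{\mathrm{Lip}\,\varphi_i\})$. (Symmetrically $\vy_{K}^{(t)}\in\tilde{\mathcal{C}}_s$ once $K>K_0(s)$, via $\norm{\mC(\vy_{K}^{(t)}-\vx_{K}^{(t)})}\le\norm{\mC}_2\norm{\vy_{K}^{(t)}-\vx_{K}^{(t)}}$.) Then Def.~\ref{def:monotone} gives $c\,\norm{\vx_{K}^{(t)}-\vx^\star}^\xi\le F(\vx_{K}^{(t)})^\intercal(\vx_{K}^{(t)}-\vx^\star)-F(\vx^\star)^\intercal(\vx_{K}^{(t)}-\vx^\star)$, and the two Theorem~\ref{thm:final} bounds add up to the factor $4$; taking $\xi$-th roots is claim~1. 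For the $\tilde{\mathcal{C}}_s$ alternative one runs the same argument at $\vy_{K}^{(t)}$, first transferring the Theorem~\ref{thm:final} estimates from $\vx_{K}^{(t)}$ to $\vy_{K}^{(t)}$ using $\vy_{K}^{(t)}-\vx_{K}^{(t)}\to\boldsymbol{0}$ and boundedness (resp. $L$-Lipschitzness, for an explicit rate) of $F$ on the compact domain.

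For claim~2 the strengthened hypothesis — $F$ being $\xi$-monotone on all of $\mathcal{C}_=$ — lets Proposition~\ref{prop4} be used with its $c\norm{\cdot}^\xi$ term for every pair $(\vx_{k+1},\vx^{\mu_t})$ (both in $\mathcal{C}_=$), which is exactly what underlies \eqref{eq:avg_x}: after telescoping and Jensen on $\norm{\cdot}^\xi$, $c\,\norm{\hat{\vx}_{K}^{(t)}-\vx^{\mu_t}}^\xi\le\Delta^{\mu_t}/K$. Taking $\xi$-th roots and letting $\mu_t\to0$ yields $\norm{\hat{\vx}_{K}^{(t)}-\vx^\star}\le(2\Delta/(cK))^{1/\xi}$, the factor $2$ absorbing $\norm{\vx^{\mu_t}-\vx^\star}$ and $\Delta^{\mu_t}\to\Delta$. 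Claim~3 is then immediate from Lemma~\ref{lm:dg_upper_bound}: its proof only uses boundedness of the iterate and $L$-Lipschitzness, so it applies to $\vx_K^{(t)}\in\mathcal{C}_=$ and to the convex combination $\hat{\vx}_K^{(t)}\in\mathcal{C}_=$ alike, giving $\mathcal{G}(\cdot,\mathcal{C})\le M_0\norm{\cdot-\vx^\star}$ with $M_0=DL+M$ linear in $L$; substitute claims~1 and~2.

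The step I expect to be most delicate is the bookkeeping around the central-path point: the inner loop converges to $\vx^{\mu_t}$ rather than $\vx^\star$, so stating the rates directly against $\vx^\star$ and $\Delta$ hides a dependence on $\mu_t$ (equivalently on $t$, and on the warm-start $\vy_0^{(t)},\vlmd_0^{(t)}$ that defines $\Delta^{\mu_t}$); the rigorous route is to keep $\vx^{\mu_t},\Delta^{\mu_t}$ explicit, or to couple the schedule $\mu_t$ with $K$, as in the parameter-free analysis of App.~\ref{app:pf_complete_rate}. The second fiddly point is pinning down $K_0$ from $r$ (resp. $s$) and the problem constants so that the iterate is provably inside $\hat{\mathcal{C}}_r$ (resp. $\tilde{\mathcal{C}}_s$) where star-$\xi$-monotonicity is available; everything else is substitution into Theorems~\ref{thm:3.2}, \ref{thm3.3}, \ref{thm:final} and Lemma~\ref{lm:dg_upper_bound}.
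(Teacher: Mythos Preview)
Your proposal is correct and follows essentially the same route as the paper: use \eqref{eqapp:dxy} to get $\vx_K^{(t)}\in\hat{\mathcal{C}}_r$ for $K>K_0$, then combine the two bounds of Theorem~\ref{thm:final} with the star-$\xi$-monotone inequality to obtain claim~1 (this is exactly how the factor $4$ arises), invoke \eqref{eq:avg_x} for claim~2, and finish with Lemma~\ref{lm:dg_upper_bound} for claim~3. Your remarks on the $\Delta^{\mu_t}\!\to\!\Delta$ bookkeeping and the $\tilde{\mathcal{C}}_s$ alternative are more explicit than the paper (which simply writes ``WLOG'' and lets $\mu$ be small enough), but the underlying argument is identical.
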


\begin{proof}[Proof of Theorem~\ref{thm:re_xi_rate}.]
 Without loss of generality, we suppose that $F$ is star-$\xi$-monotone on $\hat{\mathcal{C}}_r$. Since you $\phi(\vy_k)\leq \boldsymbol{0}$, from \eqref{eqapp:dxy} we can see that $\exists K_0\in \NN$, $K_0$ depends on $r$, $s.t.\forall K\geq K_0,\,\,\vx_k\in \hat{\mathcal{C}}_r$, so if $\mu_t<\tilde{\mu}$ ($\tilde{\mu}$ is defined in Theorem \ref{thm:final}), by Theorem \ref{thm:final} we have
\begin{align*}
    c\lVert \vx_{K+1}^{(t)}-\vx^\star\rVert^\xi\leq&\left|(F(\vx_{K+1}^{(t)})^\intercal-F(\vx^\star)^\intercal)(\vx_{K+1}^{(t)}-\vx^\star)\right|\\
    \leq& \left|F(\vx^\star)^\intercal(\vx_{K+1}^{(t)}-\vx^\star)\right|+\left|F(\vx_{K+1}^{(t)})^\intercal(\vx_{K+1}^{(t)}-\vx^\star)\right|\\
    \leq& 4\left(\frac{\Delta}{K+1}+\frac{2\Delta}{\sqrt{K+1}}+\lVert \vlmd^\star \rVert \sqrt{\frac{\Delta}{\beta \left( K+1 \right)}}\right).\\
\end{align*}

So we have
\begin{equation*}
    \lVert \vx_{K+1}^{(t)}-\vx^\star\rVert\leq \left(\frac{4}{c}\left(\frac{\Delta}{K+1}+\frac{2\Delta}{\sqrt{K+1}}+\lVert \vlmd^\star \rVert \sqrt{\frac{\Delta}{\beta \left( K+1 \right)}}\right)\right)^{1/\xi}.
\end{equation*}

If in addition $F$ is $\xi$-monotone on $C_=$, then from \eqref{eq:avg_x} we know that when $\hat{\mu}$ is small enough, we have
\begin{equation*}
    \lVert \hat{\vx}_{K+1}^{(t)}-\vx^\star\rVert\leq \left(\frac{2\Delta}{c(K+1)}\right)^{1/\xi}.
\end{equation*}

If $F$ is $L$-Lipschitz on $\mathcal{C}_=$, then from Lemma \ref{lm:dg_upper_bound} we can see that
\begin{align*}
    \mathcal{G}(\vx_K^{(t)},\mathcal{C})\leq M_0\left(\frac{4}{c}\left(\frac{\Delta}{K+1}+\frac{2\Delta}{\sqrt{K+1}}+\lVert \vlmd^\star \rVert \sqrt{\frac{\Delta}{\beta \left( K+1 \right)}}\right)\right)^{1/\xi},\\
    \mathcal{G}(\hat{\vx}_K^{(t)},\mathcal{C})\leq
    M_0\left(\frac{2\Delta}{c(K+1)}\right)^{1/\xi}\,.
\end{align*}
\end{proof}

\begin{thm}[restatement of Theorem \ref{thm:monotone}]\label{thm:re_monotone}
Given an operator $F: \mathcal{X}\to \R^n$, assume:
(i) F is monotone on $\mathcal{C}_=$, as per Def.~\ref{def:monotone}; (ii) either $F$ is strictly monotone on $\mathcal{C}$ or one of $\varphi_i$ is strictly convex;
and (iii) $\underset{\boldsymbol{x}\in S\backslash\left\{ \boldsymbol{x}^\star \right\}}{inf}F\left( \boldsymbol{x} \right) ^\intercal\frac{\boldsymbol{x}-\boldsymbol{x}^\star}{\lVert \boldsymbol{x}-\boldsymbol{x}^\star \rVert}=a>0$, where $S \equiv \hat{\mathcal{C}}_r$ or $\tilde{\mathcal{C}}_s$. Let $\Delta \triangleq \frac{1}{\beta}\lVert \vlmd_0-\vlmd^\star \rVert ^2+\beta \lVert \vy_0-\vy^\star \rVert ^2$.
Let $\vx_K^{(t)}$ and $\hat{\vx}_K^{(t)} \triangleq \frac{1}{K}\sum_{k=1}^K \vx_k^{(t)}$ denote the last and average iterate of Algorithm~\ref{alg:acvi}, respectively, run with sufficiently small $\mu_{-1}$. 
Then there exists $K_0\in \NN$, $K_0$ depends on $r$ or $s$, $s.t. \forall t \in [T]$, $\forall K>K_0$, we have that:
\begin{enumerate}[leftmargin=*]
\item $ \norm{\vx_K^{(t)}-\vx^\star} \leq \frac{2}{a}\left(\frac{\Delta}{K}+\frac{2\Delta}{\sqrt{K}}+\lVert \vlmd^\star \rVert \sqrt{\frac{\Delta}{\beta K}}\right)$.
\item If in addition $\underset{\boldsymbol{x}\in S \setminus \{ \boldsymbol{x}^\star\}}{inf}F (\vx^\star) ^\intercal 
\frac{ \boldsymbol{x}-\boldsymbol{x}^\star}{ 
\norm{ \boldsymbol{x}-\boldsymbol{x}^\star } 
}=b>0$ (with $S \equiv \hat{\mathcal{C}}_r$ or $\tilde{\mathcal{C}}_s$), then $\norm{ \hat{\vx}_K^{(t)}-\vx^\star } \leq \frac{2}{b}\left( \frac{\Delta}{2K}+\frac{2\sqrt{\beta \Delta}\lVert \vlmd^\star \rVert}{\beta K} \right).$
\item Moreover, if $F$ is L-Lipschitz on $\mathcal{C}_=$---as per Assumption~\ref{asm:firstOrderSmoothness}---then
$\mathcal{G}(\vx^{(t)}_K, \mathcal{C}) \leq \frac{2M_0}{a}\left(\frac{\Delta}{K}+\frac{2\Delta}{\sqrt{K}}+\lVert \vlmd^\star \rVert \sqrt{\frac{\Delta}{\beta K}}\right)$ and
$\mathcal{G}(\hat\vx^{(t)}_K, \mathcal{C}) \leq \frac{2M_0}{b}\left( \frac{\Delta}{2K}+\frac{2\sqrt{\beta \Delta}\lVert \vlmd^\star \rVert}{\beta K} \right)$,
\end{enumerate}
where $M_0=DL+M$ is a linear function of $L$, see the proof of Lemma~\ref{lm:dg_upper_bound} in App.~\ref{app:proofs}.
\end{thm}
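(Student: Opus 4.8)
The plan is to follow the template of the proof of Theorem~\ref{thm:re_xi_rate}, replacing the use of star-$\xi$-monotonicity by the one-sided angle condition~(iii). First I would fix $\mu_{-1}<\tilde\mu$, where $\tilde\mu$ is the threshold supplied by Theorem~\ref{thm:final}; then every $\mu_t=\delta^{\,t+1}\mu_{-1}$ satisfies $\mu_t<\tilde\mu$, so all three estimates of Theorem~\ref{thm:final} hold for all $t\in[T]$ and all $K\ge 0$. Since $\mathcal{C}$ is compact and the central path exists under hypotheses (i)--(ii) (Proposition~\ref{prop:CP}), we have $\vx^{\mu_t}\to\vx^\star$, $\vlmd^{\mu_t}\to\vlmd^\star$, and hence $\Delta^{\mu_t}\to\Delta$ as $\mu_t\to 0$; after possibly shrinking $\mu_{-1}$ we may therefore write $\Delta$ in place of $\Delta^{\mu_t}$ in those estimates, exactly as in the proof of Theorem~\ref{thm:re_xi_rate}.

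The crux is to show that the relevant iterates eventually enter the set $S$ on which~(iii) is quantified. Take $S=\hat{\mathcal{C}}_r$ (the case $S=\tilde{\mathcal{C}}_s$ is analogous, tracking $\vy_k^{(t)}$ and using $\norm{\mC\vy_k^{(t)}-\vd}=\norm{\mC(\vy_k^{(t)}-\vx_k^{(t)})}\to 0$). Every $\vx_k^{(t)}$ lies in $\mathcal{C}_=$ by Theorem~\ref{thm:solutions_w_equation}, so $\mC\vx_k^{(t)}=\vd$; moreover each $\vy_k^{(t)}$ is strictly feasible for the inequality constraints, $\varphi(\vy_k^{(t)})<\boldsymbol{0}$. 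By Theorem~\ref{thm:3.1} the iterates stay in a bounded set, hence $\varphi$ is uniformly continuous there, and by~\eqref{eqapp:dxy} we have $\norm{\vx_k^{(t)}-\vy_k^{(t)}}\to 0$; combining these, $\varphi(\vx_k^{(t)})\le r\ve$ once the inner index $k\ge K_0$, with $K_0$ depending on $r$ (and, via a uniform bound on $\Delta^{\mu_t}$, independent of $t$). This gives $\vx_k^{(t)}\in\hat{\mathcal{C}}_r$ for $k\ge K_0$. For the running average, convexity of each $\varphi_i$ gives $\varphi_i\big(\hat\vx_{K+1}^{(t)}\big)\le\frac{1}{K+1}\sum_{k=1}^{K+1}\varphi_i(\vx_k^{(t)})$, and since $\varphi_i(\vx_k^{(t)})$ has a nonpositive limit its Cesàro averages are eventually $\le r$, so $\hat\vx_{K+1}^{(t)}\in\hat{\mathcal{C}}_r$ for $K$ large, after enlarging $K_0$; also $\mC\hat\vx_{K+1}^{(t)}=\vd$ since $\hat\vx_{K+1}^{(t)}$ is an average of points of the affine set $\mathcal{C}_=$.

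Granting membership, the bounds follow by direct substitution. For Item~1: for $K\ge K_0$, if $\vx_{K+1}^{(t)}\ne\vx^\star$ then~(iii) gives $a\norm{\vx_{K+1}^{(t)}-\vx^\star}\le F(\vx_{K+1}^{(t)})^\intercal(\vx_{K+1}^{(t)}-\vx^\star)\le\big|F(\vx_{K+1}^{(t)})^\intercal(\vx_{K+1}^{(t)}-\vx^\star)\big|$, and Theorem~\ref{thm:final} bounds the last quantity by $2\big(\tfrac{\Delta}{K+1}+\tfrac{2\Delta}{\sqrt{K+1}}+\norm{\vlmd^\star}\sqrt{\tfrac{\Delta}{\beta(K+1)}}\big)$; dividing by $a$ yields the claim (trivially if $\vx_{K+1}^{(t)}=\vx^\star$). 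For Item~2: the extra hypothesis gives $b\norm{\hat\vx_{K+1}^{(t)}-\vx^\star}\le F(\vx^\star)^\intercal(\hat\vx_{K+1}^{(t)}-\vx^\star)\le\big|F(\vx^\star)^\intercal(\hat\vx_{K+1}^{(t)}-\vx^\star)\big|$, and the third estimate of Theorem~\ref{thm:final} bounds the last quantity by $2\big(\tfrac{\Delta}{2(K+1)}+\tfrac{2\sqrt{\beta\Delta}\norm{\vlmd^\star}}{\beta(K+1)}\big)$; dividing by $b$ yields the claim. Item~3 is then immediate from Lemma~\ref{lm:dg_upper_bound}, $\mathcal{G}(\cdot,\mathcal{C})\le M_0\norm{\cdot-\vx^\star}$ with $M_0=DL+M$, applied to $\vx_K^{(t)}$ and to $\hat\vx_K^{(t)}$ (both in $\mathcal{C}_=$).

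The main obstacle is the membership step of the second paragraph: pinning down, with honest dependence on $r$ (resp.\ $s$), the index $K_0$ beyond which not only the last iterate but also the running average $\hat\vx_K^{(t)}$---whose early, heavily constraint-violating terms only lose weight at rate $1/K$---provably lands in $S$. A secondary subtlety, handled in the first paragraph, is the passage from the $\mu_t$-dependent quantities $\Delta^{\mu_t},\vx^{\mu_t},\vlmd^{\mu_t}$ to their $\mu\to 0$ limits, which is why the statement requires $\mu_{-1}$ sufficiently small; everything after that is routine bookkeeping with Theorem~\ref{thm:final}.
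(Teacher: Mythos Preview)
Your proposal is correct and follows essentially the same approach as the paper: invoke Theorem~\ref{thm:final} for the bounds on $F(\vx_{K+1}^{(t)})^\intercal(\vx_{K+1}^{(t)}-\vx^\star)$ and $F(\vx^\star)^\intercal(\hat\vx_{K+1}^{(t)}-\vx^\star)$, use the angle condition~(iii) (resp.\ its counterpart with $b$) to extract the distance bounds, and finish with Lemma~\ref{lm:dg_upper_bound}. Your argument is in fact more careful than the paper's on two points the paper leaves implicit---the passage from $\Delta^{\mu_t}$ to $\Delta$, and especially the verification that the \emph{average} iterate $\hat\vx_K^{(t)}$ eventually lands in $\hat{\mathcal{C}}_r$ via convexity of $\varphi_i$ and a Ces\`aro argument---but the overall route is identical.
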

\begin{proof}[Proof of Theorem \ref{thm:re_monotone}]
 Without loss of generality, we suppose $\underset{\boldsymbol{x}\in \hat{\mathcal{C}}_r\backslash\left\{ \boldsymbol{x}^\star \right\}}{inf}F\left( \boldsymbol{x} \right) ^\intercal\frac{\boldsymbol{x}-\boldsymbol{x}^\star}{\lVert \boldsymbol{x}-\boldsymbol{x}^\star \rVert}=a>0$. When $K\geq K_0$ ($K_0$ and $\tilde{\mu}$ are defined in the proof of Theorem \ref{thm:re_xi_rate} and in Theorem \ref{thm:final}, resp.), by Theorem \ref{thm:final} we have that
 \begin{align*}
     \lVert \vx_{K+1}^{(t)}-\vx^\star\rVert&\leq \frac{1}{a}\left| F(\vx_{K+1}^{(t)})^\intercal (\vx_{K+1}^{(t)}-\vx^\star)\right|\\
     \leq& \frac{2}{a}\left(\frac{\Delta}{K+1}+\frac{2\Delta}{\sqrt{K+1}}+\lVert \vlmd^\star \rVert \sqrt{\frac{\Delta}{\beta \left( K+1 \right)}}\right).
 \end{align*}
 
 Similarly, if $\underset{\boldsymbol{x}\in \hat{\mathcal{C}}_r\backslash\left\{ \boldsymbol{x}^\star \right\}}{inf}F\left( \boldsymbol{x}^\star \right) ^\intercal\frac{\boldsymbol{x}-\boldsymbol{x}^\star}{\lVert \boldsymbol{x}-\boldsymbol{x}^\star \rVert}=b>0$, we have that
 \begin{equation*}
      \lVert \hat{\vx}_{K+1}^{(t)}-\vx^\star\rVert\leq \frac{2}{b}\left( \frac{\Delta}{2\left( K+1 \right)}+\frac{2\sqrt{\beta \Delta}\lVert \vlmd^\star \rVert}{\beta \left( K+1 \right)} \right).
 \end{equation*}
 
 If $F$ is $L$-Lipschitz on $\mathcal{C}_=$, then from Lemma \ref{lm:dg_upper_bound} we can see that
\begin{align*}
    \mathcal{G}(\vx_K^{(t)},\mathcal{C})\leq \frac{2M_0}{a}\left(\frac{\Delta}{K+1}+\frac{2\Delta}{\sqrt{K+1}}+\lVert \vlmd^\star \rVert \sqrt{\frac{\Delta}{\beta \left( K+1 \right)}}\right),\\
    \mathcal{G}(\hat{\vx}_K^{(t)},\mathcal{C})\leq \frac{2M_0}{b}\left(\frac{\Delta}{2\left( K+1 \right)}+\frac{2\sqrt{\beta \Delta}\lVert \vlmd^\star \rVert}{\beta \left( K+1 \right)}\right).
\end{align*}
\end{proof}

From the above proofs, we can see that by setting $\mu_{-1}$ small enough, Theorem \ref{thm:xi_rate} and \ref{thm:monotone} hold true. But since we do not know exactly how small $\mu_{-1}$ should be, in practice, we can set a small $\mu_{-1}$, and $\mu_{t}$ could be smaller than any fixed positive number after a very small number of outer loops, as $\mu_t$ decays exponentially. Thus, Algorithm~\ref{alg:acvi} is actually parameter-free.

Note that the assumption (iii) in Theorem~\ref{thm:monotone} is the weakening of the assumption $\underset{\boldsymbol{x}\in S\backslash\left\{ \boldsymbol{x}^\star \right\}}{inf}F( \vx^\star) ^\intercal\frac{\boldsymbol{x}-\boldsymbol{x}^\star}{\lVert \boldsymbol{x}-\boldsymbol{x}^\star \rVert}>0$, where $S=\hat{\mathcal{C}}_r$ or $\tilde{\mathcal{C}}_s$. In fact, by the monotonicity of $F$, we have $F(\vx)^\intercal(\vx-\vx^\star)\geq F(\vx^\star)^\intercal(\vx-\vx^\star)$, so if $\underset{\vx\in S\backslash\left\{ \vx^\star \right\}}{inf}F( \vx^\star) ^\intercal\frac{\vx-\vx^\star}{\lVert \vx-\vx^\star \rVert}>0$, there must be $\underset{\vx\in S\backslash\left\{ \vx^\star \right\}}{inf}F( \vx) ^\intercal\frac{\vx-\vx^\star}{\lVert \vx-\vx^\star \rVert}>0$.

\subsection{Parameter-free convergence rate}\label{app:pf_complete_rate}
In this section, we give a convergence rate taking into account both the inner and the outer loop when the operator $F$ is $L$-Lipschitz. 

First, we bound $\norm{\vx^\star-\vx^{\mu}}$ under the assumptions in Theorem~\ref{thm:xi_rate} and Theorem~\ref{thm:monotone}, resp, and give the following lemma:

\begin{lm}\label{lm:bd_xmu_x*}
    For monotone $F$, then for any $\mu>0$, we have:
    
    (i) If $F$ is star-$\xi$-monotone (Def.~\ref{def:monotone}), then $\norm{\vx^\star-\vx^{\mu}}\leq\left(\frac{m}{c}\mu\right)^{\frac{1}{\xi}}$;
    
    (ii) If $a\triangleq\underset{\boldsymbol{x}\in S\backslash\left\{ \boldsymbol{x}^\star \right\}}{inf}F\left( \boldsymbol{x} \right) ^\intercal\frac{\boldsymbol{x}-\boldsymbol{x}^\star}{\lVert \boldsymbol{x}-\boldsymbol{x}^\star \rVert}>0$, where $S \equiv \hat{\mathcal{C}}_r$ or $\tilde{\mathcal{C}}_s$, then $\norm{\vx^\star-\vx^{\mu}}\leq\frac{m}{a}\mu$.
\end{lm}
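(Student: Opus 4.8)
The plan is to reduce both parts to a single scalar inequality on the central-path point, namely
\begin{equation*}
\langle F(\vx^\mu),\,\vx^\mu-\vx^\star\rangle\le m\mu,
\end{equation*}
and then feed this into the lower bound on the same quantity supplied by star-$\xi$-monotonicity (for (i)) or by the acute-angle hypothesis (for (ii)).

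To establish the displayed inequality, I would recall that $\vx^\mu$ solves the perturbed KKT system \eqref{eq:cvi_ip_eq_form2}: there are multipliers $\vlmd^\mu>\bm 0$ and $\vnu^\mu$ with $F(\vx^\mu)=-\nabla\varphi^\intercal(\vx^\mu)\vlmd^\mu-\mC^\intercal\vnu^\mu$, with $\mC\vx^\mu=\vd$, and with the perturbed complementarity $\lambda_i^\mu\varphi_i(\vx^\mu)=-\mu$ for all $i\in[m]$ (this is exactly the second line of the right-hand side of \eqref{eq:cvi_ip_eq_form2}). Pairing the stationarity identity with $\vx^\mu-\vx^\star$, the term $\langle\mC^\intercal\vnu^\mu,\vx^\mu-\vx^\star\rangle=\langle\vnu^\mu,\mC\vx^\mu-\mC\vx^\star\rangle$ vanishes since both points satisfy the equality constraint, and for each $i$ convexity of $\varphi_i$ together with $\lambda_i^\mu>0$ and $\lambda_i^\mu\varphi_i(\vx^\mu)=-\mu$ gives
\begin{equation*}
\lambda_i^\mu\langle\nabla\varphi_i(\vx^\mu),\vx^\mu-\vx^\star\rangle\ge\lambda_i^\mu\big(\varphi_i(\vx^\mu)-\varphi_i(\vx^\star)\big)=-\mu-\lambda_i^\mu\varphi_i(\vx^\star)\ge-\mu,
\end{equation*}
where the last step uses $\varphi_i(\vx^\star)\le\bm 0$ (feasibility of $\vx^\star$). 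Summing over $i\in[m]$ yields $\langle\nabla\varphi^\intercal(\vx^\mu)\vlmd^\mu,\vx^\mu-\vx^\star\rangle\ge-m\mu$, hence $\langle F(\vx^\mu),\vx^\mu-\vx^\star\rangle\le m\mu$.

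For (i): since $\vx^\mu\in\textit{int}\ \mathcal{C}\subseteq\mathcal{C}$, the \ref{eq:cvi} characterization of $\vx^\star$ gives $\langle F(\vx^\star),\vx^\mu-\vx^\star\rangle\ge0$, and both $\vx^\mu,\vx^\star\in\mathcal{C}\subseteq\hat{\mathcal{C}}_r\cap\tilde{\mathcal{C}}_s$ so star-$\xi$-monotonicity applies; therefore $c\|\vx^\mu-\vx^\star\|^\xi\le\langle F(\vx^\mu)-F(\vx^\star),\vx^\mu-\vx^\star\rangle\le\langle F(\vx^\mu),\vx^\mu-\vx^\star\rangle\le m\mu$, which rearranges to $\|\vx^\star-\vx^\mu\|\le(\tfrac m c\mu)^{1/\xi}$. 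For (ii): if $\vx^\mu=\vx^\star$ there is nothing to prove; otherwise $\vx^\mu\in S\setminus\{\vx^\star\}$, so by hypothesis $\langle F(\vx^\mu),\vx^\mu-\vx^\star\rangle\ge a\|\vx^\mu-\vx^\star\|$, and combining with the displayed bound gives $a\|\vx^\mu-\vx^\star\|\le m\mu$, i.e.\ $\|\vx^\star-\vx^\mu\|\le\tfrac m a\mu$. The only point requiring care—rather than a genuine obstacle—is checking that the monotonicity/angle hypotheses are invoked on sets actually containing $\vx^\mu$; this is automatic because $\vx^\mu$ is strictly feasible, so $\vx^\mu\in\mathcal{C}$, which is contained in every $\hat{\mathcal{C}}_r$ and $\tilde{\mathcal{C}}_s$, as well as ensuring the central-path multipliers are precisely the ones satisfying $\lambda_i^\mu\varphi_i(\vx^\mu)=-\mu$.
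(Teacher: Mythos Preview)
Your proof is correct and reaches the same key inequality $\langle F(\vx^\mu),\vx^\mu-\vx^\star\rangle\le m\mu$ as the paper, after which parts (i) and (ii) proceed identically. The only difference is in how that inequality is obtained: the paper sets up the auxiliary linear program $\min\{F(\vx^\mu)^\intercal\vx:\varphi(\vx)\le\bm 0,\ \mC\vx=\vd\}$, observes that the KKT-2 multipliers make $(\bar\vlmd^\mu,\vnu^\mu)$ dual-feasible with dual value $F(\vx^\mu)^\intercal\vx^\mu-m\mu$, and then applies weak duality at the feasible point $\vx^\star$. You instead pair the stationarity identity directly with $\vx^\mu-\vx^\star$ and bound each constraint term via convexity of $\varphi_i$ and perturbed complementarity. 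The two are equivalent---your convexity estimate is precisely what underlies the weak-duality step---but your version is more self-contained (no auxiliary problem or appeal to duality theory), while the paper's framing connects the bound to the classical ``$m\mu$ duality gap'' interpretation of the central path.
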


\begin{proof}
Consider convex problem
\begin{equation}\label{eq_app:P}\tag{$\mathcal{P}$}
    \begin{cases}
 \min_{\vx} \ \  F (  \vx^\mu  )^\intercal\vx  \\
 \textit{s.t.}  \quad \varphi  (  \vx  )  \leq \boldsymbol{0} \\
 \qquad\bm{C}\vx=\vd 
    \end{cases}
\end{equation}

The Lagrangian of \eqref{eq_app:P} is 
\begin{equation*}
    L(\vx, \vlmd, \vnu)=F(\vx^\mu)^\intercal \vx+\vlmd^\intercal \varphi(\vx)+\boldsymbol{\nu}^\intercal(\mC\vx-\vd).
\end{equation*}

There exists $\bar{\vlmd}^\mu>\mathbf{0}$ and $\boldsymbol{\nu}^\mu$, s.t. $(\vx^\mu,\bar{\vlmd}^\mu,\boldsymbol{\nu}^\mu)$ is a KKT point of \eqref{eq:cvi_ip_eq_form2}. By the stationarity condition in \eqref{eq:cvi_ip_eq_form2}, we have that
\begin{equation*}
    g(\bar{\vlmd}^\mu,\boldsymbol{\nu}^\mu)=\inf_\vx L(\vx^\mu,\bar{\vlmd}^\mu,\boldsymbol{\nu}^\mu)=F(\vx^\mu)^\intercal \vx^\mu+\bar{\vlmd}^\intercal \varphi(\vx^\mu)+\boldsymbol{\nu}^\intercal(\underbrace{\mC\vx^\mu-\vd}_{=\mathbf{0}}).
\end{equation*}

Note that by the complementarity slackness  condition in \eqref{eq:cvi_ip_eq_form2}, we have $\bar{\vlmd}^\intercal \varphi(\vx^\mu)=-m\mu$. 

Since $\vx^\star$ is a feasible point of \eqref{eq_app:P}, $(\bar{\vlmd}^\mu,\boldsymbol{\nu}^\mu)$ dual feasible, thus by the duality theory, we have:
$$F(\vx^\mu)^\intercal\vx^\star\geq g(\bar{\vlmd}^\mu,\boldsymbol{\nu}^\mu)=F(\vx^\mu)^\intercal \vx^\mu-m\mu\,,$$
from where we deduce that
\begin{equation}\label{eq_app:1}
    F(\vx^\mu)^\intercal(\vx^\mu-\vx^\star)\leq m\mu\,.
\end{equation}

Therefore, 

(i) If $F$ is star-$\xi$-monotone:

Since $\vx^\star\in \mathcal{S}^\star_{\mathcal{C},F}$, we have
\begin{equation}\label{eq_app:3}
    F(\vx^\star)^\intercal(\vx^\mu=\vx^\star)\geq 0\,.
\end{equation}

Subtract \eqref{eq_app:1} by \eqref{eq_app:3} and using the star-$\xi$-monotonicity of $F$, we obtain
$$c\norm{\vx^\mu-\vx^\star}^\xi\leq(F(\vx^\mu)-F(\vx^\star))^\intercal (\vx^\mu-\vx^\star)\leq m\mu\,.$$

(ii) If $a\triangleq\underset{\boldsymbol{x}\in S\backslash\left\{ \boldsymbol{x}^\star \right\}}{inf}F\left( \boldsymbol{x} \right) ^\intercal\frac{\boldsymbol{x}-\boldsymbol{x}^\star}{\lVert \boldsymbol{x}-\boldsymbol{x}^\star \rVert}>0$, where $S \equiv \hat{\mathcal{C}}_r$ or $\tilde{\mathcal{C}}_s$, since $\vx^\mu\in \mathcal{S}$, by \eqref{eq_app:1} we have that $a\norm{\vx^\mu-\vx^\star}\leq m\mu$.

\end{proof}

We give another lemma that would be used in the proofs of our main results in this section.

\begin{lm}\label{lm:estimate}
Assume $F$ is monotone and $L$-smooth (Assumption~\ref{asm:firstOrderSmoothness}), then we have
\begin{equation}\label{eq_app:est_1}
    \begin{split}
        &\left|F(\vx^{\star})^\intercal(\vx_{K+1}^{(t)}-\vx^\star)+\mu_t(B(\vy_{K+1}^{(t)})-B(\vy^{\mu_t}))\right|\\
        \leq& \frac{\Delta^{\mu_t}}{K+1}+\frac{2\Delta^{\mu_t}}{\sqrt{K+1}}+\lVert \vlmd^{\mu_t} \rVert \sqrt{\frac{\Delta^{\mu_t}}{\beta \left( K+1 \right)}}+m{\mu_t}\\
        +&L\norm{\vx^{\mu_t}-\vx^\star}\left(\norm{\vx^{\mu_t}-\vx^\star}+\sqrt{\frac{\Delta^{\mu_t}}{\beta \left( K+1 \right)}}+\sqrt{\frac{\Delta^{\mu_t}}{\beta}}\right)\,,
    \end{split}
\end{equation}

\begin{equation}\label{eq_app:est_2}
    \begin{split}
        &\left|F(\vx^{(t)}_K)^\intercal(\vx_{K+1}^{(t)}-\vx^\star)+\mu_t(B(\vy_{K+1}^{(t)})-B(\vy^{\mu_t}))\right|\\
        \leq& \frac{\Delta^{\mu_t}}{K+1}+\frac{2\Delta^{\mu_t}}{\sqrt{K+1}}+\lVert \vlmd^{\mu_t} \rVert \sqrt{\frac{\Delta^{\mu_t}}{\beta \left( K+1 \right)}}+M\norm{\vx^{\mu_t}-\vx^\star}\,,
    \end{split}
\end{equation}
and
\begin{equation}\label{eq_app:est_avg}
    \begin{split}
        &\left|F(\vx^{\star})^\intercal(\vx_{K+1}^{(t)}-\vx^\star)+\mu_t(B(\vy_{K+1}^{(t)})-B(\vy^{\mu_t}))\right|\\
        \leq& \frac{\Delta^{\mu_t}}{2\left( K+1 \right)}+\frac{2\sqrt{\beta \Delta^{\mu_t}}\lVert \vlmd^{\mu_t} \rVert}{\beta \left( K+1 \right)}+m{\mu_t}\\
        +&L\norm{\vx^{\mu_t}-\vx^\star}\left(\norm{\vx^{\mu_t}-\vx^\star}+\frac{2\sqrt{\beta \Delta^{\mu_t}}}{\beta \left( K+1 \right)}+\sqrt{\frac{\Delta^{\mu_t}}{\beta}}\right)\,,
    \end{split}
\end{equation}
where $M=\sup_{\vx\in \mathcal{C}}\norm{F(\vx)}$, and $B(\vx)=-\sum_{i=1}^m\log(-\varphi_i(\vx))$.
\end{lm}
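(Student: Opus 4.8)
All three displays \eqref{eq_app:est_1}, \eqref{eq_app:est_2} and \eqref{eq_app:est_avg} encode the same principle: the inner-loop estimates of Theorems~\ref{thm:3.2} and \ref{thm3.3}, phrased relative to the central-path point $\vx^{\mu_t}$ (with $\vy^{\mu_t}=\vx^{\mu_t}$ by \eqref{3.6}), transfer to the actual solution $\vx^\star$ at the price of an additive error governed by $L$ and $\norm{\vx^{\mu_t}-\vx^\star}$. The plan is therefore to take each inner-loop bound and perform a \emph{change of anchor} from $\vx^{\mu_t}$ to $\vx^\star$, absorbing the discrepancy through $L$-smoothness of $F$ and the a priori estimate $F(\vx^{\mu_t})^\intercal(\vx^{\mu_t}-\vx^\star)\le m\mu_t$ from \eqref{eq_app:1} (Lemma~\ref{lm:bd_xmu_x*}). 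As in the preceding proofs I suppress the outer index $t$.

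For \eqref{eq_app:est_1} the algebraic heart is the identity
\begin{align*}
F(\vx^\star)^\intercal(\vx_{K+1}-\vx^\star)
&= F(\vx^\mu)^\intercal(\vx_{K+1}-\vx^\mu) + F(\vx^\mu)^\intercal(\vx^\mu-\vx^\star)\\
&\quad + \big(F(\vx^\star)-F(\vx^\mu)\big)^\intercal(\vx_{K+1}-\vx^\star),
\end{align*}
which is verified by expanding both sides. Adding $\mu\big(B(\vy_{K+1})-B(\vy^\mu)\big)$ and applying the triangle inequality, I bound three pieces separately. The \emph{central-path piece} $F(\vx^\mu)^\intercal(\vx_{K+1}-\vx^\mu)+\mu\big(B(\vy_{K+1})-B(\vy^\mu)\big)$ equals $f(\vx_{K+1})-f(\vx^\mu)+g(\vy_{K+1})-g(\vy^\mu)$, hence is bounded by \eqref{3.24}. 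The \emph{central-path-to-solution piece} $F(\vx^\mu)^\intercal(\vx^\mu-\vx^\star)$ is nonnegative (by monotonicity of $F$ and $\vx^\star\in\sol{\mathcal{C},F}$) and at most $m\mu$ by \eqref{eq_app:1}. The \emph{cross piece} $\big(F(\vx^\star)-F(\vx^\mu)\big)^\intercal(\vx_{K+1}-\vx^\star)$ is bounded, via Cauchy--Schwarz and $L$-smoothness, by $L\norm{\vx^\mu-\vx^\star}\,\norm{\vx_{K+1}-\vx^\star}$, and $\norm{\vx_{K+1}-\vx^\star}\le\norm{\vx_{K+1}-\vy_{K+1}}+\norm{\vy_{K+1}-\vy^\mu}+\norm{\vx^\mu-\vx^\star}$ (using $\vy^\mu=\vx^\mu$), where the first two terms are controlled by \eqref{eqapp:dxy} and \eqref{eq_app:bd_y}. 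Collecting these gives \eqref{eq_app:est_1}.

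The estimate \eqref{eq_app:est_2} is lighter: write $F(\vx_{K+1})^\intercal(\vx_{K+1}-\vx^\star)=F(\vx_{K+1})^\intercal(\vx_{K+1}-\vx^\mu)+F(\vx_{K+1})^\intercal(\vx^\mu-\vx^\star)$; the first summand plus $\mu\big(B(\vy_{K+1})-B(\vy^\mu)\big)$ equals $f_K(\vx_{K+1})-f_K(\vx^\mu)+g(\vy_{K+1})-g(\vy^\mu)$, bounded by \eqref{3.24}, while $\left|F(\vx_{K+1})^\intercal(\vx^\mu-\vx^\star)\right|\le M\norm{\vx^\mu-\vx^\star}$ with $M=\sup_{\vx\in\mathcal{C}}\norm{F(\vx)}$ finite by compactness of $\mathcal{C}$ and continuity of $F$ (taken, if necessary, large enough to also bound $\norm{F}$ along the bounded iterate sequence from Theorem~\ref{thm:3.1}). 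For \eqref{eq_app:est_avg} one runs the argument for \eqref{eq_app:est_1} verbatim with $(\hat\vx_{K+1},\hat\vy_{K+1})$ in place of $(\vx_{K+1},\vy_{K+1})$: Theorem~\ref{thm3.3} plays the role of \eqref{3.24}, \eqref{eq_app:dist_avg_xy} that of \eqref{eqapp:dxy}, and $\norm{\hat\vy_{K+1}-\vy^\mu}\le\sqrt{\Delta^\mu/\beta}$ follows from Jensen's inequality applied to the per-iterate bound $\norm{\vy_{k+1}-\vy^\mu}\le\sqrt{\Delta^\mu/\beta}$ that already appears inside the proof of Theorem~\ref{thm:3.2} (there in the form $\tfrac{1}{2\beta}\norm{\vlmd_{k+1}-\vlmd^\mu}^2+\tfrac{\beta}{2}\norm{\vy_{k+1}-\vy^\mu}^2\le\tfrac12\Delta^\mu$).

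The computations are routine; the one step requiring attention is the cross piece, for which an a priori bound on $\norm{\vx_{K+1}-\vx^\star}$ is needed \emph{before} the final rate is established. This is not circular, since the ingredients invoked for it --- \eqref{eqapp:dxy}, \eqref{eq_app:bd_y} and Lemma~\ref{lm:bd_xmu_x*} --- all hold under plain monotonicity of $F$, requiring neither star-$\xi$-monotonicity nor the angle condition of Theorem~\ref{thm:monotone}. A secondary subtlety, pertinent only to \eqref{eq_app:est_2}, is that the $\vx$-iterates need not lie in $\mathcal{C}$; boundedness of the iterate sequence (Theorem~\ref{thm:3.1}) together with continuity of $F$ nevertheless supplies a uniform constant bounding $\norm{F(\vx_{K+1})}$.
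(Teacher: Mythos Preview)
Your proposal is correct and follows essentially the same approach as the paper: the same change-of-anchor decomposition, the same appeal to \eqref{eq_app:1} for the $m\mu_t$ term, the same use of \eqref{eqapp:dxy}, \eqref{eq_app:bd_y}, \eqref{eq_app:dist_avg_xy} (with Jensen for the averaged $\vy$-bound) to control the cross piece via $L$-smoothness, and the same invocation of the inner-loop bounds \eqref{3.24} and Theorem~\ref{thm3.3}. You also correctly spot and handle two points the paper glosses over: that \eqref{eq_app:est_avg} is meant for the averaged iterates $(\hat\vx_{K+1},\hat\vy_{K+1})$, and that $\vx_{K+1}\notin\mathcal{C}$ in general so the constant $M$ must be interpreted as a bound on $\norm{F}$ over the bounded iterate sequence rather than strictly $\sup_{\vx\in\mathcal{C}}\norm{F(\vx)}$.
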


\begin{proof}
Note that 
\begin{equation}\label{intermediate1}
    \norm{\vx_{K+1}^{(t)}-\vx^{\mu_t}}=\norm{\vx_{K+1}^{(t)}-\vy^{\mu_t}}\leq \norm{\vx_{K+1}^{(t)}-\vy_{K+1}^{(t)}}+\norm{\vy_{K+1}^{(t)}-\vy^{\mu_t}}\,.
\end{equation}

Recall that \eqref{eqapp:dxy} gives
$$
    \lVert\vx_{K+1}^{(t)}-\vy_{K+1}^{(t)}\rVert \leq \sqrt{\frac{\Delta^{\mu_t}}{\beta \left( K+1 \right)}}\,,
$$
and \eqref{eq_app:bd_y} gives
$$\norm{\vy_{K+1}^{(t)}-\vy^{\mu_t}}=\sqrt{\frac{\Delta^{\mu_t}}{\beta}}\,.$$

Plugging \eqref{eqapp:dxy} and \eqref{eq_app:bd_y} into \eqref{intermediate1}, we have
\begin{equation}\label{eq_app:2}
    \norm{\vx_{K+1}^{(t)}-\vx^{\mu_t}}\leq \sqrt{\frac{\Delta^{\mu_t}}{\beta \left( K+1 \right)}}+\sqrt{\frac{\Delta^{\mu_t}}{\beta}}\,.
\end{equation}

Note that
\begin{align*}
    &F(\vx^{\star})^\intercal(\vx_{K+1}^{(t)}-\vx^\star)+\mu_t(B(\vy_{K+1}^{(t)})-B(\vy^{\mu_t}))\\
    =&F(\vx^{\mu_t})^\intercal(\vx_{K+1}^{(t)}-\vx^\mu_t)+\mu_t(B(\vy_{K+1}^{(t)})-B(\vy^{\mu_t}))+F(\vx^\mu)^\intercal(\vx^\mu-\vx^\star)\\
    &+(F(\vx^\star)-F(\vx^{\mu_t}))^\intercal(\vx_{K+1}^{(t)}-\vx^\star)\,.
\end{align*}

Thus by the $L$-Lipschitzness of $F$, using \eqref{eq_app:1}, \eqref{eq_app:2} and \eqref{eq_app:main_1} in the proof of Thm.~\ref{thm:final}, we have
\begin{align*}
    &\left|F(\vx^{\star})^\intercal(\vx_{K+1}^{(t)}-\vx^\star)+\mu_t(B(\vy_{K+1}^{(t)})-B(\vy^{\mu_t}))\right|\\
    \leq&\left|F(\vx^{\mu_t})^\intercal(\vx_{K+1}^{(t)}-\vx^{\mu_t})+\mu_t(B(\vy_{K+1}^{(t)})-B(\vy^{\mu_t}))\right|+\left|F(\vx^\mu_t)^\intercal(\vx^\mu_t-\vx^\star)\right|\\
    &+\left|(F(\vx^\star)-F(\vx^{\mu_t}))^\intercal(\vx_{K+1}^{(t)}-\vx^\star)\right|\\
    \leq& \left|F(\vx^{\mu_t})^\intercal(\vx_{K+1}^{(t)}-\vx^{\mu_t})+\mu_t(B(\vy_{K+1}^{(t)})-B(\vy^{\mu_t}))\right|+\left|F(\vx^{\mu_t})^\intercal(\vx^{\mu_t}-\vx^\star)\right|\\
    &+L\norm{\vx^{\mu_t}-\vx^\star}\left(\norm{\vx^{\mu_t}-\vx_{K+1}^{(t)}}+\norm{\vx^{\mu_t}-\vx^\star}\right)\\
    \leq&\frac{\Delta^{\mu_t}}{K+1}+\frac{2\Delta^{\mu_t}}{\sqrt{K+1}}+\lVert \vlmd^{\mu_t} \rVert \sqrt{\frac{\Delta^{\mu_t}}{\beta \left( K+1 \right)}}+m{\mu_t}\\
        +&L\norm{\vx^{\mu_t}-\vx^\star}\left(\norm{\vx^{\mu_t}-\vx^\star}+\sqrt{\frac{\Delta^{\mu_t}}{\beta \left( K+1 \right)}}+\sqrt{\frac{\Delta^{\mu_t}}{\beta}}\right)\,.
\end{align*}

Using \eqref{eq_app:main_2}, we have
\begin{align*}
    &\left|F(\vx_{K+1}^{(t)})^\intercal(\vx_{K+1}^{(t)}-\vx^\star)+\mu_t(B(\vy_{K+1}^{(t)})-B(\vy^{\mu_t}))\right|\\
    =&\left|F(\vx^{\mu_t})^\intercal(\vx_{K+1}^{(t)}-\vx^\star)+\mu_t(B(\vy_{K+1}^{(t)})-B(\vy^{\mu_t}))+F(\vx_{K+1}^{(t)})^\intercal(\vx^{\mu_t}-\vx^\star)\right|\\
    \leq&\left|F(\vx^{\mu_t})^\intercal(\vx_{K+1}^{(t)}-\vx^\star)+\mu_t(B(\vy_{K+1}^{(t)})-B(\vy^{\mu_t}))\right|+M\norm{\vx^{\mu_t}-\vx^\star}\\
    \leq&\frac{\Delta^{\mu_t}}{K+1}+\frac{2\Delta^{\mu_t}}{\sqrt{K+1}}+\lVert \vlmd^{\mu_t} \rVert \sqrt{\frac{\Delta^{\mu_t}}{\beta \left( K+1 \right)}}+M\norm{\vx^{\mu_t}-\vx^\star}\,.
\end{align*}

Similarly, recall that \eqref{eq_app:dist_avg_xy} gives
\begin{equation*}
        \lVert \hat{\vx}_{K+1}-\hat{\vy}_{K+1}\rVert \leq \frac{2\sqrt{\beta \Delta^{\mu_t}}}{\beta \left( K+1 \right)}\,.
\end{equation*}

By \eqref{eq_app:bd_y} and the convexity of $\norm{\cdot}$, we have
\begin{equation}\label{eq_app:y_avg_bd}
    \norm{\hat{\vy}_{K+1}^{(t)}-\vy^{\mu_t}}=\norm{\frac{1}{K+1}\sum_{k=1}^{K+1}(\vy_{k+1}^{(t)}-\vy^{\mu_t})}\leq\sqrt{\frac{\Delta^{\mu_t}}{\beta}}\,.
\end{equation}

Note that 
Thus we have
\begin{equation}\label{intermediate_avg}
    \norm{\hat{\vx}_{K+1}^{(t)}-\vx^{\mu_t}}=\norm{\hat{\vx}_{K+1}^{(t)}-\vy^{\mu_t}}\leq \norm{\hat{\vx}_{K+1}^{(t)}-\hat{\vy}_{K+1}^{(t)}}+\norm{\hat{\vy}_{K+1}^{(t)}-\vy^{\mu_t}}\,.
\end{equation}

Plugging \eqref{eq_app:dist_avg_xy} and \eqref{eq_app:y_avg_bd} into  \eqref{intermediate_avg}, we have
\begin{equation}\label{eq_app:2_avg}
        \norm{\hat{\vx}_{K+1}^{(t)}-\vx^{\mu_t}}\leq \frac{2\sqrt{\beta \Delta^{\mu_t}}}{\beta \left( K+1 \right)}+\sqrt{\frac{\Delta^{\mu_t}}{\beta}}\,.
\end{equation}
    
Using the $L$-Lipschitzness of $F$, \eqref{eq_app:main_avg} and \eqref{eq_app:2_avg}, we have
\begin{align*}
    &\left|F(\vx^{\star})^\intercal(\hat{\vx}_{K+1}^{(t)}-\vx^\star)+\mu_t(B(\hat{\vy}_{K+1}^{(t)})-B(\vy^{\mu_t}))\right|\\
    \leq&\left|F(\vx^{\mu_t})^\intercal(\hat{\vx}_{K+1}^{(t)}-\vx^{\mu_t})+\mu_t(B(\hat{\vy}_{K+1}^{(t)})-B(\vy^{\mu_t}))\right|+\left|F(\vx^{\mu_t})^\intercal(\vx^{\mu_t}-\vx^\star)\right|\\
    &+\left|(F(\vx^\star)-F(\vx^{\mu_t}))^\intercal(\hat{\vx}_{K+1}^{(t)}-\vx^\star)\right|\\
    \leq& \left|F(\vx^{\mu_t})^\intercal(\hat{\vx}_{K+1}^{(t)}-\vx^\mu_t)+\mu_t(B(\hat{\vy}_{K+1}^{(t)})-B(\vy^{\mu_t}))\right|+\left|F(\vx^{\mu_t})^\intercal(\vx^{\mu_t}-\vx^\star)\right|\\
    &+L\norm{\vx^{\mu_t}-\vx^\star}\left(\norm{\vx^{\mu_t}-\hat{\vx}_{K+1}^{(t)}}+\norm{\vx^{\mu_t}-\vx^\star}\right)\\
    \leq&\frac{\Delta^{\mu_t}}{2\left( K+1 \right)}+\frac{2\sqrt{\beta \Delta^{\mu_t}}\lVert \vlmd^{\mu_t} \rVert}{\beta \left( K+1 \right)}+m{\mu_t}\\
        &+L\norm{\vx^{\mu_t}-\vx^\star}\left(\norm{\vx^{\mu_t}-\vx^\star}+\frac{2\sqrt{\beta \Delta^{\mu_t}}}{\beta \left( K+1 \right)}+\sqrt{\frac{\Delta^{\mu_t}}{\beta}}\right)\,.
\end{align*}
\end{proof}


Now we are ready to give our main theorems in this section.
\begin{thm}[Complete convergence rate for star-$\xi$-monotone operator]\label{thm:complete_rate_xi}
Given an operator $F: \mathcal{X}\to \R^n$ monotone and $L$-Lipschitz on $\mathcal{C}_=$ (Def.~\ref{def:monotone},~\ref{asm:firstOrderSmoothness}), and either $F$ is strictly monotone on $\mathcal{C}$ or one of $\varphi_i$ is strictly convex. Assume there exists $r>0$ or $s>0$ such that F is star-$\xi$-monotone---as per Def.~\ref{def:monotone}---on either $\hat{\mathcal{C}}_r$ or $\tilde{\mathcal{C}}_s$, resp. Let $\Delta^{\mu_t} \triangleq \frac{1}{\beta}\lVert \vlmd_0-\vlmd^{\mu_t} \rVert ^2+\beta \lVert \vy_0-\vy^{\mu_t} \rVert ^2$,

Let $\vx_K^{(t)}$ and $\hat{\vx}_K^{(t)} \triangleq \frac{1}{K} \sum_{k=1}^K \vx_k^{(t)}$ 
denote the last and average iterate of Algorithm~\ref{alg:acvi}, respectively, run with sufficiently small ${\mu_t}_{-1}$. 
Then there exists $K_0\in \NN$, $K_0$ depends on $r$ or $s$, $s.t.\forall K> K_0$, for all $t \in [T]$, we have that:
\begin{enumerate}[leftmargin=*]
\item 
    $\norm{\vx_{K}^{(t)}-\vx^\star}\leq
    \left(\frac{2}{c}\left(\frac{\Delta^{\mu_t}}{K}+\frac{2\Delta^{\mu_t}}{\sqrt{K}}+\lVert \vlmd^{\mu_t} \rVert \sqrt{\frac{\Delta^{\mu_t}}{\beta K}}\right)+\frac{LM_1}{c}\left(\frac{m}{c}\mu_{-1}\right)^{\frac{1}{\xi}}\delta^{\frac{t+1}{\xi}}\right)^\frac{1}{\xi}$,
    and $\mathcal{G}(\vx^{(t)}_K, \mathcal{C}) \leq M_0\left(\frac{2}{c}\left(\frac{\Delta^{\mu_t}}{K}+\frac{2\Delta^{\mu_t}}{\sqrt{K}}+\lVert \vlmd^{\mu_t} \rVert \sqrt{\frac{\Delta^{\mu_t}}{\beta K}}\right)+\frac{LM_1}{c}\left(\frac{m}{c}\mu_{-1}\right)^{\frac{1}{\xi}}\delta^{\frac{t+1}{\xi}}\right)^\frac{1}{\xi}$.

\item If in addition $F$ is $\xi$-monotone on $C_=$, we have $\lVert \hat{\vx}_{K}^{(t)}-\vx^\star\rVert\leq \left(\frac{\Delta^{\mu_t}}{cK}\right)^{\frac{1}{\xi}}+\left(\frac{m}{c}\mu_t\right)^{\frac{1}{\xi}},$
and
$\mathcal{G}(\hat\vx^{(t)}_K, \mathcal{C}) \leq M_0\left(\left(\frac{\Delta^{\mu_t}}{cK}\right)^{1/\xi}+\left(\frac{m}{c}\mu_t\right)^{\frac{1}{\xi}}\right)^{\frac{1}{\xi}}$,
\end{enumerate}
where $M_0=DL+M$ is a linear function of $L$, see the proof of Lemma~\ref{lm:dg_upper_bound} in App.~\ref{app:proofs}, and $M_1\triangleq\left(\frac{m}{c}\mu_{-1}\right)^{\frac{1}{\xi}}+2\sqrt{\frac{\Delta^{\mu_t}}{\beta}}+\frac{M}{L}$.
\end{thm}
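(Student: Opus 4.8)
The plan is to make the argument behind Theorem~\ref{thm:re_xi_rate} fully quantitative in \emph{both} loops: instead of routing through Theorem~\ref{thm:final} (which only kicks in once $\mu_t$ drops below the unspecified threshold $\tilde\mu$), I would use the explicit one-step estimates of Lemma~\ref{lm:estimate} --- which hold for every $\mu_t>0$ --- together with the geometric decay $\mu_t=\delta^{\,t+1}\mu_{-1}$ and the central-path bound of Lemma~\ref{lm:bd_xmu_x*}. First I would fix $K_0\in\NN$ (depending on $r$, resp.\ $s$, and on $\sup_t\Delta^{\mu_t}$, which is finite by Proposition~\ref{prop:CP}) so that, by~\eqref{eqapp:dxy}, for all $K>K_0$ one has $\norm{\vx^{(t)}_{K+1}-\vy^{(t)}_{K+1}}$ small enough that $\vx^{(t)}_{K+1}\in\hat{\mathcal{C}}_r$ (resp.\ $\tilde{\mathcal{C}}_s$): indeed $\varphi(\vy^{(t)}_{K+1})\le\boldsymbol 0$ and $\mC\vx^{(t)}_{K+1}=\vd$ hold for every iterate, so proximity to $\vy^{(t)}_{K+1}$ forces $\varphi(\vx^{(t)}_{K+1})\le r\ve$. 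This is precisely what lets us invoke star-$\xi$-monotonicity, which is assumed only on $\hat{\mathcal{C}}_r$ (resp.\ $\tilde{\mathcal{C}}_s$), uniformly over $t$.

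The key step is a cancellation trick. For $K>K_0$, star-$\xi$-monotonicity gives $c\norm{\vx^{(t)}_{K+1}-\vx^\star}^{\xi}\le\big(F(\vx^{(t)}_{K+1})-F(\vx^\star)\big)^\intercal(\vx^{(t)}_{K+1}-\vx^\star)$. I would then add and subtract the barrier term $\mu_t\big(B(\vy^{(t)}_{K+1})-B(\vy^{\mu_t})\big)$, with $B(\vx)=-\sum_i\log(-\varphi_i(\vx))$, so that the right-hand side becomes the difference of $\big[F(\vx^{(t)}_{K+1})^\intercal(\vx^{(t)}_{K+1}-\vx^\star)+\mu_t(B(\vy^{(t)}_{K+1})-B(\vy^{\mu_t}))\big]$ and $\big[F(\vx^\star)^\intercal(\vx^{(t)}_{K+1}-\vx^\star)+\mu_t(B(\vy^{(t)}_{K+1})-B(\vy^{\mu_t}))\big]$: the $\mu_tB$ contributions cancel, and the two bracketed quantities are bounded by~\eqref{eq_app:est_2} and~\eqref{eq_app:est_1} respectively. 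Summing the two bounds controls $c\norm{\vx^{(t)}_{K+1}-\vx^\star}^{\xi}$ by $2\big(\tfrac{\Delta^{\mu_t}}{K+1}+\tfrac{2\Delta^{\mu_t}}{\sqrt{K+1}}+\norm{\vlmd^{\mu_t}}\sqrt{\tfrac{\Delta^{\mu_t}}{\beta(K+1)}}\big)$ plus the ``small'' pieces $m\mu_t$, $M\norm{\vx^{\mu_t}-\vx^\star}$, $L\norm{\vx^{\mu_t}-\vx^\star}^2$, and $L\norm{\vx^{\mu_t}-\vx^\star}\big(\sqrt{\tfrac{\Delta^{\mu_t}}{\beta(K+1)}}+\sqrt{\tfrac{\Delta^{\mu_t}}{\beta}}\big)$. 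I would then plug in Lemma~\ref{lm:bd_xmu_x*}(i), $\norm{\vx^{\mu_t}-\vx^\star}\le(\tfrac mc\mu_t)^{1/\xi}=(\tfrac mc\mu_{-1})^{1/\xi}\delta^{(t+1)/\xi}$, bound $\sqrt{\Delta^{\mu_t}/(\beta(K+1))}\le\sqrt{\Delta^{\mu_t}/\beta}$, use $\delta<1$ so that $\delta^{2(t+1)/\xi}\le\delta^{(t+1)/\xi}$, and collect all $\mu_t$-dependent contributions into the single term $LM_1(\tfrac mc\mu_{-1})^{1/\xi}\delta^{(t+1)/\xi}$ with $M_1=(\tfrac mc\mu_{-1})^{1/\xi}+2\sqrt{\Delta^{\mu_t}/\beta}+M/L$; dividing by $c$ and taking $\xi$-th roots yields item~1. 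The gap-function statement in item~1 then follows at once from Lemma~\ref{lm:dg_upper_bound}, $\mathcal{G}(\vx^{(t)}_K,\mathcal C)\le M_0\norm{\vx^{(t)}_K-\vx^\star}$ with $M_0=DL+M$.

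For item~2, under \emph{global} $\xi$-monotonicity on $\mathcal{C}_=$ I would use~\eqref{eq:avg_x} directly --- $c\norm{\hat{\vx}^{(t)}_{K}-\vx^{\mu_t}}^{\xi}\le\Delta^{\mu_t}/K$ --- and combine it with Lemma~\ref{lm:bd_xmu_x*}(i) by the triangle inequality: $\norm{\hat{\vx}^{(t)}_K-\vx^\star}\le\norm{\hat{\vx}^{(t)}_K-\vx^{\mu_t}}+\norm{\vx^{\mu_t}-\vx^\star}\le(\Delta^{\mu_t}/(cK))^{1/\xi}+(\tfrac mc\mu_t)^{1/\xi}$. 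Since $\hat{\vx}^{(t)}_K$ is a convex combination of iterates lying in $\mathcal{C}_=$ and is bounded, the proof of Lemma~\ref{lm:dg_upper_bound} applies verbatim to it, giving the stated bound on $\mathcal{G}(\hat{\vx}^{(t)}_K,\mathcal C)$.

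I expect the main obstacle to be the final bookkeeping rather than anything conceptual: matching the heterogeneous error terms ($m\mu_t$, $M\norm{\vx^{\mu_t}-\vx^\star}$, $L\norm{\vx^{\mu_t}-\vx^\star}^2$, and the cross terms with $\sqrt{\Delta^{\mu_t}/\beta}$) into the compact form controlled by $M_1$. The delicate point is $m\mu_t$: it is absorbed into the slack built into $M_1$, namely the fact that the cross-term involving $\sqrt{\Delta^{\mu_t}/(\beta(K+1))}$ is a $K\to\infty$ higher-order correction, so effectively only one copy of $L\norm{\vx^{\mu_t}-\vx^\star}\sqrt{\Delta^{\mu_t}/\beta}$ is needed while $M_1$ allocates two --- and $m\mu_t\le L\norm{\vx^{\mu_t}-\vx^\star}\sqrt{\Delta^{\mu_t}/\beta}$ holds once $\mu_t^{\,1-1/\xi}$ is smaller than $\tfrac Lm(\tfrac mc)^{1/\xi}\sqrt{\Delta^{\mu_t}/\beta}$, which is true for $\mu_{-1}$ sufficiently small since $\mu_t^{\,1-1/\xi}\downarrow 0$ while $\Delta^{\mu_t}$ stays bounded away from $0$ (again by Proposition~\ref{prop:CP}). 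This is the only place the ``sufficiently small $\mu_{-1}$'' hypothesis is genuinely used; as in the remark preceding this subsection, it may be dropped by starting the iteration count a few outer loops in, so the overall guarantee remains parameter-free.
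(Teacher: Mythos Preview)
Your proposal is correct and follows essentially the same route as the paper's own proof: the same cancellation of the barrier term via the two estimates of Lemma~\ref{lm:estimate}, the same application of Lemma~\ref{lm:bd_xmu_x*}(i) to convert $\norm{\vx^{\mu_t}-\vx^\star}$ into a $\delta^{(t+1)/\xi}$ factor, and for item~2 the same triangle-inequality combination of~\eqref{eq:avg_x} with the central-path bound. You are in fact more careful than the paper about the stray $m\mu_t$ term --- the paper silently drops it when passing to the $M_1$ form, whereas you correctly identify that absorbing it into the slack of $M_1$ is exactly where the ``sufficiently small $\mu_{-1}$'' hypothesis is used.
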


\begin{proof}

By the star-$\xi$-monotonicity of course $F$, using \eqref{eq_app:est_1} and \eqref{eq_app:est_2} in Lemma~\ref{lm:estimate}, we have
\begin{align*}
    &c\norm{\vx_{K+1}^{(t)}-\vx^\star}^\xi\\
    \leq&(F(\vx^{(t)}_K)-F(\vx^\star))^\intercal(\vx_{K+1}^{(t)}-\vx^\star)\\
    =&F(\vx^{(t)}_K)^\intercal(\vx_{K+1}^{(t)}-\vx^\star)+\mu_t(B(\vy_{K+1}^{(t)})-B(\vy^{\mu_t}))\\
    &-(F(\vx^\star)^\intercal(\vx_{K+1}^{(t)}-\vx^\star)+\mu_t(B(\vy_{K+1}^{(t)})-B(\vy^{\mu_t})))\\
    \leq&\lVert F(\vx^{(t)}_K)^\intercal(\vx_{K+1}^{(t)}-\vx^\star)+\mu_t(B(\vy_{K+1}^{(t)})-B(\vy^{\mu_t})\rVert\\
    &+\lVert F(\vx^\star)^\intercal(\vx_{K+1}^{(t)}-\vx^\star)+\mu_t(B(\vy_{K+1}^{(t)})-B(\vy^{\mu_t})\rVert\\
    \leq& \frac{\Delta^{\mu_t}}{K+1}+\frac{2\Delta^{\mu_t}}{\sqrt{K+1}}+\lVert \vlmd^{\mu_t} \rVert \sqrt{\frac{\Delta^{\mu_t}}{\beta \left( K+1 \right)}}+M\norm{\vx^{\mu_t}-\vx^\star}\\
    &+\frac{\Delta^{\mu_t}}{K+1}+\frac{2\Delta^{\mu_t}}{\sqrt{K+1}}+\lVert \vlmd^{\mu_t} \rVert \sqrt{\frac{\Delta^{\mu_t}}{\beta \left( K+1 \right)}}+m{\mu_t}\\
    &+L\norm{\vx^{\mu_t}-\vx^\star}\left(\norm{\vx^{\mu_t}-\vx^\star}+\sqrt{\frac{\Delta^{\mu_t}}{\beta \left( K+1 \right)}}+\sqrt{\frac{\Delta^{\mu_t}}{\beta}}\right)\\
    \leq& 2\left(\frac{\Delta^{\mu_t}}{K+1}+\frac{2\Delta^{\mu_t}}{\sqrt{K+1}}+\lVert \vlmd^{\mu_t} \rVert \sqrt{\frac{\Delta^{\mu_t}}{\beta \left( K+1 \right)}}\right)\\
    &+L\norm{\vx^{\mu_t}-\vx^\star}\left(\norm{\vx^{\mu_t}-\vx^\star}+\sqrt{\frac{\Delta^{\mu_t}}{\beta \left( K+1 \right)}}+\sqrt{\frac{\Delta^{\mu_t}}{\beta}}+\frac{M}{L}\right)+m\mu_t\\
    \leq&2\left(\frac{\Delta^{\mu_t}}{K+1}+\frac{2\Delta^{\mu_t}}{\sqrt{K+1}}+\lVert \vlmd^{\mu_t} \rVert \sqrt{\frac{\Delta^{\mu_t}}{\beta \left( K+1 \right)}}\right)\\
    &+L\left(\frac{m}{c}\mu_t\right)^{\frac{1}{\xi}}\left(\left(\frac{m}{c}\mu_t\right)^{\frac{1}{\xi}}+\sqrt{\frac{\Delta^{\mu_t}}{\beta \left( K+1 \right)}}+\sqrt{\frac{\Delta^{\mu_t}}{\beta}}+\frac{M}{L}\right)
\end{align*}
where in the last inequality we use Lemma~\ref{lm:bd_xmu_x*} (i).

We let $M_1\triangleq\left(\frac{m}{c}\mu_{-1}\right)^{\frac{1}{\xi}}+2\sqrt{\frac{\Delta^{\mu_t}}{\beta}}+\frac{M}{L}$, then we have
\begin{equation*}
    \norm{\vx_{K+1}^{(t)}-\vx^\star}\leq
    \left(\frac{2}{c}\left(\frac{\Delta^{\mu_t}}{K+1}+\frac{2\Delta^{\mu_t}}{\sqrt{K+1}}+\lVert \vlmd^{\mu_t} \rVert \sqrt{\frac{\Delta^{\mu_t}}{\beta \left( K+1 \right)}}\right)+\frac{LM_1}{c}\left(\frac{m}{c}\mu_t\right)^{\frac{1}{\xi}}\right)^\frac{1}{\xi}\,.
\end{equation*}
By Lemma~\ref{lm:dg_upper_bound}, we obtain
$$\mathcal{G}(\vx^{(t)}_{K+1}, \mathcal{C}) \leq M_0\left(\frac{2}{c}\left(\frac{\Delta^{\mu_t}}{K+1}+\frac{2\Delta^{\mu_t}}{\sqrt{K+1}}+\lVert \vlmd^{\mu_t} \rVert \sqrt{\frac{\Delta^{\mu_t}}{\beta \left( K+1 \right)}}\right)+\frac{LM_1}{c}\left(\frac{m}{c}\mu_{-1}\right)^{\frac{1}{\xi}}\delta^{\frac{t+1}{\xi}}\right)^\frac{1}{\xi}\,.$$

If in addition $F$ is $\xi$-monotone on $C_=$, then from \eqref{eq:avg_x} we have:
\begin{equation*}
    \lVert \hat{\vx}_{K+1}^{(t)}-\vx^\star\rVert\leq \left(\frac{\Delta^{\mu_t}}{c(K+1)}\right)^{\frac{1}{\xi}}+\left(\frac{m}{c}\mu_t\right)^{\frac{1}{\xi}}\,.
\end{equation*}

Again by Lemma~\ref{lm:dg_upper_bound}, we have
$\mathcal{G}(\hat\vx^{(t)}_{K+1}, \mathcal{C}) \leq M_0\left(\left(\frac{\Delta^{\mu_t}}{c(K+1)}\right)^{\frac{1}{\xi}}+\left(\frac{m}{c}\mu_t\right)^{\frac{1}{\xi}}\right)$.
\end{proof}

\begin{thm}[Complete convergence rate for star-$\xi$-monotone operator]\label{thm:complete_rate_monotone}
Given an operator $F: \mathcal{X}\to \R^n$, assume:
(i) F is monotone and $L$-smooth on $\mathcal{C}_=$, as per Def.~\ref{def:monotone},~\ref{asm:firstOrderSmoothness}; (ii) either $F$ is strictly monotone on $\mathcal{C}$ or one of $\varphi_i$ is strictly convex;
and (iii) $\underset{\boldsymbol{x}\in S\backslash\left\{ \boldsymbol{x}^\star \right\}}{inf}F\left( \boldsymbol{x} \right) ^\intercal\frac{\boldsymbol{x}-\boldsymbol{x}^\star}{\lVert \boldsymbol{x}-\boldsymbol{x}^\star \rVert}=a>0$, where $S \equiv \hat{\mathcal{C}}_r$ or $\tilde{\mathcal{C}}_s$. 
Let $\Delta^{\mu_t} \triangleq \frac{1}{\beta}\lVert \vlmd_0-\vlmd^{\mu_t} \rVert ^2+\beta \lVert \vy_0-\vy^{\mu_t} \rVert ^2$.

Let $\vx_K^{(t)}$ and $\hat{\vx}_K^{(t)} \triangleq \frac{1}{K}\sum_{k=1}^K \vx_k^{(t)}$ denote the last and average iterate of Algorithm~\ref{alg:acvi}, respectively.
Then there exists $K_0\in \NN$, $K_0$ depends on $r$ or $s$, $s.t. \forall t \in [T]$, $\forall K>K_0$, we have that:
\begin{enumerate}[leftmargin=*]
\item $ \norm{\vx_{K}^{(t)}-\vx^\star}=\frac{1}{a}\left(\frac{\Delta^{\mu_t}}{K}+\frac{2\Delta^{\mu_t}}{\sqrt{K}}+\lVert \vlmd^{\mu_t} \rVert \sqrt{\frac{\Delta^{\mu_t}}{\beta K}}\right)+\left(\frac{mM}{a^2}+\frac{B(\vy^{\mu_t})-B^\star}{a}\right)\mu_{-1}\delta^{t+1}$, 
and 

$\mathcal{G}(\vx^{(t)}_K, \mathcal{C}) \leq M_0\left(\frac{1}{a}\left(\frac{\Delta^{\mu_t}}{K}+\frac{2\Delta^{\mu_t}}{\sqrt{K}}+\lVert \vlmd^{\mu_t} \rVert \sqrt{\frac{\Delta^{\mu_t}}{\beta  K}}\right)+\left(\frac{mM}{a^2}+\frac{B(\vy^{\mu_t})-B^\star}{a}\right)\mu_{-1}\delta^{t+1}\right)$.

\item If in addition $\underset{\boldsymbol{x}\in S \setminus \{ \boldsymbol{x}^\star\}}{inf}F (\vx^\star) ^\intercal 
\frac{ \boldsymbol{x}-\boldsymbol{x}^\star}{ 
\norm{ \boldsymbol{x}-\boldsymbol{x}^\star } 
}=b>0$ (with $S \equiv \hat{\mathcal{C}}_r$ or $\tilde{\mathcal{C}}_s$), then $\norm{ \hat{\vx}_K^{(t)}-\vx^\star } \leq  \frac{1}{b}\left(\frac{\Delta^{\mu_t}}{2K}+\frac{2\sqrt{\beta \Delta^{\mu_t}}\lVert \vlmd^{\mu_t} \rVert}{\beta \left( K \right)}\right)+\frac{M_3}{b}\mu_{-1}\delta^{t+1}$,
 and
$\mathcal{G}(\hat\vx^{(t)}_K, \mathcal{C}) \leq M_0\left(\frac{1}{b}\left(\frac{\Delta^{\mu_t}}{2K}+\frac{2\sqrt{\beta \Delta^{\mu_t}}\lVert \vlmd^{\mu_t} \rVert}{\beta \left( K \right)}\right)+\frac{M_3}{b}\mu_{-1}\delta^{t+1}\right)$,
\end{enumerate}
where $M=\sup_{\vx\in \mathcal{C}}\norm{F(\vx)}$, $M_0=DL+M$ is a linear function of $L$, see the proof of Lemma~\ref{lm:dg_upper_bound} in App.~\ref{app:proofs}, and $M_3\triangleq m+\frac{Lm}{a}\left(\frac{m}{a}\mu_{-1}+\frac{2\sqrt{\beta \Delta^{\mu_t}}}{\beta \left( K+1 \right)}+\sqrt{\frac{\Delta^{\mu_t}}{\beta}}\right)+B(\vy^{\mu_t})-B^\star$.
\end{thm}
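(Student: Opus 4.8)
The plan is to follow the template of the proof of Theorem~\ref{thm:complete_rate_xi}, with star-$\xi$-monotonicity replaced by the acute-angle hypothesis~(iii) and with the outer-loop decay $\mu_t=\mu_{-1}\delta^{t+1}$ carried through to the final bound. First I would collect the standard ingredients: by Proposition~\ref{prop:CP} and compactness of $\mathcal{C}$ the central path exists, $\vx^{\mu_t}\to\vx^\star$ and $\vlmd^{\mu_t}\to\vlmd^\star$, and, writing $B(\vx)\triangleq-\sum_{i=1}^m\log(-\varphi_i(\vx))$ and $B^\star\triangleq\inf_{\vx\in\mathcal{C}_<\cap\mathcal{C}_=}B(\vx)$ (finite, attained at the analytic center), one has the classical central-path estimate $\mu_t B(\vy^{\mu_t})\to0$; moreover $\Delta^{\mu_t}$ and $\lVert\vlmd^{\mu_t}\rVert$ are bounded uniformly in $t$, since the warm-start $(\vy_0^{(t+1)},\vlmd_0^{(t+1)})=(\vy_K^{(t)},\vlmd_K^{(t)})$ together with \eqref{3.23} keeps the $\vlmd$-iterates in a fixed bounded set. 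Next I would locate the iterates in $S$: since each $\varphi_i\in C^1$ and $\mathcal{C}$ is compact, $\varphi$ is Lipschitz on the relevant region, so combining $\varphi(\vy_{K+1}^{(t)})\le\boldsymbol0$, $\vx_{K+1}^{(t)}\in\mathcal{C}_=$, and \eqref{eqapp:dxy} forces $\varphi(\vx_{K+1}^{(t)})\le r\ve$ once $K\ge K_0$, i.e.\ $\vx_{K+1}^{(t)}\in\hat{\mathcal{C}}_r$; the same holds for $\hat{\vx}_{K+1}^{(t)}$ via \eqref{eq_app:dist_avg_xy} (using $\varphi(\hat{\vy}_{K+1}^{(t)})\le\boldsymbol0$ by convexity), and $\vx^{\mu_t}\in\hat{\mathcal{C}}_r$ trivially as it is strictly feasible. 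Because $\Delta^{\mu_t}$ is uniformly bounded, $K_0$ depends only on $r$ (resp.\ $s$), $\beta$, and the Lipschitz constant of $\varphi$.

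For the last iterate, assume w.l.o.g.\ $\vx_{K+1}^{(t)}\ne\vx^\star$; hypothesis~(iii) applied at $\vx_{K+1}^{(t)}\in S$ gives $a\lVert\vx_{K+1}^{(t)}-\vx^\star\rVert\le F(\vx_{K+1}^{(t)})^\intercal(\vx_{K+1}^{(t)}-\vx^\star)$. I would then write
\[
F(\vx_{K+1}^{(t)})^\intercal(\vx_{K+1}^{(t)}-\vx^\star)=\Big[F(\vx_{K+1}^{(t)})^\intercal(\vx_{K+1}^{(t)}-\vx^\star)+\mu_t\big(B(\vy_{K+1}^{(t)})-B(\vy^{\mu_t})\big)\Big]-\mu_t\big(B(\vy_{K+1}^{(t)})-B(\vy^{\mu_t})\big),
\]
bound the bracketed quantity by \eqref{eq_app:est_2} of Lemma~\ref{lm:estimate}, insert $\lVert\vx^{\mu_t}-\vx^\star\rVert\le\tfrac{m}{a}\mu_t$ from Lemma~\ref{lm:bd_xmu_x*}(ii), and bound the remaining term using $B(\vy_{K+1}^{(t)})\ge B^\star$, so that $-\mu_t(B(\vy_{K+1}^{(t)})-B(\vy^{\mu_t}))\le\mu_t(B(\vy^{\mu_t})-B^\star)$. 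Dividing by $a$ and substituting $\mu_t=\mu_{-1}\delta^{t+1}$ yields exactly the stated bound with coefficient $\tfrac{mM}{a^2}+\tfrac{B(\vy^{\mu_t})-B^\star}{a}$; applying Lemma~\ref{lm:dg_upper_bound}, namely $\mathcal{G}(\vx_{K+1}^{(t)},\mathcal{C})\le M_0\lVert\vx_{K+1}^{(t)}-\vx^\star\rVert$ with $M_0=DL+M$, gives the gap-function statement.

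For the averaged iterate I would proceed identically under the extra hypothesis $\inf_{\vx\in S\setminus\{\vx^\star\}}F(\vx^\star)^\intercal\tfrac{\vx-\vx^\star}{\lVert\vx-\vx^\star\rVert}=b>0$, which at $\hat{\vx}_{K+1}^{(t)}\in S$ gives $b\lVert\hat{\vx}_{K+1}^{(t)}-\vx^\star\rVert\le F(\vx^\star)^\intercal(\hat{\vx}_{K+1}^{(t)}-\vx^\star)$. Splitting off $\mu_t(B(\hat{\vy}_{K+1}^{(t)})-B(\vy^{\mu_t}))$, estimating the remainder by \eqref{eq_app:est_avg} of Lemma~\ref{lm:estimate}, using $\lVert\vx^{\mu_t}-\vx^\star\rVert\le\tfrac{m}{a}\mu_t$ once more, and bounding the barrier term below by $B^\star$, the $\mu_t$-proportional terms collect into the constant $M_3$ of the statement; dividing by $b$ and invoking Lemma~\ref{lm:dg_upper_bound} gives the averaged-iterate bound and its gap-function counterpart.

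The main obstacle is the barrier term $\mu_t(B(\vy_{K+1}^{(t)})-B(\vy^{\mu_t}))$: since $B$ blows up at the boundary of $\mathcal{C}_\le$ and $\vx^\star$ lies on it (hypothesis~(iii) forces $F(\vx^\star)\ne\boldsymbol0$, hence active constraints), $B(\vy^{\mu_t})$ itself diverges as $t\to\infty$, so the term $\big(B(\vy^{\mu_t})-B^\star\big)\mu_{-1}\delta^{t+1}$ appearing in the bound is genuinely $t$-dependent; it vanishes only through the central-path estimate $\mu_t B(\vy^{\mu_t})\to0$ together with the uniform lower bound $B\ge B^\star$, and isolating these as the right ingredients is the delicate part. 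A secondary technical point is ensuring the averaged iterate lands in $S$ with a threshold $K_0$ uniform in $t$; this uses \eqref{eq_app:dist_avg_xy} (equivalently $\sum_k\lVert\vlmd_{k+1}-\vlmd_k\rVert^2\le\beta\Delta^{\mu_t}$ with Cauchy--Schwarz) and the uniform boundedness of $\Delta^{\mu_t}$.
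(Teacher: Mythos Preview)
Your proposal follows essentially the same route as the paper: invoke hypothesis~(iii) at the iterate, split off the barrier increment $\mu_t(B(\vy_{K+1}^{(t)})-B(\vy^{\mu_t}))$, bound the bracketed expression via Lemma~\ref{lm:estimate} (specifically \eqref{eq_app:est_2} for the last iterate and \eqref{eq_app:est_avg} for the average), feed in Lemma~\ref{lm:bd_xmu_x*}(ii), and control the residual barrier term by a lower bound $B^\star$. The paper phrases the last-iterate step as a two-case argument ($B(\vy_{K+1}^{(t)})\ge B(\vy^{\mu_t})$ versus $\le$), but your single inequality $-\mu_t(B(\vy_{K+1}^{(t)})-B(\vy^{\mu_t}))\le\mu_t(B(\vy^{\mu_t})-B^\star)$ subsumes both cases and lands on the same final bound.

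One small correction: your definition $B^\star\triangleq\inf_{\vx\in\mathcal{C}_<\cap\mathcal{C}_=}B(\vx)$ does not suffice, because the $\vy$-iterates produced by line~9 of Algorithm~\ref{alg:acvi} lie in $\mathcal{C}_<$ but not in $\mathcal{C}_=$, so $B(\vy_{K+1}^{(t)})\ge B^\star$ need not hold with that choice. The paper instead takes $B^\star$ to be the infimum of $B$ over a fixed ball around $\vx^\star$ (of radius $M_2\triangleq 3\sqrt{\Delta^{\mu_t}/\beta}$), relying on \eqref{eq_app:bd_y} to guarantee that $\vy_{K+1}^{(t)}$ stays in that ball; since $B$ is continuous on $\mathcal{C}_<$ and the ball is compact, this infimum is finite. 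Your remark that $B(\vy^{\mu_t})\to+\infty$ while $\mu_tB(\vy^{\mu_t})\to0$ is exactly the point needed to make Remark~\ref{rmk:complete_rate} meaningful, and is a useful observation the paper leaves implicit.
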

\begin{proof}

If the barrier term $B(\vy^{(t)}_{K+1})\geq B(\vy^{\mu_t})$, 
using \eqref{eq_app:est_2} in Lemma~\ref{lm:estimate}, we have
\begin{align*}
    &F(\vx^{(t)}_K)^\intercal(\vx_{K+1}^{(t)}-\vx^\star)\\
    \leq& F(\vx^{(t)}_K)^\intercal(\vx_{K+1}^{(t)}-\vx^\star)+\mu_t(B(\vy_{K+1}^{(t)})-B(\vy^{\mu_t}))\\
    \leq& \frac{\Delta^{\mu_t}}{K+1}+\frac{2\Delta^{\mu_t}}{\sqrt{K+1}}+\lVert \vlmd^{\mu_t} \rVert \sqrt{\frac{\Delta^{\mu_t}}{\beta \left( K+1 \right)}}+M\norm{\vx^{\mu_t}-\vx^\star}\,.
\end{align*}

If $B(\vy^{(t)}_{K+1})\leq B(\vy^{\mu_t})$, since $B$ is lower bounded in any compact set, and by $\eqref{eq_app:2}$ in Lemma~\ref{lm:bd_xmu_x*} we have
$\norm{\hat{\vx}_{K+1}^{(t)}-\vx^{\mu_t}}\leq \frac{2\sqrt{\beta \Delta^{\mu_t}}}{\beta \left( K+1 \right)}+\sqrt{\frac{\Delta^{\mu_t}}{\beta}}\leq 3\sqrt{\frac{\Delta^{\mu_t}}{\beta}}\triangleq M_2$, we let $B^\star=\inf_{\norm{\vx-\vx^\star}\leq M_2}B(\vx)$. Then we have
\begin{align*}
    &F(\vx^{(t)}_K)^\intercal(\vx_{K+1}^{(t)}-\vx^\star)\\
    \leq& \left| F(\vx^{(t)}_K)^\intercal(\vx_{K+1}^{(t)}-\vx^\star)+\mu_t(B(\vy_{K+1}^{(t)})-B(\vy^{\mu_t}))\right|+\mu_t(B(\vy^{\mu_t})-B^\star)\\
    \leq& \frac{\Delta^{\mu_t}}{K+1}+\frac{2\Delta^{\mu_t}}{\sqrt{K+1}}+\lVert \vlmd^{\mu_t} \rVert \sqrt{\frac{\Delta^{\mu_t}}{\beta \left( K+1 \right)}}+M\norm{\vx^{\mu_t}-\vx^\star}+\mu_t(B(\vy^{\mu_t})-B^\star)\,.
\end{align*}

Thus in both case we have
\begin{align*}
    &a\norm{\vx_{K+1}^{(t)}-\vx^\star}\\
    \leq& F(\vx^{(t)}_K)^\intercal(\vx_{K+1}^{(t)}-\vx^\star)\\
    \leq& \frac{\Delta^{\mu_t}}{K+1}+\frac{2\Delta^{\mu_t}}{\sqrt{K+1}}+\lVert \vlmd^{\mu_t} \rVert \sqrt{\frac{\Delta^{\mu_t}}{\beta \left( K+1 \right)}}+\frac{mM}{a}\mu_t+\mu_t(B(\vy^{\mu_t})-B^\star)\,.
\end{align*}
where in the first inequality assumption (iii) is used and in the last inequality Lemma~\ref{lm:bd_xmu_x*} is used. 
Thus we have:
\begin{equation*}
    \norm{\vx_{K+1}^{(t)}-\vx^\star}=\frac{1}{a}\left(\frac{\Delta^{\mu_t}}{K+1}+\frac{2\Delta^{\mu_t}}{\sqrt{K+1}}+\lVert \vlmd^{\mu_t} \rVert \sqrt{\frac{\Delta^{\mu_t}}{\beta \left( K+1 \right)}}\right)+\left(\frac{mM}{a^2}+\frac{B(\vy^{\mu_t})-B^\star}{a}\right)\mu_t\,.
\end{equation*}

And by Lemma~\ref{lm:dg_upper_bound}, we obtain
$$\mathcal{G}(\vx^{(t)}_{K+1}, \mathcal{C}) \leq M_0\left(\frac{1}{a}\left(\frac{\Delta^{\mu_t}}{K+1}+\frac{2\Delta^{\mu_t}}{\sqrt{K+1}}+\lVert \vlmd^{\mu_t} \rVert \sqrt{\frac{\Delta^{\mu_t}}{\beta \left( K+1 \right)}}\right)+\left(\frac{mM}{a^2}+\frac{B(\vy^{\mu_t})-B^\star}{a}\right)\mu_t\right)\,.$$

If in addition $\underset{\boldsymbol{x}\in S \setminus \{ \boldsymbol{x}^\star\}}{inf}F (\vx^\star) ^\intercal 
\frac{ \boldsymbol{x}-\boldsymbol{x}^\star}{ 
\norm{ \boldsymbol{x}-\boldsymbol{x}^\star } 
}=b>0$, then similarly, from \eqref{eq_app:est_avg} in Lemma~\ref{lm:estimate}, we have 

\begin{equation*}
    \begin{split}
        &b\norm{\hat{\vx}_{K+1}^{(t)}-\vx^\star}\\
        \leq&\left|F(\vx^{\star})^\intercal(\vx_{K+1}^{(t)}-\vx^\star)+\mu_t(B(\vy_{K+1}^{(t)})-B(\vy^{\mu_t}))\right|+\mu_t(B(\vy^{\mu_t})-B^\star)\\
        \leq& \frac{\Delta^{\mu_t}}{2\left( K+1 \right)}+\frac{2\sqrt{\beta \Delta^{\mu_t}}\lVert \vlmd^{\mu_t} \rVert}{\beta \left( K+1 \right)}+m{\mu_t}\\
        &+L\frac{m}{a}\mu_t\left(\frac{m}{a}\mu_t+\frac{2\sqrt{\beta \Delta^{\mu_t}}}{\beta \left( K+1 \right)}+\sqrt{\frac{\Delta^{\mu_t}}{\beta}}\right)+\mu_t(B(\vy^{\mu_t})-B^\star)\\
        =&\frac{\Delta^{\mu_t}}{2\left( K+1 \right)}+\frac{2\sqrt{\beta \Delta^{\mu_t}}\lVert \vlmd^{\mu_t} \rVert}{\beta \left( K+1 \right)}\\
        &+\left(m+\frac{Lm}{a}\left(\frac{m}{a}\mu_{-1}+\frac{2\sqrt{\beta \Delta^{\mu_t}}}{\beta \left( K+1 \right)}+\sqrt{\frac{\Delta^{\mu_t}}{\beta}}\right)+B(\vy^{\mu_t})-B^\star\right)\mu_t\,,
    \end{split}
\end{equation*}

Let $M_3\triangleq m+\frac{Lm}{a}\left(\frac{m}{a}\mu_{-1}+\frac{2\sqrt{\beta \Delta^{\mu_t}}}{\beta \left( K+1 \right)}+\sqrt{\frac{\Delta^{\mu_t}}{\beta}}\right)+B(\vy^{\mu_t})-B^\star$, then we have
\begin{equation*}
    \norm{\hat{\vx}_{K+1}^{(t)}-\vx^\star}
    \leq  \frac{1}{b}\left(\frac{\Delta^{\mu_t}}{2\left( K+1 \right)}+\frac{2\sqrt{\beta \Delta^{\mu_t}}\lVert \vlmd^{\mu_t} \rVert}{\beta \left( K+1 \right)}\right)+\frac{M_3}{b}\mu_t\,.
\end{equation*}

By Lemma~\ref{lm:dg_upper_bound}, we have
\begin{equation*}
    \mathcal{G}(\vx^{(t)}_{K+1}, \mathcal{C})
    \leq  \frac{1}{b}\left(\frac{\Delta^{\mu_t}}{2\left( K+1 \right)}+\frac{2\sqrt{\beta \Delta^{\mu_t}}\lVert \vlmd^{\mu_t} \rVert}{\beta \left( K+1 \right)}\right)+\frac{M_3}{b}\mu_t\,.
\end{equation*}
\end{proof}

\begin{rmk}\label{rmk:complete_rate}
In Theorem~\ref{thm:complete_rate_xi}, for any $t\geq \mathcal{O}(\ln{K})$, we have $\norm{\vx_K^{(t)}-\vx^\star}$, $\mathcal{G}(\vx^{(t)}_K, \mathcal{C}) \leq \mathcal{O} (\frac{1}{K^{1/(2\xi)}})$, and $\norm{ \hat{\vx}_K^{(t)}-\vx^\star }$, $\mathcal{G}(\hat\vx^{(t)}_K, \mathcal{C})\leq \mathcal{O} (\frac{1}{K^{1/\xi}})$. Similarly, in Theorem~\ref{thm:complete_rate_monotone}, for any $t\geq \mathcal{O}(\ln{K})$, we have $\norm{\vx_K^{(t)}-\vx^\star}$, $\mathcal{G}(\vx^{(t)}_K, \mathcal{C}) \leq \mathcal{O} (\frac{1}{\sqrt{K}})$, and $\norm{ \hat{\vx}_K^{(t)}-\vx^\star }$, $\mathcal{G}(\hat\vx^{(t)}_K, \mathcal{C})\leq \mathcal{O} (\frac{1}{K})$.
\end{rmk}

\subsection{Discussion on Algorithm~\ref{alg:acvi}}\label{app:alg_discussion}

\noindent\textbf{Advantages and disadvantages of Algorithm~\ref{alg:acvi}.}
In Algorithm~\ref{alg:acvi}, the update of $\vx$ (step~\ref{alg-ln:x-sol}) requires solving \eqref{eq:w_equation}. Our method is especially suitable for problems where \eqref{eq:w_equation} is easy to solve analytically. 
This includes the class of affine variational inequalities, low-dimensional problems, and when optimization variables represent probabilities, for example.

For problems where \eqref{eq:w_equation} is hard to solve---for example, min-max optimization problems in GANs---one could use other \emph{unconstrained} methods like GDA or EG methods (\emph{without} projection) so as to solve $VI(\R^n, G)$, where $G$ is defined in \eqref{eqapp:G}. 
Algorithm~\ref{alg:acvi_inner_optimizer} describes ACVI when using an unconstrained solver for the inner problems.
We observe that Algorithm~\ref{alg:acvi_inner_optimizer} outperforms the projection-based baseline algorithms, see for example  Fig.~\ref{fig:mnist2_summary}.

Note that when there are constraints, the projection-based methods such as GDA, EG, OGDA etc. require solving a quadratic programming problem in each iteration (or twice per iteration for EG). This problem is often nontrivial and solving it may require using an interior point method or variants (such as the Frank-Wolfe algorithm). 
Because of this, Algorithm~\ref{alg:acvi} can be seen as an orthogonal approach to projection-based methods, or in other words, as a complementary tool to solve~\eqref{eq:cvi} and particularly relevant when the constraints are non-trivial.

\noindent\textbf{ACVI with only equality or inequality constraints.}
 If there are no equality constraints, then $\mathcal{C}_=$ becomes $\RR^n$. In this case, we have that $\vx_{k+1}^{(t)}$ is the solution of: 
\begin{equation*}
    \vx+\frac{1}{\beta}(F(\vx)+\vlmd_k^{(t)})-\vy_k^{(t)}=\boldsymbol{0} \,.
\end{equation*}

When there are no inequality constraints, we let $\vy_k=\vx_k$ and $\vlmd_k=\boldsymbol{0}$ for every $k$, and we can remove the outer loop. Thus, Algorithm~\ref{alg:acvi} can be simplified to update only one variable $\vx$ each iteration with the following updating rule:
\begin{equation*}
    \vx_{k+1} \quad \textit{ is the unique solution of } \quad \vx+\frac{1}{\beta}\mP_c F(\vx)-\mP_c \vx_k-\vd_c=\boldsymbol{0} \,.
\end{equation*}

\clearpage
\section{Variant of the ACVI Algorithm (v-ACVI)}\label{app:variant}

The presented approach of combining interior point methods with ADMM can be used as a framework to derive additional algorithms that may be more suitable for some specific problems. 
More precisely, observe from Eq.~\ref{eq:cvi_subproblem1_admm_format} that we could also consider a different splitting than that in \S~\ref{sec:acvi}.
Following this approach, we present a variant of Algorithm~\ref{alg:acvi} and discuss its advantages and disadvantages relative to Algorithm~\ref{alg:acvi}. 

\subsection{Introduction of the variant ACVI}

\noindent\textbf{Deriving the v-ACVI algorithm.}
By considering a different splitting in \eqref{eq:cvi_subproblem1_admm_format} we get:
\begin{equation}
\begin{aligned}
\begin{cases}
\underset{\vx,\vy}{\min}F (  \vw  )  ^\intercal\vx-\mu \sum\limits_{i=1}^m{\log  \big(  -\varphi _i (  \vx  )   \big)  +\mathds{1} [  \mC\vy=\vd  ] }\\
s.t. \qquad \vx=\vy 
\end{cases}  \,, \\
\text{ where: } \quad
\mathds{1} [  \mC\vy=\vd  ] = 
\begin{cases}
0, & \text{if } \mC\vy=\vd \\
+\infty, & \text{if } \mC\vy\ne \vd \\
\end{cases}\,.
\label{eq-app:cvi_subproblem1_admm_format2}
\end{aligned}
\end{equation} 
The augmented Lagrangian of \eqref{eq-app:cvi_subproblem1_admm_format2} is thus:
\begin{equation} \tag{AL-CVI}\label{eq-app:cvi_aug_lag}
 \mathcal{L}_{\beta} (  \vx,\vy,\vlmd  )  =F (  \vw  )  ^\intercal\vx-\mu \sum_{i=1}^m{\log  (  -\varphi _i (  \vx  )   ) }+\mathds{1} (  \mC\vy=\vd  )  +\langle \left. \vlmd ,\vx-\vy \rangle \right. +\frac{\beta}{2} \norm{\vx-\vy}^{2} 
\end{equation}
where $\beta>0$ is the penalty parameter.
We have that for $\vx$ at step $k+1$:
\begin{equation}
\begin{aligned}
\vx_{k+1}&=\arg \underset{\vx}{\min} \text{ } \mathcal{L}_{\beta} (  \vx,\vy_k,\vlmd_k  ) \\
&=\arg \underset{\vx}{\min}\frac{1}{2}\norm{\vx-\vy_k+\frac{1}{\beta} (  F (  \vw  )  +\vlmd_k  )}^{2}-\frac{\mu}{\beta} \sum_{i=1}^m{\log  (  -\varphi _i \big(  \vx  )  \big) } 
\end{aligned}
\label{eq-app:x'}
\end{equation}

The following proposition ensures the existence and uniqueness of $x_{k+1}$ in $\mathcal{C}_<$. i.e. We show that $\vx_{k+1}$ is the unique solution in $\mathcal{C}_<$ of the following closed-form equation (see App.~\ref{app:proof_of_unique } for its proof):
\begin{equation}\tag{X-CF}\label{eq-app:x_closed_form}
\vx-\vy_k+\frac{1}{\beta} (  F (  \vw  )  +\vlmd _k  )  -\frac{\mu}{\beta}\sum_{i=1}^m{\frac{\nabla \varphi _i (  \vx  ) }{\varphi _i (  \vx  ) }}=\boldsymbol{0} \,.
\end{equation}

\begin{prop}[unique solution]
	The problem \eqref{eq-app:x_closed_form} has a solution in $\mathcal{C}_<$ and the solution is unique.\label{app-prop:unique}
\end{prop}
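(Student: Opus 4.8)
\textbf{Proof proposal for Proposition~\ref{app-prop:unique}.}

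The plan is to recognize \eqref{eq-app:x_closed_form} as the first-order optimality condition of the strongly convex minimization problem \eqref{eq-app:x'} over the open convex domain $\mathcal{C}_<$, and to argue existence and uniqueness via coercivity and strict convexity. Concretely, define
\[
\Phi(\vx) \triangleq \frac{1}{2}\norm{\vx-\vy_k+\tfrac{1}{\beta}\big(F(\vw)+\vlmd_k\big)}^2 - \frac{\mu}{\beta}\sum_{i=1}^m \log\big(-\varphi_i(\vx)\big),
\]
which is well-defined and smooth on $\mathcal{C}_<$. The gradient of $\Phi$ is precisely the left-hand side of \eqref{eq-app:x_closed_form}, since $\nabla\big({-}\log(-\varphi_i(\vx))\big) = \nabla\varphi_i(\vx)/\varphi_i(\vx)$ for $\varphi_i(\vx)<0$. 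Thus solving \eqref{eq-app:x_closed_form} in $\mathcal{C}_<$ is equivalent to finding a stationary point of $\Phi$ on $\mathcal{C}_<$.

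First I would establish strict convexity: the quadratic term contributes $\mI$ to the Hessian, and each $-\log(-\varphi_i)$ is convex (composition of the convex decreasing function $-\log(-t)$ with the convex $\varphi_i$), so $\nabla^2\Phi(\vx) \succeq \mI \succ 0$ on $\mathcal{C}_<$; hence $\Phi$ is $1$-strongly convex on $\mathcal{C}_<$, which gives uniqueness of any minimizer and, since \eqref{eq-app:x_closed_form} is $\nabla\Phi=\vzero$, uniqueness of the solution. Next I would establish existence of a minimizer in the open set $\mathcal{C}_<$. Take any $\vx_0\in\mathcal{C}_<$ (nonempty since $\textit{int}\,\mathcal{C}\neq\emptyset$) and consider the sublevel set $\mathcal{S}_0\triangleq\{\vx\in\mathcal{C}_< : \Phi(\vx)\le\Phi(\vx_0)\}$. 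The key observation is that $\mathcal{S}_0$ is bounded (the quadratic term forces $\norm{\vx}$ bounded, using that $-\log(-\varphi_i(\vx))$ is bounded below on bounded sets by, say, $-\log(\sup_{\vx\in\mathcal{C}}(-\varphi_i(\vx)))$ — indeed $\mathcal{C}$ compact gives $\varphi_i$ bounded, so $-\varphi_i$ bounded above, so $-\log(-\varphi_i)$ bounded below) and closed relative to $\mathcal{C}_<$; moreover any sequence in $\mathcal{S}_0$ approaching the boundary $\partial\mathcal{C}_<$ has some $\varphi_i(\vx)\to 0^-$, so $-\log(-\varphi_i(\vx))\to+\infty$ while all other terms stay bounded below, forcing $\Phi\to+\infty$ and contradicting membership in $\mathcal{S}_0$. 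Hence $\mathcal{S}_0$ is a compact subset of the open set $\mathcal{C}_<$, so the continuous $\Phi$ attains its minimum on $\mathcal{S}_0$ at an interior point $\vx_{k+1}$ of $\mathcal{C}_<$, where $\nabla\Phi(\vx_{k+1})=\vzero$, i.e., $\vx_{k+1}$ solves \eqref{eq-app:x_closed_form}.

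The main obstacle is the boundary-blowup (barrier) argument: one must carefully verify that $\Phi$ is coercive in the sense of diverging to $+\infty$ both as $\norm{\vx}\to\infty$ within $\mathcal{C}_<$ and as $\vx$ approaches the relative boundary of $\mathcal{C}_<$, and that these two mechanisms together confine the sublevel set to a compact subset of the open domain. This requires using the compactness of $\mathcal{C}$ to uniformly bound the $\varphi_i$ from above (so that at least the logarithmic terms cannot escape to $-\infty$ fast enough to overpower the quadratic), and the convexity/continuity of the $\varphi_i$ to control behavior near $\partial\mathcal{C}_<$. Once compactness of $\mathcal{S}_0$ inside $\mathcal{C}_<$ is in hand, Weierstrass plus strong convexity finish both existence and uniqueness cleanly; alternatively, one could invoke a standard result on self-concordant barriers or cite Theorem~2.3.3 of~\citep{facchinei2003finite} applied to the strongly monotone map $\vx\mapsto\nabla\Phi(\vx)$ restricted appropriately, but the direct variational argument is the most transparent.
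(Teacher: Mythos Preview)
Your overall strategy---recast \eqref{eq-app:x_closed_form} as $\nabla\Phi=\vzero$ for the strongly convex $\Phi$ of \eqref{eq-app:x'}, then argue existence via coercivity and uniqueness via strong convexity---is exactly the route the paper takes. The uniqueness half and the barrier-blowup argument at $\partial\mathcal{C}_<$ are fine.

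The gap is in your boundedness argument for the sublevel set $\mathcal{S}_0$. You claim the quadratic dominates because ``$-\log(-\varphi_i(\vx))$ is bounded below on bounded sets by $-\log(\sup_{\vx\in\mathcal{C}}(-\varphi_i(\vx)))$,'' appealing to compactness of $\mathcal{C}$. But the domain here is $\mathcal{C}_<$, not $\mathcal{C}$, and $\mathcal{C}_<$ need not be bounded (only $\mathcal{C}=\mathcal{C}_\le\cap\mathcal{C}_=$ is assumed compact). For $\vx\in\mathcal{C}_<$ with $\lVert\vx\rVert$ large, $-\varphi_i(\vx)$ can exceed $\sup_{\vx\in\mathcal{C}}(-\varphi_i(\vx))$, so your lower bound fails precisely where you need it; the argument is circular.

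The correct fix---and this is what the paper does in its Lemma preceding the proof---is to use \emph{convexity} of the barrier $B(\vx)=-\tfrac{\mu}{\beta}\sum_i\log(-\varphi_i(\vx))$ rather than compactness of $\mathcal{C}$: for any fixed $\vx_0\in\mathcal{C}_<$,
\[
B(\vx)\ \ge\ B(\vx_0)+\nabla B(\vx_0)^\intercal(\vx-\vx_0)\qquad\forall\,\vx\in\mathcal{C}_<,
\]
so $\Phi(\vx)\ge\tfrac12\lVert\vx-\vb\rVert^2+(\text{affine in }\vx)\to+\infty$ as $\lVert\vx\rVert\to\infty$. With this affine lower bound in place, your sublevel-set compactness and Weierstrass argument go through unchanged.
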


For $\vy$, the updating rule is
\begin{equation}\tag{$\vy$}
\begin{split}
\vy_{k+1}&=arg\underset{\vy}{\min}  \mathcal{L}_\beta  ( \vx_{k+1},\vy,\vlmd_k ) \\
&= arg\underset{\vy\in\mathcal{C}_=}{min}-\frac{1}{\beta}
 (  \vlmd_k  )  ^\intercal\vy+\frac{1}{2}\lVert \vy-\vx_{k+1} \rVert _{2}^{2}\\
&=arg\underset{\vy\in\mathcal{C}_=}{\min}\frac{1}{2}\lVert \vy-\vx_{k+1}-\frac{1}{\beta}\vlmd_k \rVert _{2}^{2} \\
&=\mP_c(\vx_{k+1}+\frac{\vlmd_k}{\beta})+\vd_c
\end{split} \label{eq-app:y}
\end{equation}

And the updating rule for dual variable $\vlmd$ is
\begin{equation}\tag{$\vlmd$}
\vlmd_{k+1}=\vlmd_k+\beta  (  \vx_{k+1}-\vy_{k+1}  )  \label{eq-app:lmd}
\end{equation}

As in \S~\ref{sec:acvi}, we would like to choose $\vw_k$ so that $\vw_k=\vx_{k+1}$. To this end, we need the following assumption:
\begin{asmp}\label{app-asmp:exist}
    $\forall \vb \in \RR^n$ and $\mu>0$, $\vx+\frac{1}{\beta}F(\vx)-\frac{\mu}{\beta}\sum_{i=1}^m\frac{\nabla\varphi_i(\vx)}{\varphi(\vx)}+\vb=\boldsymbol{0}$ has a solution in $\mathcal{C}_<$.
\end{asmp}

If Assumption \ref{app-asmp:exist} holds true, we can let $\vw_k$ be a solution of
\begin{equation}
    \vw-\vy_{k+1}+\frac{1}{\beta} (  F (  \vw  )  +\vlmd_{k+1}  )  -\frac{\mu}{\beta}\sum_{i=1}^m{\frac{\nabla \varphi _i (  \vw  ) }{\varphi _i (  \vw  ) }=\boldsymbol{0}} \label{eqw}
\end{equation}
in $\mathcal{C}_<$. And by Proposition \ref{app-prop:unique}, we can let $\vx_{k+1}$ be the unique solution of
\begin{equation} \tag{$\vx$}
    \vx-\vy_k+\frac{1}{\beta} \big(  F (  \vw_k  )  +\vlmd_k  \big)  -\frac{\mu}{\beta}\sum_{i=1}^m{\frac{\nabla \varphi _i (  \vx  ) }{\varphi _i (  \vx  ) }=\boldsymbol{0}} \label{eq-app:x} 
\end{equation}
in $\mathcal{C}_<$.

Note that $\vw_k$ is also a solution of \eqref{eq-app:x}. By the uniqueness of the solution of \eqref{eq-app:x} shown in Prop.~\ref{app-prop:unique}, we know that $\vw_k=\vx_{k+1}$.

So in the $(k+1)$-th step, we can compute $\vx, \vy, \vlmd$ and $\vw$ by \eqref{eq-app:x},~\eqref{eq-app:y},~\eqref{eq-app:lmd} and \eqref{eqw}, respectively. Since $\vw_k=\vx_{k+1}$, we can simplify our algorithm by removing variable $\vw$ and only keep $\vx,\vy$ and $\vlmd$, see Algorithm \ref{alg:aivi_variant}.

\begin{algorithm}[H]
    \caption{v-ACVI pseudocode.}
    \label{alg:aivi_variant}
    \begin{algorithmic}[1]
        \STATE \textbf{Input:} operator $F:\mathcal{X}\to \R^n$, equality $\mC\vx = \vd$ and inequality constraints $\varphi_i(\vx) \leq 0, i = [m]$,  
        hyperparameters $\mu_{-1},\beta>0, \delta \in (0,1)$, 
        number of outer and inner loop iterations $T$ and $K$, resp.
        \STATE \textbf{Initialize:} $\vy^{(0)}_0\in \R^n$, $\vlmd_0^{(0)}\in \R^n$ 
        \STATE $\mP_c \triangleq \mI - \mC^\intercal (\mC \mC^\intercal )^{-1}\mC$ \hfill \textit{where} $\mP_c \in \R^{n\times n}$ 
        \STATE $\vd_c \triangleq \mC^\intercal(\mC \mC^\intercal)^{-1}\vd$  \hfill \textit{where} $\vd_c \in \R^n$
        \FOR{$t=0, \dots, T-1$} 
                \STATE $\mu_t = \delta \mu_t$
                \STATE Denote $\hat{\varphi} (\vlmd, \vy)$ as the solution of $ \frac{1}{\beta} \big( \mu_t \sum_{i=1}^m   \frac{\nabla \varphi_i (\vx)}{\varphi_i(\vx)} - F(\vx) - \vlmd \big) + \vy  - \vx =  0 $ with respect to $\vx$, where $\vy, \vlmd$ are variables
        \FOR{$k=0, \dots, K-1$} 
        \STATE  $\vx_{k+1}^{(t)} =  \hat{\varphi}(\vlmd_k^{(t)}, \vy_k^{(t)}) $  \hfill \textit{Ensures $\vx_{k+1}\in \mathcal{X}_\leq$}
        \STATE $\vy_{k+1}^{(t)} = \mP_c (\vx_{k+1}^{(t)} + \frac{\vlmd_k^{(t)}}{\beta}) + \vd_c $
        \STATE 
        $
            \vlmd_{k+1}^{(t)}=\vlmd_k^{(t)}+\beta ( \vx_{k+1}^{(t)}-\vy_{k+1}^{(t)} ) \label{214c}
        $
        \ENDFOR
        \STATE $(\vy^{(t+1)}_0, \vlmd^{(t+1)}_0 ) \triangleq (\vy^{(t)}_{K}, \vlmd^{(t)}_{K} )$
        \ENDFOR 
    \end{algorithmic}
\end{algorithm}

\noindent\textbf{Discussion: ACVI Vs. v-ACVI.}
Relative to Algorithm \ref{alg:acvi}, the subproblem for solving $\vx$ in line 7 in Algorithm \ref{alg:aivi_variant} becomes more complex, whereas the subproblem for $\vy$ becomes simpler.
Hence, in cases when the inequality constraints are simpler, or there are no inequality constraints Alg.~\ref{alg:aivi_variant} may be more suitable, as that simplifies the $x$ subproblem. 
However, Algorithm \ref{alg:acvi} balances better the complexities of the subproblems, hence it may be simpler to use for general problems.

\noindent\textbf{Convergence of v-ACVI.}
By analogous proofs to those in App.~\ref{app:proofs}, we can get similar convergence results as for Algorithm \ref{alg:acvi}, that is Theorems~\ref{thm:xi_rate} and~\ref{thm:monotone}. Specifically,  Theorem \ref{thm:xi_rate} and \ref{thm:monotone} hold for Algorithm \ref{alg:aivi_variant}, provided that we replace the assumption ``$F$ is monotone on $\mathcal{C}_=$'' with ``$F$ is monotone on $\mathcal{C}_\leq$'' in Theorems~\ref{thm:xi_rate} and~\ref{thm:monotone}.

\subsection{Proof of Proposition \ref{app-prop:unique}}\label{app:proof_of_unique }

To prove proposition \ref{app-prop:unique}, we will use the following lemma.

\begin{lm}
	$\forall \vb\in \R^n,\forall \vx\in \mathcal{C}_<,\frac{1}{2}\lVert \vx-\vb \rVert _{2}^{2}-\frac{\mu}{\beta}\sum_{i=1}^m{\log  (  -\varphi _i (  \vx  )   ) } \rightarrow +\infty ,\lVert \vx \rVert _2\rightarrow +\infty$ \label{lemma1}
\end{lm}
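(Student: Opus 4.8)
\textbf{Proof plan for Lemma~\ref{lemma1}.}

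The statement to prove is: for any fixed $\vb \in \R^n$, the function
$$
\psi(\vx) \triangleq \frac{1}{2}\norm{\vx - \vb}_2^2 - \frac{\mu}{\beta}\sum_{i=1}^m \log\bigl(-\varphi_i(\vx)\bigr)
$$
tends to $+\infty$ as $\norm{\vx}_2 \to \infty$ with $\vx \in \mathcal{C}_<$. The plan is to show that the quadratic term dominates, while the barrier term cannot decrease too fast. First I would split the analysis according to how the barrier term behaves. Since each $\varphi_i$ is convex and $C^1$ on $\R^n$ and $\mathcal{C}$ is compact, on any bounded region the quantities $-\varphi_i(\vx)$ are bounded above (say by some $R>0$), so $-\log(-\varphi_i(\vx)) \ge -\log R$ is bounded below there; more importantly, as $\norm{\vx}_2 \to \infty$ the point $\vx$ eventually leaves every compact set, in particular leaves $\mathcal{C}_\leq$, so at least one $\varphi_i(\vx) \to$ a nonnegative value or stays bounded away from $-\infty$. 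The key point is that $-\log(-\varphi_i(\vx))$ is bounded below by an \emph{affine} (in fact, bounded-below-by-constant on each side) function: specifically, using that $-\log t \ge -t/e$ for... wait, more cleanly: $-\log t \ge -t + 1$ is false in general, but $-\log t \ge 1 - t$ holds for all $t > 0$ (concavity of $\log$ at $t=1$). Hence $-\log(-\varphi_i(\vx)) \ge 1 + \varphi_i(\vx)$.

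So the concrete steps are: (1) Use the pointwise bound $-\log(s) \ge 1 - s$ for $s > 0$ with $s = -\varphi_i(\vx)$, giving $-\log(-\varphi_i(\vx)) \ge 1 + \varphi_i(\vx)$ for each $i$ and each $\vx \in \mathcal{C}_<$. (2) Sum over $i$: $-\sum_{i=1}^m \log(-\varphi_i(\vx)) \ge m + \sum_{i=1}^m \varphi_i(\vx)$. (3) Bound the linear-growth lower bound of $\sum_i \varphi_i$: since each $\varphi_i$ is convex and finite on all of $\R^n$, it is bounded below by an affine function, i.e.\ there exist $\va_i \in \R^n$, $b_i \in \R$ with $\varphi_i(\vx) \ge \va_i^\intercal \vx + b_i$ for all $\vx$ (take a subgradient at any fixed point, e.g.\ at the analytic center). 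Therefore $\sum_{i=1}^m \varphi_i(\vx) \ge \va^\intercal \vx + b$ where $\va = \sum_i \va_i$, $b = m + \sum_i b_i$. (4) Combine: $\psi(\vx) \ge \frac{1}{2}\norm{\vx - \vb}_2^2 + \frac{\mu}{\beta}(\va^\intercal \vx + b')$ for an appropriate constant $b'$. The right-hand side is a strictly convex quadratic in $\vx$ (leading term $\frac12\norm{\vx}_2^2$ with only linear perturbations), hence it tends to $+\infty$ as $\norm{\vx}_2 \to \infty$ uniformly; this is immediate from $\frac12\norm{\vx}_2^2 - C\norm{\vx}_2 - C' \to \infty$.

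The main obstacle — though it turns out to be mild — is step (3): justifying that a convex function finite everywhere on $\R^n$ is minorized by an affine function. This is standard (every proper convex function on $\R^n$ that is finite at a point has a supporting affine function there, i.e.\ the subdifferential is nonempty in the interior of its domain, and here $\mathrm{dom}\,\varphi_i = \R^n$), so I would simply invoke it, citing convexity and differentiability of $\varphi_i$: in fact since $\varphi_i \in C^1(\R^n)$, the gradient inequality $\varphi_i(\vx) \ge \varphi_i(\vx_0) + \nabla\varphi_i(\vx_0)^\intercal(\vx - \vx_0)$ gives the affine minorant directly with no appeal to subdifferential calculus. One should also note the bound $-\log s \ge 1 - s$ is only used on $s > 0$, which is exactly guaranteed by $\vx \in \mathcal{C}_<$ (so that $-\varphi_i(\vx) > 0$); this is why the hypothesis $\vx \in \mathcal{C}_<$ appears. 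With these pieces the conclusion $\psi(\vx) \to +\infty$ as $\norm{\vx}_2 \to \infty$ follows by a one-line coercivity argument on the quadratic lower bound.
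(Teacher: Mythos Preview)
Your proof is correct, and it rests on the same core idea as the paper's: bound the barrier term below by an affine function of $\vx$, then observe that a strictly convex quadratic plus an affine perturbation is coercive.

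The difference lies only in how the affine minorant is obtained. The paper applies the gradient inequality \emph{directly} to the full barrier $B(\vx) = -\frac{\mu}{\beta}\sum_{i=1}^m \log(-\varphi_i(\vx))$, which is convex on $\mathcal{C}_<$ (each $-\log(-\varphi_i)$ is a convex nondecreasing transformation of the convex $\varphi_i$), at an arbitrary anchor $\vx_0 \in \mathcal{C}_<$: this yields $B(\vx) \ge B(\vx_0) + \nabla B(\vx_0)^\intercal(\vx-\vx_0)$ in one line. You instead insert the intermediate scalar inequality $-\log s \ge 1 - s$ to reduce the barrier to $\frac{\mu}{\beta}\bigl(m + \sum_i \varphi_i(\vx)\bigr)$, and then invoke convexity (the gradient inequality) on each $\varphi_i$ separately. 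Your route is one step longer but arguably more elementary, since it avoids having to recognize the convexity of the composite $-\log(-\varphi_i)$; the paper's route is more compact. Both land on $\psi(\vx) \ge \tfrac12\|\vx-\vb\|_2^2 + (\text{affine in }\vx) \to +\infty$, so the conclusion is identical.
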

\begin{proof}[Proof of Lemma \ref{lemma1}.]
    We denote $\phi: \mathcal{C}_<\rightarrow\R$ by
    \begin{equation*}
        \phi (  \vx  )  =\frac{1}{2}\lVert \vx-\vb \rVert _{2}^{2}-\frac{\mu}{\beta}\sum_{i=1}^m{\log  (  -\varphi _i (  \vx  )   ) }
    \end{equation*}
	let $B (  \vx  )  =-\frac{\mu}{\beta}\sum_{i=1}^m{\log  (  -\varphi _i (  \vx  )   ) }$. We choose an arbitrary $\vx_0 \in \mathcal{C}_<$. Then by the convexity
	of $B ( \vx ) $ we deduce that 
	\begin{equation*}
	    \forall \vx \in \mathcal{C}_<,
	\phi (  \vx  )  \geqslant \frac{1}{2}\lVert \vx-\vb \rVert _{2}^{2}+B (  \vx_0  )  +\nabla B (  \vx_0  )  ^\intercal (  \vx-\vx_0  )  \rightarrow +\infty ,\lVert \vx \rVert _2\rightarrow +\infty
	\end{equation*}
\end{proof}

In the remaining, we prove Proposition \ref{app-prop:unique} which guarantees that \eqref{eq-app:x_closed_form} has a unique solution.

\begin{proof}[Proof of Proposition \ref{app-prop:unique}: uniqueness of the solution of~\eqref{eq-app:x_closed_form}.]

Let $\phi: \mathcal{C}_<\rightarrow\R$ denote:
	\begin{equation*}
	    \phi  (  \vx  )  =\frac{1}{2}\lVert \vx-\vy_k+\frac{1}{\beta} (  F (  \vw  )  +\lambda _k  )  \rVert _{2}^{2}-\frac{\mu}{\beta}\sum_{i=1}^m{\log  (  -\varphi _i (  \vx  )   ) }
	\end{equation*}
	
    We choose $\vx_0\in \mathcal{C}_<$. By Lemma \ref{lemma1}, $\forall \vx\in \mathcal{C}_<$, $\phi  ( \vx )  \rightarrow +\infty, \lVert \vx \rVert _2 \rightarrow + \infty. $ \\
	So there exists $M>0$ such that $\vx_0 \in B  ( 0,M ) $ and $\forall \vx\in S,\phi  ( \vx )  \leq \phi ( \vx_0 ) , \vx$ must belong to $B (  0,M  ) $,where
	\begin{equation*}
	    B (  0,M  )  =\left\{ \vx\in \R^n\left| \lVert \vx \rVert\leq M \right. \right\} \label{disk}
	\end{equation*}
	 It's clear that there exists $t>0$ such that for every $\vx \in \mathcal{C}_<$ that satisfies $\phi  ( \vx )  \leq \phi  ( \vx_0 ) $, $\vx$ must belong to $\mathcal{C}_t$,where
	 \begin{equation}
	     \mathcal{C}_t= \left\{ \vx\in B (  0,M  )  \left| \varphi _i (  \vx  )  \leq -t \right. \right\} \subset \mathcal{C}_<
	 \end{equation}
	 
	 And we can make $t$ small enough so that $\vx_0 \in \mathcal{C}_t$. $\mathcal{C}_t$ is a nonempty compact set and $\phi$ is continuous, so there exists $\vx^\star \in \mathcal{C}_t$ such that $\phi  ( \vx^\star )  \leq \phi  ( \vx ) , \forall 
	\vx \in \mathcal{C}_<$. 
	
	Note that $\forall 
	\vx \in \mathcal{C}_< \backslash \mathcal{C}_t, \,\, \phi  ( \vx )  \geq \phi  ( \vx_0 )  \geq \phi  ( \vx^\star ) $. Therefore, $\vx^\star$ is a global minimizer of $\phi.$ $\phi$ is strongly-convex thus $\vx_{k+1}=\vx^\star$ is its unique minimizer. So $\vx=\vx_{k+1} $ if and only if $\nabla \phi  ( \vx ) = \boldsymbol{0}$. Therefore, $\vx_{k+1}$ is the unique solution of $\vx-\vy_k+\frac{1}{\beta}  (  F ( \vw )  +\vlmd_k  ) - \frac{\mu}{\beta} \sum_{i=1}^{m} \frac{\nabla \varphi_i  ( \vx ) }{\varphi_i  ( \vx ) }=\boldsymbol{0}$.  
\end{proof}

\clearpage
\section{Details on the implementation}\label{app:impl-details}

In this section, we provide the details on the implementation of the experiments shown in the main part in 2D and higher dimension bilinear game, see \S~\ref{app:impl_2d} and \S~\ref{app:impl_hd_bg}, respectively.
We also provide here in \S~\ref{app:impl_mnist} the details of the MNIST experiments presented later in App.~\ref{app:additional_experiments}.
In addition, we provide the source code through the following link: \url{https://github.com/Chavdarova/ACVI}.

\subsection{Experiments in 2D}\label{app:impl_2d}

We first state the considered problem fully, then describe the setup that includes the hyperparameters.

\noindent\textbf{Problems.}
We consider the following constrained bilinear game (for the same experiment shown in Fig.~\ref{fig:3d_illustration} and~\ref{fig:cbg}): 
\begin{equation}\label{prob:cbg}\tag{cBG}
    \underset{x_1\in \R_+}{\min}\underset{x_2\in \R_+}{\max}0.05x_1^2+x_1x_2-0.05x_2^2 \,.
\end{equation}
The\textit{ Von Neumann’s ratio game}~\citep{VonNeumann1971} (results in Fig.~\subref{subfig:ratio}) is as follows:
\begin{equation}\label{prob:ratio_game}\tag{RG}
\underset{\vx\in \varDelta ^2}{\min}
\underset{\vy\in \varDelta ^2}{\max}
\frac{\langle \vx, \mR\vy \rangle}{\langle \vx,\mS\vy \rangle} \,,
\end{equation}
where 
$ \varDelta ^2=\left\{ \vz\in \R^2 | \vz \geq 0, \ve^\intercal\vz=1 \right\}, \mR=\left( \begin{matrix}
	-0.6&		-0.3\\
	0.6&		-0.3\\
  \end{matrix} \right) $ and $\mS=\left( 
  \begin{matrix}
	0.9&		0.5\\
	0.8&		0.4\\
  \end{matrix} \right) $.

The so called \textit{Forsaken}~\citep{hsieh2021limits} game---used in Fig.~\ref{subfig:forsaken} and in App.~\ref{app:additional_experiments}---is as follows:
\begin{equation}\label{prob:forsaken}\tag{Forsaken}
\underset{x_1 \in \mathcal{X}_1}{\min}
\underset{x_2 \in \mathcal{X}_2}{\max}
x_1 (x_2-0.45) + h(x_1) - h(x_2)    \,,
\end{equation}
where $h(z)=\frac{1}{4}z^2-\frac{1}{2}z^4+\frac{1}{6}z^6$.
The original version is unconstrained $\mathcal{X}\equiv\R^2$.
In Fig.~\ref{subfig:forsaken} we use the constraint  $x_1^2+x_2^2\leq 4$, and in App.~\ref{app:additional_experiments} we use two other constraints: $x_1 \geq 0.08$ and $x_2\geq 0.4$.

For the toy GAN experiments, shown in Fig.~\ref{subfig:gan}, the problem is as follows:
\begin{equation}\label{prob:toy_gan}\tag{toy-GAN}
\begin{aligned}
\underset{\theta}{\min}
\underset{\varphi}{\max} 
& \underset{x \sim \mathcal{N}(0,2)}{\E} (\varphi x^2) - 
\underset{z\sim \mathcal{N}(0,1)}{\E}  (\varphi \theta^2 z^2) \\
\textit{s.t.}  & \quad \varphi^2+\theta^2\leq 4
\end{aligned}
\end{equation}
In the experiment, we look at a finite sum (sample average) approximation, which we then solve deterministically in a full batch fashion.

\noindent\textbf{Setup.}
For all the 2D problems, we set the step size of GDA, EG and OGDA  to $0.1$, we use $k=5$ and $\alpha=0.5$ for LA-GDA, we set $\beta=0.08$, $\mu_{-1}=10^{-5}$, $\delta=0.5$ and $\vlmd_0=\boldsymbol{0}$ for ACVI; and run for $50$ iterations. 
For ACVI, we set the number of outer loop iterations to $T=20$. In the first 19 outer loop iterations, we only run one inner loop iteration, and in the last outer loop iteration, we run 30 inner loop iterations (for a total of $50$ updates). 
In Fig.~\ref{fig:3d_illustration} and Fig.~\subref{subfig:gan}, we set the starting point for all algorithms. 
In Fig.~\ref{subfig:ratio} and \subref{subfig:forsaken}, we set the starting point to be $(0.5,0.5)^\intercal$ for all algorithms.

\subsection{High-dimension Bilinear Game}\label{app:impl_hd_bg}

We set the step size of GDA, EG, and OGDA  to $0.1$, using $k=4$ and $\alpha=0.5$ for LA-GDA. 
For ACVI, we set $\beta=0.5$, $\mu_{-1}=10^{-6}$, $\delta=0.5$ and $\vlmd_0=\boldsymbol{0}$ for ACVI.

The solution of \eqref{eq:high_dim_bg} is  $\vx^\star=\frac{1}{500}\ve$, where $\ve\in \R^{1000}$. 
As a metric of the experiments on this problem, we use the relative error: $\varepsilon_r(\vx_k)=\frac{\lVert \vx_k-\vx^\star\rVert}{\lVert\vx^\star\rVert}$.
In Fig.\ref{subfig:it}, we set $\varepsilon=0.02$ and compute the number of iterations of ACVI needed to reach the relative error given different rotation ``strength''$1-\eta$, $\eta \in (0,1)$.  Here we set the maximum number of iterations to be 50 for all algorithms.
In Fig.~\ref{subfig:time}, we set $\eta = 0.05$ and compute CPU times needed for ACVI, EG, OGDA, and LA4-GDA to reach different relative errors. Here we set the maximum run time to 1500 seconds for all algorithms. 
In Fig.~\ref{fig:hbg_relativeErr_for_fixed_time} in App.~\ref{app:additional_experiments} on the other hand, we fix $\eta=0.05$, and for varying \textit{CPU time} limits, we compute the relative error of the last iterates of ACVI, GDA, EG, OGDA, and LA4-GDA.

\noindent\textbf{More general HD-BG game (g-HBG).} 
Since~\eqref{eq:high_dim_bg} has perfect conditioning (that is, the interactive term $\vx_1^\intercal\mB\vx_2$, is with $\mB\equiv \mI$), in App.~\ref{app:additional_exp_hbg} we present results on the following more general high dimensional bilinear game:
\begin{equation}\label{eq_app:general_HBG}\tag{g-HBG}
\begin{split}
    \underset{\vx_1\in \bigtriangleup}{\min}\underset{\vx_2\in \bigtriangleup}{\max} \quad \frac{\eta}{2} \cdot \mathbf{x}_1^\intercal\mathbf{A} \vx_1 + \left( 1 - \eta \right) \vx_1^\intercal \mathbf{B}  \vx_2-\frac{\eta}{2} \vx_2^\intercal \mathbf{C}\vx_2, \\
    \bigtriangleup {=} \{ 
    \vx_i \in \R^{500}| \vx_i \geq -\boldsymbol{e}, \text{ and },\boldsymbol{e}^\intercal \vx_i=0
    \}.\\
\end{split}
\end{equation}
Where $\mathbf{A}$, $\mathbf{B}$ and $\mathbf{C}$ are randomly generated $500\times500$ matrices, and $\mathbf{A}$, $\mathbf{C}$ are positive semi-definite.

The solution of \eqref{eq_app:general_HBG} is $\vx^\star=\mathbf{0}$, where $\mathbf{0}\in \R^{1000}$. As a metric of the experiments on this problem, we use the error $\varepsilon (\vx_k)=\norm{\vx_k}$. The remaining settings are identical to those of \eqref{eq:high_dim_bg}, explained above.

\noindent\textbf{Comparison with the Frank-Wolfe algorithm on general HD-BG \eqref{eq_app:general_HBG} and \eqref{eq_app:general_HBG_general_constraints}.} In App.~\ref{app:additional_experiments}, we compare with the 
FW method (see \ref{sec:methods}) on two problems: 
(i) \eqref{eq_app:general_HBG}, and
(ii) \eqref{eq_app:general_HBG_general_constraints}, where the objective is the same as \eqref{eq_app:general_HBG} but the constraint set becomes more general, in which $\mC_i$ is a randomly generated $10\times500$ matrix, $i=\{1,2\}$. In both experiments, we implement FW as in Algorithm~\ref{alg_app:fw}, where we choose $\gamma$ to be $2/(2+t)$ at the $t$-th iterate, $t=0,\cdots,T$.
For (i), the constraint set of \eqref{eq_app:general_HBG} is a ``shifted simplex'', hence its vertices are easy to compute. This allows us to solve the linear minimization problem in Algorithm 2 of~\citep{gidel2017frank} much faster, and we refer to this implementation as \emph{fast-FW}.
In contrast, for the \eqref{eq_app:general_HBG_general_constraints} problem, we cannot apply this, and in that case, we use the standard linear programming routine \textit{covopt.solvers.lp} in Python--- referred herein as \emph{FW}.

\begin{equation}\label{eq_app:general_HBG_general_constraints}\tag{gg-HBG}
\begin{split}
    \underset{\vx_1\in \mathcal{U}_1}{\min}\underset{\vx_2\in \mathcal{U}_2}{\max} \quad \frac{\eta}{2} \cdot \mathbf{x}_1^\intercal\mathbf{A} \vx_1 + \left( 1 - \eta \right) \vx_1^\intercal \mathbf{B}  \vx_2-\frac{\eta}{2} \vx_2^\intercal \mathbf{C}\vx_2, \\
    \mathcal{U}_i {=} \{ 
    \vx_i \in \R^{500}| -100\ve \leq \vx_i \leq 100\ve, \text{ and },\mC_i \vx_i=\mathbf{0}
    \}, i=1,2.\\
\end{split}
\end{equation}

The solution of both \eqref{eq_app:general_HBG} and \eqref{eq_app:general_HBG_general_constraints} is $\mathbf{0}$. As a metric of the experiments on this problem, we use the error $\varepsilon (\vx_k)=\norm{\vx_k}$. The remaining settings are identical to those of \eqref{eq:high_dim_bg}, explained above.

\subsection{MNIST and Fashion-MNIST experiments}\label{app:impl_mnist}

For the experiments on the MNIST dataset~\footnote{Provided under \textit{Creative Commons Attribution-Share Alike 3.0}.}, we use the source code of~\citet{chavdarova2021lamm} for the baselines and we build on it to implement ACVI. For completeness, we provide an overview of the implementation. 

\noindent\textbf{Models.}
We used the DCGAN architectures~\citep{radford2016unsupervised}, listed in Table~\ref{tab:mnist_arch}, and the parameters of the models are initialized using PyTorch default initialization. 
For experiments on this dataset, we used the \textit{non-saturating} GAN loss as proposed in~\citep{goodfellow2014generative}:
\begin{align}
& \LL_D =   \underset{\tilde{\vx}_d \sim p_d}{\E} \log \big(D(\tilde{\vx}_d)\big) 
+ 
\underset{\tilde{\vz}\sim p_z }{\E}   \log\Big(1- D\big(G(\tilde{\vz})\big)\Big) \label{eq:vanilla_loss_d}\tag{L-D}\\
& \LL_G =  \underset{\tilde{\vz} \sim p_z}{\E}   \log \Big(D\big(G(\tilde{\vz})\big)\Big), \label{eq:nonsat_loss_g}\tag{L-G}
\end{align}
where $G(\cdot), D(\cdot)$ denote the generator and discriminator, resp., and $p_d$ and $p_z$ denote the data and the latent distributions (the latter predefined as the normal distribution).

\begin{table}\centering
\begin{minipage}[b]{0.49\hsize}\centering
\resizebox{1.075\textwidth}{!}{
\begin{tabular}{@{}c@{}}\toprule
\textbf{Generator}\\\toprule
\textit{Input:} $\vz \in \mathds{R}^{128} \sim \mathcal{N}(0, I) $ \\  \hdashline 
transposed conv. (ker: $3{\times}3$, $128 \rightarrow 512$; stride: $1$) \\
 Batch Normalization \\ 
 ReLU  \\
 transposed conv. (ker: $4{\times}4$, $512 \rightarrow 256$, stride: $2$)\\
 Batch Normalization \\ 
 ReLU  \\
 transposed conv. (ker: $4{\times}4$, $256 \rightarrow 128$, stride: $2$) \\
 Batch Normalization \\ 
 ReLU  \\
 transposed conv. (ker: $4{\times}4$, $128 \rightarrow 1$, stride: $2$, pad: 1) \\
 $Tanh(\cdot)$\\ 
\bottomrule 
\end{tabular}}
\end{minipage}
\hfill 
\begin{minipage}[b]{0.49\hsize}\centering
\resizebox{.93\textwidth}{!}{
\begin{tabular}{@{}c@{}}\toprule
\textbf{Discriminator}\\\toprule
\textit{Input:} $\vx \in \mathds{R}^{1{\times}28{\times}28} $ \\  \hdashline 
 conv. (ker: $4{\times}4$, $1 \rightarrow 64$; stride: $2$; pad:1) \\
 LeakyReLU (negative slope: $0.2$) \\ 
 conv. (ker: $4{\times}4$, $64 \rightarrow 128$; stride: $2$; pad:1) \\
 Batch Normalization \\
 LeakyReLU (negative slope: $0.2$) \\
 conv. (ker: $4{\times}4$, $128 \rightarrow 256$; stride: $2$; pad:1) \\
 Batch Normalization \\
 LeakyReLU (negative slope: $0.2$) \\
 conv. (ker: $3{\times}3$, $256 \rightarrow 1$; stride: $1$) \\
 $Sigmoid(\cdot)$ \\
\bottomrule
\end{tabular}}
\end{minipage}
\vspace{.5em}
\caption{DCGAN architectures~\citep{radford2016unsupervised} used for experiments on \textbf{MNIST}.
With ``conv.'' we denote a convolutional layer and ``transposed conv'' a transposed convolution layer~\citep{radford2016unsupervised}.
We use \textit{ker} and \textit{pad} to denote \textit{kernel} and \textit{padding} for the (transposed) convolution layers, respectively. 
With $h{\times}w$ we denote the kernel size.
With $ c_{in} \rightarrow y_{out}$ we denote the number of channels of the input and output, for (transposed) convolution layers.
The models use Batch Normalization~\citep{bnorm} layers. 
}\label{tab:mnist_arch}
\end{table}

\noindent\textbf{Details on the ACVI implementation.}
When implementing ACVI on MNIST, we ``remove'' the outer loop of Algorithm~\ref{alg:acvi} (that is we set $T=1$), and fix $\mu$ to be a small number, in particular, we select $\mu=10^{-9}$. 
We randomly initialize $\vx$ and $\vy$ and initialize $\vlmd$ to zero. 
For lines 8 and 9 of Algorithm~\ref{alg:acvi}, we run $l$ steps of (unconstrained)~\ref{eq:extragradient} and GD, respectively.
For the update of $\vx$ (using~\ref{eq:extragradient}), we use step-size $\eta_x=0.001$, whereas for $\vy$ (using GD), we use step-size $\eta_y=0.2$.
We present results when $l \in \{1,10\}$. 
At every iteration, we update $\vlmd$ using the expression in line 11 of Algorithm~\ref{alg:acvi}, with $\beta=0.5$.

Because the problem in step~\ref{alg-ln:x-sol} of Algorithm~\ref{alg:acvi} does not change a lot over the iterations (as well as when computing $\vy$), when we implement Algorithm~\ref{alg:acvi} we do \textit{not} reinitialize the variable $\vx$. We instead use the one from the previous iteration as initialization and update it $l$ times.
The full details of the training when using an inner optimizer for step~\ref{alg-ln:x-sol} of Algorithm~\ref{alg:acvi} are provided in Algorithm~\ref{alg:acvi_inner_optimizer}, where we recall that $G(\vx) $ is defined as:
\begin{equation} \label{eqapp:G_x} \tag{G-EQ}
    G(\vx) \triangleq \vx + \frac{1}{\beta} \mP_c F(\vx) - \mP_c \vy_k + \frac{1}{\beta} \mP_c \vlmd_k - \vd_c 
\end{equation}
Note that in the case of MNIST, we consider only inequality constraints (and there are no equality constraints), therefore, the matrices $\mP_c$ and $\vd_c$ are identity and zero, respectively. Thus, there is no need for lines~\ref{alg-ln:pc} and~\ref{alg-ln:dc} in Algorithm~\ref{alg:acvi_inner_optimizer}.

\begin{algorithm}[H]
    \caption{Pseudocode for ACVI when using an inner optimizer (MNIST experiments).}
    \label{alg:acvi_inner_optimizer}
    \begin{algorithmic}[1] 
        \STATE \textbf{Input:}  operator $F:\mathcal{X}\to \R^n$, equality $\mC\vx = \vd$ and inequality constraints $\varphi_i(\vx) \leq 0, i = [m]$,  
        hyperparameters $\mu,\beta>0, \delta\in(0,1)$,
        inner optimizer $\mathcal{A}$ (e.g. EG, GDA, OGDA),
        $l$ number of steps for the inner optimizer,
        number of iterations $K$.
        \STATE \textbf{Initialize:} $\vx_0 \in \R^n$, $\vy_0\in \R^n$, $\vlmd_0\in \R^n$ 
        \STATE $\mP_c \triangleq \mI - \mC^\intercal (\mC \mC^\intercal )^{-1}\mC$ \hfill \mbox{where} $\mP_c \in \R^{n\times n}$ \label{alg-ln:pc}
        \STATE $\vd_c \triangleq \mC^\intercal(\mC \mC^\intercal)^{-1}\vd$  \hfill \mbox{where} $\vd_c \in \R^n$ 
        \label{alg-ln:dc}
        \FOR{$k=0, \dots, K-1$} 
            \STATE To obtain $\vx_{k+1}$: run $l$ steps of $\mathcal{A}$ solving $\text{VI}(\R^n, G)$, where $G$ is defined in~\eqref{eqapp:G_x}
        \STATE To obtain $\vy_{k+1}$: run $l$ steps of GD to find $\vy_{k+1}$ that minimizes: \\ \hspace{4em} $ - \mu \sum_{i=1}^m \log \big( - \varphi_i (\vy) \big) + \frac{\beta}{2} \norm{\vy-\vx_{k+1} - \frac{1}{\beta}\vlmd_k}^2$
        \STATE 
        $
            \vlmd_{k+1}=\vlmd_k+\beta ( \vx_{k+1}-\vy_{k+1}) 
        $
        \ENDFOR
        \STATE \textbf{Return:} $\vx_K$
    \end{algorithmic}
\end{algorithm}

The implementation details for the Fashion-MNIST experiment are identical to those of the MNIST experiment. 

\noindent\textbf{Setup $1$: MNIST.}
The MNIST experiment in Fig.~\ref{fig:mnist2_summary} in the main part (and Fig.~\ref{fig:mnist_setup1_gda},~\ref{fig:mnist_setup1_eg}) has $100$ randomly generated linear inequality constraints for the Generator and $100$ for the Discriminator.

\noindent\textbf{Setup $1$: projection details.} Suppose the linear inequality constraints for the Generator are $\mA\vtheta\leq \vb$, where $\vtheta\in \R^n$ is the vector of all parameters of the Generator, $\mA=(\va_1^\intercal,\cdots,\va_{100}^\intercal)^\intercal\in \R^{100\times n}$, $\vb=(b_1,\dots,b_{100})\in\R^{100}$. We use the \textit{greedy projection algorithm} described in \citep{beck2017first}. A greedy projection algorithm is essentially a projected gradient method, it is easy to implement in high-dimension problems, and it has a convergence rate of $O(1/\sqrt{K})$. See Chapter 8.2.3 in~\citep{beck2017first} for more details.
Since the dimension $n$ is very large, at each step of the projection, one could only project $\vtheta$ to one hyperplane $\va_i^\intercal\vx=b_i$ for some $i\in \mathcal{I}(\vtheta)$, where
\begin{equation*}
    \mathcal{I}(\vtheta)\triangleq \{j|\va_j^\intercal\vtheta>b_j\}.
\end{equation*}

 For every $j\in\{1,2,\dots,100\}$, let
 \begin{equation*}
     \mathcal{S}_j\triangleq\{\vx|\va_j^\intercal\vx\leq b_j\}.
 \end{equation*}
 
  The greedy projection method chooses $i$ so that $i\in\arg\max\{dist(\vtheta,\mathcal{S}_i)\}$. Note that as long as $\vtheta$ is not in the constraint set $C_\leq=\{\vx|\mA\vx\leq\vb\}$, $i$ would be in $\mathcal{I}(\vtheta)$. Algorithm \ref{alg:projection} gives the details of the greedy projection method we use for the baseline, written for the Generator only for simplicity; the same projection method is used for the Discriminator as well.

\begin{algorithm}
    \caption{Greedy projection method for the baseline.}
    \label{alg:projection}
    \begin{algorithmic}[1] 
        \STATE \textbf{Input:}  $\vtheta\in\R^n$, $\mA=(\va_1^\intercal,\dots,\va_{100}^\intercal)^\intercal\in \R^{100\times n}$, $\vb=(b_1,\dots,b_{100})\in\R^{100}$, $\varepsilon>0$
        \WHILE{True} 
           \STATE $\mathcal{I}(\vtheta)\triangleq \{j|\va_j^\intercal\vtheta>b_j\}$
           \IF{$\mathcal{I}(\vtheta)=\emptyset$ or $\underset{j\in\mathcal{I}(\vtheta)}{\max}\frac{|\va_j^\intercal\vtheta-b_j|}{\norm{\va_j}}<\varepsilon$}
                \STATE break
           \ENDIF
           
           \STATE choose $i\in\underset{j\in\mathcal{I}(\vtheta)}{\arg\max}\frac{|\va_j^\intercal\vtheta-b_j|}{\norm{\va_j}}$

           \STATE $\vtheta\leftarrow\vtheta-\frac{|\va_i^\intercal\vtheta-b_i|}{\norm{\va_i}^2}\va_i$
        \ENDWHILE
        
        \STATE \textbf{Return:} $\vtheta$
    \end{algorithmic}
\end{algorithm}

\noindent\textbf{Setup $2$: MNIST \& Fashion-MNIST.}
We add two constraints for the MNIST experiment: 
we set the squared sum of all parameters of the Generator and that of the Discriminator (separately) to be less than or equal to a hyperparameter $M$. We select a large number for $M$; in particular, we set $M=50$.

\noindent\textbf{Metrics.}
We describe the metrics for the MNIST experiments shown later in App.~\ref{app:additional_experiments}.
We use the two standard GAN metrics, Inception Score (IS,~\citealp{salimans2016improved}) and Fr\'echet Inception Distance (FID,~\citealp{heusel_gans_2017}). 
Both FID and IS rely on a pre-trained classifier and take a finite set of $\tilde m$ samples from the generator to compute these.
Since \textbf{MNIST} has greyscale images, we used a classifier trained on this dataset and used  $\tilde{m}=5000$.

\noindent\textbf{Metrics: IS.}
Given a sample from the generator $\tilde{\vx}_g\sim p_g$---where $p_g$ denotes the data distribution of the generator---IS uses the softmax output of the pre-trained network $p(\tilde{\vy}|\tilde{\vx}_g)$ which represents the probability that $\tilde{\vx}_g$ is of class $c_i, i \in 1 \dots C$, i.e., $p(\tilde{\vy}|\tilde{\vx}_g) \in [0,1]^C$.
It then computes the marginal class distribution $p(\tilde{\vy})=\int_{\tilde\vx} p(\tilde{\vy}|\tilde{\vx}_g)p_g(\tilde{\vx}_g)$.
IS measures the Kullback--Leibler divergence $\mathbb{D}_{KL}$ between the predicted conditional label distribution $p(\tilde{\vy}|\tilde{\vx}_g)$  and the marginal class distribution $p(\tilde{\vy})$. 
More precisely, it is computed as follows:
\begin{align}\tag{IS}
IS(G) = \exp \Big( \underset{ \tilde{\vx}_g \sim p_g}{\E} \Big[ \mathbb{D}_{KL}\big( p( \tilde{\vy} | \tilde{\vx}_g) || p(\tilde{\vy}) \big)  \Big] \Big)
= \exp \Big(   
\frac{1}{\tilde{m}} \sum_{i=1}^{\tilde{m}} \sum_{c=1}^{C} p(y_c|\tilde{\vx}_i) \log{\frac{p(y_c|\tilde{\vx}_i)}{p(y_c)}}
\Big).
\end{align}

It aims at estimating 
\begin{enumerate*}[series = tobecont, label=(\roman*)]
\item if the samples look realistic i.e., $p(\tilde{\vy}|\tilde{\vx}_g)$ should have low entropy, and 
\item if the samples are diverse (from different ImageNet classes), i.e., $p(\tilde{\vy})$ should have high entropy.
\end{enumerate*}
As these are combined using the Kullback--Leibler divergence, the higher the score is, the better the performance.

\noindent\textbf{Metrics: FID.}
Contrary to IS, FID compares the synthetic samples $\tilde{\vx}_g \sim p_g$ with those of the training dataset $ \tilde{\vx}_d \sim p_d$ in a feature space. The samples are embedded using the first several layers of a pre-trained classifier. 
It assumes  $p_g$ and $p_d$ are multivariate normal distributions and estimates the means $\vm_g$ and $\vm_d$ and covariances $\mC_g$ and $\mC_d$, respectively, for  $p_g$ and $p_d$ in that feature space. Finally, FID is computed as: 
\begin{align} \tag{FID}
\mathbb{D}_{\text{FID}}(p_d, p_g) \approx \mathscr{D}_2 \big(
(\vm_d, \mC_d), (\vm_g, \mC_g )
\big) =  
\|\vm_d - \vm_g\|_2^2 + Tr\big(\mC_d + \mC_g - 2(\mC_d \mC_g)^{\frac{1}{2}}\big), 
\end{align} 
where $\mathscr{D}_2$ denotes the Fr\'echet Distance.
Note that as this metric is a distance, the lower it is, the better the performance.

\noindent\textbf{Hardware.}
We used the Colab platform (\url{https://colab.research.google.com/}) and \textit{Tesla P100} GPUs. 
The running times  are reported in App.~\ref{app:additional_experiments}.

\clearpage
\section{Additional Empirical Analysis}\label{app:additional_experiments}

In this  section, we provide some omitted plots/analyses of the results in the main paper as well as additional experiments. 
In particular,
\begin{enumerate*}[series = tobecont, label=(\roman*)]
\item App.~\ref{app:additional_exp_2D}  lists results in 2D,
\item App.~\ref{app:additional_exp_hbg}  on~\eqref{eq:high_dim_bg} and~\eqref{eq_app:general_HBG}, whereas 
\item App.~\ref{app:mnist_exp} provides more detailed plots of those experiments summarized in Fig.~\ref{fig:mnist2_summary} and presents additional results on other constraints on MNIST where we compare computationally-wise with \emph{unconstained}  baselines.
\end{enumerate*}

\subsection{Additional experiments in 2D, on~\ref{eq:high_dim_bg} and on~\ref{eq_app:general_HBG}}\label{app:additional_exp_2D}

\noindent\textbf{Depicting the omitted baselines of Fig.~\ref{fig:3d_illustration}.}
While Fig.~\ref{fig:3d_illustration} lists solely~\ref{eq:extragradient} and ACVI for clarity, Fig.~\ref{fig:cbg} depicts all the considered baselines in this paper on the~\ref{prob:cbg} problem for completeness.

\begin{figure}[!htb]
  \centering
  \includegraphics[width=.45\linewidth]{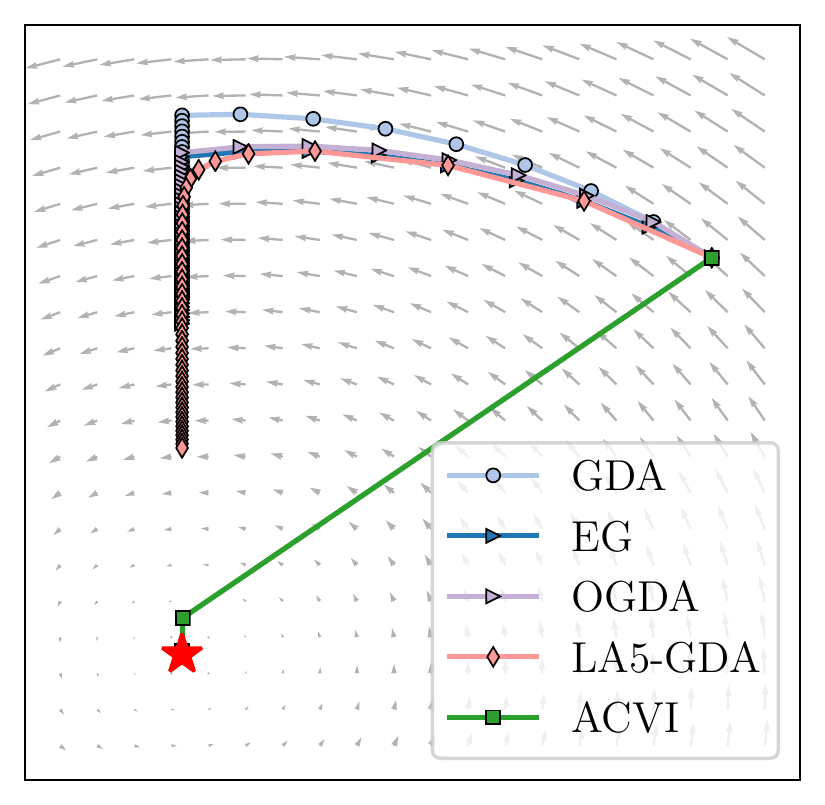}
\caption{
In addition to Fig.~\ref{fig:3d_illustration}, here we depict all the considered baselines on the~\ref{prob:cbg} problem.
The solution at $(0,0)$ is denoted with a red star ({\color{red}$\bm{\star}$}), and the vector field of this problem with gray arrows. 
See App.~\ref{app:impl_2d} for details on the implementation.
}\label{fig:cbg}
\end{figure}

\noindent\textbf{Additional experiments: varying constraints on the \textit{Forsaken} problem.}
The \textit{Forsaken} game was first pointed out in~\citep{hsieh2021limits} and is particularly relevant because it has \textit{limit cycles}, despite that it is in 2D. 
Since we are missing a tool to detect if we are in a limit cycle when in higher dimensions, this example is a popular benchmark in many recent works. 
Interestingly, in Fig.~\ref{subfig:forsaken} we observe that ACVI is the only method that  escapes the limit cycle.
However, since in those simulations, given the initial point the constraints are not active throughout the training, in this section, we run experiments with additional constraints. 
Fig.~\ref{fig:forsaken_different_c} depicts the baseline methods and ACVI on the~\ref{prob:forsaken} problem with two different constraints than that considered in Fig.~\ref{fig:smaller_exp} (that $x_1^2+x_2^2\leq 4$).
Since this game is non-monotone, we observe that for some constraints the baseline methods---\ref{eq:gda},~\ref{eq:extragradient},~\ref{eq:ogda},~\ref{eq:lookahead_mm}4-\ref{eq:gda}---stay near the constraint (and do not converge). 
This may indicate that ACVI may have better chances of converging for \emph{broader} problem classes than monotone VIs, relative to baseline methods whose convergence may depend on the constraints, and when hitting a constraint may be significantly slower (as Fig.~\ref{fig:3d_illustration} illustrates).

\begin{figure}[!htb]
\vspace{-.1em}
  \centering
  \begin{minipage}[c]{0.4\textwidth}
  \subfigure[$x_1\geq 0.08$]{\label{subfig:forsaken_cx}
  \includegraphics[width=\linewidth]{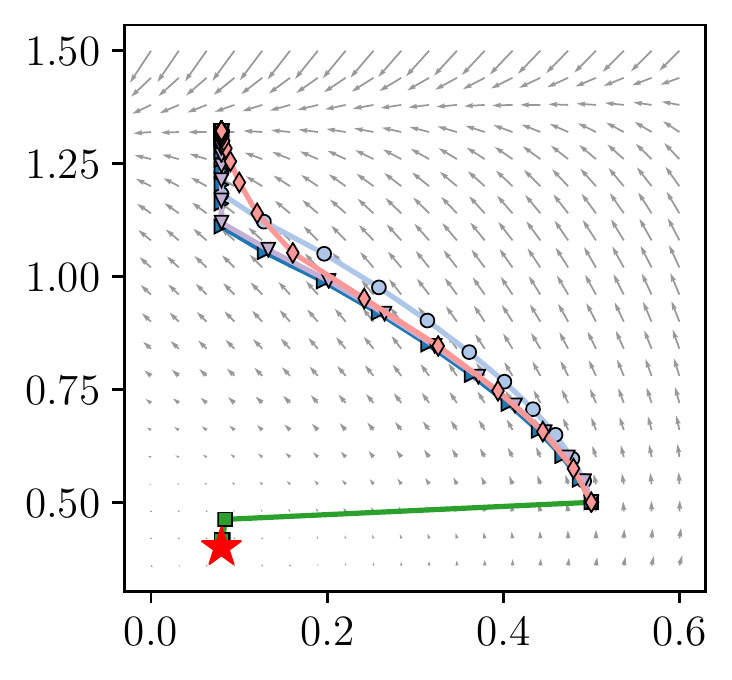}}
  \end{minipage}\hfill
  \begin{minipage}[c]{0.59\textwidth}
  \subfigure[$x_2\geq 0.4$]{\label{subfig:forsaken_cy}
  \includegraphics[width=\linewidth]{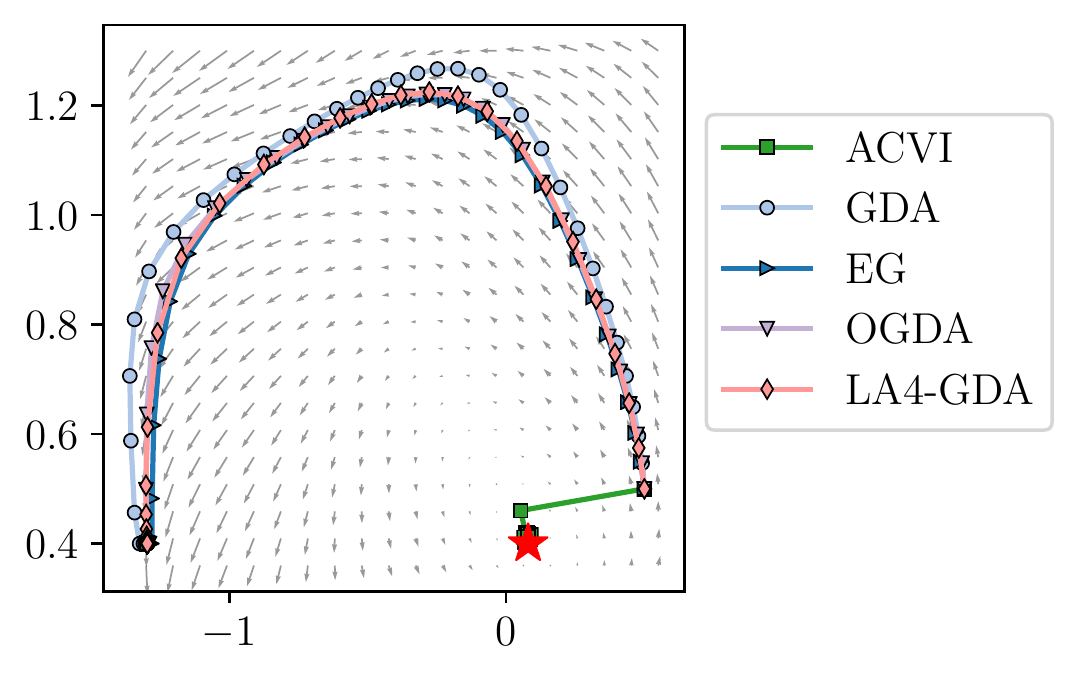}}
  \end{minipage}\hfill
  \caption{\textit{Forsaken game with different constraints}: 
  we consider two additional (to that in Fig.~\ref{fig:smaller_exp}) constraints:~\subref{subfig:forsaken_cx} that $x_1 \geq 0.08$, and ~\subref{subfig:forsaken_cy} that $x_2 \geq 0.4$. 
  See App.~\ref{app:impl_2d} for details on the implementation, and App.~\ref{app:additional_exp_2D} for a discussion.
  }\label{fig:forsaken_different_c}
\end{figure}

\FloatBarrier  
\subsection{Additional experiments~\ref{eq:high_dim_bg} and on~\ref{eq_app:general_HBG}}\label{app:additional_exp_hbg}

\noindent\textbf{Complementary analysis to those in Fig.~\ref{fig:high_dim}.}
Similar to Fig.~\ref{fig:high_dim}, in Fig.~\ref{fig:hbg_relativeErr_for_fixed_time} we run experiments on the~\ref{eq:high_dim_bg} problem. However, here for a given fixed CPU time, we depict the relative error of the considered baselines and ACVI.

\begin{figure}[!htb]
  \centering
  \includegraphics[width=.7\linewidth]{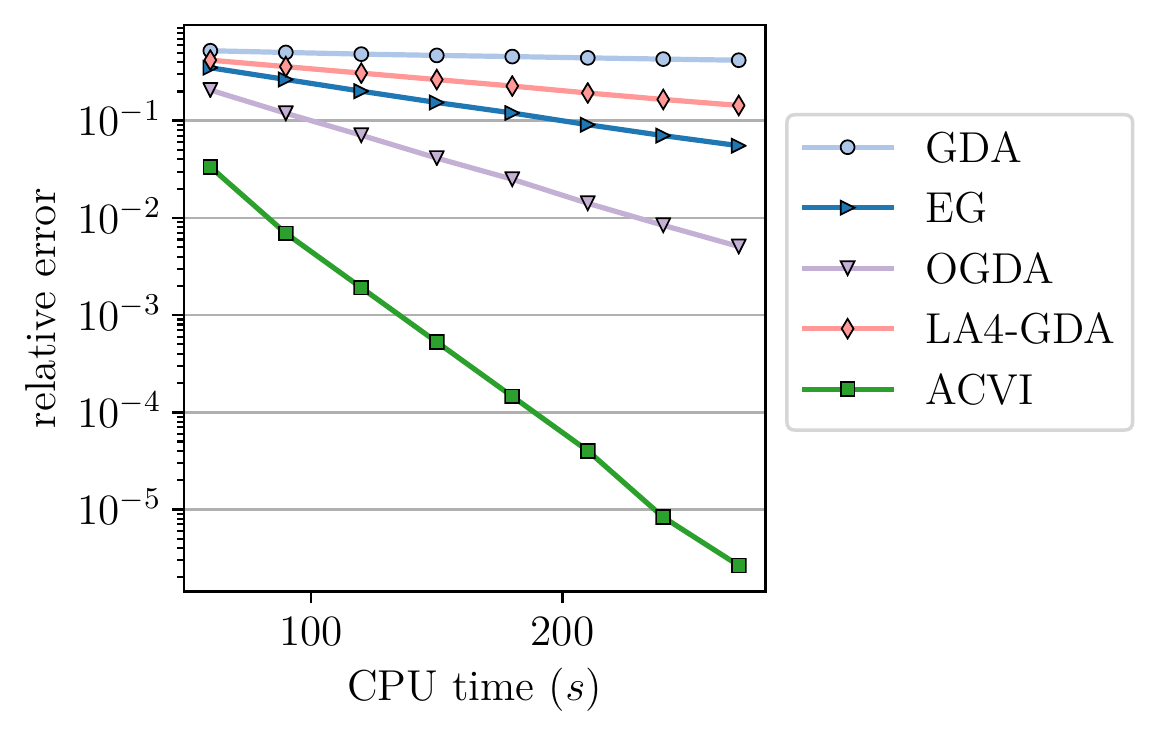}
\caption{
Given varying CPU time (in seconds), depicting the relative error (see App.~\ref{app:impl_hd_bg}) of~\ref{eq:gda},~\ref{eq:extragradient},~\ref{eq:ogda},~\ref{eq:lookahead_mm}4-\ref{eq:gda}, and  ACVI (Algorithm~\ref{alg:acvi}) on the~\ref{eq:high_dim_bg} problem where $\eta$  is fixed to $\eta =.05$ (hence the vector field is highly rotational).
This experiment complements those in Fig.~\ref{fig:high_dim} in the main paper. 
For the details on the implementation, see App.~\ref{app:impl_hd_bg}. 
}
\label{fig:hbg_relativeErr_for_fixed_time}
\end{figure}

\noindent\textbf{Additional experiments on ~\eqref{eq_app:general_HBG}.}
In Fig.~\ref{fig:g_HBG} we run experiments on the generalized HBG problem~\eqref{eq_app:general_HBG}. In \figref{subfig:gHBG_rot}, we compute the number of iterations needed to reach $\varepsilon$-distance to solution for varying intensity of the rotational component $(1-\eta)$; in \figref{subfig:gHBG_time}, we compute the error of the last iterate given fixed CPU time.
We observe that despite the highly rotational monotone vector field, ACVI converges significantly faster in terms of wall clock time in higher dimensions as well.

\begin{figure}[!htb]
\vspace{-.1em}
  \centering
  \begin{minipage}[c]{0.485\textwidth}
  \subfigure[Varying rotational intensity $(1-\eta)$]{\label{subfig:gHBG_rot}
  \includegraphics[width=.9\linewidth]{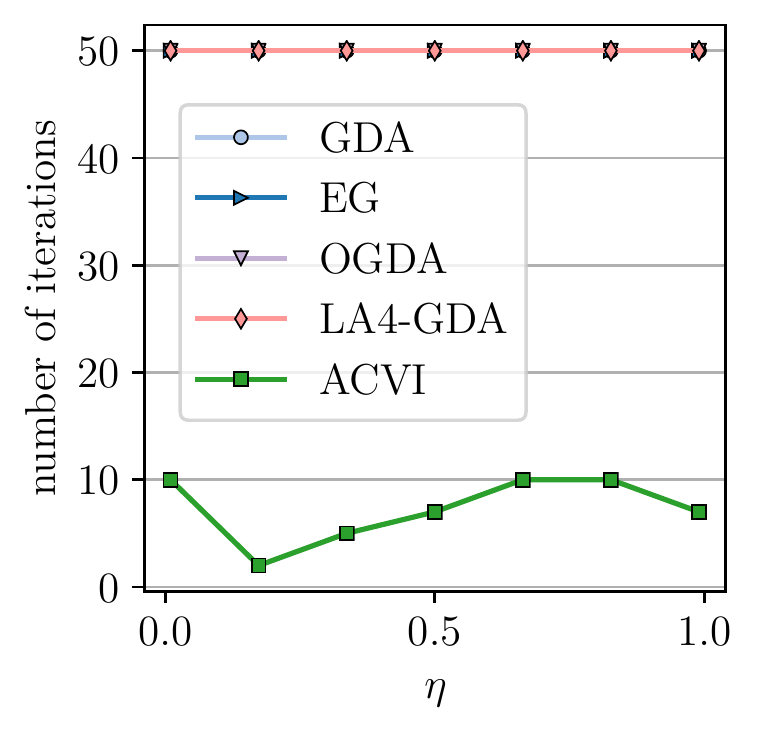}}
  \end{minipage}\hfill
  \begin{minipage}[c]{0.505\textwidth}
  \subfigure[Achieved error given varying fixed CPU time]{\label{subfig:gHBG_time}
  \includegraphics[width=.9\linewidth]{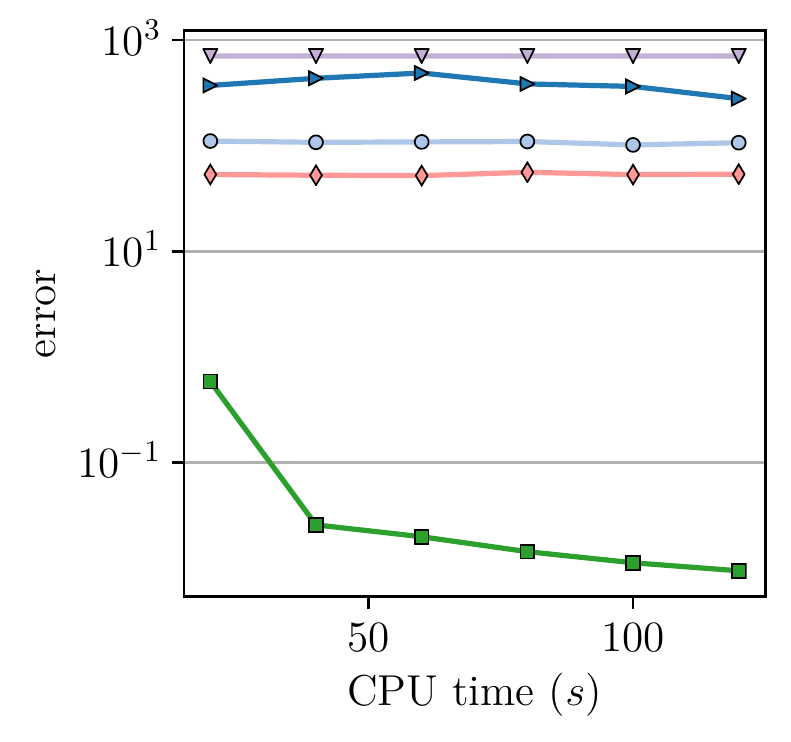}}
  \end{minipage}\hfill
  \caption{\textit{General high-dimensional bilinear game}~\eqref{eq_app:general_HBG}: comparison of ACVI with the GDA, EG, OGDA, and LA4-GDA baselines (described in App.~\ref{sec:methods}). 
  \textbf{Left:} number of iterations ($y$-axis) needed to reach an $\eps$-distance to the solution, for varying intensity of the rotational component $1-\eta$ ($\eta$ is the $x$-axis) of the vector field (the smaller the $\eta$ the higher the rotational component). We fix a threshold of the maximum number of iterations, and we stop the experiment. 
  \textbf{Right:} distance to the solution (see App.~\ref{app:impl_hd_bg}) of the last iterate ($y$-axis) for a varying wall-clock CPU time allowed to run each experiment ($x$-axis); in this experiment $\eta$ is fixed to $\eta=0.05$.
  See App.~\ref{app:additional_exp_hbg} and ~\ref{app:impl_hd_bg} for discussion and details on the implementation, respectively.
  }\label{fig:g_HBG}
\end{figure}

\noindent\textbf{Comparison with Frank-Wolf algorithm on \eqref{eq_app:general_HBG} and \eqref{eq_app:general_HBG_general_constraints}.}
Similar to Fig.~\ref{fig:g_HBG}, in Fig.~\ref{fig:g_HBG_fw} we also run experiments on \eqref{eq_app:general_HBG}, but here we compare ACVI with FW. 
We observe that ACVI outperforms FW even when we make use of the special structure of the simple constraint set when solving the linear minimization problem in FW (the \emph{fast FW} method).
Similar to Fig.~\ref{subfig:gHBG_time_fw}, in Fig.~\ref{fig:gg_hbg_fw} we run experiments on the \eqref{eq_app:general_HBG_general_constraints} problem, where we fix CPU time and depict the relative error of ACVI and FW.

\begin{figure}[!htb]
\vspace{-.1em}
  \centering
  \begin{minipage}[c]{0.485\textwidth}
  \subfigure[Varying rotational intensity $(1-\eta)$]{\label{subfig:gHBG_rot_fw}
  \includegraphics[width=.9\linewidth]{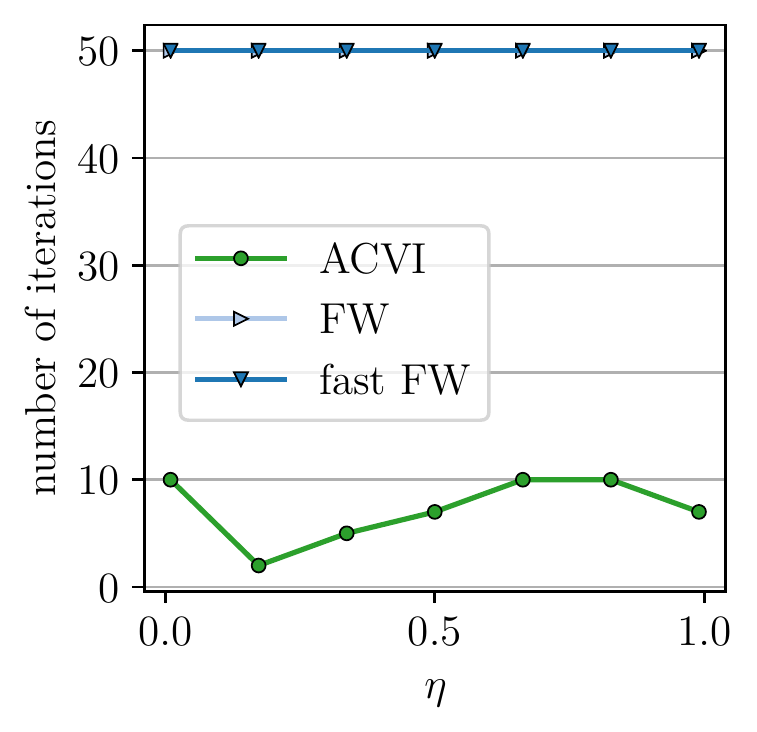}}
  \end{minipage}\hfill
  \begin{minipage}[c]{0.505\textwidth}
  \subfigure[Achieved error given varying fixed CPU time]{\label{subfig:gHBG_time_fw}
  \includegraphics[width=.9\linewidth]{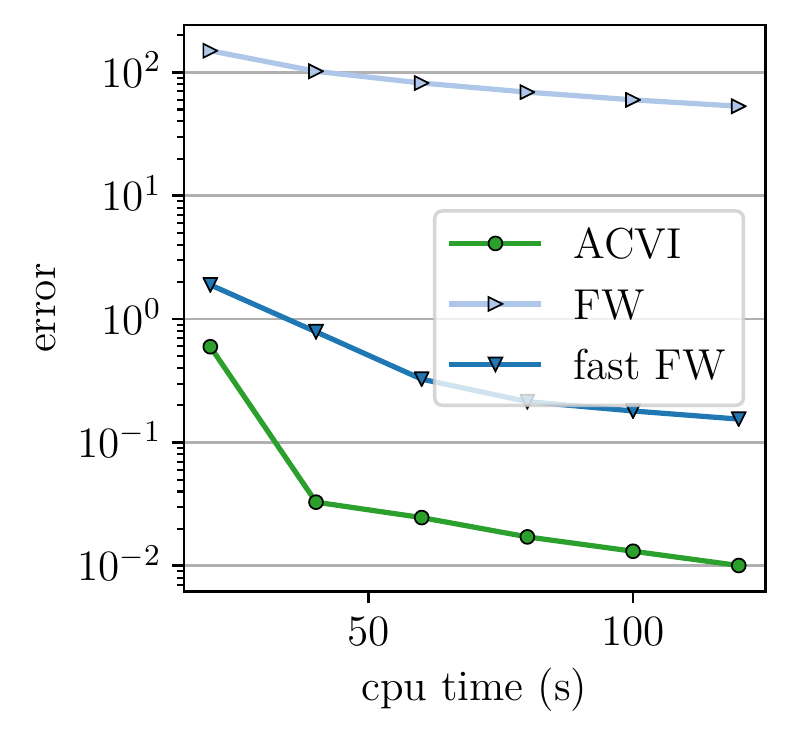}}
  \end{minipage}\hfill
  \caption{\textit{General high-dimensional bilinear game}~\eqref{eq_app:general_HBG}: comparison of ACVI with FW baseline (Algorithm~\ref{alg_app:fw}). 
  \textbf{Left:} number of iterations ($y$-axis) needed to reach an $\eps$-distance to the solution, for varying intensity of the rotational component $1-\eta$ ($\eta$ is the $x$-axis) of the vector field (the smaller the $\eta$ the higher the rotational component). We fix a threshold of the maximum number of iterations, and we stop the experiment. 
  \textbf{Right:} distance to the solution (see App.~\ref{app:impl_hd_bg}) of the last iterate ($y$-axis) for a varying wall-clock CPU time allowed to run each experiment ($x$-axis); in this experiment $\eta$ is fixed to $\eta=0.05$.
  See App.~\ref{app:additional_exp_hbg} and ~\ref{app:impl_hd_bg} for discussion and details on the implementation, respectively.
  }\label{fig:g_HBG_fw}
\end{figure}

Since FW (and variants, such as approximate and accelerated) rely on a specific structure of the constraints, FW can be extremely slow when those assumptions are not met--see discussion by~\citet{jaggi2013} in \S 3 as well as examples in \S 4 therein.
In contrast, the herein-presented ACVI Algorithm focuses on constraints of a general form, and further variants can be derived out of it to also exploit the structure of the constraints.
We leave exploiting such constraint structure---including extending FW to VIs and deriving variants of ACVI---for future work.

\begin{figure}[!htb]
  \centering
  \includegraphics[width=.5\linewidth]{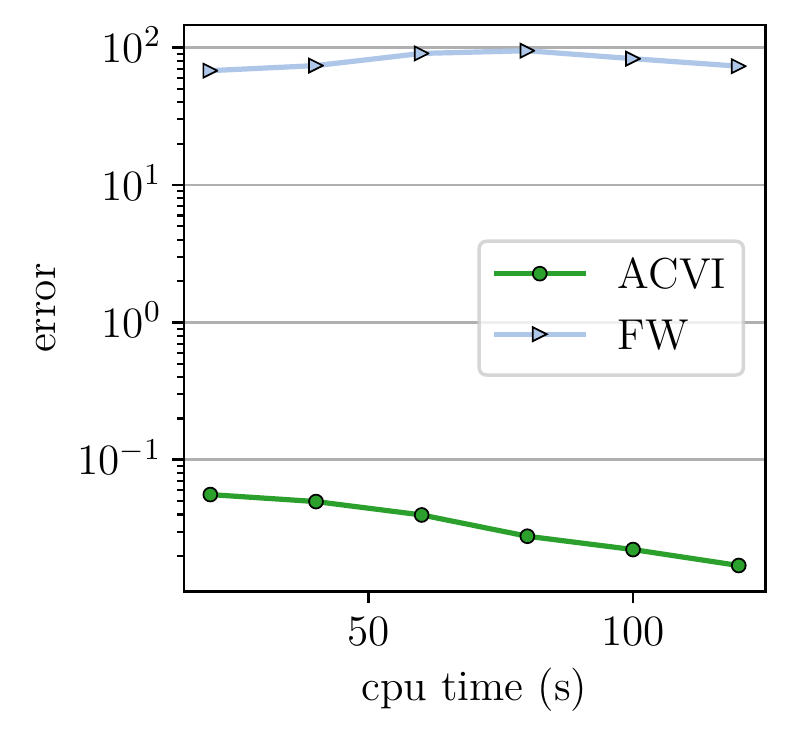}
\caption{
Given varying CPU time (in seconds), depicting the relative error (see App.~\ref{app:impl_hd_bg}) of FW and  ACVI (Algorithm~\ref{alg:acvi}) on the~\ref{eq:high_dim_bg} problem where $\eta$  is fixed to $\eta =.05$ (hence the vector field is highly rotational).
For the details on the implementation, see App.~\ref{app:impl_hd_bg}. 
}
\label{fig:gg_hbg_fw}
\end{figure}

\FloatBarrier  
\subsection{Experiments on MNIST and Fashion-MNIST}\label{app:mnist_exp}

\subsubsection{Setup 1: experiments on MNIST with linear inequalities}\label{app:sub_setup1}

In this section, we present more detailed results of the summarizing plot in Fig.~\ref{fig:mnist2_summary} of the main paper. For this experiment, we used linear inequalities as described in \S~\ref{app:impl_mnist}. Unlike in subsection~\ref{app:sub_setup2}, here all the baselines are projected  methods (that is, the same problem setting applies to ACVI and the baselines).

Fig.~\ref{fig:mnist_setup1_gda} and~\ref{fig:mnist_setup1_eg}  depict the comparisons with projected GDA and projected EG, respectively.
We observe that ACVI converges fast relative to the corresponding baseline.
When choosing a larger number of steps for the inner problem $l=10$ (see Algorithm~\ref{alg:acvi_inner_optimizer}) the wall-clock time per iteration increases, and interestingly  the ACVI steps compensate for that and overall converge as fast as when $l=1$.

\begin{figure}[!htb]
  \centering
  \begin{minipage}[c]{0.45\textwidth}
  \subfigure[FID (lower is better)]{\label{subfig:mnist2_gda_fid}
  \includegraphics[width=\linewidth]{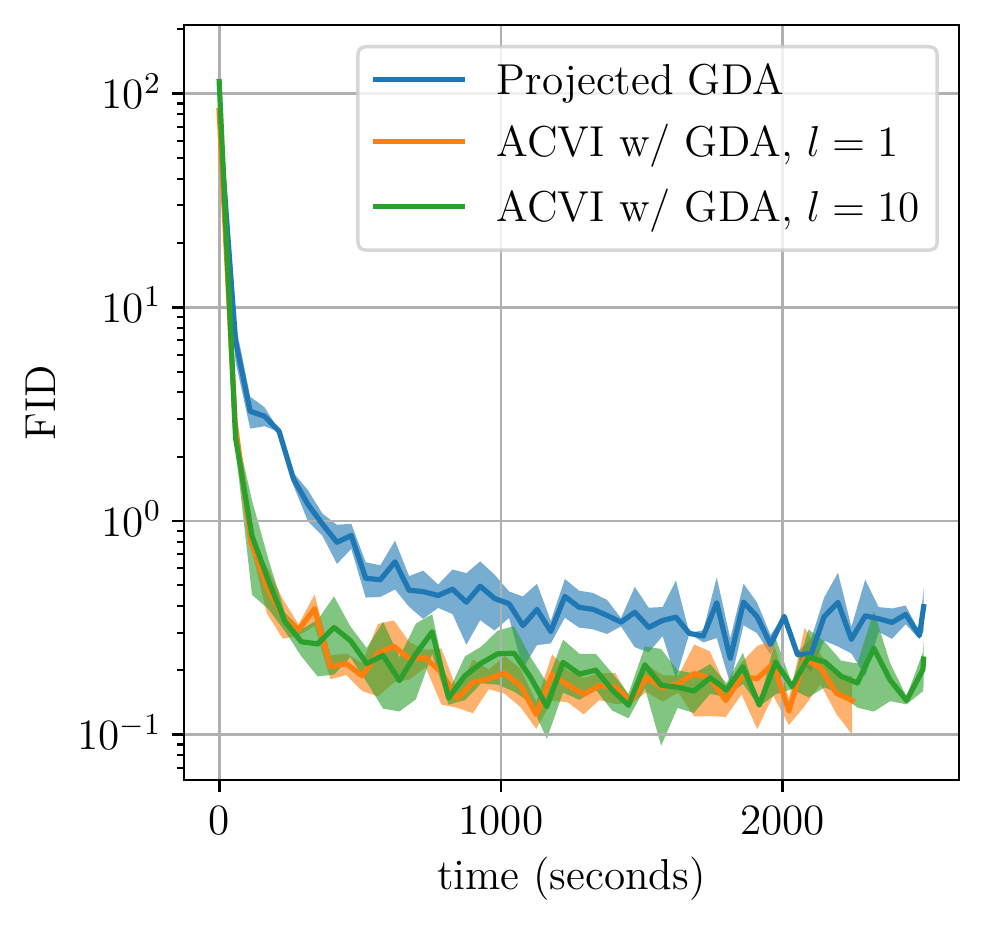}}
  \end{minipage}\hfill
  \begin{minipage}[c]{0.45\textwidth}
  \subfigure[IS (higher is better)]{\label{subfig:mnist2_gda_is}
  \includegraphics[width=.95\linewidth]{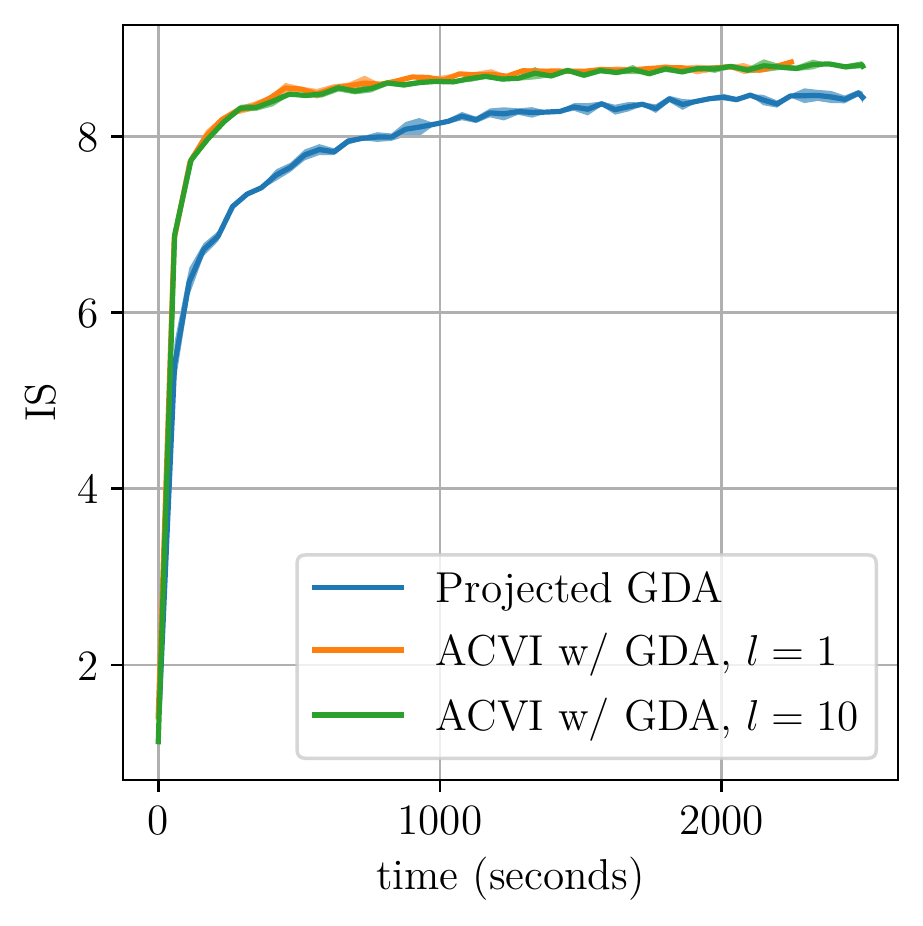}}
  \end{minipage}\hfill
  \caption{\textit{Setup $1$: Comparison between ACVI and GDA, and the \textbf{projected GDA}} on MNIST with linear inequalities (described in \S~\ref{app:impl_mnist}).
  $l$ denotes the number of steps for the inner problems, see Algorithm~\ref{alg:acvi_inner_optimizer}.  
  The depicted results are over multiple seeds.
  The FID and IS metrics as well as the implementation details are described in App.~\ref{app:impl_mnist}.
  See App.~\ref{app:sub_setup1} for discussion.
  }\label{fig:mnist_setup1_gda}
\end{figure}

\begin{figure}[!htb]
  \centering
  \begin{minipage}[c]{0.45\textwidth}
  \subfigure[FID (lower is better)]{\label{subfig:mnist2_eg_fid}
  \includegraphics[width=\linewidth]{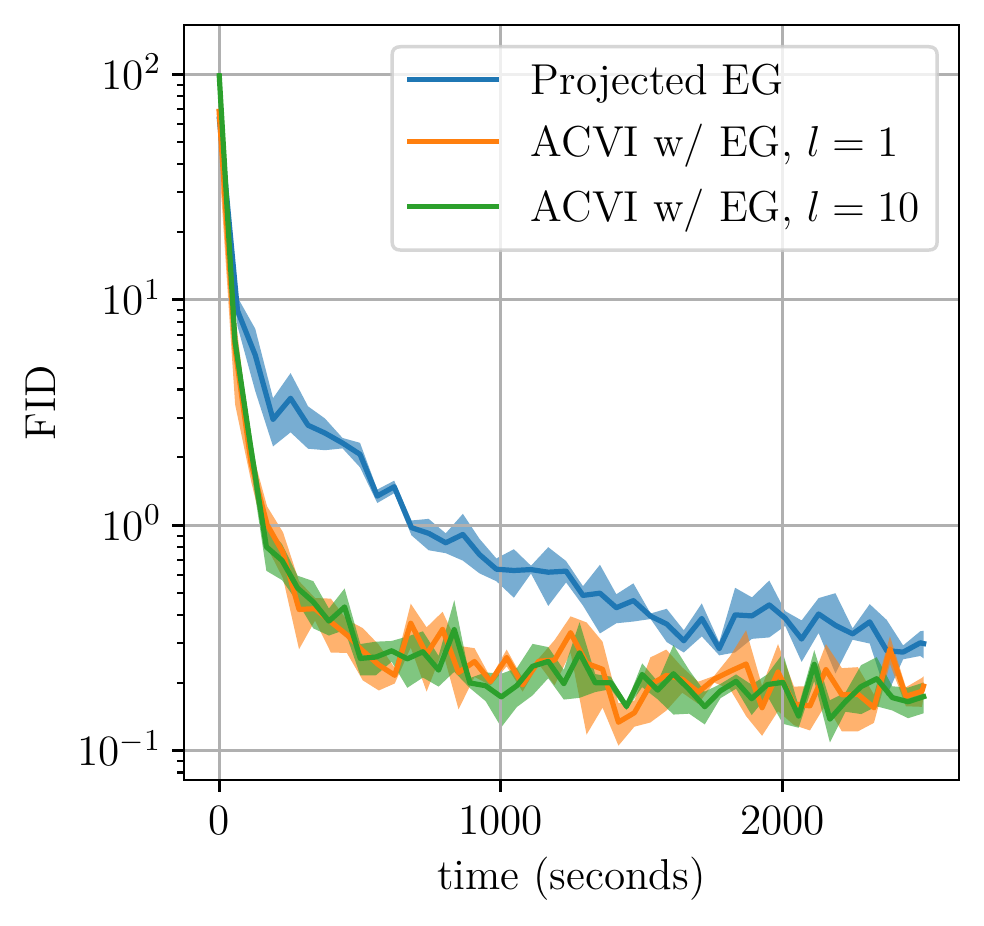}}
  \end{minipage}\hfill
  \begin{minipage}[c]{0.45\textwidth}
  \subfigure[IS (higher is better)]{\label{subfig:mnist2_eg_is}
  \includegraphics[width=.95\linewidth]{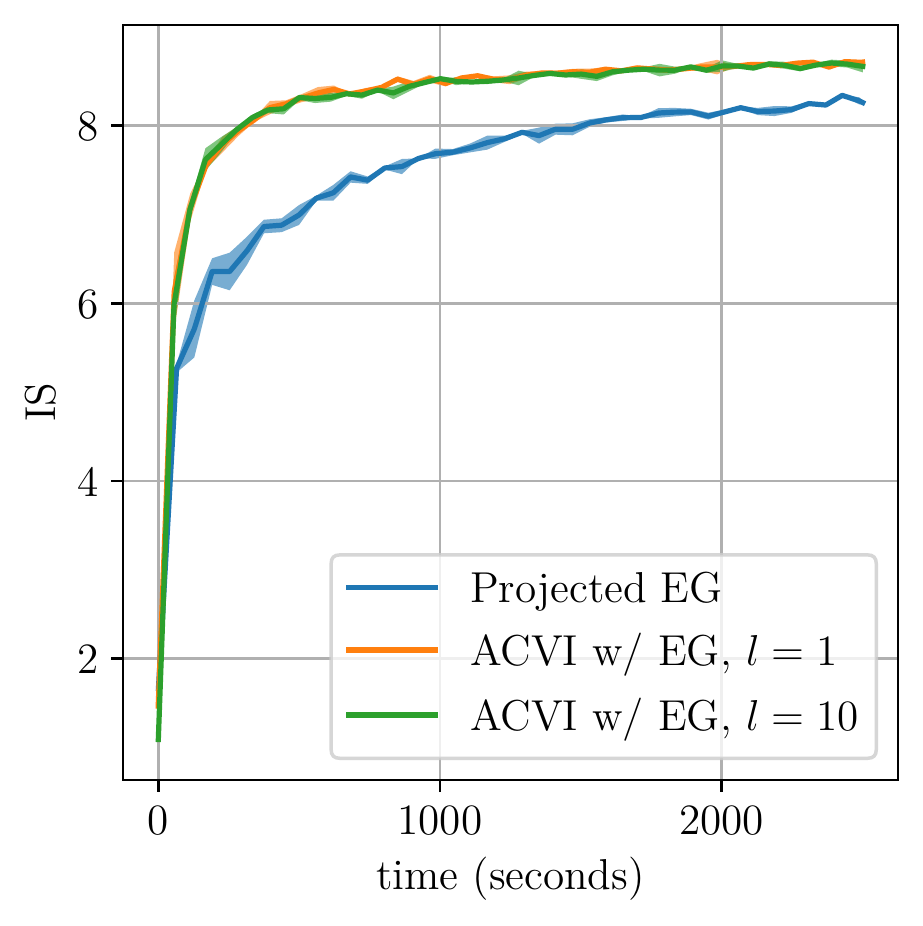}}
  \end{minipage}\hfill
  \caption{\textit{Setup $1$: Comparison between ACVI and EG, and the \textbf{projected EG}} on MNIST with linear inequalities (described in \S~\ref{app:impl_mnist}).
  $l$ denotes the number of steps for the inner problems, see Algorithm~\ref{alg:acvi_inner_optimizer}. 
  The depicted results are over multiple seeds.
  The FID and IS metrics as well as the implementation details are described in App.~\ref{app:impl_mnist}.
  See App.~\ref{app:sub_setup1} for discussion.
  }\label{fig:mnist_setup1_eg}
\end{figure}

\subsubsection{Setup 2: experiments on (Fashion-)MNIST with quadratic inequalities}\label{app:sub_setup2}

In this section, we consider the MNIST and Fashion-MNIST datasets, which are unconstrained problems so as to make use of the well-established performance metrics (which are otherwise unclear in the  non-monotone settings, where we do not know the optimal solution apriori). 
We augment the problem with a mild constraint which requires that the norm  of the per-player parameters does not exceed a certain value (see App.~\ref{app:impl_mnist}). 
We compare ACVI with \emph{unconstrained} baselines, which sets ACVI at a disadvantage as the projection requires additional computation.
However, the primary purpose of these experiments is to observe if Algorithm~\ref{alg:acvi} is competitive computationally-wise when lines 8 and 9 are non-trivial and require an (unconstrained) solver.
However note that since MNIST is a relatively easy problem, it may not answer the natural question if ACVI has advantages on problems augmented with constraints over standard unconstrained methods. 
We leave such analyses for future work.
The implementation and the used metrics are described in App.~\ref{app:impl_mnist}.

Fig.~\ref{fig:mnist_fid_summary} summarizes the experiments in terms of the obtained FID score over time. 
We observe that ACVI (although it uses two solvers at each iteration) is yet performing \textit{competitively} to \emph{unconstrained}~\ref{eq:gda} and~\ref{eq:extragradient}.
Figures~\ref{fig:mnist_curves_gda}--\ref{fig:mnist_curves_acvi_eg_1_1} provide in addition samples of the Generator and IS scores, separately for each method.
Figures~\ref{fig:fashionmnist_eg} and \ref{fig:fashionmnist_gda} depict samples generated by the different methods, when trained  on the Fashion-MNIST dataset.
We believe that further exploring the type of constraints to be added, or the implementation options (e.g., $l$, step-size) may be proven fruitful even for problems that are originally unconstrained--as such an approach may reduce the rotational component of the original vector field, what  in turn causes faster convergence or may help in escaping limit cycles for problems beyond monotone ones.

\begin{figure}[!htb]
  \centering
  \subfigure[GDA, and ACVI with GDA]{\label{subfig:mnist_gda}
  \includegraphics[width=.45\linewidth]{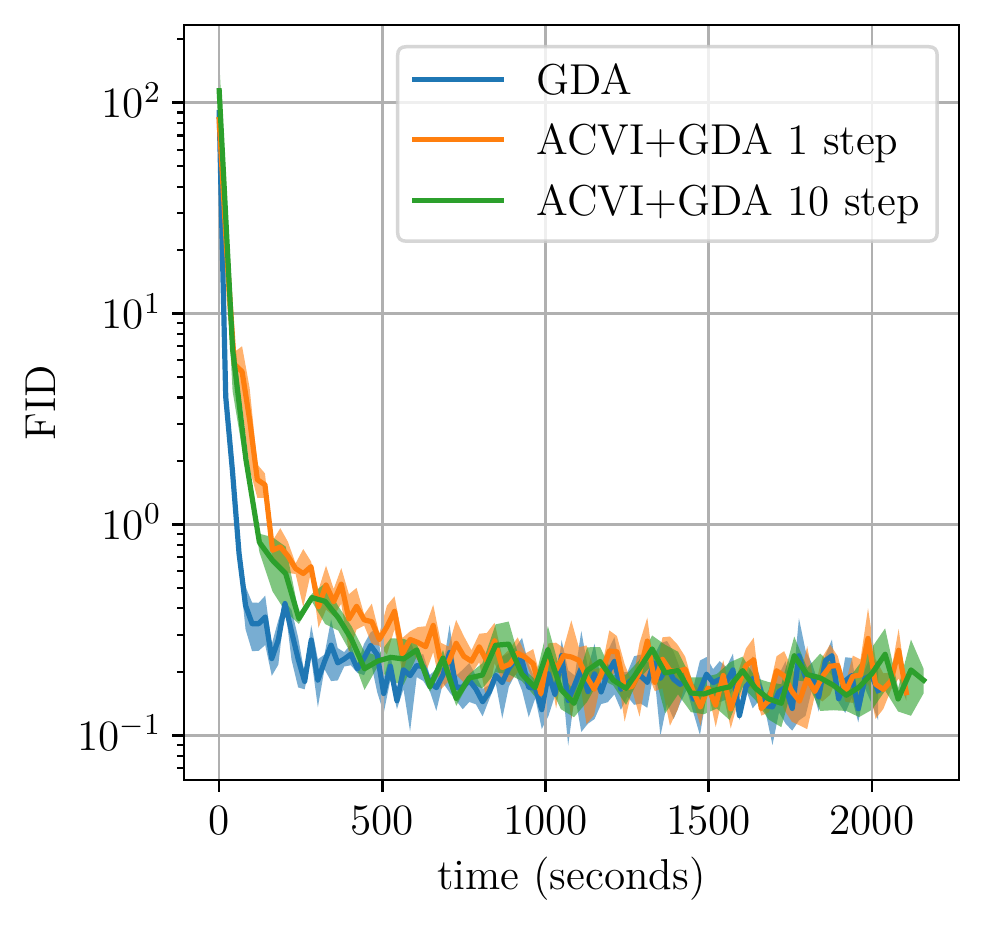}}
  \subfigure[EG, and ACVI with EG]{\label{subfig:mnist_eg}
  \includegraphics[width=.45\linewidth]{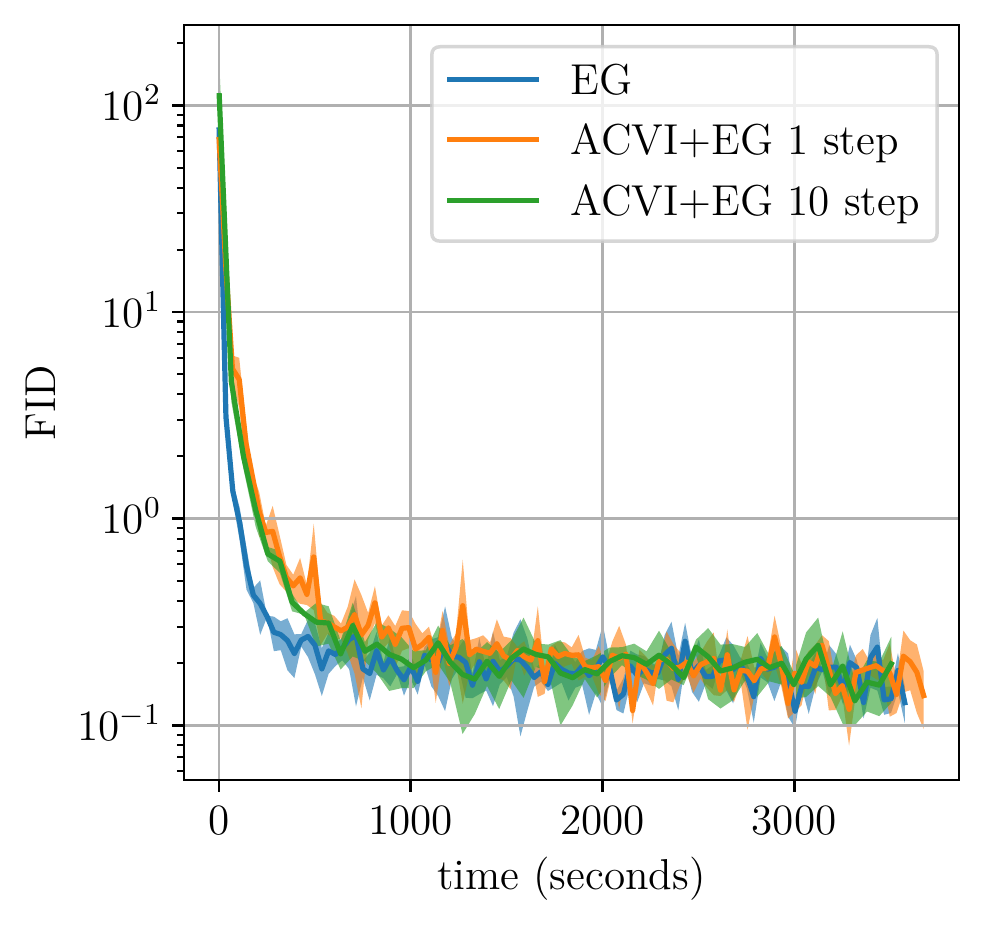}}
\caption{ 
Summary of the experiments on MNIST, using FID (lower is better).
\ref{subfig:mnist_gda}:~\ref{eq:gda} and ACVI with GDA, and 
\ref{subfig:mnist_eg}:~\ref{eq:extragradient} and ACVI with EG, using $l=\{ 1, 10\}$ for ACVI. 
Using step size of $0.001$.
The depicted results are over multiple seeds.
See App.~\ref{app:impl_mnist} and~\ref{app:mnist_exp} for details on the implementation and discussion, resp. 
\label{fig:mnist_fid_summary}
}
\end{figure}

\begin{figure}[!htb]
  \centering
  \includegraphics[width=.95\linewidth]{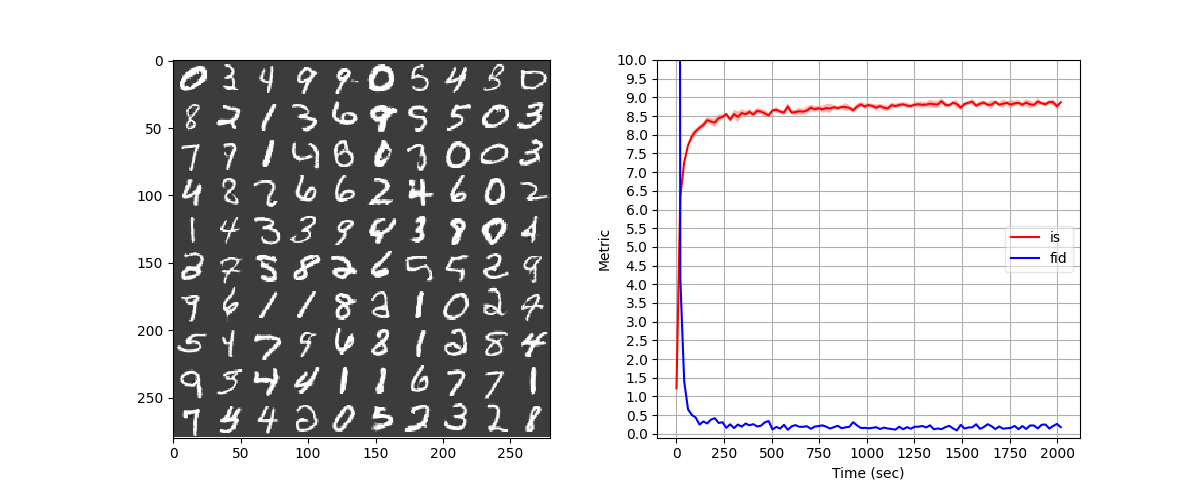}
\caption{ \ref{eq:gda} on MNIST.
Left: samples $\tilde{\vx}_g\sim p_g$ of the last iterate of the Generator.
Right: FID and IS of~\ref{eq:gda}, depicted in blue and red, respectively.
Using step size of $0.001$.
\label{fig:mnist_curves_gda}
}
\end{figure}

\begin{figure}[!htb]
  \centering
  \includegraphics[width=.95\linewidth]{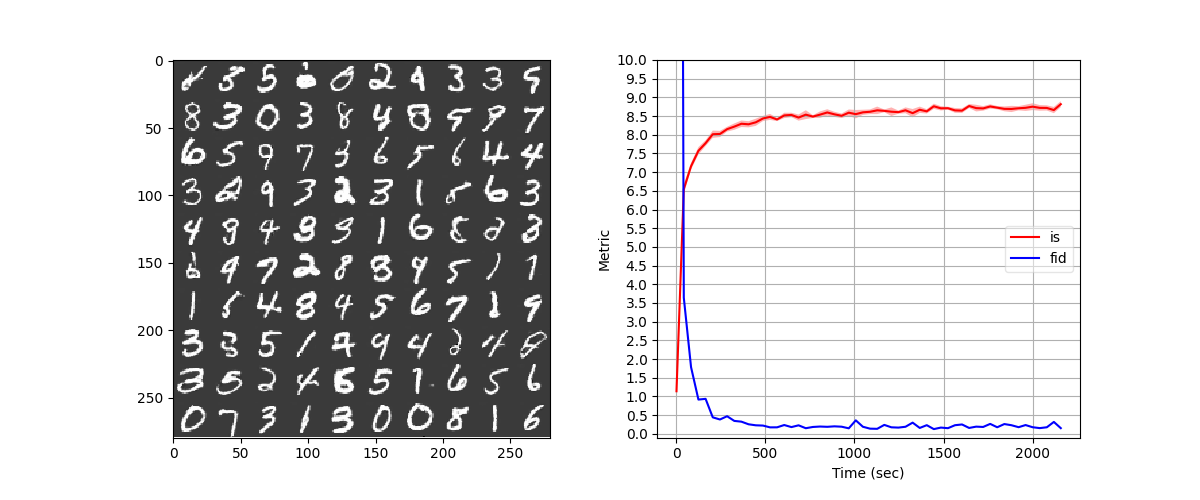}
\caption{
ACVI (Algorithm~\ref{alg:acvi}) with 10 GDA steps on MNIST.
Left: samples $\tilde{\vx}_g\sim p_g$ of the last iterate of the Generator.
Right: FID and IS of~\ref{eq:gda}, depicted in blue and red, respectively.
Using step size of $0.001$ for $\vx$ and $0.2$ for $\vy$, and $l=10$ both for $\vx$ and $\vy$.
\label{fig:mnist_curves_acvi_gda_10_10}
}
\end{figure}

\begin{figure}[!htb]
  \centering
  \includegraphics[width=.95\linewidth]{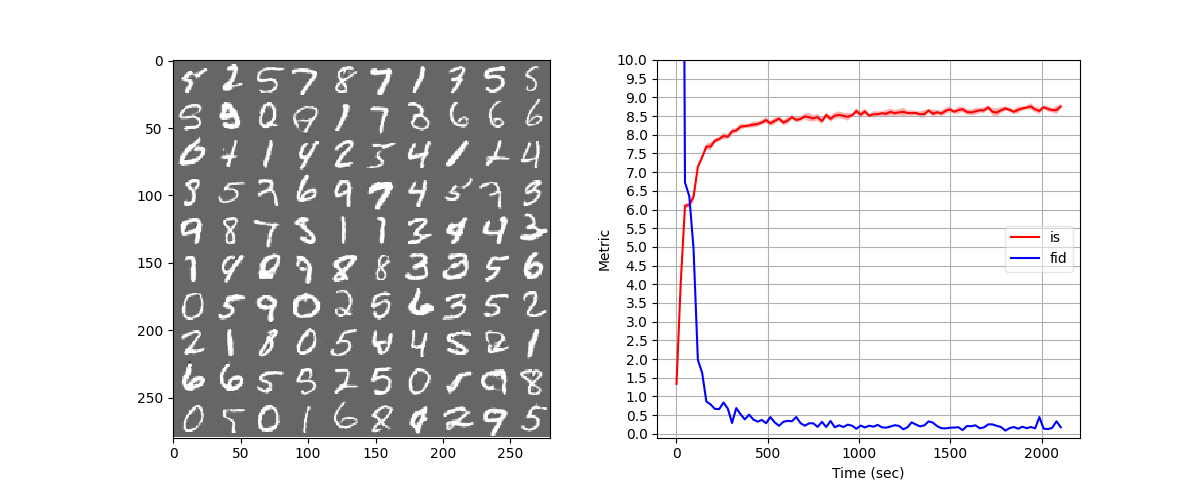}
\caption{
ACVI (Algorithm~\ref{alg:acvi}) with 1 GDA step on MNIST.
Left: samples $\tilde{\vx}_g\sim p_g$ of the last iterate of the Generator.
Right: FID and IS of~\ref{eq:gda}, depicted in blue and red, respectively. Using step size of $0.001$ for $\vx$ and $0.2$ for $\vy$, and $l=1$ both for $\vx$ and $\vy$.
\label{fig:mnist_curves_acvi_gda_1_1}
}
\end{figure}

\begin{figure}[!htb]
  \centering
  \includegraphics[width=.95\linewidth]{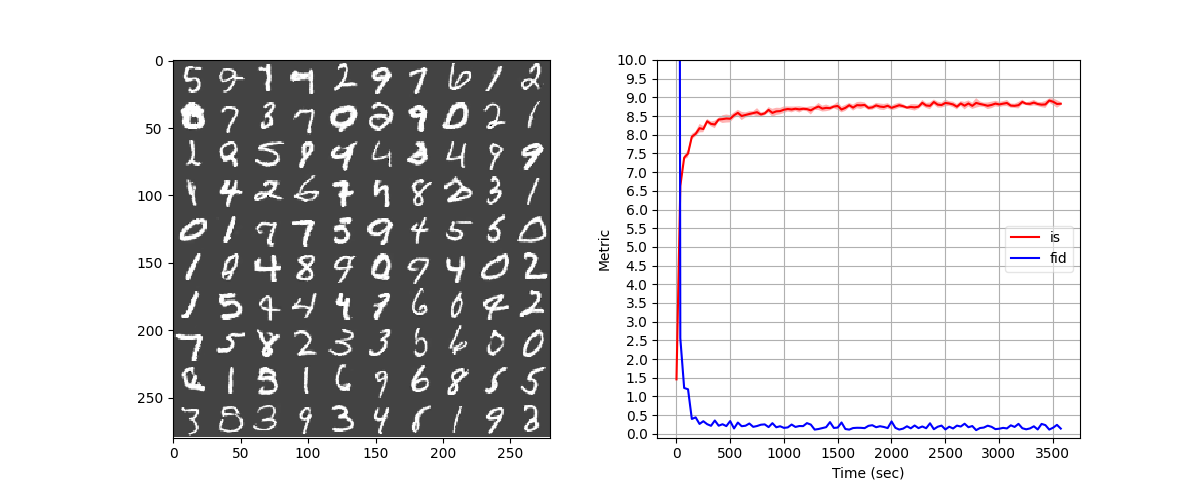}
\caption{
\ref{eq:extragradient} on MNIST.
Left: samples $\tilde{\vx}_g\sim p_g$ of the last iterate of the Generator.
Right: FID and IS of~\ref{eq:extragradient}, depicted in blue and red, respectively.
Using step size of $0.001$.
\label{fig:mnist_curves_eg}
}
\end{figure}

\begin{figure}[!htb]
  \centering
  \includegraphics[width=.95\linewidth]{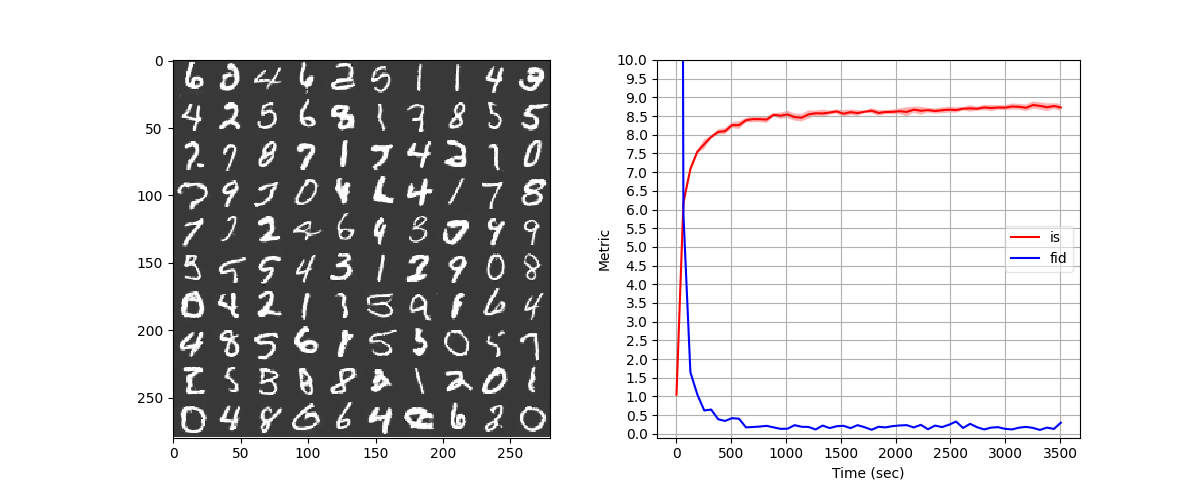}
\caption{ACVI (Algorithm~\ref{alg:acvi}) with 10 EG steps on MNIST.
Left: samples $\tilde{\vx}_g\sim p_g$ of the last iterate of the Generator.
Right: FID and IS of~\ref{eq:gda}, depicted in blue and red, respectively.
Using step size of $0.001$ for $\vx$ and $0.2$ for $\vy$, and $l=10$ both for $\vx$ and $\vy$.
\label{fig:mnist_curves_acvi_eg_10_10}
}
\end{figure}

\begin{figure}[!htb]
  \centering
  \includegraphics[width=.95\linewidth]{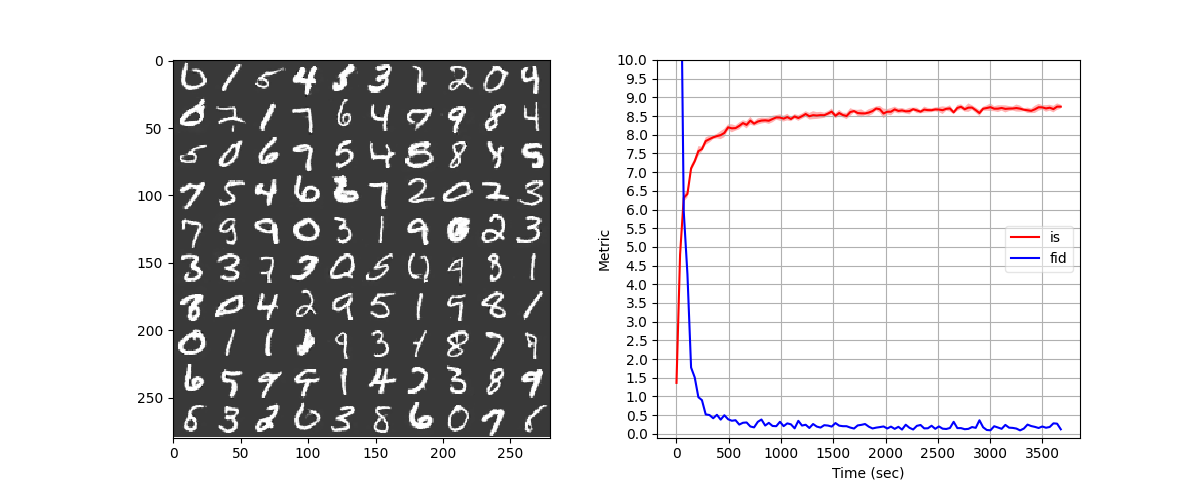}
\caption{ACVI (Algorithm~\ref{alg:acvi}) with 1 EG step on MNIST.
Left: samples $\tilde{\vx}_g\sim p_g$ of the last iterate of the Generator.
Right: FID and IS of~\ref{eq:gda}, depicted in blue and red, respectively.
Using step size of $0.001$ for $\vx$ and $0.2$ for $\vy$, and $l=1$ both for $\vx$ and $\vy$.
\label{fig:mnist_curves_acvi_eg_1_1}}
\end{figure}

\begin{figure}[!htb]
  \centering
  \subfigure[\ref{eq:extragradient}]{\label{subfig:fmnist_eg}
  \includegraphics[width=.32\linewidth]{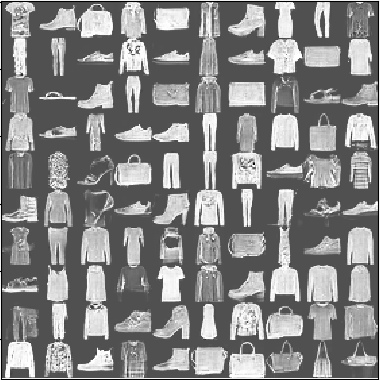}}
  \subfigure[ACVI$+$EG $1$ step]{\label{subfig:fmnist_acvi_eg1}
  \includegraphics[width=.32\linewidth]{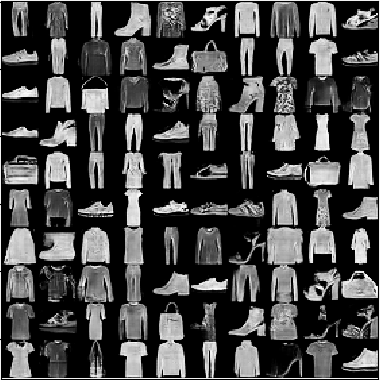}}
   \subfigure[ACVI$+$EG $10$ step]{\label{subfig:fmnist_acvi_eg10}
  \includegraphics[width=.32\linewidth]{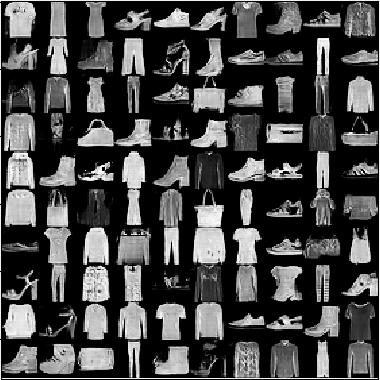}}
\caption{ 
Generated images at fixed wall-clock computation time ($3000$s) by: the baseline ~\ref{eq:extragradient}, and by ACVI with $l\in \{1, 10 \}$ on the Fashion-MNIST~\citep{fashionmnist} dataset.
See App.~\ref{app:impl_mnist} and~\ref{app:mnist_exp} for details on the implementation and discussion, resp. 
\label{fig:fashionmnist_eg}
}
\end{figure}

\FloatBarrier
\begin{figure}[!htb]
  \centering
  \subfigure[\ref{eq:gda}]{\label{subfig:fmnist_gda}
  \includegraphics[width=.32\linewidth]{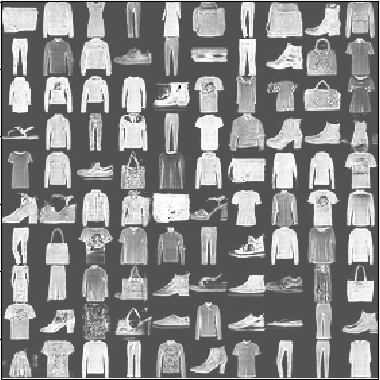}}
  \subfigure[ACVI$+$GDA $1$ step]{\label{subfig:fmnist_acvi_gda1}
  \includegraphics[width=.32\linewidth]{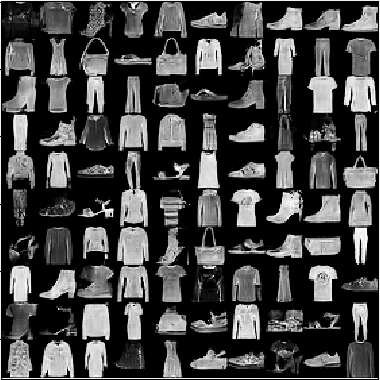}}
   \subfigure[ACVI$+$GDA $10$ step]{\label{subfig:fmnist_acvi_gda10}
  \includegraphics[width=.32\linewidth]{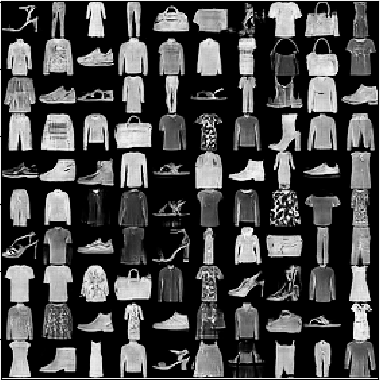}}
\caption{ 
Generated images at fixed wall-clock computation time ($3000$s) by: the baseline ~\ref{eq:gda}, and by ACVI with $l\in \{1, 10 \}$ on the Fashion-MNIST~\citep{fashionmnist} dataset.
See App.~\ref{app:impl_mnist} and~\ref{app:mnist_exp} for details on the implementation and discussion, resp. 
\label{fig:fashionmnist_gda}
}
\end{figure}

\end{document}